
\documentclass{article}

\usepackage{amsmath,amssymb,amsfonts}
\usepackage{microtype}
\usepackage{graphicx}
\usepackage{subfigure}
\usepackage{booktabs} 

\usepackage{hyperref}


\usepackage[accepted]{icml2020}



\usepackage{times}
\usepackage{float}
\usepackage{amsmath}
\usepackage{amsthm}
\usepackage{mathrsfs}
\usepackage{bbm}
\usepackage{lineno}

\usepackage{fancyhdr}

\newtheorem{theorem}{Theorem}
\newtheorem{lemma}{Lemma}[section]
\newtheorem{assumption}{Assumption}
\newtheorem{corollary}{Corollary}
\def\cP{\mathcal{P}}

\addtocontents{toc}{\protect\setcounter{tocdepth}{-1}}

\begin{document}

\onecolumn

\begin{center} \Large \bf Minimax-Optimal Off-Policy Evaluation with Linear Function Approximation \end{center}
\setcounter{page}{1}
\cfoot{\vspace{2.5em}\thepage}

\vspace{1cm}




\begin{icmlauthorlist}
\icmlauthor{Yaqi Duan}{princeton}
\icmlauthor{Mengdi Wang}{princeton}
\end{icmlauthorlist}

\icmlaffiliation{princeton}{Princeton University}

\icmlcorrespondingauthor{Mengdi Wang}{mengdiw@princeton.edu}

\icmlkeywords{Machine Learning, ICML}

\vskip 0.3in



\printAffiliationsAndNotice{}  

\begin{abstract}
This paper studies the statistical theory of batch data reinforcement learning with function approximation. Consider the off-policy evaluation problem, which is to estimate the cumulative value of a new target policy from logged history generated by unknown behavioral policies. We study a regression-based fitted Q iteration method, and show that it is equivalent to a model-based method that estimates a conditional mean embedding of the transition operator. We prove that this method is information-theoretically optimal and has nearly minimal estimation error. In particular, by leveraging contraction property of Markov processes and martingale concentration, we establish a finite-sample instance-dependent error upper bound and a nearly-matching minimax lower bound. The policy evaluation error depends sharply on a restricted $\chi^2$-divergence over the function class between the long-term distribution of the target policy and the distribution of past data. This restricted $\chi^2$-divergence is both instance-dependent and function-class-dependent. It characterizes the statistical limit of off-policy evaluation. Further, we provide an easily computable confidence bound for the policy evaluator, which may be useful for optimistic planning and safe policy improvement.
\end{abstract}

\section{Introduction}

Batch data reinforcement learning (RL) is common in decision-making applications where rich experiences are available but new experiments are costly. A first-order question is how much one can learn from existing experiences to predict and improve the performance of new policies. This is known as the off-policy policy evaluation (OPE) problem, where one needs to estimate the cumulative rewards (aka value) to be earned by a new policy based on logged history. 

In this paper, we study the off-policy evaluation using linear function approximation. We assume that the Q-functions of interests belong to a known function class $\mathcal{Q}$ with $d$ basis functions.  
We adopt a direct regression-based approach and investigate the basic fitted Q iteration (FQI) \cite{bertsekas1995dynamic,sutton2018reinforcement}. It works by iteratively estimating Q-functions via supervised learning using the batch data. This approach turns out to be equivalent to the model-based plug-in estimator where one estimates the conditional mean embedding of the unknown transition model and uses it to compute a plug-in value estimator. It is also related to variants of importance sampling methods (see discussions in Sections 1.1 and 3.3).

We provide a finite-sample error upper bound for this policy evaluator, as well as a nearly matching  minimax-optimal lower bound. Putting them together, we see that the regression-based policy evaluator is nearly statistical-optimal.
For RL with horizon $H$, the minimax-optimal OPE error takes the form \vspace{-0.1cm}
$$|\widehat v^{\pi}- v^{\pi}| \asymp H^2 \sqrt{\frac{1+\chi_{\mathcal{Q}}^2 (\mu^{\pi}, \bar \mu)}{N}} +o(1/{\sqrt{N}}), $$
where $N$ is the number of observed state transitions,
$\mu^{\pi}$ is some long-term state-action occupancy measure of the target policy $\pi$ and $\overline \mu$ is the data distribution, $\chi_{\mathcal{Q}}^2$ is a variant of $\chi^2$-divergence {\it restricted to the family $\mathcal{Q}$}:
$$\chi_{\mathcal{Q}}^2(p_1,p_2) := \sup_{f\in\mathcal{Q}} \frac{\mathbb{E}_{p_1}[f(x)]^2}{\mathbb{E}_{p_2}[f(x)^2]}-1.$$
The term $\chi_{\mathcal{Q}}^2 (\mu^{\pi}, \overline \mu)$ captures the distributional mismatch, between the behavior policy and the target policy, that is relevant to the function class $\mathcal{Q}$. It determines the theoretical limits of OPE within this function class. In the tabular case, it relates to the worst-case density ratio, which often shows up in importance sampling methods. However, when we use function approximation, this $\chi_{\mathcal{Q}}^2$ divergence term can be significantly smaller than the worst-case density ratio. In particular, our analysis shows that $\chi_{\mathcal{Q}}^2 (\mu^{\pi}, \bar \mu)$ is the condition number of a finite matrix, which can be reliably estimated. This result suggests that OPE could be more data-efficient with appropriate function approximation. 

A summary of technical results of this paper: \vspace{-0.4cm}
\begin{itemize} \itemsep = -0.05cm
\item A regression-based algorithm that unifies FQI and plug-in estimation. It does not require knowledge of the behavior policy $\overline \pi$, or try to estimate it. It uses iterative regression but does not require Monte Carlo sampling. In the case of linear models, the estimator can be computed easily using simple matrix-vector operations.
\item Finite-sample error upper bound for the regression-based policy evaluator. Despite that regression may be biased for OPE, we show that the curse of horizon does not  occur as long as $N = \Omega(dH^3)$. A key to the analysis is the use of contraction properties of a Markov process to show that estimation error accumulates linearly in multi-step policy evaluation, instead of exponentially.

\item A minimax error lower bound that sets the statistical limit for OPE with function approximation. The lower bound nearly matches our upper bound, therefore proves the efficiency of regression-based FQI.
\item A data-dependent confidence bound that can be computed as a byproduct of the FQI algorithm.
\end{itemize}

\subsection{Related Literature}


Off-policy policy evaluation (OPE) is often the starting point of batch reinforcement learning.
A direct approach is to estimate the transition probability distributions and then execute the target policy on an estimated model. This has been studied in the tabular case with bias and variance analysis \cite{mannor2004bias}.
In real-world applications, in order to tackle MDPs with infinite or continuous state spaces, one often needs various forms of function approximation, and many methods like fitted Q-iteration and least square policy iteration were developed \cite{jong2007model,lagoudakis2003least,grunewalder2012modelling,fonteneau2013batch}.
Regression methods are often used to fit value functions and to satisfy the Bellman equation \cite{bertsekas1995dynamic, sutton2018reinforcement}.

A popular class of OPE methods use importance sampling (IS) to reweigh sample rewards to get unbiased value estimate of a new policy \cite{precup2000eligibility}. 
Doubly robust technique blends IS with model-based estimators to reduce the high variance \cite{jiang2015doubly, thomas2016data}.
\citet{liu2018breaking} suggested that one should estimate the stationary state occupancy measure instead of the cumulative importance ratio in order to break the curse of horizon. Many IS methods only apply to tabular MDP and require knowledge of the behavior policy. Following these ideas, \citet{nachum2019dualdice} proposed a minimax optimization problem that uses function approximation to learn the IS weights, without requiring knowledge of the behavior policy. \citet{dann2018policy} provided error bounds and certificates for the tabular case to achieve  accountability. \citet{liu2019off} studied off-policy gradient method for batch data policy optimization.


On the theoretical side, the sharpest OPE error bound to our best knowledge is given by \citet{xie2019towards} and \citet{ yin2020asymptotically}, which applies to time-inhomogeneous, tabular MDP. \citet{jiang2015doubly} provided a Cramer-Rao lower bound for discrete-tree MDP. To the authors' best knowledge, existing theoretical results on OPE mostly apply only to tabular MDP without function approximation. Our results appear to be the first and sharpest error bounds for OPE with linear function approximation.

\section{Problem and Model} \label{Section3}

In this paper, we study off-policy policy evaluation of an Markov decision process (MDP) when we only have a fixed dataset of empirical transitions. An instance of MDP is a controlled random walk over a state space $\mathcal{S}$, where at each state $s$, if we pick action $a\in \mathcal{A}$, the system evolves to a random next state $s'$ according to distribution $p(s' \, | \, s,a)$ and generates a reward $r' \in [0,1]$ with $\mathbb{E}[r' \, | \, s,a] = r(s,a)$. A policy $\pi$ specifies a distribution $\pi(\cdot \, | \,s)$ for choosing actions conditioned on the current state $s$. 

Our objective is to evaluate the performance of a {\it target} policy $\pi$ at a fixed initial distribution $\xi_0$, where the transition model $p$ is unknown. The value to be estimated is the expected cumulative reward in an $H$-horizon episode, given by \vspace{-0.05cm}
\begin{equation} \label{vpi} v^{\pi} := \mathbb{E}^{\pi} \Bigg[ \sum_{h=0}^H r(s_h,a_h) \, \Bigg| \, s_0 \sim \xi_0 \Bigg],\end{equation}
where $a_h \sim \pi(\cdot \, | \, s_h)$, $s_{h+1} \sim p(\cdot \, | \, s_h, a_h)$, $\mathbb{E}^{\pi} $ denotes expectation over the sample path generated under policy $\pi$.

Let $\mathcal{D} \!=\! \{ (s_n,a_n,s_n',r_n') \}_{n=1}^N$ be a set of sample transitions, where each $s_n'$ is sampled from distribution $p(\cdot \, | \,
s_n,a_n)$. The sample transitions may be collected from multiple trajectories and under a possibly unknown {\it behavior} policy denoted as $\overline{\pi}$. Our goal is to estimate $v^{\pi}$ from $\mathcal{D}$.


Given a {\it target} policy $\pi$ and a reward function $r$, the state-action value functions, also known as Q functions, are defined as, for $h=0,1,\ldots,H$, \vspace{-0.1cm}
\begin{equation} \label{Q_def} Q_h^{\pi}(s,a) := \mathbb{E}^{\pi} \Bigg[ \sum_{h'=h}^H \! r(s_{h'}, a_{h'}) \, \Bigg| \, s_h = s, a_h = a \Bigg], \end{equation}
where $a_{h'} \sim  \pi(\cdot \, | \, s_{h'}), s_{h'+1}  \sim p(\cdot \, | \, s_{h'}, a_{h'})$.
Let $\mathcal{X}:=\mathcal{S} \times \mathcal{A}$.
	Define the {\it conditional transition operator} $\mathcal{P}^{\pi} : \mathbb{R}^{\mathcal{X}} \rightarrow \mathbb{R}^{\mathcal{X}}$ as 
	\vspace{-0.1cm}
	\[ \mathcal{P}^{\pi} f(s,a) := \mathbb{E}^{\pi} \big[ f(s',a') \, \big| \, s, a \big] \quad \text{for any } f: \mathcal{X} \rightarrow \mathbb{R}, \vspace{-0.1cm} \]
where $s' \sim p(\cdot \, | \, s,a)$ and $a' \sim \pi(\cdot \, | \, s')$.
Throughout the paper, we suppose that $\mathcal{P}^{\pi}$ operates in a function class $\mathcal{Q}$, such that we can approximate unknown Q functions within this family.
Assume without loss of generality that {$\mathbf{1}\in \mathcal{Q}$}.
\begin{assumption}[Function class] \label{Q_class} 
For any $f\in\mathcal{Q}$, $\mathcal{P}^{\pi}f\in \mathcal{Q}$, and $r\!\in\!\mathcal{Q}.$ It follows that $Q^{\pi}_0,\ldots,Q^{\pi}_H \!\in\! \mathcal{Q},$ where $\mathcal{Q} \! \subseteq \! \mathbb{R}^{\mathcal{X}}.$ 
\end{assumption} 

 In most parts of the paper, we assume that the transition data are collected from multiple independent episodes.



\begin{assumption}[Data generating process] \label{episode}
	The dataset $\mathcal{D}$ consists of samples from $K$ i.i.d. episodes $\boldsymbol{\tau}_1, \boldsymbol{\tau}_2, \ldots, \boldsymbol{\tau}_K$. 
	Each $\boldsymbol{\tau}_k$ has $H$ consecutive sample transitions generated by some policy on a single sample path, i.e., $\boldsymbol{\tau}_k = (s_{k,0}, a_{k,0} ,r_{k,0}',s_{k,1}, a_{k,1}, r_{k,1}', \ldots, s_{k,H}, a_{k,H}, r_{k,H}')$. 
	We also denote $s'_{k,h}=s_{k,h+1}$.
\end{assumption} \vspace{-0.1cm}

We will focus mainly on the case where $\mathcal{Q}$ is a linear space spanned by $d$ feature functions $\phi_1,\ldots,\phi_d$. Also note that the behavior policy $\overline{\pi}$ is {\it not} known. 
\vspace{-0.2cm}

\paragraph{Notations}


Denote $\mathcal{X} = \mathcal{S} \times \mathcal{A}$.
Let $\mathbb{R}^{\mathcal{X}}$ be the collection of all functions $f: \mathcal{X} \rightarrow \mathbb{R}$. For any $f \in \mathbb{R}^{\mathcal{X}}$, define $f^{\pi}: \mathcal{S} \rightarrow \mathbb{R}$ by $f^{\pi}(s) = \int_{\mathcal{A}} f(s,a) \pi(a \, | \, s) {\rm d}a$. 
If $A$ is a positive symmetric semidefinite matrix, let $\sigma_{\min}(A)$ denote its smallest eigenvalue, and let $A^{1/2}$ denote the positive symmetric semidefinite matrix that $A^{1/2}A^{1/2}= A$.
For nonnegative $\{ a_n \}_{n=1}^{\infty}$ and $\{ b_n \}_{n=1}^{\infty}$, we denote $a_n \lesssim b_n$ if there exists $c>0$ such that $a_n \leq c b_n$ for $n=1,2,\ldots$.
Let $\{ X_n \}_{n=1}^{\infty}$ be a sequence of random variables and $\{ a_n \}_{n=1}^{\infty} \subseteq \mathbb{R}$ be deterministic. We write $X_n = O_{\mathbb{P}}(a_n)$ if for any $\delta > 0$ there exists $M>0$ such that $\mathbb{P}(|X_n| > a_n M) \leq \delta$ for all $n$. 
If a distribution $p$ is absolutely continuous with respect to a distribution $q$, the Pearson $\chi^2$-divergence is defined by $\chi^{2}(p,q) := \mathbb{E}_q \big[ (\frac{{\rm d}p}{{\rm d}q} - 1)^2 \big]$.

\vspace{-0.05cm}

\section{Regression-Based Off-Policy Evaluation} \label{section:estimator}

We consider a fitted Q-iteration method for new policy evaluation using linear function approximation. We show that it is equivalent to a model-based method that estimates a conditional mean operator and embeds the unknown $p$ into the feature space. They admit a simple matrix-vector implementation when $\mathcal{Q}$ is a linear model with finite dimension. 

	\vspace{-0.2cm}

	\subsection{Fitted Q-iteration (FQI)}
	The Q-functions satisfy the Bellman equation \vspace{-0.1cm} \begin{equation} \label{Bellman}Q_{h-1}^{\pi}(s,a) = r(s,a) + \mathbb{E} \big[ V_h^{\pi}(s') \, \big| \, s, a \big] \vspace{-0.1cm} \end{equation}
	for $h\!=\!1,2,\ldots,H$,
	where $s' \!\sim p(\cdot\,|\,s,a)$, $V_h^{\pi}\!: \mathcal{S} \!\rightarrow\! \mathbb{R}$ is the value function defined as $V_h^{\pi}(s) := \int_{\mathcal{A}} Q_h^{\pi}(s,a) \pi(a \, | \, s) {\rm d}a$. 
	
	For the given target policy $\pi$, we apply regression recursively by letting $\widehat Q_{H+1}^{\pi} := 0$ and
	for $h=H,H-1,\ldots,0$, \vspace{-0.2cm}
	\begin{equation} \label{Q_recursion} 
	\widehat{Q}_h^{\pi} := \arg\min_{f \in \mathcal{Q}} \Bigg\{  \sum_{n=1}^N \bigg( f(s_n,a_n) - r_n' - \int_{\mathcal{A}} \!\widehat{Q}_{h+1}^{\pi}(s_n',a) \pi(a \, | \, s_n') {\rm d}a  \bigg)^2 + \lambda \rho (f) \Bigg\}, 
	 \vspace{-0.05cm}
	 \end{equation}
	where $ \lambda \geq 0$ and $\rho(\cdot)$ is a regularization function.
	The scheme above provides a recursive way to evaluate $\widehat{Q}_{H}^{\pi}, \widehat{Q}_{H-1}^{\pi}, \ldots, \widehat{Q}_0^{\pi}$ and $v^{\pi}$ by regression using empirical data. It is essentially a fitted Q-iteration. The full algorithm is summarized in Algorithm \ref{FQI}.
	
	
	\vspace{-0.2cm}
	\begin{algorithm}[H]
		\caption{Fitted Q-iteration for Off-Policy Evaluation (FQI-OPE)} \label{FQI}
		\begin{algorithmic}
		\STATE \hspace{-0.273cm} \begin{tabular}{p{0.8cm}l} {\bf Input:} & initial distribution $\xi_0$, target policy $\pi$, horizon $H$, function class $\mathcal{Q}$, \\ & sample transitions $\mathcal{D} = \{ (s_n,a_n,s_n',r_n') \}_{n=1}^N$ \end{tabular} 
		\vspace{0.005cm}
		\STATE Let $\widehat{Q}_{H+1}^{\pi} := 0$;
		\FOR {$h = H, H-1, \ldots, 1$}
		\STATE Calculate $\widehat{Q}_h$ by solving \eqref{Q_recursion};
		\ENDFOR
		\vspace{0.07cm} \\
		{\bf Output:} $\widehat{v}^{\pi}_{\mathsf{FQI}}:= \int_{\mathcal{X}} \widehat{Q}_0^{\pi}(s,a) \xi_0(s) \pi(a \, | \, s) {\rm d}s{\rm d}a$
		\end{algorithmic}
	\end{algorithm}
	
	\vspace{-0.3cm}
	\subsection{An equivalent model-based method using conditional mean operator}
	
	The preceding FQI method can be equivalently viewed as a model-based plug-in estimator.
	Recall the {\it conditional transition operator} $\mathcal{P}^{\pi} : \mathbb{R}^{\mathcal{X}} \rightarrow \mathbb{R}^{\mathcal{X}}$ is
	\vspace{-0.1cm}
	\[ \mathcal{P}^{\pi} f(s,a) := \mathbb{E}^{\pi} \big[ f(s',a') \, \big| \, s, a \big] \quad \text{for any } f: \mathcal{X} \rightarrow \mathbb{R}. \vspace{-0.1cm} \]
	Under Assumption \ref{Q_class}, it always holds that $\mathcal{P}^{\pi} Q_h^{\pi}  \in \mathcal{Q}$. To this end, we are only interested in a ``projection'' of ground-truth $\mathcal{P}^{\pi}$ onto $\mathcal{Q}$. 
	We estimate the conditional transition operator by $\widehat{\mathcal{P}}^{\pi}$: for any $f\!:\! \mathcal{X} \rightarrow \mathbb{R}$, let \vspace{-0.2cm}
	\begin{equation} \label{hatP_operator}
	\widehat{\mathcal{P}}^{\pi} f\!: =
	\arg\min_{g \in \mathcal{Q}} \! \Bigg\{ \! \sum_{n=1}^N \! \Big( g(s_n,a_n) -  \int_{\mathcal{A}} \!\! f(s_n', a) \pi(a \, | \, s_n'){\rm d}a \Big)\!\Big.^2 \!+\! \lambda \rho (g) \Bigg\}. \vspace{-0.1cm}
	\vspace{-0.1cm} \end{equation}
	We can see that, if $N\to\infty$, $\widehat{\mathcal{P}}^{\pi}$ converges to a projected version of ${\mathcal{P}}^{\pi}$ onto $\mathcal{Q}$.
	Denote $\phi(\cdot):= [ \phi_1(\cdot), \ldots, \phi_d(\cdot) ]^{\top}: \mathcal{X} \rightarrow \mathbb{R}^d$.
In the case where $\mathcal{Q}$ is a linear space given by $\mathcal{Q}=\big\{\phi(\cdot)^{\top}w \, \big| \, w\in\mathbb{R}^d\big\}$ and $\rho(\cdot)$ is taken as \vspace{-0.3cm} \begin{equation} \label{def_rho} \rho(f) := \|w\|_2^2 \qquad  \text{for }f(\cdot) = \phi(\cdot)^{\top} w, \vspace{-0.2cm} \end{equation} the constructed $\widehat{\mathcal{P}}^{\pi}$ in \eqref{hatP_operator} corresponds to an estimated $\widehat p$ {of the form} 
\vspace{-0.1cm}
\[
\widehat{p}(\cdot \, | \, s,a) := \phi(s,a)^{\top} \widehat{\Sigma}^{-1} \left(\,\sum^N_{n=1} \phi(s_n,a_n) \delta_{s_n'}(\cdot)\right), \vspace{-0.18cm} \]
	where $\widehat{\Sigma} := \lambda I + \sum_{n=1}^N \phi(s_n,a_n) \phi(s_n,a_n)^{\top} $ is the empirical covariance matrix and
	$\delta_{s'}(\cdot)$ denotes the Dirac measure. Note this $\widehat p$ is not necessary a transition kernel.
	
	
We adopt a model-based approach and use $\widehat{\mathcal{P}}^{\pi}$ in the Bellman equation as a plug-in estimator. In particular, let \vspace{-0.2cm} \begin{equation} \label{def_hatr} \widehat{r} := \arg\min_{f \in \mathcal{Q}} \Bigg\{ \sum_{n=1}^N \big( f(s_n, a_n) - r_n' \big)^2 + \lambda \rho(f) \Bigg\}, \vspace{-0.28cm} \end{equation} and {$\widehat{Q}_{H+1}^{\pi}:=0$,}\vspace{-0.2cm}
	\[ \widehat{Q}_{h-1}^{\pi} := \widehat{r} + \widehat{\mathcal{P}}^{\pi} \widehat{Q}_h^{\pi},\qquad h=H+1,H,\ldots,1. \vspace{-0.2cm} \]
	Then we can estimate the policy value by \vspace{-0.15cm}
	$$\widehat{v}_{\mathsf{Plug\text{-}in}}^{\pi} := \int_{s,a} \widehat{Q}_0^{\pi}(s,a) \xi_0(s) \pi(a \, | \, s) {\rm d}s{\rm d}a. \vspace{-0.15cm}$$
	It is easy to verify that this plug-in estimator is equivalent to the earlier FQI estimator. See the proof in Appendix \ref{section:equivalence}.
	\begin{theorem}[Equivalence between FQI and a model-based method] \label{theorem:equivalence}
	If $\mathcal{Q}$ is a linear space and $\rho$ is given by \eqref{def_rho}, Algorithm \ref{FQI} and the preceding plug-in approach generate identical policy value estimators, {\it i.e.}
		$\widehat{v}^{\pi}\!\!:=\widehat{v}_{\mathsf{FQI}}^{\pi}\!=\!\widehat{v}_{\mathsf{Plug\text{-}in}}^{\pi}$.
	\end{theorem}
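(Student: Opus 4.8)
The plan is to exploit the single fact that makes the two algorithms coincide: for a \emph{linear} function class with the ridge penalty \eqref{def_rho}, every subproblem in both procedures is a regularized least-squares fit against the \emph{same} design, so the map sending a vector of regression targets to its fitted function in $\mathcal{Q}$ is \emph{linear}. Once this linearity is isolated, the FQI update splits exactly into the reward piece and the transition piece that together define the plug-in update, and the theorem follows by a downward induction on $h$.

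Concretely, I would first record the closed form of the ridge solution. For a target vector $y=(y_1,\ldots,y_N)$ define the regression map
\[
\mathcal{R}[y](\cdot) := \phi(\cdot)^{\top} \widehat{\Sigma}^{-1} \sum_{n=1}^N \phi(s_n,a_n)\, y_n \in \mathcal{Q},
\]
with $\widehat{\Sigma} = \lambda I + \sum_{n=1}^N \phi(s_n,a_n)\phi(s_n,a_n)^{\top}$. Writing $f=\phi^{\top}w$ and setting the gradient of $\sum_n (\phi(s_n,a_n)^{\top}w - y_n)^2 + \lambda\|w\|_2^2$ to zero yields the unique minimizer $w = \widehat{\Sigma}^{-1}\sum_n \phi(s_n,a_n)y_n$, so the fitted function is exactly $\mathcal{R}[y]$; in particular $\mathcal{R}$ is linear in $y$. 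Reading off the three least-squares problems \eqref{Q_recursion}, \eqref{hatP_operator}, \eqref{def_hatr} in this notation, and using $f^{\pi}(s)=\int_{\mathcal{A}} f(s,a)\pi(a\,|\,s)\,{\rm d}a$, gives $\widehat r = \mathcal{R}\big[(r_n')_n\big]$, the operator identity $\widehat{\mathcal{P}}^{\pi} f = \mathcal{R}\big[(f^{\pi}(s_n'))_n\big]$ valid for every $f$, and the FQI fit $\widehat{Q}_h^{\pi} = \mathcal{R}\big[(r_n' + (\widehat{Q}_{h+1}^{\pi})^{\pi}(s_n'))_n\big]$. By linearity of $\mathcal{R}$ the last target decomposes, so
\[
\widehat{Q}_h^{\pi} = \mathcal{R}\big[(r_n')_n\big] + \mathcal{R}\big[((\widehat{Q}_{h+1}^{\pi})^{\pi}(s_n'))_n\big] = \widehat{r} + \widehat{\mathcal{P}}^{\pi}\widehat{Q}_{h+1}^{\pi},
\]
which is precisely the plug-in recursion.

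With both schemes initialized at $\widehat{Q}_{H+1}^{\pi}=0$, a downward induction on $h=H,H-1,\ldots,0$ then shows the two sequences of Q-estimates coincide: assuming agreement at $h+1$, the displayed identity together with the plug-in definition $\widehat{Q}_{h}^{\pi} := \widehat{r} + \widehat{\mathcal{P}}^{\pi}\widehat{Q}_{h+1}^{\pi}$ forces agreement at $h$. In particular the two $\widehat{Q}_0^{\pi}$ are identical, and since in each case $\widehat{v}^{\pi} = \int_{\mathcal{X}} \widehat{Q}_0^{\pi}(s,a)\xi_0(s)\pi(a\,|\,s)\,{\rm d}s\,{\rm d}a$, the value estimators are equal.

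The only genuine content is the linearity of $\mathcal{R}$ plus the bookkeeping that the FQI target equals the sum of the two plug-in targets as vectors in $\mathbb{R}^N$; the point to verify carefully is that the ``value'' part of the FQI target, $(\widehat{Q}_{h+1}^{\pi})^{\pi}(s_n')$, is exactly the target defining $\widehat{\mathcal{P}}^{\pi}\widehat{Q}_{h+1}^{\pi}$. There is no analytic difficulty, as everything is exact finite-dimensional linear algebra with a shared $\widehat{\Sigma}^{-1}$. The subtlety worth flagging is that the decomposition rests essentially on $\mathcal{Q}$ being linear and $\rho$ being the squared norm: for a nonlinear class or a non-quadratic penalty the regression operator $\mathcal{R}$ is no longer linear in its targets, and the FQI/plug-in equivalence need not persist.
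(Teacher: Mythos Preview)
Your proof is correct and follows essentially the same approach as the paper: both arguments write down the closed-form ridge solution, observe that the FQI update at step $h$ factors as $\widehat{r} + \widehat{\mathcal{P}}^{\pi}\widehat{Q}_{h+1}^{\pi}$, and conclude by downward induction. The only cosmetic difference is that you isolate the linearity of the regression map $\mathcal{R}$ abstractly, whereas the paper works directly in coordinates via $\widehat{R}$ and $\widehat{M}^{\pi}$.
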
 

	

When $\mathcal{Q}$ is a $d$-dimensional linear space with the feature map $\phi$, under Assumption 1, there exists a matrix $M^{\pi} \in \mathbb{R}^{d\times d} $ such that \vspace{-0.15cm}
$$\phi(s,a)^{\top} M^{\pi}=\mathbb{E}\big[ \, \phi^{\pi}(s')^{\top} \, \big| \, s,a \,\big],\quad\forall (s,a) \in \mathcal{X}, \vspace{-0.15cm} $$
where $\phi^{\pi}(s) :=\int \phi(s,a)\pi(a|s)da$.
We refer to $M^{\pi}$ as the {\it matrix mean embedding} of the conditional transition operator $\cP^{\pi}$. We can implement Algorithm \ref{FQI} in simple vector forms.
	We embed the one-step reward function and conditional transition operator into a vector and a matrix, respectively: \vspace{-0.2cm}
	\begin{equation} \label{vector_recursion} \begin{aligned} & \widehat{r}(\cdot) = \phi(\cdot)^{\top} \widehat{R} \quad \text{with }\widehat{R} := \widehat{\Sigma}^{-1} \Bigg( \sum_{n=1}^N r_n' \phi(s_n,a_n) \Bigg), \qquad \widehat{M}^{\pi} := \widehat{\Sigma}^{-1} \Bigg( \sum_{n=1}^N \phi(s_n,a_n) \phi^{\pi}(s_n')^{\top} \Bigg). \end{aligned} \vspace{-0.2cm} \end{equation}
The corresponding conditional mean operator $\widehat{\mathcal{P}}^{\pi}$ is \vspace{-0.1cm}
	\begin{equation} \label{hatPpi} (\widehat{\mathcal{P}}^{\pi} f )(s,a) = \phi(s,a)^{\top} \widehat{M}^{\pi} w,\ \ \hbox{for }f(\cdot) =  \phi(\cdot)^{\top} w. \vspace{-0.1cm} \end{equation}
We represent $\widehat{Q}_h^{\pi}$ in the form of $\widehat{Q}_h^{\pi}(s,a) = \phi(s,a)^{\top}\widehat{w}_h^{\pi}$. In this way, we can easily compute $\widehat{Q}_h^{\pi}$ using recursive compact vector-matrix operations, as given in Algorithm \ref{Model-based}. 

	\vspace{-0.2cm}
	\begin{algorithm}[H]
		\caption{Conditional Mean Embedding for Off-Policy Evaluation (CME-OPE)} \label{Model-based}
		\begin{algorithmic}
			\STATE \hspace{-0.273cm} \begin{tabular}{p{0.8cm}l} {\bf Input:} & initial distribution $\xi_0$, target policy $\pi$, horizon $H$, a basis $\{\phi_1,\ldots,\phi_d\}$ of $\mathcal{Q}$, \\ & sample transitions $\mathcal{D} = \{ (s_n,a_n,s_n',r_n') \}_{n=1}^N$, \end{tabular} 
			\vspace{0.07cm}
			\STATE Estimate $\widehat{R}$ and $\widehat{M}^{\pi}$ according to \eqref{vector_recursion};
			\STATE Let $\widehat{w}_{H+1}^{\pi} := 0$;
			\STATE Let $\nu_0^{\pi} := \int_{\mathcal{X}} \phi(s,a) \xi_0(s) \pi(a \, | \, s) {\rm d}s {\rm d a}$;
			\FOR {$h = H, H-1, \ldots, 0$}
			\STATE Calculate $\widehat{w}_h^{\pi} := \widehat{R} + \widehat{M}^{\pi} \widehat{w}^{\pi}_{h+1}$;
			\ENDFOR
			\vspace{0.07cm} \\
			{\bf Output:} $\widehat{v}^{\pi} := (\nu_0^{\pi})^{\top}\widehat w_0^{\pi}$
		\end{algorithmic}
	\end{algorithm}

\subsection{Relations to other methods}

Our method turns out to be closely related to variants of importance sampling method for OPE. For examples: \vspace{-0.2cm}
\begin{itemize} \itemsep = -0.05cm
	\item {\it Marginalized importance sampling:} Our FQI estimator takes the form $\widehat{v}^{\pi} = \frac{1}{N} \sum_{n=1}^N \widehat{w}_{\pi/\mathcal{D}}(s_n,a_n) r_n'$ where $\widehat{w}_{\pi/\mathcal{D}}(s,a) \! := \! N \sum_{h=0}^H (\nu_0^{\pi})^{\top}(\widehat{M}^{\pi})^h \widehat{\Sigma}^{-1} \phi(s,a)$. By viewing $\widehat{w}_{\pi/\mathcal{D}}(s,a)$ as weights, our estimator can be obtained equivalently by importance sampling. In the special tabular case, our $\widehat{v}^{\pi}$ reduces to the marginalized importance sampling (MIS) estimator in \cite{yin2020asymptotically}.
	\item {\it DualDICE:} \citet{nachum2019dualdice} proposed a minimax formulation to find the stationary state occupancy measure and residue (weight for importance sampling) with function approximation. We observe that, if those function classes are taken to be $\mathcal{Q}$, a version of DualDICE produces the same estimator as the FQI estimator. The two methods can be viewed as dual to each other.
\end{itemize}
\vspace{-0.2cm}
See Appendix \ref{section:equivalence} for more discussions.

\section{Finite-Sample Error Bound} \label{section:UpperBound}

Recall that $\mathcal{D}$ is a collection of $K$ independent $H$-horizon trajectories. 
Let $\Sigma$ be the uncentered covariance matrix of the data distribution: \vspace{-0.15cm}
$$\Sigma = \mathbb{E} \Bigg[ \frac{1}{H} \sum_{h=0}^{H-1} \phi(s_{1,h}, a_{1,h}) \phi(s_{1,h},a_{1,h})^{\top} \Bigg],\vspace{-0.15cm}$$which is determined by the unknown behavior policy $\overline \pi$. Given a target policy $\pi$, let $\xi^{\pi}$ be an invariant distribution of the Markov chain with transition kernel $p^{\pi}(s' \, | \, s) = \int_{\mathcal{A}} p(s' \, | \, s,a) \pi(a \, | \, s) {\rm d} a$. Define \vspace{-0.1cm}
$$\Sigma^{\pi} := \mathbb{E}\big[ \phi^{\pi}(s) \phi^{\pi}(s)^{\top} \, \big| \, s \sim \xi^{\pi} \big].\vspace{-0.1cm}$$
We assume $\phi(s,a)^{\top} \Sigma^{-1} \phi(s,a) \leq C_1d$ without loss of generality.
Theorem \ref{UpperBound} provides an instance-dependent policy evaluation error upper bound. Its complete proof is given in Appendix \ref{appendix:proof:UpperBound}.

\begin{theorem}[Upper bound] \label{UpperBound}
	Let $\delta \in (0,1)$. Under Assumptions \ref{Q_class} and \ref{episode}, if $N \!\geq\! 20 \kappa_1(2+\kappa_2)^2 \ln(12dH/\delta)C_1dH^3$ and $\lambda \leq \ln(12dH/\delta)C_1dH \sigma_{\min}(\Sigma)$, then with probability at least $1 - \delta$, \vspace{-.2em}
	\begin{equation} \label{ub1} \big| v^{\pi} \! - \widehat{v}^{\pi} \big| \! \leq \! \sum_{h=0}^H \! (H-h+1) {\sup_{f \in \mathcal{Q}} \frac{\mathbb{E}^{\pi}\big[f(s_h,a_h) \, \big| \, s_0 \sim \xi_0 \big]}{\!\sqrt{\mathbb{E}\big[ \frac{1}{H} \! \sum_{h=0}^{H-1} \! f^2(s_{1,h},a_{1,h}) \big]}}} \cdot \sqrt{\frac{\ln(12/\delta)}{2N}}+ \frac{C\ln(12dH/\delta) dH^{3.5}}{N},  \end{equation}
	where
	$C := 15\kappa_1 C_1(3+\kappa_2) \sqrt{(\nu_0^{\pi})^{\top} \Sigma^{-1} \nu_0^{\pi}}$, $\kappa_1 := {\rm cond}\big(\Sigma^{-1/2} \Sigma^{\pi} \Sigma^{-1/2}\big)$, \\ $\kappa_2 := \big\|\Sigma^{-1/2} \mathbb{E}\big[ \frac{1}{H} \! \sum_{h=1}^H \! \phi^{\pi}(s_{1,h}) \phi^{\pi}(s_{1,h})^{\top} \big] \Sigma^{-1/2} \big\|_2 \!\vee\! 1$.
	
	Additionally, if either one of the following holds: \begin{itemize} \itemsep = -0.1cm
		\item $\phi(s,a)^{\top} \Sigma^{-1} \phi(s',a') \geq 0$ for any $(s,a), (s',a') \in \mathcal{X}$;
		\item the MDP is time-inhomogeneous,
	\end{itemize} \vspace{-0.3cm} the upper bound can be improved to \vspace{-.2em}
	\begin{equation} \label{ub2}  |v^{\pi} - \widehat{v}^{\pi}| \leq {\sup_{f \in \mathcal{Q}} \frac{\mathbb{E}^{\pi}\big[\sum_{h=0}^{H} (H-h+1) f(s_h,a_h) \, \big| \, s_0 \sim \xi_0 \big]}{\sqrt{\mathbb{E}\big[ \frac{1}{H} \sum_{h=0}^{H-1} f^2(s_{1,h},a_{1,h}) \big]}}} \cdot \sqrt{\frac{\ln(12/\delta)}{2N}} + \frac{C\ln(12dH/\delta) dH^{3.5}}{N}.  \end{equation}
\end{theorem}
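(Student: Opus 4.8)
The plan is to reduce the value error to a sum of martingale terms indexed by the horizon step, and to keep those terms from blowing up via the contraction of the underlying Markov operator. Writing $\widehat{Q}_h^{\pi}(\cdot)=\phi(\cdot)^{\top}\widehat{w}_h^{\pi}$ and (by Assumption \ref{Q_class}) $Q_h^{\pi}(\cdot)=\phi(\cdot)^{\top}w_h^{\pi}$ with the true recursion $w_h^{\pi}=R+M^{\pi}w_{h+1}^{\pi}$, I would first set $e_h:=w_h^{\pi}-\widehat{w}_h^{\pi}$ and subtract the two recursions to obtain $e_h=\epsilon_h+\widehat{M}^{\pi}e_{h+1}$, where $\epsilon_h=(\widehat{R}-R)+(\widehat{M}^{\pi}-M^{\pi})w_{h+1}^{\pi}$ is the one-step fitting error. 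Unrolling from $e_{H+1}=0$ gives $e_0=\sum_{j=0}^{H}(\widehat{M}^{\pi})^j\epsilon_j$, hence $v^{\pi}-\widehat{v}^{\pi}=(\nu_0^{\pi})^{\top}e_0=\sum_{j=0}^{H}(\nu_0^{\pi})^{\top}(\widehat{M}^{\pi})^j\epsilon_j$. Substituting the closed forms of $\widehat{R}$ and $\widehat{M}^{\pi}$ from \eqref{vector_recursion} and invoking the Bellman equation \eqref{Bellman}, each $\epsilon_j$ becomes (up to sign) $\widehat{\Sigma}^{-1}\big(\sum_{n=1}^N\phi(s_n,a_n)\eta_{n,j}-\lambda w_j^{\pi}\big)$, where $\eta_{n,j}:=r_n'+V_{j+1}^{\pi}(s_n')-Q_j^{\pi}(s_n,a_n)$ is the Bellman residual: it has conditional mean zero given $(s_n,a_n)$ and satisfies $|\eta_{n,j}|\le H-j+1$.

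Next I would split the error into a leading martingale term, in which $\widehat{M}^{\pi},\widehat{\Sigma}$ are replaced by their population counterparts $M^{\pi}$ and $N\Sigma$, plus a higher-order remainder. The key identity is $(\nu_0^{\pi})^{\top}(M^{\pi})^j=(\nu_j^{\pi})^{\top}:=\mathbb{E}^{\pi}[\phi(s_j,a_j)^{\top}\mid s_0\sim\xi_0]$, proved by induction from the defining property of $M^{\pi}$. Thus the leading term reads $-\tfrac1N\sum_{n}\sum_{j=0}^{H}\psi_j(s_n,a_n)\eta_{n,j}$ with the deterministic weight $\psi_j(\cdot)=(\nu_j^{\pi})^{\top}\Sigma^{-1}\phi(\cdot)\in\mathcal{Q}$. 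A dual-norm computation gives $\mathbb{E}[\tfrac1H\sum_h\psi_j^2(s_{1,h},a_{1,h})]=(\nu_j^{\pi})^{\top}\Sigma^{-1}\nu_j^{\pi}$, which is exactly the square of the ratio $R_j:=\sup_{f\in\mathcal{Q}}\mathbb{E}^{\pi}[f(s_j,a_j)\mid s_0\sim\xi_0]/\sqrt{\mathbb{E}[\tfrac1H\sum_h f^2(s_{1,h},a_{1,h})]}$ appearing in \eqref{ub1}. Ordering the samples along the trajectories makes $\psi_j(s_n,a_n)\eta_{n,j}$ a martingale difference sequence, so a variance-aware (Freedman/Bernstein) concentration yields, for each $j$, a deviation of order $R_j(H-j+1)\sqrt{\ln(1/\delta)/N}$; a union bound over the $H+1$ levels and a summation reproduce the main term of \eqref{ub1}.

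The main obstacle is the higher-order remainder, where the powers $(\widehat{M}^{\pi})^j$ must be controlled without an exponential-in-$H$ blow-up — precisely where the curse of horizon would otherwise enter. I would first use matrix concentration to show that $\|\widehat{\Sigma}-N\Sigma\|$ and $\|\widehat{M}^{\pi}-M^{\pi}\|$ are small under the sample-size condition $N\gtrsim\kappa_1(2+\kappa_2)^2\ln(dH/\delta)C_1dH^3$ (with $\phi^{\top}\Sigma^{-1}\phi\le C_1d$ and $\kappa_2$ bounding the next-feature second moment), and then expand $(\widehat{M}^{\pi})^j-(M^{\pi})^j$ telescopically. The crucial ingredient is that $M^{\pi}$ is non-expansive in the $L^2(\xi^{\pi})$ geometry — i.e. $\|(M^{\pi})^i\|$ stays bounded in the $\Sigma^{\pi}$-weighted norm because $\mathcal{P}^{\pi}$ is a Markov contraction — and transferring this to the $\Sigma$-norm costs the condition number $\kappa_1={\rm cond}(\Sigma^{-1/2}\Sigma^{\pi}\Sigma^{-1/2})$. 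Consequently the perturbation errors accumulate \emph{linearly} rather than geometrically in the horizon, yielding the remainder $C\ln(12dH/\delta)dH^{3.5}/N$ with $C\propto\kappa_1C_1(3+\kappa_2)\sqrt{(\nu_0^{\pi})^{\top}\Sigma^{-1}\nu_0^{\pi}}$; the regularization bias $\lambda\sum_j(\nu_0^{\pi})^{\top}(\widehat{M}^{\pi})^j\widehat{\Sigma}^{-1}w_j^{\pi}$ is absorbed into the same order using the upper bound on $\lambda$ and the boundedness of $\|w_j^{\pi}\|$.

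Finally, for the improved bound \eqref{ub2} I would aggregate the martingale across levels before invoking concentration, replacing the $H+1$ separate weights by the single combined weight $\Psi:=\sum_{j=0}^{H}(H-j+1)\psi_j\in\mathcal{Q}$, whose second moment equals $\big(\sum_j(H-j+1)\nu_j^{\pi}\big)^{\top}\Sigma^{-1}\big(\sum_j(H-j+1)\nu_j^{\pi}\big)$, i.e. the square of $\sup_{f\in\mathcal{Q}}\mathbb{E}^{\pi}[\sum_h(H-h+1)f(s_h,a_h)\mid s_0\sim\xi_0]/\sqrt{\mathbb{E}[\tfrac1H\sum_h f^2(s_{1,h},a_{1,h})]}$. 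This aggregation is legitimate exactly under the two stated conditions: when $\phi^{\top}\Sigma^{-1}\phi'\ge0$ each $\psi_j\ge0$, so no cancellation is lost and the combined weight controls the variance; and when the MDP is time-inhomogeneous each sample belongs to a single level, so the cross-level correlations of $\eta_{n,j}$ vanish and the per-level conditional variances add up to $\|\Sigma^{-1/2}\sum_j(H-j+1)\nu_j^{\pi}\|^2$. Since $\|\sum_j(H-j+1)\nu_j^{\pi}\|_{\Sigma^{-1}}\le\sum_j(H-j+1)\|\nu_j^{\pi}\|_{\Sigma^{-1}}$ by the triangle inequality, the resulting bound is never larger than \eqref{ub1}.
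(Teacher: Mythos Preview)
Your proposal follows the paper's approach closely: the same error decomposition $v^{\pi}-\widehat{v}^{\pi}=\sum_{j}(\nu_0^{\pi})^{\top}(\widehat{M}^{\pi})^{j}\epsilon_j$, the same split into a first-order martingale term (with $\widehat{M}^{\pi},\widehat{\Sigma}$ replaced by $M^{\pi},N\Sigma$) plus a higher-order remainder and a $\lambda$-bias, and the same use of the contraction $\|(\Sigma^{\pi})^{1/2}M^{\pi}(\Sigma^{\pi})^{-1/2}\|_2\le 1$ (paying the condition number $\kappa_1$ to pass between the $\Sigma^{\pi}$- and $\Sigma$-geometries) to keep the remainder from blowing up in $H$. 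The dual-norm identification $R_j^2=(\nu_j^{\pi})^{\top}\Sigma^{-1}\nu_j^{\pi}$ is exactly the paper's Lemma used to rewrite the mismatch term.

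One execution point worth flagging: you propose to apply Freedman to each level $j$ separately and then union-bound over the $H{+}1$ levels. This would cost an extra $\sqrt{\log H}$ in the leading term and would not reproduce the factor $\sqrt{\ln(12/\delta)/(2N)}$ in \eqref{ub1} as stated. The paper avoids this by defining a \emph{single} martingale difference $e_n:=\sum_{j=0}^{H}\psi_j(s_n,a_n)\eta_{n,j}$ (summing over $j$ inside the increment), bounding its conditional variance via a Cauchy--Schwarz step across levels,
\[
\mathrm{Var}[e_n\mid\mathcal{F}_n]\;\le\;\tfrac14\Big(\sum_{j}(H{-}j{+}1)R_j\Big)\cdot\Big(\sum_{j}\tfrac{H{-}j{+}1}{R_j}\,\psi_j(s_n,a_n)^2\Big),
\]
and then applying Freedman once; after concentrating $\tfrac{1}{N}\sum_n\phi\phi^{\top}$ around $\Sigma$, the second factor sums over $n$ to $\big(\sum_j(H{-}j{+}1)R_j\big)\cdot N(1+o(1))$, giving the claimed bound without a $\log H$. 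For \eqref{ub2} your intuition is right: the positivity condition $\phi^{\top}\Sigma^{-1}\phi'\ge 0$ makes each weight $\psi_j\ge 0$, which is what allows the cross-level covariance terms in $\mathrm{Var}[e_n\mid\mathcal{F}_n]$ to be bounded by the product form $\tfrac14(H{-}h_1{+}1)(H{-}h_2{+}1)\psi_{h_1}\psi_{h_2}$ and hence by the squared combined weight.
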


\noindent {\bf Distributional mismatch as a $\mathcal{Q}$-$\chi^2$-divergence.}  \\	
Let $\overline{\mu}$ be the expected occupancy measure of observation $\{ (s_n, a_n) \}_{n=1}^N$. Let $\mu^{\pi}$ be the weighted occupancy distribution of $(s_h,a_h)$ under policy $\pi$ and $\xi_0$, given by \vspace{-0.12cm}
	{$$\mu^{\pi}(s,a):=\frac{\mathbb{E}^{\pi}\big[\sum^{H}_{h=0} (H-h+1) {\bf 1}(s_h=s, a_h=a)\big]}{\sum^{H}_{h=0} (H-h+1)}.$$	}
	The upper bound \eqref{ub2} can be simplified to \vspace{-0.12cm}
	$$|\widehat v^{\pi}- v^{\pi}| \leq C H^2 \sqrt{\frac{1+\chi_{\mathcal{Q}}^2 (\mu^{\pi}, \bar \mu)}{N}} +O(N^{-1}). \vspace{-0.05cm}$$

Moreover, each mismatch term in \eqref{ub1} has a vector form \vspace{-0.1cm} \[ {\frac{\mathbb{E}^{\pi}[f(s_h,a_h) \, \big| \, s_0 \sim \xi_0 ]}{\sqrt{\mathbb{E}[ \frac{1}{H} \sum_{h=0}^{H-1} f^2(s_{1,h},a_{1,h}) ]}}} = \sqrt{(\nu_h^{\pi})^{\top} \Sigma^{-1} \nu_h^{\pi}}, \vspace{-0.12cm} \] where $\nu_h^{\pi} \! := \mathbb{E}^{\pi}[ \phi(s_h,a_h) \, \big| \, s_0 \sim \xi_0]$, {so it can be estimated tractably.}

\noindent {\bf The case of tabular MDP.} \\
	In the tabular case, the condition $\phi(s,a)^{\top} \Sigma^{-1} \phi(s',a') \geq 0$ holds for all $(s,a), (s',a') \in \mathcal{X}$. It can be easily seen that the error bound \eqref{ub2} has a strong connection with the $\chi^2$-divergence between the state-action distributions under the behavior and target policies.

	\begin{corollary}[Upper bound in tabular case] \label{cor:tabular}
		In the tabular case with {$\mathcal{Q} = \mathbb{R}^{\mathcal{X}}$}, if $N$ is sufficiently large and $\lambda = 0$, then with probability at least $1-\delta$, \vspace{-0.2cm}
		\begin{equation} \label{tabular} \big| v^{\pi} - \widehat{v}^{\pi} \big| \leq 3H^2\sqrt{ 1 + \chi^2(\mu^{\pi}, \overline{\mu})} \sqrt{\frac{\ln(12/\delta)}{2N}} + O(N^{-1}),
		\vspace{-0.1cm} \end{equation}
		where $\chi^2(\cdot, \cdot)$ denotes the Pearson $\chi^2$-divergence. If the MDP is also time-inhomogeneous, then \vspace{-0.1cm}
		\begin{equation} \label{tabular&time} |v^{\pi} \!-\! \widehat{v}^{\pi}| \leq \sqrt{\! H \! \sum_{h=0}^H \sum_{s,a} \! \frac{\mu_h^{\pi}(s,a)^2}{\overline{\mu}_h(s,a)} {\rm Var} \big[ r' \!+\! V_{h+1}^{\pi}(s') \, \big| \, s, a \big]} \cdot \sqrt{\frac{2\ln(12/\delta)}{N}} + o(N^{-1/2}),  \end{equation}
		where $\overline{\mu}_h$ is the marginal distribution of $(s_{1,h}, a_{1,h})$ and $\mu_h^{\pi}$ is the marginal distribution of $(s_h,a_h)$ under policy $\pi$ and $\xi_0$.
	\end{corollary}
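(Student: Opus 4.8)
The first bound \eqref{tabular} I would obtain as a direct specialization of \eqref{ub2}. In the tabular regime $\mathcal{Q}=\mathbb{R}^{\mathcal{X}}$ the feature map is $\phi(s,a)=e_{(s,a)}$, so the uncentered covariance $\Sigma$ is diagonal with entries $\overline{\mu}(s,a)$ and $\Sigma^{-1}$ is diagonal with nonnegative entries $1/\overline{\mu}(s,a)$; hence $\phi(s,a)^{\top}\Sigma^{-1}\phi(s',a')\ge 0$ for every pair, so the nonnegativity condition (the first bullet) of Theorem \ref{UpperBound} is met and \eqref{ub2} applies once $N$ is large enough and $\lambda=0$. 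I would then rewrite its leading factor using $\mu^{\pi}$ and $\overline{\mu}$: with $W:=\sum_{h=0}^{H}(H-h+1)=\tfrac{(H+1)(H+2)}{2}$, the numerator equals $W\,\mathbb{E}_{\mu^{\pi}}[f]$ and the denominator equals $\sqrt{\mathbb{E}_{\overline{\mu}}[f^{2}]}$, by the definitions of $\mu^{\pi}$ and $\overline{\mu}$. Cauchy--Schwarz gives $\sup_{f}\mathbb{E}_{\mu^{\pi}}[f]^{2}/\mathbb{E}_{\overline{\mu}}[f^{2}]=\sum_{s,a}\mu^{\pi}(s,a)^{2}/\overline{\mu}(s,a)=1+\chi^{2}(\mu^{\pi},\overline{\mu})$, attained at $f=\mathrm{d}\mu^{\pi}/\mathrm{d}\overline{\mu}$; this is precisely the tabular identity $\chi^{2}_{\mathcal{Q}}=\chi^{2}$. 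The elementary estimate $W\le 3H^{2}$ (equivalent to $(5H+2)(H-1)\ge 0$ for $H\ge1$) then turns the leading coefficient into $3H^{2}\sqrt{1+\chi^{2}(\mu^{\pi},\overline{\mu})}$, while the $C\ln(12dH/\delta)dH^{3.5}/N$ term is absorbed into $O(N^{-1})$, giving \eqref{tabular}.

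The sharper bound \eqref{tabular&time} cannot be read off from \eqref{ub2}, whose worst-case $\sup_{f}$ form is coarser than the variance-aware target; instead I would revisit the martingale decomposition behind Theorem \ref{UpperBound}. Writing the one-step empirical Bellman residual $\eta_{h}$ for the target $Q^{\pi}_{h+1}$ and unrolling $\widehat{Q}^{\pi}_{h}-Q^{\pi}_{h}=\eta_{h}+\widehat{\mathcal{P}}^{\pi}(\widehat{Q}^{\pi}_{h+1}-Q^{\pi}_{h+1})$ with $\widehat{Q}^{\pi}_{H+1}=Q^{\pi}_{H+1}=0$ yields $\widehat{v}^{\pi}-v^{\pi}=\sum_{h=0}^{H}\langle\mu_{h}^{\pi},\eta_{h}\rangle+o_{\mathbb{P}}(N^{-1/2})$, the remainder collecting the second-order terms that arise when the random propagator $(\widehat{\mathcal{P}}^{\pi})^{h}$ is replaced by $(\mathcal{P}^{\pi})^{h}$ (so that $(\mathcal{P}^{\pi\top})^{h}\nu_0^{\pi}$ becomes the marginal $\mu_h^{\pi}$) and the empirical visit counts $N_{h}(s,a)$ by $K\,\overline{\mu}_{h}(s,a)$. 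In the time-inhomogeneous tabular case each step is fitted from its own $K=N/H$ transitions, and a short calculation gives $\langle\mu_{h}^{\pi},\eta_{h}\rangle=\tfrac{1}{K}\sum_{k=1}^{K}\tfrac{\mu_{h}^{\pi}(s_{k,h},a_{k,h})}{\overline{\mu}_{h}(s_{k,h},a_{k,h})}\big(r'_{k,h}+V^{\pi}_{h+1}(s_{k,h+1})-Q^{\pi}_{h}(s_{k,h},a_{k,h})\big)+o_{\mathbb{P}}(N^{-1/2})$. Conditioned on $(s_{k,h},a_{k,h})$ each summand has mean zero (by the Bellman equation) and conditional variance $\tfrac{\mu_{h}^{\pi}(s,a)^{2}}{\overline{\mu}_{h}(s,a)^{2}}\mathrm{Var}[r'+V^{\pi}_{h+1}(s')\mid s,a]$, and summands from different steps and different episodes are uncorrelated, so they form a martingale-difference array. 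Summing the conditional variances over the $K$ episodes and over $h$, and using $1/K=H/N$, gives a predictable quadratic variation equal to $\tfrac{H}{N}\sum_{h=0}^{H}\sum_{s,a}\tfrac{\mu_{h}^{\pi}(s,a)^{2}}{\overline{\mu}_{h}(s,a)}\mathrm{Var}[r'+V^{\pi}_{h+1}(s')\mid s,a]$; a martingale concentration (Freedman/Bernstein, or a martingale CLT followed by a Gaussian tail) then converts this standard deviation into the high-probability factor $\sqrt{2\ln(12/\delta)}$, which is \eqref{tabular&time}.

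The main obstacle is confined to \eqref{tabular&time}: making the expansion $\widehat{v}^{\pi}-v^{\pi}=\sum_{h}\langle\mu_{h}^{\pi},\eta_{h}\rangle+o_{\mathbb{P}}(N^{-1/2})$ rigorous at the claimed precision. Two places demand care: showing that replacing the random propagator $(\widehat{\mathcal{P}}^{\pi})^{h}$ and the random counts $N_{h}(s,a)$ by their population limits contributes only $o_{\mathbb{P}}(N^{-1/2})$ (a product of two $O_{\mathbb{P}}(N^{-1/2})$ factors), and extracting a clean martingale-difference array whose predictable quadratic variation concentrates at exactly the stated variance. Delivering this variance-aware tail, as opposed to the worst-case $\chi^{2}_{\mathcal{Q}}$ bound of Theorem \ref{UpperBound}, is the technically demanding step; the first bound \eqref{tabular}, by contrast, is a routine specialization of definitions together with the elementary inequality $W\le 3H^{2}$.
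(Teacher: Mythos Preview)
Your proposal is correct and follows essentially the same route as the paper. For \eqref{tabular} you specialize \eqref{ub2} via the diagonal $\Sigma$, identify the vector-form mismatch with $\sum_{h}(H-h+1)\sqrt{1+\chi^{2}(\mu^{\pi},\overline{\mu})}$, and bound $W\le 3H^{2}$, exactly as the paper does; for \eqref{tabular&time} you reuse the $E_{1}$/$E_{2}$/$E_{3}$ decomposition, write $E_{1}=\tfrac{1}{K}\sum_{k,h}e_{k,h}$ with $e_{k,h}=\tfrac{\mu_{h}^{\pi}}{\overline{\mu}_{h}}\,(Q_{h}^{\pi}-r'-V_{h+1}^{\pi})$, and apply Freedman, with the higher-order pieces absorbed into $o(N^{-1/2})$. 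The one step the paper makes explicit that you only gesture at is the concentration of the predictable quadratic variation itself: the sum $\sum_{k,h}\mathrm{Var}[e_{k,h}\mid\mathcal{F}_{k,h}]$ is random, and the paper controls it by a separate Hoeffding bound (producing the $O(K^{-3/4})$ remainder) before invoking Freedman; your phrase ``gives a predictable quadratic variation equal to'' should read ``whose expectation equals, and which concentrates near,'' but you flag this in your closing paragraph.
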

	The tabular-case upper bound \eqref{tabular&time} has the same form with Theorem 3.1 in \citet{yin2020asymptotically}. The proof of Corollary 1 is deferred to Appendix \ref{appendix:proof:cor}.
	
\vspace{-0.2cm}

\subsection{Proof Outline}

\def\cP{\mathcal{P}}

We decompose the error into three terms:
$v^{\pi} - \widehat{v}^{\pi} = E_1 + E_2 + E_3$, where $E_1$ is {a linear function of $\widehat{\cP}^{\pi}-{\cP}^{\pi}$}, $E_2$ is a high-order function of $\widehat{\cP}^{\pi}-{\cP}^{\pi}$ and $E_3=O(\lambda)$.
In the following, we outline the analysis of $E_1$ and $E_2$.

{\bf First-order term $E_1$: }
This linear error term takes the form $E_1 \!=\! \frac{1}{N} \! \sum_{n=1}^N \! e_n$, where \vspace{-0.15cm}
\[ e_n :=\sum_{h=0}^{H} \, (\nu_h^{\pi})^{\top} \Sigma^{-1} \phi(s_n,a_n) \Big( Q_h^{\pi}(s_n, a_n) - \big( r_n'+V_{h+1}^{\pi}(s_n') \big) \Big).\] 
Define a filtration $\{ \mathcal{F}_n\}_{n=1,\ldots,N}$ where $\mathcal{F}_n$ is generated by $(s_1,a_1,s_1',r_1'), \ldots, (s_{n-1},a_{n-1},s_{n-1}',r_{n-1}')$ and $(s_n,a_n)$. Then $e_1,e_2,\ldots,e_N$ is a martingale difference sequence with respect to $\{ \mathcal{F}_n \}_{n=1,\ldots,N}$. In what is next, we analyze ${\rm Var}[e_n \, | \, \mathcal{F}_n]$ and apply the Freedman's inequality \cite{freedman1975tail} to derive a finite sample upper bound for $E_1$.

Consider the conditional variance ${\rm Var}[e_n \, | \, \mathcal{F}_n]$.
By using the Cauchy-Schwarz inequality and the relation ${\rm Var}\big[r_n'+V_{h+1}^{\pi}(s_n') \, \big| \, s_n, a_n \big] \leq \frac{1}{4}(H-h+1)^2$, we have 
\begin{equation}\label{E1_1} {\rm Var}\big[e_n \, \big| \, \mathcal{F}_n\big] = \mathbb{E}\big[ e_n^2 \, | \, s_n, a_n \big] 
\leq \frac{1}{4} \Bigg( \sum_{h=0}^{H} (H-h+1) \sqrt{(\nu_h^{\pi})^{\top} \Sigma^{-1} \nu_h^{\pi}} \Bigg) \cdot \left( \sum_{h=0}^{H} \frac{H-h+1}{\sqrt{(\nu_h^{\pi})^{\top} \Sigma^{-1} \nu_h^{\pi}}} \big((\nu_h^{\pi})^{\top} \Sigma^{-1} \phi(s_n,a_n) \big)^2 \right). 
\end{equation}
We learn from the matrix-form Bernstein inequality that $\frac{1}{N} \sum_{n=1}^N \phi(s_n,a_n) \phi(s_n,a_n)^{\top}$ concentrates around $\Sigma$ with high probability. It follows that \vspace{-0.1cm}
\begin{equation}\label{E1_2} \sum_{n=1}^N \big((\nu_h^{\pi})^{\top} \Sigma^{-1} \phi(s_n,a_n) \big)^2 = (\nu_h^{\pi})^{\top}  \Sigma^{-1} \! \Bigg( \sum_{n=1}^N \phi(s_n,a_n) \phi(s_n,a_n)^{\top} \Bigg) \Sigma^{-1} \nu_h^{\pi} = (\nu_h^{\pi})^{\top} \Sigma^{-1} \nu_h^{\pi} \big( N +\sqrt{dH} \cdot O_{\mathbb{P}}(\sqrt{N}) \big). \vspace{-0.05cm} \end{equation}
Plugging \eqref{E1_2} into \eqref{E1_1} and taking the summation, we obtain \vspace{-0.15cm}
\[ \sum_{n=1}^N {\rm Var}[e_n \, | \, \mathcal{F}_n] \leq  \frac{1}{4} \bigg( \sum_{h=0}^{H} (H-h+1)  \sqrt{(\nu_h^{\pi})^{\top} \Sigma^{-1} \nu_h^{\pi}} \bigg)^2 \cdot \big( N + \sqrt{dH} \cdot O_{\mathbb{P}}(\sqrt{N}) \big). \] It follows from the Freedman's inequality that with high probability, \vspace{-0.1cm}
\[ |E_1| \lesssim \frac{1}{\sqrt{N}} \sum_{h=0}^{H} (H-h+1) \sqrt{(\nu_h^{\pi})^{\top} \Sigma^{-1} \nu_h^{\pi}} + \frac{\sqrt{dH}}{N}. \]

\paragraph{High-order term $E_2$ (bias-inducing term):} 
The high-order term $E_2$ involves powers of $\widehat{\cP}^{\pi}-{\cP}^{\pi}$. We use the contraction property of Markov process with respect to its invariant measure, in particular,
 \vspace{-0.1cm} \begin{equation} \label{M=1} \big\|(\Sigma^{\pi})^{1/2} M^{\pi} (\Sigma^{\pi})^{-1/2}\big\|_2\leq1. \end{equation}
where $\Sigma^{\pi} = \mathbb{E}\big[ \phi^{\pi}(s) \phi^{\pi}(s)^{\top} \, \big| \, s \sim \xi^{\pi} \big]$, $\xi^{\pi}$ is an invariant distribution under policy ${\pi}$. Assume $\Sigma^{\pi}$ has full rank for simplicity. 


By using the contraction property, we will see that the value error will not grow exponentially in $H$ for large $N$. We have: \vspace{-0.1cm}
\begin{equation} \label{E2decom} |E_2| \leq \sum_{h=0}^{H}  \sqrt{(\nu_0)^{\top} (\Sigma^{\pi})^{-1} \nu_0^{\pi}} \cdot Err(Q_h^{\pi})  \cdot \Big( \big(1+Err(\widehat{M}^{\pi})\big)^h\big(1+Err(N\widehat{\Sigma}^{-1})\big) - 1 \Big), \end{equation}
where the explicit definitions of errors $Err(\widehat{M}^{\pi})$, $Err(N \widehat{\Sigma}^{-1})$ and $Err(Q_h^{\pi})$ can be found in Lemma \ref{lemma:E2decompose}, Appendix \ref{section:E2}. 
By concentration arguments, we can show $Err(\widehat{M}^{\pi})$, $Err(N \widehat{\Sigma}^{-1}) \lesssim\sqrt{dH/N}$ and $Err(Q_h^{\pi}) \lesssim (H-h+1) \sqrt{d/N}$ with high probability. 
According to \eqref{E2decom}, as long as $ Err(\widehat{M}^{\pi}) \lesssim H^{-1}$, 
the policy evaluation error will not grow exponentially in $H$.
As a result, if $N \gtrsim dH^3$, we have $|E_2| \lesssim dH^{3.5}/N$.
\vspace{-0.6cm}
\begin{flushright}
	$\square$
\end{flushright}

\section{Minimax Lower Bound}

In this section, we establish a minimax lower bound that characterizes the hardness of off-policy evaluation using linear function approximators. Theorem \ref{theorem:lb} nearly matches the finite-sample upper bound given by Theorem \ref{UpperBound}. 
The complete proof of Theorem \ref{theorem:lb} is given in Appendix \ref{appendix:proof:LowerBound}.

\def\cS{\mathcal{S}}
\begin{theorem}[Minimax lower bound] \label{theorem:lb}
Suppose that an MDP instance $M=(p,r)$ satisfies:
\begin{itemize}
\item There exists a set of high-value states $\overline \cS\subseteq \cS$ and a set of low-value states $\underline \cS\subseteq \cS$ under the target policy $\pi$ such that $V_h^{\pi}(s) \geq \frac34 (H-h+1)$ if $s\in \overline \cS$ and $V_h^{\pi}(s) \leq \frac14 (H-h+1)$ if $s\in \underline \cS$;
\item $\overline{p} := \int_{\overline{\mathcal{S}}} \min_{s \in \mathcal{S}} p^{\overline{\pi}}(s' \, | \, s) {\rm d}s' \geq c $ and $\underline{p} := \int_{\underline{\mathcal{S}}} \min_{s \in \mathcal{S}} p^{\overline{\pi}}(s' \, | \, s) {\rm d}s' \geq c$ for $c>0$.\footnote{We assume the bahavior policy $\overline{\pi}$ is deterministic only for the sake of notational simplicity.}
\end{itemize}
For any behavior policy $\overline{\pi}$, when $N$ is sufficiently large,  one has \vspace{-0.1cm}
	\begin{equation} \label{lowerbound} \inf_{\widehat{v}^{\pi}} \sup_{M'\in \mathcal{N}(M)} \mathbb{P}_{M'} \left( \big| v^{\pi} - \widehat{v}^{\pi}(\mathcal{D}) \big| \geq \frac{\sqrt{c}}{24\sqrt{N}} \cdot \sup_{f \in \mathcal{Q}} \frac{\mathbb{E}_{M'}^{\pi} \big[ \sum_{h=0}^{H-1}(H-h)f(s_h,a_h) \, \big| \, s_0 \sim \xi_0 \big]}{\sqrt{\mathbb{E}_{M'}\big[ \frac{1}{H} \sum_{h=0}^{H-1} f^2(s_{1,h},a_{1,h}) \big]}} \right) \geq \frac{1}{6}, \vspace{-0.1cm} \end{equation}
	where $\mathcal{N}(M)$ is a small neighborhood of $M$ given by $\mathcal{N}(M) := \big\{ M' = (p',r) \ \big| \, \sup_{(s,a) \in \mathcal{X}} \big\| p'(\cdot \, | \, s,a) - p(\cdot \, | \, s,a) \big\|_{\rm TV} \leq \varepsilon \big\}$ ($\| \cdot \|_{\rm TV}$ denotes the total variation, $\varepsilon \gtrsim \sqrt{cd/N}$). $\mathbb{P}_{M'}$ is the probability space of $M'$, $\widehat{v}^{\pi}(\mathcal{D})$ is the output of some algorithm $\widehat v^{\pi}$ when $\mathcal{D}$ is given as the input.
\end{theorem}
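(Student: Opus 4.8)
The plan is to prove the lower bound by a two-point (Le Cam) reduction, in which the hardest perturbation of the transition kernel is the one aligned with the function $f^\ast\in\mathcal{Q}$ attaining the supremum in~\eqref{lowerbound}: such a perturbation changes the value maximally while being least detectable from data drawn under $\overline\pi$. First I would fix $f^\ast=\phi^\top w^\ast$ (up to negligible slack) and normalize it so that $\mathbb{E}[\frac1H\sum_{h=0}^{H-1}f^{\ast2}(s_{1,h},a_{1,h})]=1$. Then I construct two instances $M^\pm=(p^\pm,r)\in\mathcal{N}(M)$ by perturbing the kernel everywhere in the direction of $f^\ast$,
\[
 p^\pm(\cdot\mid s,a)=p(\cdot\mid s,a)\pm\alpha\,f^\ast(s,a)\,\bigl(\overline q(\cdot)-\underline q(\cdot)\bigr),
\]
where $\overline q,\underline q$ are fixed distributions supported on the high- and low-value sets $\overline{\mathcal{S}},\underline{\mathcal{S}}$, and $\alpha\asymp\sqrt{c/N}$ is the perturbation scale. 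The hypotheses $\overline p,\underline p\ge c$ ensure there is enough mass to move, so for $N$ large both $p^\pm$ stay valid kernels; the per-pair total variation is $\lesssim\alpha\lvert f^\ast(s,a)\rvert\lesssim\alpha\sqrt d$ by the normalization $\phi^\top\Sigma^{-1}\phi\le C_1 d$, which is exactly what forces the neighborhood radius $\varepsilon\gtrsim\sqrt{cd/N}$ in order to host the construction.

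Next I would lower-bound the induced value gap. Since $V_h^\pi\ge\frac34(H-h+1)$ on $\overline{\mathcal{S}}$ and $V_h^\pi\le\frac14(H-h+1)$ on $\underline{\mathcal{S}}$, moving mass $\alpha f^\ast(s,a)$ from $\underline{\mathcal{S}}$ to $\overline{\mathcal{S}}$ raises the step-$h$ Bellman backup by at least $\frac12(H-h+1)\,\alpha f^\ast(s,a)$. Propagating this single-step perturbation through the Bellman recursion exactly as in the first-order term $E_1$ of the upper-bound analysis, and summing against the occupancy measure of $\pi$, gives
\[
 \bigl\lvert v^\pi_{M^+}-v^\pi_{M^-}\bigr\rvert \ \gtrsim\ \alpha\,\mathbb{E}^\pi\Bigl[\textstyle\sum_{h=0}^{H-1}(H-h)f^\ast(s_h,a_h)\Bigr]\ \asymp\ \sqrt{\tfrac{c}{N}}\,\sup_{f\in\mathcal{Q}}\frac{\mathbb{E}^\pi[\sum_{h=0}^{H-1}(H-h)f(s_h,a_h)]}{\sqrt{\mathbb{E}[\frac1H\sum_{h=0}^{H-1}f^2]}}
\]
after undoing the normalization of $f^\ast$. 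Here I must verify that this linear-in-$\alpha$ term dominates the higher-order corrections (valid once $N$ is large) and that the occupancy measures under $M^\pm$ agree with that under $M$ up to $O(\alpha)$, so that the supremizing direction is essentially unchanged across the neighborhood.

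Finally I would bound the statistical distinguishability and invoke Le Cam. The $N$ logged transitions have laws under $M^+$ and $M^-$ that differ only through $p^\pm(\cdot\mid s_n,a_n)$, so a second-order expansion gives per-sample $\mathrm{KL}\lesssim\alpha^2 f^{\ast2}(s_n,a_n)/c$ and hence $\mathrm{KL}(\mathbb{P}_{M^+}\,\|\,\mathbb{P}_{M^-})\lesssim\frac{\alpha^2 N}{c}\,\mathbb{E}[\frac1H\sum_h f^{\ast2}]=O(1)$ by $\alpha\asymp\sqrt{c/N}$ and the normalization. Pinsker's inequality then keeps the total variation between the two data laws bounded away from $1$; since the value gap from the previous step exceeds twice the target threshold $\frac{\sqrt c}{24\sqrt N}\sup_f(\cdots)$, the two-point bound $\inf_{\widehat v}\max_\pm\mathbb{P}_{M^\pm}(\lvert v^\pi-\widehat v\rvert\ge t)\ge\tfrac12(1-\mathrm{TV})$ delivers the constant $\tfrac16$.

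The step I expect to be the main obstacle is the value-gap computation: one must show that a single-step kernel perturbation, accumulated across the $H$-step rollout, reproduces exactly the mismatch numerator $\mathbb{E}^\pi[\sum_h(H-h)f^\ast]$ with a dimension-free constant, while simultaneously certifying that the radius $\varepsilon\gtrsim\sqrt{cd/N}$ is large enough to contain $M^\pm$ yet small enough that the first-order value expansion stays accurate. Aligning the data-variance normalization in the denominator against the KL computation is the delicate point that couples the numerator and denominator of the restricted $\chi^2_{\mathcal{Q}}$ ratio and produces the matching rate.
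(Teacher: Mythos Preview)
Your proposal is correct and follows essentially the same two-point Le Cam construction as the paper: perturb the kernel in the direction $\phi(s,a)^\top\mathbf{x}$ with $\mathbf{x}\propto\Sigma^{-1}\sum_h(H-h)\nu_h^\pi$ (equivalently, the maximizing $f^\ast$), shift mass between the high- and low-value sets $\overline{\mathcal{S}},\underline{\mathcal{S}}$, lower-bound the value gap via the Bellman telescoping $\sum_h(\widetilde{\mathcal{P}}^\pi)^h(\mathcal{P}^\pi-\widetilde{\mathcal{P}}^\pi)Q_{h+1}^\pi$, and match it against the data-distinguishability budget $\mathbf{x}^\top\Sigma\mathbf{x}\lesssim 1/N$. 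The paper differs only in execution: it uses a single perturbed instance and controls indistinguishability by a direct martingale concentration of the log-likelihood ratio (Freedman's inequality) rather than KL plus Pinsker, and it takes the explicit choice $\Delta q(s')\propto\min_{s}p^{\overline\pi}(s'\mid s)\bigl(\underline p\,\mathbbm{1}_{\overline{\mathcal{S}}}(s')-\overline p\,\mathbbm{1}_{\underline{\mathcal{S}}}(s')\bigr)$, which is precisely the form of your ``fixed distributions $\overline q,\underline q$'' that simultaneously keeps $p^\pm$ a valid kernel and makes the ratio $\Delta p/p$ uniformly bounded so your per-sample $\mathrm{KL}\lesssim\alpha^2 f^{\ast2}/c$ estimate goes through.
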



{\bf Remark.} The minimax lower bound is a worst-case error lower bound that applies to {\it any estimator}, biased or unbiased. Typical minimax lower bound takes the form of $\inf_{\widehat v}\sup_{\mathcal{M}}$ where the sup is taken over the entire class of MDP instances $\mathcal{M}$. Our lower bound is much stronger and can be easily relaxed to the typical form. 

Compare Theorems \ref{UpperBound} and \ref{theorem:lb}. They nearly match each other, implying that the $\mathcal{Q}$-$\chi^2$-divergence term $\chi_{\mathcal{Q}}^2(\overline{\mu},\mu^{\pi})$ determines the statistical complexity of OPE.
	
\paragraph{An example.}

Suppose that there is a high-value state  $\overline{s}$ and a low-value state $\underline{s}$, which are two absorbing states under the target policy $\pi$, with rewards $1$ and $0$ respectively.

We construct $\phi$, $\pi$ and $\overline{\pi}$ such that
	$\phi^{ \overline{\pi}} (\overline{s}) = [z,1-z]^{\top}$, $\phi^{ \overline{\pi}} (\underline{s}) = [1-z,z]^{\top}$; and		$\phi^{\pi}(\overline{s}) = [1,0]^{\top}$, $\phi^{\pi}(\underline{s} ) = [0,1]^{\top}$.  Here $z\in[0,1]$ is a parameter. 
	We construct the transition model as: 
		\vspace{-0.1cm}
	\begin{figure}[H]
		\centering
		\begin{minipage}{0.25\linewidth}
			\centering 
			$p$ under behavior policy $\overline{\pi}$: \vspace{-0.22cm}
			\includegraphics[width = \linewidth]{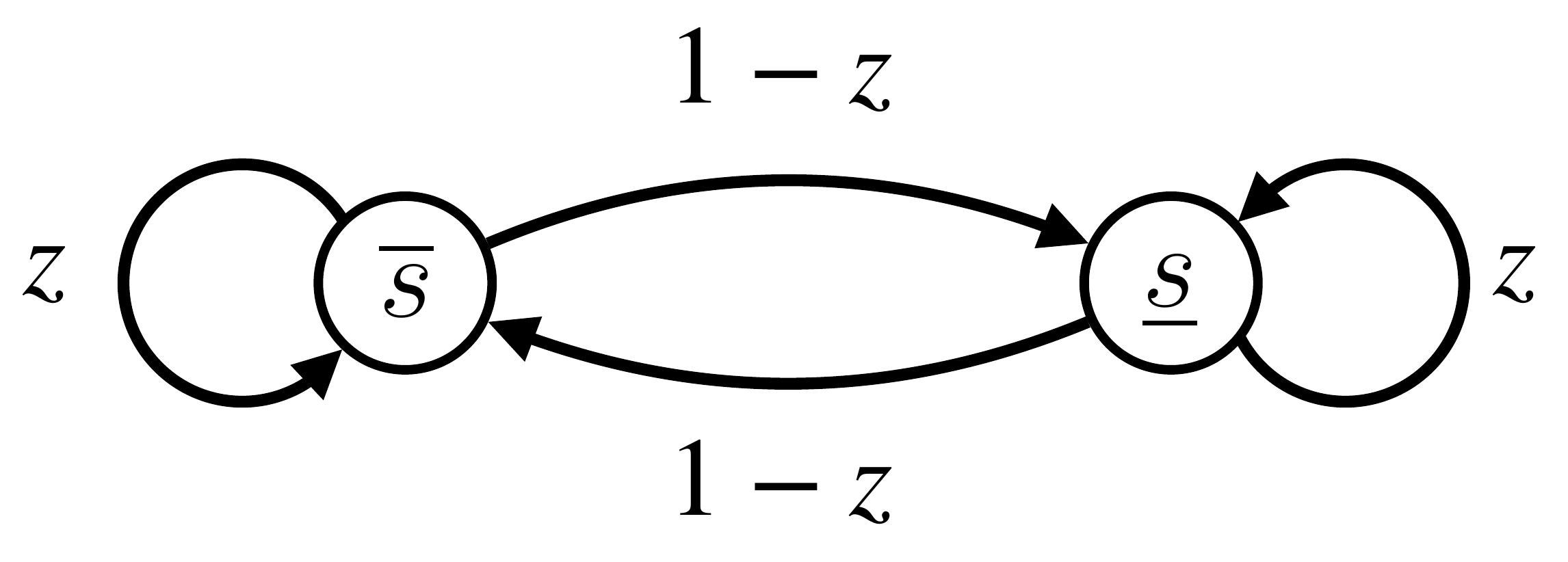}
		\end{minipage}
		\hspace{2cm}
		\begin{minipage}{0.25\linewidth}
			\centering
			$p$ under target policy $\pi$: \vspace{-0.22cm}
			\includegraphics[width = \linewidth]{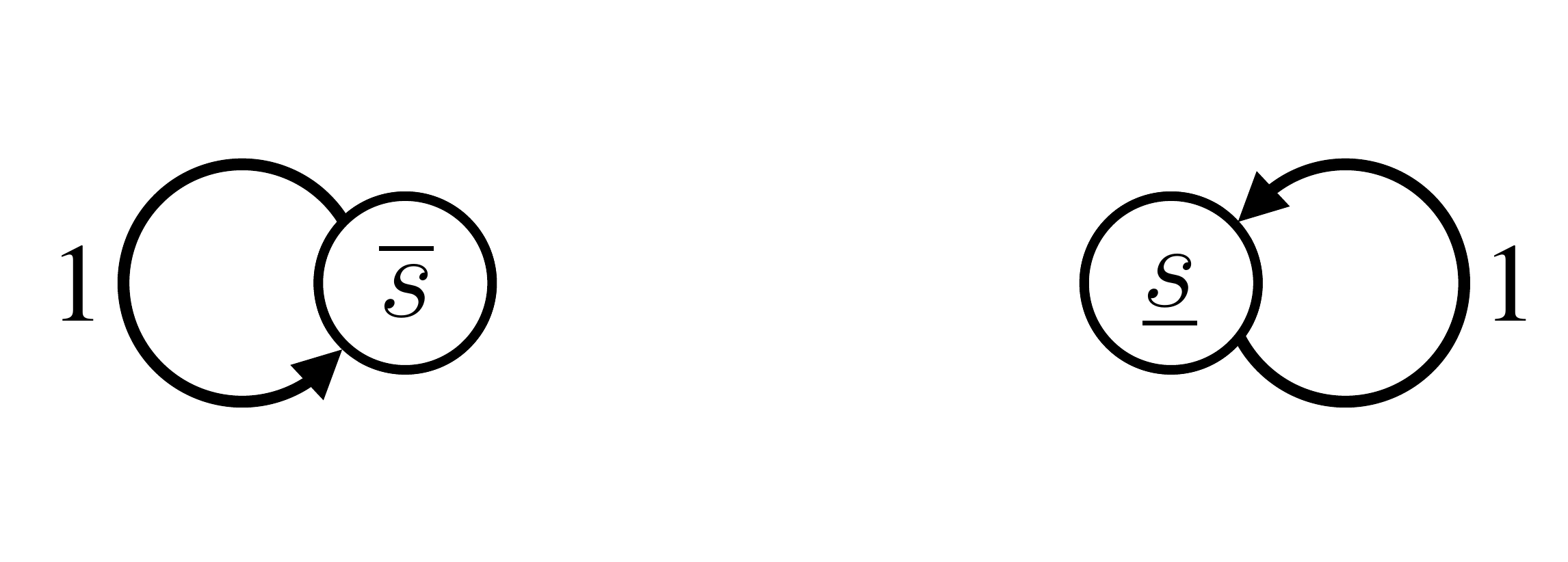}
		\end{minipage}
	\end{figure}
	\vspace{-0.2cm}
Suppose that the behavior policy $\overline \pi $ initiates at either one of the states with probability $1/2$, and the target policy $\pi$ always initiates at state $\overline{s}$. We can see that 
	\[ \Sigma = \bigg[ \begin{array}{cc} z^2-z+\frac{1}{2} & z(1-z) \\ z(1-z) & z^2-z+\frac{1}{2} \end{array} \bigg], \]
	and $\nu_0^{\pi} = \nu_1^{\pi} = \ldots = \nu_{H-1}^{\pi} = [1,0]^{\top}$.
	For $z \in [\frac{1}{4}, \frac{3}{4}]$, the distributional mismatch term controlling the lower bound becomes 
$$
\Theta(H^2) \sqrt{1 + \frac{1}{(2z-1)^2}} ,
$$
where $z$ quantifies how much one can tell apart the two states under the target policy $\pi$ using data generated by $\bar \pi$.
		When $z \approx 1/2$, one can not distinguish $\bar s$ and $\underline{s}$ from data generated by $\bar \pi$, where the lower bound becomes unbounded.


	
	

\def\bx{\mathbf{x}}
\subsection{Proof Outline}

 We start with an arbitrary MDP $M$ with transition kernel $p$ that satisfies the assumption. We will construct a perturbed instance $\widetilde p = p +\Delta p$ so that the two transition models are similar but have a gap in their policy values, denoted by $v^{\pi}$ and $\widetilde v^{\pi}$. 

Construct the perturbation $\Delta p$ such that $\Delta p(s' \, | \, s,a)\geq0$ if $s'\in \overline{\cS}$, $\Delta p(s' \, | \, s,a)\leq0$ if $s'\in \underline{\cS}$ and $\Delta p(s' \, | \, s,a)=0$ elsewhere.
In particular, we construct the perturbation as
\begin{equation}
	\Delta p(s' \, | \, s,a)  = \phi(s,a)^{\top} \Delta q(s'), \qquad \hbox{where  }\Delta q(s') := {\bf x} \cdot \min_{s \in \mathcal{S}} p^{\overline{\pi}} ( s' \, | \, s) \cdot \big( \underline{p} \mathbbm{1}_{\overline{\mathcal{S}}}(s') - \overline{p} \mathbbm{1}_{\underline{\mathcal{S}}}(s') \big),
\end{equation}
where $ \overline{p}$ and $\underline{p}$ are picked such that $\int_{\mathcal{S}} \Delta p(s' \, | \, s,a) {\rm d}s' =0$ for any $s,a$, $\bx$ is a vector to be picked later.

	\paragraph{Reduction to likelihood test}
	We define likelihood functions $\mathcal{L}(\mathcal{D})$ and $\widetilde{\mathcal{L}}(\mathcal{D})$ of transition kernels $p$ and $\widetilde{p}$. The likelihood ratio $\frac{\widetilde{\mathcal{L}}(\mathcal{D})}{\mathcal{L}(\mathcal{D})} = \prod_{n=1}^N \frac{\widetilde{p}(s_n' \, | \, s_n,a_n)}{p(s_n' \, | \, s_n,a_n)}$ reflects how likely the observation $\mathcal{D}$ comes from model $\widetilde{p}$ rather than $p$. When $p \approx \widetilde{p}$, with high probability, the dataset $\mathcal{D}$ generated by model $p$ has a relatively large likelihood ratio, so that it is hard to distinguish $p$ and $\widetilde{p}$ based on observation $\mathcal{D}$.
	We prove by a martingale concentration argument that, when $N$ is sufficiently large, 
	$$
		\ln \bigg( \frac{\widetilde{\mathcal{L}}(\mathcal{D})}{\mathcal{L}(\mathcal{D})} \bigg) \gtrsim - \sqrt{N} \sqrt{{\bf x}^{\top} \Sigma {\bf x}} - N \cdot {\bf x}^{\top} \Sigma {\bf x}
	$$
	with high probability.
	In particular, we have 
	\vspace{-0.2cm} \begin{equation} \label{likelihood} \mathbb{P} \bigg( \frac{\widetilde{\mathcal{L}}(\mathcal{D})}{\mathcal{L}(\mathcal{D})} \geq \frac{1}{2} \bigg) \geq \frac{1}{2}. \end{equation}
	when $\sqrt{{\bf x}^{\top} \Sigma {\bf x}} \lesssim N^{-1/2}$.
	If we further have $|v^{\pi} - \widetilde{v}^{\pi}| \geq \rho + \widetilde{\rho}$ for some constant gaps $\rho, \widetilde{\rho} \geq 0$, condition \eqref{likelihood} implies that for an arbitrary algorithm $\widehat{v}^{\pi}$, only one of the following must hold: either $\mathbb{P}\big( |v^{\pi} - \widehat{v}^{\pi}(\mathcal{D})| \geq \rho \big) \geq \frac{1}{6}$ or $\widetilde{\mathbb{P}}\big( |\widetilde{v}^{\pi} - \widehat{v}^{\pi}(\mathcal{D})| \geq \widetilde{\rho} \big) \geq \frac{1}{6}$. In other words, no algorithm can achieve small OPE error for both $p$ and $\widetilde p$.
	
	\paragraph{Constructing similar instances with a gap in values}
	
	We have
	\vspace{-0.27cm}
		\begin{equation}
		 \widetilde{v}^{\pi} - v^{\pi}  = \sum^H_{h=0} \xi_0^{\top} (\widetilde \cP^{\pi})^h (\widetilde \cP^{\pi}-\cP^{\pi})Q_{h+1}^{\pi}.  \end{equation}
		By first-order Taylor expansion and our construction, if the perturbation $\Delta p$ is sufficiently small, we have  
		\begin{equation} \label{v-v'}
		 \widetilde{v}^{\pi} - v^{\pi}  \approx \sum^H_{h=0} \xi_0^{\top} (\cP^{\pi})^h (\widetilde \cP^{\pi}-\cP^{\pi})Q_{h+1}^{\pi}\gtrsim \sum_{h=0}^{H-1} (H-h) (\nu_h^{\pi})^{\top} {\bf x}. \end{equation}	
	For a given $N$, we maximize the above value over $\bx$ under the constraint $\sqrt{{\bf x}^{\top} \Sigma {\bf x}} \lesssim N^{-1/2}$. Then we obtain ${\bf x}^* = \frac{c_0\bx_0}{\sqrt{N}\sqrt{\bx_0^{\top} \Sigma \bx_0}}$ where $c_0 > 0$ is a constant and $\bx_0= \Sigma^{-1} \sum_{h=0}^{H-1} (H-h) \nu_h^{\pi}$.
	In this way, we have shown that $ \widetilde v^{\pi} -{v}^{\pi} \gtrsim \frac{1}{\sqrt{N}} \big\| \sum_{h=0}^{H-1} (H-h) \nu_h^{\pi} \big\|_{\Sigma^{-1}}^2$ using the above construction of $\bx^*$. 
	
Similarly, one can show that for $N$ sufficiently large,
	$v^{\pi} - \widetilde{v}^{\pi} \geq \rho + \widetilde{\rho}$ for $\rho = \frac{\sqrt{c}}{24\sqrt{N}}\big\| \sum_{h=0}^{H-1} (H-h) \nu_h^{\pi} \big\|_{\Sigma^{-1}}^2$ and $\widetilde{\rho} = \frac{\sqrt{c}}{24\sqrt{N}}\big\| \sum_{h=0}^{H-1} (H-h) \widetilde{\nu}_h^{\pi} \big\|_{\widetilde{\Sigma}^{-1}}^2$, where $\widetilde \nu_h^{\pi}$ and $\widetilde \Sigma$ are counterparts of $\nu^{\pi}$ and $\Sigma$ under the perturbed model $\widetilde p$. Finally, we apply the result of the likelihood test and complete the proof.
	\vspace{-0.77cm}
	\begin{flushright}
		$\square$
	\end{flushright}



\section{A Computable Confidence Bound}

Next we study how to quantify the uncertainty in the policy evaluator given by Algorithm \ref{FQI}. In this section, we assume that the dataset is an arbitrary set of experiences, not necessarily independent episodes. We only assume that the transition samples $\mathcal{D} = \{ (s_n,a_n,s_n',r_n') \}_{n=1,\ldots,N}$ are collected in time order. 

\begin{assumption} \label{filtration}
	The dataset $\mathcal{D}$ consists of sample transitions $\{ (s_t,a_t,s_t',r_t') \}_{t=1}^N$ generated in time order, {\it i.e.} adapted to a filtration $\{ \mathcal{F}_t \}_{t=1}^N$, where $\{ (s_{\tau}, a_{\tau}, s_{\tau}', r_{\tau}') \}_{\tau = 1}^t$ are $\mathcal{F}_t$-measurable. 
\end{assumption}
Assumption \ref{filtration} is much weaker than Assumption \ref{episode}. It allows the samples to be generated from a long single path possibly under a nonstationary adaptive policy, as is typical in online reinforcement learning. 

Under this mildest assumption, we provide a confidence bound for the policy evaluation error $|v^{\pi} - \widehat{v}^{\pi}|$, which can be analytically computed from the data $\mathcal{D}$.  \vspace{-0.1cm}

\begin{theorem}[{\bf Computable confidence bound}] \label{UpperBound'} Let Assumptions \ref{Q_class} and \ref{filtration} hold.
Let $\omega :=  \max \big\{ \|w\|_2 \, \big| \, 0 \leq \phi(s,a)^{\top} w \leq 1, \forall (s,a) \in \mathcal{X} \big\}$.\footnote{Such $\omega$ always exists and can be computed priorly.} 
Assume $\|\phi(s,a)\|_2 \leq 1$ for any $(s,a) \in \mathcal{X}$. For a target policy $\pi$, with probability at least $1-\delta$, we have\vspace{-0.25cm}
	\begin{equation} \label{eqn_UpperBound'} \begin{aligned} & \big| v^{\pi} - \widehat{v}^{\pi} \big| \leq \sum_{h=0}^{H} (H-h+1) \sqrt{(\widehat{\nu}_h^{\pi})^{\top} \widehat{\Sigma}^{-1} \widehat{\nu}_h^{\pi}}  \cdot \bigg(\sqrt{2\lambda} \omega + 2 \sqrt{2d \ln\Big( 1 + \frac{N}{\lambda d} \Big)  \ln\Big(\frac{3N^2H}{\delta}\Big)} + \frac{4}{3}\ln\Big(\frac{3N^2H}{\delta}\Big) \bigg), \end{aligned} \vspace{-0.3cm}
	\end{equation}
	
	where $\widehat{\nu}_h^{\pi}$ is given by $(\widehat{\nu}_h^{\pi})^{\top} := (\nu_0^{\pi})^{\top}(\widehat{M}^{\pi})^h$.
\end{theorem}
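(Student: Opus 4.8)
The plan is to reduce the value error to a weighted sum of per-stage ridge-regression residuals and then control each residual with a self-normalized martingale bound, exactly as the form of \eqref{eqn_UpperBound'} suggests. First I would fix exact linear representations, legitimate under Assumption \ref{Q_class}: write $Q_h^\pi = \phi^\top w_h^\pi$, $r = \phi^\top R$, and $\mathcal{P}^\pi(\phi^\top w) = \phi^\top M^\pi w$, so that the true Bellman recursion reads $w_h^\pi = R + M^\pi w_{h+1}^\pi$ (with $w_{H+1}^\pi = 0$) and $v^\pi = (\nu_0^\pi)^\top w_0^\pi$. Since Algorithm \ref{Model-based} produces $\widehat w_h^\pi = \widehat R + \widehat M^\pi\widehat w_{h+1}^\pi$ and $\widehat v^\pi = (\nu_0^\pi)^\top\widehat w_0^\pi$, setting $\Delta_h := \widehat w_h^\pi - w_h^\pi$ gives the one-step recursion $\Delta_h = \epsilon_h + \widehat M^\pi\Delta_{h+1}$, where $\epsilon_h := (\widehat R - R) + (\widehat M^\pi - M^\pi)w_{h+1}^\pi$. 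Unrolling from $\Delta_{H+1}=0$ and pairing with $\nu_0^\pi$ yields the exact identity $\widehat v^\pi - v^\pi = \sum_{h=0}^H (\nu_0^\pi)^\top(\widehat M^\pi)^h\epsilon_h = \sum_{h=0}^H(\widehat\nu_h^\pi)^\top\epsilon_h$, which already isolates the data-computable vectors $\widehat\nu_h^\pi$.

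Next I would rewrite each $\epsilon_h$ as a ridge residual. Substituting the explicit forms of $\widehat R$ and $\widehat M^\pi$ from \eqref{vector_recursion} and using $\phi^\pi(s_n')^\top w_{h+1}^\pi = V_{h+1}^\pi(s_n')$ gives $\widehat R + \widehat M^\pi w_{h+1}^\pi = \widehat\Sigma^{-1}\sum_{n=1}^N\phi(s_n,a_n)\big(r_n' + V_{h+1}^\pi(s_n')\big)$. The Bellman equation \eqref{Bellman} lets me split $r_n' + V_{h+1}^\pi(s_n') = \phi(s_n,a_n)^\top w_h^\pi + \eta_{n,h}$, where $\eta_{n,h}$ is conditionally mean-zero given the past and $(s_n,a_n)$ and satisfies $|\eta_{n,h}|\le H-h+1$. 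Together with $\sum_n\phi(s_n,a_n)\phi(s_n,a_n)^\top = \widehat\Sigma - \lambda I$ this collapses to $\epsilon_h = \widehat\Sigma^{-1}\big(-\lambda w_h^\pi + \sum_{n=1}^N\phi(s_n,a_n)\eta_{n,h}\big)$. Applying Cauchy-Schwarz in the $\widehat\Sigma^{-1}$ inner product then peels off the announced leading factor: $|(\widehat\nu_h^\pi)^\top\epsilon_h| \le \sqrt{(\widehat\nu_h^\pi)^\top\widehat\Sigma^{-1}\widehat\nu_h^\pi}\,\big(\lambda\|w_h^\pi\|_{\widehat\Sigma^{-1}} + \|\sum_n\phi(s_n,a_n)\eta_{n,h}\|_{\widehat\Sigma^{-1}}\big)$.

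It remains to bound the two scalar terms by $(H-h+1)$ times the parenthesized quantity in \eqref{eqn_UpperBound'}. For the bias term, $\widehat\Sigma\succeq\lambda I$ gives $\lambda\|w_h^\pi\|_{\widehat\Sigma^{-1}}\le\sqrt\lambda\|w_h^\pi\|_2$, and since $Q_h^\pi/(H-h+1)\in[0,1]$ the definition of $\omega$ forces $\|w_h^\pi\|_2\le(H-h+1)\omega$, so this term is at most $(H-h+1)\sqrt{2\lambda}\,\omega$. For the stochastic term, I normalize $\tilde\eta_{n,h}:=\eta_{n,h}/(H-h+1)\in[-1,1]$, note the $\phi(s_n,a_n)$ are predictable under the filtration of Assumption \ref{filtration}, and invoke a self-normalized Bernstein-type tail bound for the vector martingale $\sum_n\phi(s_n,a_n)\tilde\eta_{n,h}$, combined with the elliptical-potential estimate $\det(\widehat\Sigma)/\det(\lambda I)\le(1+N/(\lambda d))^d$, to get $\|\sum_n\phi(s_n,a_n)\tilde\eta_{n,h}\|_{\widehat\Sigma^{-1}}\le 2\sqrt{2d\ln(1+N/(\lambda d))\ln(3N^2H/\delta)}+\tfrac43\ln(3N^2H/\delta)$ on an event of probability $1-\delta/(H+1)$; a union bound over $h=0,\dots,H$ and summation complete the proof.

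I expect the self-normalized Bernstein inequality to be the only real obstacle. The data-dependence of $\widehat\Sigma$ rules out a naive per-direction Freedman bound, and the product-of-logarithms plus linear-in-range shape of \eqref{eqn_UpperBound'} (rather than the single square root of the method-of-mixtures bound of Abbasi-Yadkori et al.) indicates that one must cover the normalizing ellipsoid and peel over the empirical conditional variance, which is precisely where the $d\ln(1+N/(\lambda d))$ effective-dimension factor and the $N^2$ time grid inside the logarithm arise. Verifying that this bound holds under only the adaptive-data Assumption \ref{filtration} — so that $\phi(s_n,a_n)$ is predictable while $\eta_{n,h}$ supplies the innovation — is the key point; the remaining steps (telescoping, Cauchy-Schwarz, and the $\omega$-based norm bound) are routine.
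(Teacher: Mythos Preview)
Your proposal is correct and follows essentially the same route as the paper. Your telescoping identity $\widehat v^\pi - v^\pi = \sum_h(\widehat\nu_h^\pi)^\top\epsilon_h$ with $\epsilon_h = \widehat W_h^\pi - w_h^\pi$ is exactly the paper's Lemma~\ref{lemma:Q_decompose0} and the Cauchy--Schwarz step is equation~\eqref{upperbound'}; the paper then controls $\Theta_{h,n}=\|\widehat W_{h,n}^\pi-w_h^\pi\|_{\widehat\Sigma_n}^2$ via the Dani--Hayes--Kakade recursion (Lemma~\ref{Theta_decompose}), the elliptical-potential bound (Lemma~\ref{det}), a Freedman argument on a truncated martingale (Lemma~\ref{Theta1}), and an induction (Lemma~\ref{lemma:Theta<}), which together constitute precisely the ``self-normalized Bernstein-type'' bound you plan to invoke --- including the product-of-logs shape and the $N^2$ union over time. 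The only cosmetic difference is that you split off the bias $\lambda\|w_h^\pi\|_{\widehat\Sigma^{-1}}$ via the triangle inequality before the martingale analysis, whereas the paper keeps it inside $\Theta_{h,n}$ and absorbs it into the induction seed $\vartheta_{h,0}=2\lambda\|w_h^\pi\|_2^2$; this accounts for the $\sqrt{2\lambda}$ (versus $\sqrt\lambda$) in the final constant.
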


The proof begins with a decomposition of error given by 
$v^{\pi} - \widehat{v}^{\pi} = \sum_{h=0}^{H} (\widehat{\nu}_h^{\pi})^{\top} \big( w_h^{\pi} - (\widehat{R} + \widehat{M}^{\pi} w_{h+1}^{\pi}) \big)$, from which we derive \vspace{-0.1cm}
\begin{equation} \label{upperbound'} \big| v^{\pi} - \widehat{v}^{\pi} \big| \leq \sum_{h=0}^{H} \sqrt{(\widehat{\nu}_h^{\pi})^{\top} \widehat{\Sigma}^{-1} \widehat{\nu}_h^{\pi}} \cdot \big\| \widehat{\Sigma}^{1/2} \big( w_{h}^{\pi} - (\widehat{R} + \widehat{M}^{\pi} w_{h+1}^{\pi}) \big) \big\|_2. \end{equation}
We analyze the concentration of $\Theta_h := \big\| \widehat{\Sigma}^{1/2} \big( w_h^{\pi} - (\widehat{R} + \widehat{M}^{\pi} w_{h+1}^{\pi}) \big) \big\|_2^2$ using a martingale argument that is similar to the bandit literature (e.g., proof of Theorem 5 in \cite{dani2008stochastic}). The complete proof is given in Appendix \ref{appendix:lemma:UpperBound'}. 

The confidence bound given in Theorem \ref{UpperBound'} can be easily calculated as a byproduct of FQI-OPE (Algorithm \ref{FQI}), since $\widehat \nu_h^{\pi},\widehat\Sigma$ were already computed in the iterations. In practice, one can tune the value of $\lambda$ to get the smallest possible confidence bound.

\section{Extension to Infinite-Horizon Discounted MDP}

Our analysis can be extended to the infinite-horizon discounted MDP where the value of policy $\pi$ is defined as 
$$v^{\pi} := \mathbb{E}^{\pi}\Bigg[\sum^{\infty}_{h=0} \gamma^h r(s_h,a_h)  \, \Bigg| \, s_0\sim \xi_0 \Bigg],$$
where $\gamma\in(0,1)$ is a discount factor.
In this case, we can estimate the Q function by letting 
$$\widehat{Q}^{\pi}(\cdot) = \phi(\cdot)^{\top} \widehat{w}^{\pi}, $$
where 
\[ \widehat{w}^{\pi} := \big( I - \gamma \widehat{M}^{\pi} \big)^{-1} \widehat{R}. \]
We still assume that the data are collected episodically as in Assumption \ref{episode}.

Finally, we establish the minimax-optimal OPE error bound for discounted MDP. Its proof is similar to the proof in the finite-horizon case, and is deferred to Appendix \ref{appendix:proof:discountMDP}.

\begin{theorem}[{\bf Minimax-optimal error bounds for discounted MDP}] \label{thm:UpperBound_gamma}
\. \\
	1. (Finite-sample upper bound) Suppose Assumptions \ref{Q_class} and \ref{episode} hold, $\phi(s,a)^{\top} \Sigma^{-1} \phi(s,a) \leq C_1d$ for any $(s,a) \in \mathcal{X}$ and $H \leq (1-\gamma)^{-1}$ for data collection. Let $\delta \in (0,1)$. If $N \geq 80C_1\kappa_1(2\!+\!\kappa_2)^2 \cdot \frac{ \gamma^2 \ln(12d/\delta)d}{(1-\gamma)^3}$ and $\lambda \leq  \sigma_{\min}(\Sigma) \cdot \frac{\ln(12d/\delta) C_1d}{1-\gamma}$, then with probability at least $1-\delta$,
	\begin{equation} \label{UpperBound_gamma} \big| v^{\pi} \!-\! \widehat{v}^{\pi} \big| \leq \frac{1}{1 - \gamma} \!\cdot\! \sup_{f \in \mathcal{Q}} \frac{\mathbb{E}^{\pi} \big[ \sum_{h=0}^{\infty} \! \gamma^h f(s_h,a_h) \, \big| \, s_0 \!\sim\! \xi_0 \big]}{\sqrt{\mathbb{E}\big[ \frac{1}{H} \sum_{h=0}^{H-1} f^2(s_{1,h},a_{1,h}) \big]}} \cdot \sqrt{\frac{\ln(12/\delta)}{2N}} + \frac{ \gamma C\ln(12d/\delta)d}{N (1-\gamma)^{3.5}}, \end{equation}
	where $\kappa_1$, $\kappa_2$ and $C$ are parameters defined in Theorem \ref{UpperBound}.
	
	2. (Minimax lower bound) Suppose that an MDP instance $M=(p,r)$ satisfies:
	\begin{itemize}
		\item There exists a set of high-value states $\overline \cS\subseteq \cS$ and a set of low-value states $\underline \cS\subseteq \cS$ under the target policy $\pi$ such that $V^{\pi}(s) \!\geq\! \frac3{4(1-\gamma)}$ if $s\in \overline \cS$ and $V^{\pi}(s) \!\leq\! \frac1{4(1-\gamma)}$ if $s\in \underline \cS$;
		\item $\overline{p} := \int_{\overline{\mathcal{S}}} \min_{s \in \mathcal{S}} p^{\overline{\pi}}(s' \, |\,s ) {\rm d}s' \geq c $ and $\underline{p} := \int_{\underline{\mathcal{S}}} \min_{s \in \mathcal{S}} p^{\overline{\pi}}(s' \, | \, s) {\rm d}s' \geq c$ for $c>0$.
	\end{itemize}
	For any behavior policy $\overline{\pi}$, when $N$ is sufficiently large,  one has \vspace{-0.1cm}
	\begin{equation} \label{lowerbound_gamma} \begin{aligned} & \inf_{\widehat{v}^{\pi}} \sup_{M'\in \mathcal{N}(M)} \mathbb{P}_{M'} \left( \big| v^{\pi} - \widehat{v}^{\pi}(\mathcal{D}) \big| \geq \frac{\sqrt{c}}{24\sqrt{N}} \cdot \frac{\gamma}{1-\gamma} \cdot \sup_{f \in \mathcal{Q}} \frac{\mathbb{E}^{\pi} \big[ \sum_{h=0}^{\infty}\gamma^h f(s_h,a_h) \, \big| \, s_0 \sim \xi_0 \big]}{\sqrt{\mathbb{E}\big[ \frac{1}{H} \sum_{h=0}^{H-1} f^2(s_{1,h},a_{1,h}) \big]}} \right) \geq \frac{1}{6}, \end{aligned} \vspace{-0.1cm} \end{equation}
	where $\mathcal{N}(M)$ is a small neighborhood of $M$ defined in Theorem \ref{theorem:lb}.
	
	3. (Computable confidence bound) With probability $1-\delta$, \begin{equation} \label{UpperBound'_gamma} |v^{\pi} - \widehat{v}^{\pi}| \leq \frac{1}{1-\gamma} \sqrt{\Bigg(\sum_{h=0}^{\infty} \gamma^h\widehat{\nu}_h^{\pi} \Bigg)^{\top} \widehat{\Sigma}^{-1} \Bigg(\sum_{h=0}^{\infty} \gamma^h\widehat{\nu}_h^{\pi} \Bigg)} \cdot \! \bigg(\! \sqrt{2\lambda} \omega \!+\! 2\sqrt{2}\sqrt{\!d \ln\!\Big(\! 1 \!+\! \frac{n}{\lambda d} \Big)\!\ln(2N^2\!/\delta)} \! + \! \frac{4}{3}\ln(2N^2\!/\delta) \! \bigg). \end{equation}
	
	In particular, when the spectral radius $\rho(\widehat{M}^{\pi}) < \gamma^{-1}$, $\sum_{h=0}^{\infty} \gamma^h\widehat{\nu}_h^{\pi} = (I - \gamma \widehat{M}^{\pi})^{-\top} \nu_0^{\pi}$.
\end{theorem}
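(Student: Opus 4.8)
The plan is to treat all three parts as discounted analogues of the finite-horizon theorems, the only structural change being that the occupancy weights $(H-h+1)$ are replaced by geometric weights $\gamma^h$ and each finite sum $\sum_{h=0}^H$ becomes a series $\sum_{h=0}^\infty$ made convergent by combining the discount with the Markov contraction \eqref{M=1}. For the upper bound I would decompose $v^{\pi}-\widehat v^{\pi}=E_1+E_2+E_3$ exactly as in Theorem \ref{UpperBound}, with first-order term $E_1=\frac1N\sum_n e_n$, where $e_n=\sum_{h=0}^\infty\gamma^h(\nu_h^{\pi})^\top\Sigma^{-1}\phi(s_n,a_n)\big(Q^{\pi}(s_n,a_n)-(r_n'+\gamma V^{\pi}(s_n'))\big)$ is a martingale difference sequence to which the conditional-variance bound and Freedman's inequality apply verbatim; since $\sum_{h=0}^\infty\gamma^h\le(1-\gamma)^{-1}$, this yields the factor $(1-\gamma)^{-1}$ in \eqref{UpperBound_gamma}. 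The higher-order term $E_2$ is now governed by the strictly contractive modulus $\gamma\,\|(\Sigma^{\pi})^{1/2}M^{\pi}(\Sigma^{\pi})^{-1/2}\|_2\le\gamma<1$, so with the same estimates $Err(\widehat M^{\pi}),Err(N\widehat\Sigma^{-1})\lesssim\sqrt{d/(N(1-\gamma))}$ the geometric series converges and $|E_2|\lesssim\gamma d/\big(N(1-\gamma)^{3.5}\big)$ once $N\gtrsim d/(1-\gamma)^3$.

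For the lower bound I would reuse the two-point construction from the outline of Theorem \ref{theorem:lb} unchanged, perturbing $\Delta p(s'\,|\,s,a)=\phi(s,a)^\top\Delta q(s')$ to raise mass on $\overline{\cS}$ and lower it on $\underline{\cS}$. The value gap becomes $\widetilde v^{\pi}-v^{\pi}=\sum_{h=0}^\infty\gamma^h\xi_0^\top(\widetilde\cP^{\pi})^h(\widetilde\cP^{\pi}-\cP^{\pi})Q^{\pi}$, and a first-order Taylor expansion gives $\widetilde v^{\pi}-v^{\pi}\gtrsim\sum_{h=0}^\infty\gamma^h(\nu_h^{\pi})^\top\bx$; maximizing over $\bx$ subject to $\sqrt{\bx^\top\Sigma\bx}\lesssim N^{-1/2}$ yields $\bx^*\propto\Sigma^{-1}\sum_{h=0}^\infty\gamma^h\nu_h^{\pi}$ and a gap of order $N^{-1/2}\|\sum_h\gamma^h\nu_h^{\pi}\|_{\Sigma^{-1}}^2$. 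The identical likelihood-ratio test \eqref{likelihood} then converts this gap into \eqref{lowerbound_gamma}, the factor $\gamma/(1-\gamma)$ arising from the geometric sum.

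For the confidence bound, set the Bellman residual $\mathrm{res}:=w^{\pi}-(\widehat R+\gamma\widehat M^{\pi}w^{\pi})$; then $(I-\gamma\widehat M^{\pi})(w^{\pi}-\widehat w^{\pi})=\mathrm{res}$ gives $v^{\pi}-\widehat v^{\pi}=\big(\sum_{h=0}^\infty\gamma^h\widehat\nu_h^{\pi}\big)^\top\mathrm{res}$ via $\sum_h\gamma^h\widehat\nu_h^{\pi}=(I-\gamma\widehat M^{\pi})^{-\top}\nu_0^{\pi}$. A Cauchy-Schwarz step in the $\widehat\Sigma^{-1}/\widehat\Sigma$ geometry separates the computable quadratic form from $\|\widehat\Sigma^{1/2}\mathrm{res}\|_2$. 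Because $\widehat\Sigma\,\mathrm{res}=\lambda w^{\pi}+\sum_n\phi(s_n,a_n)\big(Q^{\pi}(s_n,a_n)-r_n'-\gamma V^{\pi}(s_n')\big)$ has bounded martingale-difference summands, the self-normalized inequality used for Theorem \ref{UpperBound'} (cf. proof of Theorem 5 in \cite{dani2008stochastic}) bounds the martingale part by $2\sqrt{2}\sqrt{d\ln(1+\tfrac{N}{\lambda d})\ln(2N^2/\delta)}+\tfrac43\ln(2N^2/\delta)$ and the regularizer by $\sqrt{2\lambda}\,\omega$, giving \eqref{UpperBound'_gamma}.

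The step I expect to be the main obstacle, and which has no finite-horizon counterpart, is the control of the infinite resolvent $(I-\gamma\widehat M^{\pi})^{-1}$: one must establish that with high probability $\rho(\widehat M^{\pi})<\gamma^{-1}$, so that $\sum_h\gamma^h(\widehat M^{\pi})^h$ converges, and then bound this resolvent uniformly in a data-independent way. This requires transferring the population contraction $\|(\Sigma^{\pi})^{1/2}M^{\pi}(\Sigma^{\pi})^{-1/2}\|_2\le1$ to the estimate $\widehat M^{\pi}$ through the concentration bound on $Err(\widehat M^{\pi})$, and it is precisely this transfer that forces the sample-size threshold $N\gtrsim d/(1-\gamma)^3$ and the cap on $\lambda$ appearing in Part 1.
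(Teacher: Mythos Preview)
Your proposal is correct and matches the paper's approach almost exactly: the paper likewise reduces all three parts to their finite-horizon counterparts via the same $E_1+E_2+E_3$ decomposition, Freedman's inequality for $E_1$, the resolvent bound on $(I-\gamma\widehat M^{\pi})^{-1}$ obtained from $\|(\Sigma^{\pi})^{1/2}\Delta M^{\pi}(\Sigma^{\pi})^{-1/2}\|_2\le\tfrac{1-\gamma}{2\gamma}$ for $E_2$, the identical two-point construction for the lower bound, and the same Cauchy--Schwarz plus self-normalized martingale argument for the confidence bound. The only minor imprecision is the provenance of the $\gamma/(1-\gamma)$ factor in Part~2: the $\gamma$ comes from the Bellman identity $Q^{\pi}=r+\gamma\mathcal P^{\pi}Q^{\pi}$ (so the decomposition is $v^{\pi}-\widetilde v^{\pi}=\gamma\sum_{h\ge0}\gamma^h(\widetilde{\mathcal P}^{\pi})^h(\mathcal P^{\pi}-\widetilde{\mathcal P}^{\pi})Q^{\pi}$), and the $1/(1-\gamma)$ from the range of $V^{\pi}$ in the high/low-value assumption, not from the geometric sum itself.
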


{\bf Remark:} Denote {$\mu^{\pi} := (1-\gamma)\mathbb{E}^{\pi}\big[ \sum^{\infty}_{t=0} \gamma^t \mathbf{1}(s_t,a_t) \big]$} as the normalized cumulative discounted occupancy measure (also known as flux) under policy $\pi$. Theorem \ref{thm:UpperBound_gamma} shows that
$$|\widehat v^{\pi} - v^{\pi}| \asymp \frac1{(1-\gamma)^2} \sqrt{\frac{1+\chi^2_{\mathcal{Q}}(\mu^{\pi}, \bar\mu)}{N}} + o(N^{-1/2}).$$

\section{Summary}

This paper studies the statistical limits of off-policy evaluation using linear function approximation. We establish a minimax error lower bound that depends on a function class-restricted $\chi^2$-divergence between data and the target policy. We prove that a regression-based FQI method, which is equivalent to a plug-in estimator, nearly achieves the minimax lower bound.
We also provide a computable confidence bound as a byproduct of the algorithm.


\bibliography{ref}

\begin{thebibliography}{20}
\providecommand{\natexlab}[1]{#1}
\providecommand{\url}[1]{\texttt{#1}}
\expandafter\ifx\csname urlstyle\endcsname\relax
  \providecommand{\doi}[1]{doi: #1}\else
  \providecommand{\doi}{doi: \begingroup \urlstyle{rm}\Url}\fi

\bibitem[Bertsekas et~al.(1995)Bertsekas, Bertsekas, Bertsekas, and
  Bertsekas]{bertsekas1995dynamic}
Bertsekas, D.~P., Bertsekas, D.~P., Bertsekas, D.~P., and Bertsekas, D.~P.
\newblock \emph{Dynamic programming and optimal control}, volume~1.
\newblock Athena scientific Belmont, MA, 1995.

\bibitem[Dani et~al.(2008)Dani, Hayes, and Kakade]{dani2008stochastic}
Dani, V., Hayes, T.~P., and Kakade, S.~M.
\newblock Stochastic linear optimization under bandit feedback.
\newblock 2008.

\bibitem[Dann et~al.(2019)Dann, Li, Wei, and Brunskill]{dann2018policy}
Dann, C., Li, L., Wei, W., and Brunskill, E.
\newblock Policy certificates: Towards accountable reinforcement learning.
\newblock In \emph{International Conference on Machine Learning}, 2019.

\bibitem[Fonteneau et~al.(2013)Fonteneau, Murphy, Wehenkel, and
  Ernst]{fonteneau2013batch}
Fonteneau, R., Murphy, S.~A., Wehenkel, L., and Ernst, D.
\newblock Batch mode reinforcement learning based on the synthesis of
  artificial trajectories.
\newblock \emph{Annals of operations research}, 208\penalty0 (1):\penalty0
  383--416, 2013.

\bibitem[Freedman(1975)]{freedman1975tail}
Freedman, D.~A.
\newblock On tail probabilities for martingales.
\newblock \emph{The Annals of Probability}, 3\penalty0 (1):\penalty0 100--118,
  1975.

\bibitem[Grunewalder et~al.(2012)Grunewalder, Lever, Baldassarre, Pontil, and
  Gretton]{grunewalder2012modelling}
Grunewalder, S., Lever, G., Baldassarre, L., Pontil, M., and Gretton, A.
\newblock Modelling transition dynamics in mdps with rkhs embeddings.
\newblock 2012.

\bibitem[Jiang \& Li(2016)Jiang and Li]{jiang2015doubly}
Jiang, N. and Li, L.
\newblock Doubly robust off-policy value evaluation for reinforcement learning.
\newblock In \emph{International Conference on Machine Learning}, 2016.

\bibitem[Jong \& Stone(2007)Jong and Stone]{jong2007model}
Jong, N.~K. and Stone, P.
\newblock Model-based function approximation in reinforcement learning.
\newblock In \emph{Proceedings of the 6th international joint conference on
  Autonomous agents and multiagent systems}, pp.\  1--8, 2007.

\bibitem[Lagoudakis \& Parr(2003)Lagoudakis and Parr]{lagoudakis2003least}
Lagoudakis, M.~G. and Parr, R.
\newblock Least-squares policy iteration.
\newblock \emph{Journal of machine learning research}, 4\penalty0
  (Dec):\penalty0 1107--1149, 2003.

\bibitem[Liu et~al.(2018)Liu, Li, Tang, and Zhou]{liu2018breaking}
Liu, Q., Li, L., Tang, Z., and Zhou, D.
\newblock Breaking the curse of horizon: Infinite-horizon off-policy
  estimation.
\newblock In \emph{Advances in Neural Information Processing Systems}, pp.\
  5356--5366, 2018.

\bibitem[Liu et~al.(2019)Liu, Swaminathan, Agarwal, and Brunskill]{liu2019off}
Liu, Y., Swaminathan, A., Agarwal, A., and Brunskill, E.
\newblock Off-policy policy gradient with state distribution correction.
\newblock In \emph{Conference on Uncertainty in Artificial Intelligence}, 2019.

\bibitem[Mannor et~al.(2004)Mannor, Simester, Sun, and
  Tsitsiklis]{mannor2004bias}
Mannor, S., Simester, D., Sun, P., and Tsitsiklis, J.~N.
\newblock Bias and variance in value function estimation.
\newblock In \emph{Proceedings of the twenty-first international conference on
  Machine learning}, pp.\ ~72, 2004.

\bibitem[Nachum et~al.(2019)Nachum, Chow, Dai, and Li]{nachum2019dualdice}
Nachum, O., Chow, Y., Dai, B., and Li, L.
\newblock {DualDICE}: Behavior-agnostic estimation of discounted stationary
  distribution corrections.
\newblock In \emph{Advances in Neural Information Processing Systems 32}. 2019.

\bibitem[Precup(2000)]{precup2000eligibility}
Precup, D.
\newblock Eligibility traces for off-policy policy evaluation.
\newblock \emph{Computer Science Department Faculty Publication Series}, pp.\
  ~80, 2000.

\bibitem[Sutton \& Barto(2018)Sutton and Barto]{sutton2018reinforcement}
Sutton, R.~S. and Barto, A.~G.
\newblock \emph{Reinforcement learning: An introduction}.
\newblock MIT press, 2018.

\bibitem[Thomas \& Brunskill(2016)Thomas and Brunskill]{thomas2016data}
Thomas, P. and Brunskill, E.
\newblock Data-efficient off-policy policy evaluation for reinforcement
  learning.
\newblock In \emph{International Conference on Machine Learning}, 2016.

\bibitem[Tropp et~al.(2011)]{tropp2011freedman}
Tropp, J. et~al.
\newblock Freedman's inequality for matrix martingales.
\newblock \emph{Electronic Communications in Probability}, 16:\penalty0
  262--270, 2011.

\bibitem[Xie et~al.(2019)Xie, Ma, and Wang]{xie2019towards}
Xie, T., Ma, Y., and Wang, Y.-X.
\newblock Towards optimal off-policy evaluation for reinforcement learning with
  marginalized importance sampling.
\newblock In \emph{Advances in Neural Information Processing Systems}, pp.\
  9665--9675, 2019.

\bibitem[Yang \& Wang(2019)Yang and Wang]{yang2019reinforcement}
Yang, L.~F. and Wang, M.
\newblock Reinforcement leaning in feature space: Matrix bandit, kernels, and
  regret bound.
\newblock \emph{arXiv preprint arXiv:1905.10389}, 2019.

\bibitem[Yin \& Wang(2020)Yin and Wang]{yin2020asymptotically}
Yin, M. and Wang, Y.-X.
\newblock Asymptotically efficient off-policy evaluation for tabular
  reinforcement learning.
\newblock \emph{arXiv preprint arXiv:2001.10742}, 2020.

\end{thebibliography}
\bibliographystyle{icml2020}

\newpage

\onecolumn

\appendix

\addtocontents{toc}{\protect\setcounter{tocdepth}{3}}
~ \vspace{-2em} 
\begin{center} \bf \LARGE Appendices \end{center}



\numberwithin{equation}{section}	


\part{}

\section{Discussions in Section \ref{section:estimator}} \label{section:equivalence}
\subsection{Proof of Theorem \ref{theorem:equivalence}}

\begin{proof}[Proof of Theorem \ref{theorem:equivalence}]
		{Suppose we are provided with $\widehat{Q}_{h+1}^{\pi}(\cdot)$ at the beginning of an iteration, and $\widehat{Q}_{h+1}^{\pi}(\cdot) = \phi(\cdot)^{\top} \widehat{w}_{h+1}^{\pi}$ for some vector $\widehat{w}_{h+1}^{\pi} \in \mathbb{R}^d$.
		In FQI \eqref{Q_recursion}, we replace $g(\cdot)$ by $\phi(\cdot)^{\top}w$ and obtain $\widehat{Q}_h^{\pi} = \phi(\cdot)^{\top} w^*$, where
		\begin{equation*}
			\begin{aligned}
				w^* := \arg\min_{{\bf x} \in \mathbb{R}^{d}} \Bigg\{ \sum_{n=1}^N \Big( \phi(s_n,a_n)^{\top} {\bf x} - r_n' - \phi^{\pi}(s_n')^{\top} \widehat{w}_{h+1}^{\pi} \Big)^2 +\lambda \| {\bf x} \|_2^2 \Bigg\} = \widehat{\Sigma}^{-1} \sum_{n=1}^N \phi(s_n,a_n) \Big( r_n' + \phi^{\pi}(s_n')^{\top} \widehat{w}_{h+1}^{\pi} \Big).
			\end{aligned}
		\end{equation*}
		Recalling the definitions of $\widehat{R}$ and $\widehat{M}^{\pi}$ in \eqref{vector_recursion}, we have $w^* = \widehat{R} + \widehat{M}^{\pi} \widehat{w}_{h+1}^{\pi}$. Since $\widehat{r}(\cdot) = \phi(\cdot)^{\top} \widehat{R}$ and $\widehat{\mathcal{P}}^{\pi} \widehat{Q}_{h+1}^{\pi}(\cdot) = \phi(\cdot)^{\top} \widehat{M}^{\pi} \widehat{w}_{h+1}^{\pi}$ according to \eqref{vector_recursion} and \eqref{hatPpi},
		it holds that $\widehat{Q}_h^{\pi} = \widehat{r} + \widehat{\mathcal{P}}^{\pi} \widehat{Q}_{h+1}^{\pi}$.
		These two algorithms therefore output the same $\widehat{Q}_{h}^{\pi}$ based on the same $\widehat{Q}_{h+1}^{\pi}$. It follows that $\widehat{v}_{\mathsf{FQI}}^{\pi} = \widehat{v}_{\mathsf{Plug\text{-}in}}^{\pi}$.}
\end{proof}
{{\bf Remark:} Theorem \ref{theorem:equivalence} concerns the linearity of regression. We restrict $\mathcal{Q}$ to be finite-dimensional in this proof only for notational simplicity. The result can also apply to an infinite-dimensional linear space $\mathcal{Q}$.}

\subsection{Relations to Other Methods}
%
%
%
%

\subsubsection*{Marginalized importance sampling (MIS)}
Algorithm \ref{Model-based} suggests that $\widehat{v}^{\pi} = \sum_{h=0}^H (\widehat{\nu}_h^{\pi})^{\top} \widehat{R}$. Subsitituting $\widehat{R}$ with its definition in \eqref{vector_recursion} yields
\[ \widehat{v}^{\pi} = \frac{1}{N} \sum_{n=1}^N \widehat{w}_{\pi/\mathcal{D}}(s_n,a_n) r_n', \qquad \text{where $\widehat{w}_{\pi/\mathcal{D}}(s,a) := N \sum_{h=0}^H (\widehat{\nu}_h^{\pi})^{\top} \widehat{\Sigma}^{-1} \phi(s,a)$}. \]
In this way, we can interpret our algorithm as an importance sampling method with importance weight $\widehat{w}_{\pi/\mathcal{D}}(s,a)$.

In tabular case, if $\lambda = 0$, then the importance weight \begin{equation} \label{MIS} \widehat{w}_{\pi/\mathcal{D}}(s,a) = \frac{\sum_{h=0}^H \widehat{\xi}_h^{\pi}(s) \pi(a \, | \, s)}{\frac{1}{N} \sum_{n=1}^N {\bf 1}((s_n,a_n) = (s,a))}, \end{equation}
where $\widehat{\xi}_h^{\pi}$ is the marginal distribution of $s_h$ under policy $\pi$, initial distribution $\xi_0$ and the empirical transition kernel $\widehat{\mathbb{P}}(s' \, | \, s,a) := \frac{\sum_{n=1}^N {\bf 1}(s_n=s, a_n=a, s_n'=s')}{\sum_{n=1}^N {\bf 1}(s_n=s, a_n=a)}$. In this special case, our estimator reduces to the marginalized importance sampling method (MIS) in \cite{yin2020asymptotically}.

\subsubsection*{DualDICE}
Consider an infinite-horizon MDP with discounted factor $\gamma \in (0,1)$. 
In this case, FQI-OPE estimator has an equivalent form
\[ \widehat{v}^{\pi} = \frac{1}{N} \sum_{n=1}^N \widehat{w}_{\pi/\mathcal{D}}(s,a) r_n', \qquad \widehat{w}_{\pi/\mathcal{D}}(s,a) := N \sum_{h=0}^{\infty} \gamma^h (\widehat{\nu}_h^{\pi})^{\top} \widehat{\Sigma}^{-1} \phi(s,a) = N (\nu_0^{\pi})^{\top} \big(I - \gamma \widehat{M}^{\pi}\big)^{-1} \widehat{\Sigma}^{-1} \phi(s,a). \]
In the following, we will show that FQI-OPE is equivalent to DualDICE algorithm \cite{nachum2019dualdice} when the parameterization families are properly chosen. DualDICE algorithm solves the following minimax saddle-point optimization problem:
\begin{equation} \label{DualDICE} \begin{aligned} \min_{f\in \mathbb{R}^{\mathcal{X}}} \max_{g \in \mathbb{R}^{\mathcal{X}}} J(f,g) := \frac{1}{N} \sum_{n=1}^N \Bigg( \bigg( f(s_n,a_n) & - \gamma \int_{\mathcal{A}} f(s_n',a') \pi(a \, | \, s_n') {\rm d}a' \bigg) g(s_n,a_n) - \frac{1}{2} g(s_n,a_n)^2 \Bigg) \\ & \qquad \qquad \qquad - \mathbb{E} \big[ f(s_0,a_0) \, \big| \, s_0 \sim \xi_0, a_0 \sim \pi(\cdot \, | \, s_0) \big]. \end{aligned} \end{equation}
The solution $g^*$ serves as the {\it discounted stationary distribution correction}. One can estimate $v^{\pi}$ by
\begin{equation} \label{v_DualDICE} \widehat{v}_{\mathsf{DualDICE}}^{\pi} := \frac{1}{N} \sum_{n=1}^N g^*(s_n,a_n) r_n'. \end{equation}
We have the following equivalence result.
\begin{theorem}[Equivalence between FQI-OPE and DualDICE]
We take $f, g \in \mathcal{Q}$ in the optimization problem \eqref{DualDICE}. Then $\widehat{v}_{\mathsf{DualDICE}}^{\pi} = \widehat{v}^{\pi}$, where $\widehat{v}^{\pi}$ is an estimator provided by FQI-OPE with $\lambda = 0$.
\end{theorem}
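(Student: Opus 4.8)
The plan is to parametrize both players of the saddle-point problem \eqref{DualDICE} by their coefficient vectors and reduce it to a pair of unconstrained quadratic optimizations that solve in closed form. Writing $f(\cdot)=\phi(\cdot)^\top u$ and $g(\cdot)=\phi(\cdot)^\top w$ with $u,w\in\mathbb{R}^d$, and using $\int_{\mathcal{A}}f(s_n',a')\pi(a'\,|\,s_n')\,{\rm d}a'=\phi^\pi(s_n')^\top u$ together with $\mathbb{E}[f(s_0,a_0)\,|\,s_0\sim\xi_0,a_0\sim\pi]=(\nu_0^\pi)^\top u$, the first step is to rewrite the objective, with $\lambda=0$, as
\[ J(u,w)=\tfrac{1}{N}u^\top\bigl(I-\gamma(\widehat{M}^\pi)^\top\bigr)\widehat{\Sigma}\,w-\tfrac{1}{2N}w^\top\widehat{\Sigma}\,w-(\nu_0^\pi)^\top u, \]
where I invoke the $\lambda=0$ identities $\sum_n\phi(s_n,a_n)\phi(s_n,a_n)^\top=\widehat{\Sigma}$ and $\sum_n\phi^\pi(s_n')\phi(s_n,a_n)^\top=(\widehat{M}^\pi)^\top\widehat{\Sigma}$ read off from \eqref{vector_recursion}. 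This expresses the entire problem through the same matrices $\widehat{\Sigma}$, $\widehat{M}^\pi$, $\nu_0^\pi$ that define the FQI-OPE estimator.

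Next I would solve the inner maximization over $w$. Since $J$ is concave in $w$ with Hessian $-\tfrac{1}{N}\widehat{\Sigma}\preceq0$, setting $\partial_w J=0$ gives, when $\widehat{\Sigma}$ is invertible (as is already required for FQI with $\lambda=0$), the unique maximizer $w^\star(u)=(I-\gamma\widehat{M}^\pi)u$. Substituting back and abbreviating $B:=I-\gamma\widehat{M}^\pi$, the bilinear and quadratic terms combine to leave a convex quadratic in $u$,
\[ J\bigl(u,w^\star(u)\bigr)=\tfrac{1}{2N}u^\top B^\top\widehat{\Sigma}B\,u-(\nu_0^\pi)^\top u, \]
whose minimizer is $u^\star=N(B^\top\widehat{\Sigma}B)^{-1}\nu_0^\pi$, so that the optimal correction vector is $w^\star=Bu^\star=N B(B^\top\widehat{\Sigma}B)^{-1}\nu_0^\pi$.

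The final step is an algebraic simplification. Assuming the spectral-radius condition $\rho(\widehat{M}^\pi)<\gamma^{-1}$ so that $B$ is invertible, factoring $(B^\top\widehat{\Sigma}B)^{-1}=B^{-1}\widehat{\Sigma}^{-1}B^{-\top}$ collapses $w^\star$ to $w^\star=N\widehat{\Sigma}^{-1}B^{-\top}\nu_0^\pi$, whence the optimal dual solution is
\[ g^\star(s,a)=\phi(s,a)^\top w^\star=N(\nu_0^\pi)^\top(I-\gamma\widehat{M}^\pi)^{-1}\widehat{\Sigma}^{-1}\phi(s,a)=\widehat{w}_{\pi/\mathcal{D}}(s,a), \]
which matches exactly the FQI-OPE importance weight derived above (using that $g^\star$ is a scalar and $\widehat{\Sigma}^{-1}$ is symmetric). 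Plugging this into \eqref{v_DualDICE} yields $\widehat{v}_{\mathsf{DualDICE}}^\pi=\tfrac{1}{N}\sum_{n=1}^N g^\star(s_n,a_n)r_n'=\widehat{v}^\pi$, as claimed.

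I expect the main obstacle to be careful bookkeeping of transposes and invertibility rather than any conceptual difficulty. Concretely, one must check that $\widehat{\Sigma}$ is nonsingular so the inner maximum is attained, and that $B=I-\gamma\widehat{M}^\pi$ is invertible so that both the outer minimizer and the simplification of $w^\star$ are valid; the latter is precisely the spectral-radius condition under which the geometric series $\sum_{h\ge0}\gamma^h(\widehat{M}^\pi)^h=(I-\gamma\widehat{M}^\pi)^{-1}$ converges. A secondary point is that the problem is posed as $\min_u\max_w$, and solving the inner maximization first is the correct reading: the inner max produces a genuinely convex quadratic in $u$ with a finite minimizer, whereas naively exchanging the order would be ill-posed because $J$ is merely affine in $u$ for fixed $w$.
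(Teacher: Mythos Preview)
Your proposal is correct and follows essentially the same approach as the paper: parametrize $f,g$ by coefficient vectors, rewrite $J$ in terms of $\widehat{\Sigma}$, $\widehat{M}^{\pi}$, and $\nu_0^{\pi}$, and solve the resulting quadratic saddle-point problem in closed form to obtain $g^\star(s,a)=N(\nu_0^{\pi})^\top(I-\gamma\widehat{M}^{\pi})^{-1}\widehat{\Sigma}^{-1}\phi(s,a)$. The paper simply states the optimizers $\mathbf{x}^*,\mathbf{y}^*$ without showing the inner-max/outer-min steps you carry out, and it does not make the invertibility assumptions on $\widehat{\Sigma}$ and $I-\gamma\widehat{M}^{\pi}$ explicit, but otherwise the arguments coincide.
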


\begin{proof}
We substitute $f$ and $g$ in $J(f,g)$ by $f(\cdot) = \phi(\cdot)^{\top} {\bf x}$ and $g(\cdot) = \phi(\cdot)^{\top} {\bf y}$, respectively, and obtain
\[ \begin{aligned} J(f,g) = & \frac{1}{N} \sum_{n=1}^N \bigg( \Big( {\bf x}^{\top}\phi(s_n,a_n) - \gamma {\bf x}^{\top} \phi^{\pi}(s_n') \Big) \phi(s_n,a_n)^{\top} {\bf y} - \frac{1}{2} \big(\phi(s_n,a_n)^{\top} {\bf y} \big)^2 \bigg) - (\nu_0^{\pi})^{\top} {\bf x} \\ = & \frac{1}{N} \Big( {\bf x}^{\top} \widehat{\Sigma} {\bf y} - \gamma {\bf x}^{\top} (\widehat{M}^{\pi})^{\top} \widehat{\Sigma} {\bf y} - \frac{1}{2} {\bf y}^{\top} \widehat{\Sigma} {\bf y} \Big) - (\nu_0^{\pi})^{\top} {\bf x} = - \frac{1}{2N} {\bf y}^{\top} \widehat{\Sigma} {\bf y} + \frac{1}{N} {\bf x}^{\top} \big( I - \gamma \widehat{M}^{\pi} \big)^{\top} \widehat{\Sigma} {\bf y} - (\nu_0^{\pi})^{\top} {\bf x}, \end{aligned} \]
where we have used the relations $\sum_{n=1}^N \phi(s_n, a_n) \phi(s_n, a_n)^{\top} = \widehat{\Sigma}$ and $\sum_{n=1}^N \phi(s_n,a_n) \phi^{\pi}(s_n')^{\top} = \widehat{\Sigma} \widehat{M}^{\pi}$. The optimization problem $\min_{{\bf x} \in \mathbb{R}^d} \max_{{\bf y} \in \mathbb{R}^d} J(f,g)$ has the solution
\[ {\bf x}^* := N \big( I - \gamma \widehat{M}^{\pi} \big)^{-1} \widehat{\Sigma}^{-1} \big( I - \gamma \widehat{M}^{\pi} \big)^{-\top} \nu_0^{\pi} \qquad \text{and} \qquad {\bf y}^* := N \widehat{\Sigma}^{-1} \big( I - \gamma \widehat{M}^{\pi} \big)^{-\top} \nu_0^{\pi}, \]
{\it i.e.}, $g^*(s,a) = N (\nu_0^{\pi})^{\top} \big( I - \gamma \widehat{M}^{\pi} \big)^{-1} \widehat{\Sigma}^{-1} \phi(s,a)$. To this end, $\widehat{v}_{\mathsf{DualDICE}}^{\pi} = (\nu_0^{\pi})^{\top} \big( I - \gamma \widehat{M}^{\pi} \big)^{-1} \widehat{\Sigma}^{-1} \sum_{n=1}^N r_n' \phi(s_n,a_n) = (\nu_0^{\pi})^{\top} \big( I - \gamma \widehat{M}^{\pi} \big)^{-1} \widehat{R} = \widehat{v}^{\pi}$, which finishes the proof.
\end{proof}

\section{Proof of Finite-Sample Upper Bound} \label{appendix:proof:UpperBound}


\subsection{Preliminaries} \label{appendix:theorem:Theorem}



\subsubsection*{Contraction of Markov chains}

In order to control the estimation errors in the powers of $\widehat{\mathcal{P}}^{\pi}$, we need to leverage the contraction property of a Markov process. In particular, under Assumption \ref{Q_class}, we are only concerned about a low-dimensional embedding of $\mathcal{P}^{\pi}$. Let $M^{\pi} \in \mathbb{R}^{d \times d}$ be a population counterpart to $\widehat{M}^{\pi}$ in \eqref{vector_recursion}, {\it i.e.} $M^{\pi} \in \mathbb{R}^{d \times d}$ is the matrix that satisfies $\mathbb{E}\big[ \phi^{\pi}(s')^{\top} \, \big| \, s,a \big] = \phi(s,a)^{\top} M^{\pi}$ for any $(s,a) \in \mathcal{X}$. 
By properties of the Markov process, the spectral radius of $M^{\pi}$ is at most $1$, therefore $M^{\pi}$ is nonexpansive with respect to some matrix norm. In particular, we provide the following Lemma \ref{SMS} about the nonexpansiveness. Its proof is defered to Appendix \ref{appendix:SMS}.

\begin{lemma}[Contraction of Markov chain] \label{SMS}
	Suppose $({s}_0,{s}_1,\ldots)$ is a general Markov chain defined on $\mathcal{S}$ with transition kernel ${p}(s' \, | \, s)$ and some initial distribution ${\xi}_0(s)$. Assume that for a feature mapping ${\psi}: \mathcal{S} \rightarrow \mathbb{R}^d$, there exists a matrix $M \in \mathbb{R}^{d \times d}$ such that
	\[ {\mathbb{E}}\big[ {\psi}(s')^{\top} \, \big| \, s \big] : = \int_{\mathcal{S}} {\psi}(s')^{\top} {p}(s' \, | \, s) {\rm d}s' = {\psi}(s)^{\top} M, \qquad \forall s \in \mathcal{S}. \]
	Take ${\Sigma}_t := {\mathbb{E}}\big[ {\psi}({s}_t) {\psi}({s}_t)^{\top} \, \big| \, {s}_0 \sim {\xi}_0 \big]$. We have
	\begin{equation} \big\| {\Sigma}_t^{1/2} M {\Sigma}_{t+1}^{-1/2} \big\|_2 \leq 1, \qquad \text{for }t = 0,1,\ldots. \end{equation}
\end{lemma}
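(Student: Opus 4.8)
The plan is to reduce the spectral-norm bound to a single positive-semidefinite (PSD) operator inequality between consecutive second-moment matrices, namely $M^\top \Sigma_t M \preceq \Sigma_{t+1}$, and then obtain the claim by a congruence transformation. First I would record the immediate consequence of the hypothesis: transposing $\mathbb{E}[\psi(s')^\top \mid s] = \psi(s)^\top M$ shows $\mathbb{E}[\psi(s_{t+1}) \mid s_t] = M^\top \psi(s_t)$, i.e. the conditional mean of the next feature vector is a fixed linear image of the current one.

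Next I would establish the key inequality $\Sigma_{t+1} \succeq M^\top \Sigma_t M$. Conditioning on $s_t$, the conditional covariance of $\psi(s_{t+1})$ is PSD, which is exactly the matrix form of conditional Jensen's inequality: $\mathbb{E}[\psi(s_{t+1})\psi(s_{t+1})^\top \mid s_t] \succeq \mathbb{E}[\psi(s_{t+1}) \mid s_t]\,\mathbb{E}[\psi(s_{t+1})^\top \mid s_t] = M^\top \psi(s_t)\psi(s_t)^\top M$. Taking the expectation over the marginal law of $s_t$ (induced by $\xi_0$) and invoking the tower property $\Sigma_{t+1} = \mathbb{E}[\mathbb{E}[\psi(s_{t+1})\psi(s_{t+1})^\top \mid s_t]]$ yields $\Sigma_{t+1} \succeq M^\top \mathbb{E}[\psi(s_t)\psi(s_t)^\top] M = M^\top \Sigma_t M$, since expectation preserves the PSD order.

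Finally I would translate this into the operator-norm bound. Writing $A := \Sigma_t^{1/2} M \Sigma_{t+1}^{-1/2}$, one has $A^\top A = \Sigma_{t+1}^{-1/2}(M^\top \Sigma_t M)\Sigma_{t+1}^{-1/2}$, and conjugating the inequality $M^\top \Sigma_t M \preceq \Sigma_{t+1}$ by $\Sigma_{t+1}^{-1/2}$ gives $A^\top A \preceq I$, whence $\|A\|_2^2 = \lambda_{\max}(A^\top A) \le 1$, which is precisely the claim. The only genuinely delicate points are that the argument presumes $\Sigma_{t+1}$ is invertible so that $\Sigma_{t+1}^{-1/2}$ is well defined (consistent with the full-rank assumption used in the main text; otherwise one restricts to the range of $\Sigma_{t+1}$ and uses a pseudo-inverse), and the justification of the matrix Jensen step, which I expect to be the main obstacle to state cleanly. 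That step rests only on $\mathrm{Cov}(\psi(s_{t+1}) \mid s_t) \succeq 0$, but care is needed to ensure the relevant conditional second moments exist and that expectation of a PSD-valued random matrix preserves the Loewner ordering.
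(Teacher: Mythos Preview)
Your proposal is correct and follows essentially the same argument as the paper. The paper's proof fixes an arbitrary vector $\mu$, applies scalar conditional Jensen to $f(s)=\mu^\top\psi(s)$ to obtain $\mu^\top M^\top \Sigma_t M \mu \le \mu^\top \Sigma_{t+1}\mu$, and then passes to the PSD inequality and the norm bound; your version states the same step directly in matrix form via $\mathrm{Cov}(\psi(s_{t+1})\mid s_t)\succeq 0$, which is exactly the vector-sandwiched statement for all $\mu$ at once.
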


The target policy $\pi$ defines a Markov process on $\mathcal{S}$ with transition kernel $p^{\pi}(s' \, | \, s) = \int_{\mathcal{A}} \pi(a \, | \, s) p(s' \, | \, s,a) {\rm d} a$. 
Under Assumption \ref{Q_class}, $M^{\pi}$ satisfies $\phi^{\pi}(s)^{\top} M^{\pi} \! = \! \mathbb{E}\big[ \phi^{\pi}(s')^{\top} \, \big| \, s' \!\sim\! p^{\pi}(\cdot \, | \, s) \big]$ for all $s \in \mathcal{S}$.
Suppose $\xi^{\pi}$ is an invariant distribution of $p^{\pi}$, {\it i.e.} $\xi^{\pi}(s') = \int_{\mathcal{S}} \xi^{\pi}(s) p^{\pi}(s' \, | \, s) {\rm d}s$ for any $s' \in \mathcal{S}$. Let
\begin{equation} \label{Sigmapi}
	\Sigma^{\pi} := \mathbb{E} \big[ \phi^{\pi}(s) \phi^{\pi}(s)^{\top} \, \big| \, s \sim \xi^{\pi} \big] \in \mathbb{R}^{d \times d}.
\end{equation}
Assume $\Sigma^{\pi}$ is full-rank for simplicity. We learn from Lemma \ref{SMS} that
\begin{equation} \label{SMS<1}
\big\| (\Sigma^{\pi})^{1/2} M^{\pi} (\Sigma^{\pi})^{-1/2} \big\|_2 \leq 1.
\end{equation}

In the case where the Markov decision process is time-inhomogeneous, one can instead define $\Sigma_h^{\pi} := \mathbb{E}^{\pi}\big[ \phi^{\pi}(s_h) \phi^{\pi}(s_h) \, \big| \, s_0 \sim \xi_0 \big] \in \mathbb{R}^{d \times d}$ and use the property $\big\| (\Sigma_h^{\pi})^{1/2} M^{\pi} (\Sigma_{h+1}^{\pi})^{-1/2} \big\|_2 \leq 1$ in the analysis below.

\subsubsection*{Equivalent vector-form representations}

For the convenience of our analysis, we reform the key quantities in Theorem \ref{UpperBound} with vector-form representations. See Lemma \ref{Equivalence}, of which the proof is defered to Appendix \ref{appendix: equivalence}. 

\begin{lemma} \label{Equivalence}
	We take the vector-form representation of functions under basis $\{ \phi_1, \phi_2, \ldots, \phi_d \}$. 
	Let $\nu_h^{\pi} := \mathbb{E}^{\pi} \big[ \phi(s_h,a_h) \, \big| \, s_0 \sim \xi_0 \big] = \big((M^{\pi})^{\top}\big)^h \nu_0^{\pi}$, $\Sigma^{\pi} := \mathbb{E} \big[ \phi^{\pi}(s) \phi^{\pi}(s)^{\top} \, \big| \, s \sim \xi^{\pi} \big]$. We have
		\begin{align} 
			& \sup_{f \in \mathcal{Q}} \frac{\mathbb{E}^{\pi}\big[f(s_h,a_h) \, \big| \, s_0 \sim \xi_0 \big]}{\sqrt{\mathbb{E} \big[ \frac{1}{H} \sum_{h=0}^{H-1} f^2(s_{1,h},a_{1,h})\big]}} = \sqrt{(\nu_h^{\pi})^{\top} \Sigma^{-1} \nu_h^{\pi}}, \qquad h = 0,1,\ldots,H-1; \label{nuS} \\
			& \sup_{f \in \mathcal{Q}} \frac{\mathbb{E}^{\pi}\big[\sum_{h=0}^{H} (H-h+1) f(s_h,a_h) \, \big| \, s_0 \sim \xi_0 \big]}{\sqrt{\mathbb{E}\big[ \frac{1}{H} \sum_{h=0}^{H-1} f^2(s_{1,h},a_{1,h}) \big]}} \! = \! \sqrt{ \! \Bigg( \!\sum_{h=0}^{H} \! (H-h+1)\nu_h^{\pi} \! \Bigg)\!\!\Bigg.^{\top} \! \Sigma^{-1} \Bigg( \! \sum_{h=0}^{H} \! (H-h+1)\nu_h^{\pi} \! \Bigg)}; \label{equvalence_1.2}
		\end{align}
\end{lemma}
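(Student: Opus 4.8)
The plan is to exploit the linearity of the function class: since $\mathcal{Q} = \{\phi(\cdot)^{\top} w : w \in \mathbb{R}^d\}$, every admissible $f$ is determined by a coefficient vector $w$, so both the numerator and the squared denominator become elementary algebraic expressions in $w$. Writing $f(\cdot) = \phi(\cdot)^{\top} w$, I would first record the identity $\nu_h^{\pi} = ((M^{\pi})^{\top})^h \nu_0^{\pi}$ claimed in the statement, which follows by induction on $h$: by the tower property and the Markov structure, $\nu_{h+1}^{\pi} = \mathbb{E}^{\pi}[\mathbb{E}^{\pi}[\phi(s_{h+1},a_{h+1}) \mid s_h,a_h] \mid s_0 \sim \xi_0] = (M^{\pi})^{\top} \nu_h^{\pi}$, where I use the defining relation $\phi(s,a)^{\top} M^{\pi} = \mathbb{E}[\phi^{\pi}(s')^{\top} \mid s,a] = \mathbb{E}^{\pi}[\phi(s',a')^{\top} \mid s,a]$. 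This identity is needed only for the closed form of $\nu_h^{\pi}$ in the lemma, not for the supremum computation itself.

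Next I would rewrite the two ratios. By linearity of expectation, the numerator in \eqref{nuS} is $\mathbb{E}^{\pi}[\phi(s_h,a_h)^{\top} w \mid s_0 \sim \xi_0] = (\nu_h^{\pi})^{\top} w$, while the numerator in \eqref{equvalence_1.2} is $(\sum_{h=0}^{H} (H-h+1)\nu_h^{\pi})^{\top} w$; I abbreviate the relevant vector as $\nu$ in either case. The squared denominator is common to both and equals $\mathbb{E}[\frac{1}{H}\sum_{h=0}^{H-1} (\phi(s_{1,h},a_{1,h})^{\top} w)^2] = w^{\top} \Sigma w$, directly from the definition of the uncentered covariance matrix $\Sigma$. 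Thus in both cases the supremum reduces to the generalized Rayleigh quotient $\sup_{w \neq 0} (\nu^{\top} w)/\sqrt{w^{\top} \Sigma w}$.

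To evaluate this quotient I would apply Cauchy--Schwarz in the inner product induced by $\Sigma$ (invertible, as standing assumptions guarantee): writing $\nu^{\top} w = (\Sigma^{-1/2}\nu)^{\top} (\Sigma^{1/2} w)$ gives $\nu^{\top} w \leq \sqrt{\nu^{\top} \Sigma^{-1}\nu}\,\sqrt{w^{\top} \Sigma w}$, with equality exactly when $\Sigma^{1/2} w \propto \Sigma^{-1/2}\nu$, i.e.\ $w \propto \Sigma^{-1}\nu$. Dividing by $\sqrt{w^{\top}\Sigma w}$ and taking the supremum yields $\sqrt{\nu^{\top} \Sigma^{-1}\nu}$, which is precisely the right-hand side of \eqref{nuS} and \eqref{equvalence_1.2} after substituting the two choices of $\nu$.

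There is no substantial obstacle here; the argument is a one-line Rayleigh-quotient maximization once the linear reparametrization is in place. The only points requiring care are (i) the interchange of expectation with the linear map $w \mapsto \phi(\cdot)^{\top} w$ and with the finite sum over $h$, both immediate by linearity, and (ii) the invertibility and range condition on $\Sigma$, which is ensured by the standing assumption $\phi(s,a)^{\top} \Sigma^{-1}\phi(s,a) \leq C_1 d$ (forcing $\Sigma \succ 0$ and $\nu \in \mathrm{range}(\Sigma)$). If one wished to drop invertibility, the identity still holds with $\Sigma^{-1}$ read as the pseudoinverse when $\nu \in \mathrm{range}(\Sigma)$, and the supremum is $+\infty$ otherwise.
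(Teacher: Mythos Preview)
Your proposal is correct and follows essentially the same approach as the paper: parametrize $f=\phi(\cdot)^\top w$, identify the numerator as $\nu^\top w$ and the squared denominator as $w^\top\Sigma w$, and evaluate the resulting generalized Rayleigh quotient (the paper simply quotes $\sup_{\mu}\frac{\nu^\top\mu}{\sqrt{\mu^\top\Sigma\mu}}=\sqrt{\nu^\top\Sigma^{-1}\nu}$ where you invoke Cauchy--Schwarz explicitly). Your extra inductive verification of $\nu_h^{\pi}=((M^{\pi})^\top)^h\nu_0^{\pi}$ is a nice addition that the paper leaves implicit.
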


\subsection{Error Decomposition}
According to the Bellman equation, we have 
\begin{equation} \label{Q_power} Q_0^{\pi} = \big( \mathcal{I} + \mathcal{P}^{\pi} + (\mathcal{P}^{\pi})^2 + \ldots + (\mathcal{P}^{\pi})^H \big) r \qquad \text{and} \qquad \widehat{Q}_0^{\pi} = \big( \mathcal{I} + \widehat{\mathcal{P}}^{\pi} + (\widehat{\mathcal{P}}^{\pi})^2 + \ldots + (\widehat{\mathcal{P}}^{\pi})^H \big) \widehat{r}. \end{equation} 
Note the relation
\begin{equation} \label{diff_power} \big( \mathcal{P}^{\pi} \big)^h - \big( \widehat{\mathcal{P}}^{\pi} \big)^h = \sum_{h'=1}^h \big(\widehat{\mathcal{P}}^{\pi}\big)^{h'-1}\big( \mathcal{P}^{\pi} - \widehat{\mathcal{P}}^{\pi} \big) \big(\mathcal{P}^{\pi}\big)^{h-h'}. \end{equation}
Combining \eqref{Q_power} and \eqref{diff_power}, 
we have
\begin{equation} \label{Q_decompose} \begin{aligned} Q_0^{\pi} - \widehat{Q}_0^{\pi} = & \sum_{h=0}^H \big( (\mathcal{P}^{\pi})^h - (\widehat{\mathcal{P}}^{\pi})^h \big) r + \sum_{h=0}^H (\widehat{\mathcal{P}}^{\pi})^h (r - \widehat{r}) \\ = & \sum_{h=0}^{H-1} (\widehat{\mathcal{P}}^{\pi})^h (\mathcal{P}^{\pi} - \widehat{\mathcal{P}}^{\pi}) Q_{h+1}^{\pi} + \sum_{h=0}^H (\widehat{\mathcal{P}}^{\pi})^h (r - \widehat{r}) \\ = & \sum_{h=0}^{H} ( \widehat{\mathcal{P}}^{\pi} )^h \big( Q_h^{\pi} - (\widehat{r} + \widehat{\mathcal{P}}^{\pi} Q_{h+1}^{\pi}) \big). \end{aligned} \end{equation}
Further, we have the following error decomposition into three terms: a first-order function of $\big( \mathcal{P}^{\pi} - \widehat{\mathcal{P}}^{\pi} \big) $, a high-order function of $\big( \mathcal{P}^{\pi} - \widehat{\mathcal{P}}^{\pi} \big) $, and a bias term due to $\lambda$.

\begin{lemma}\label{lemma:decompose}It always holds that
	\begin{equation} \label{Edecompose} \begin{aligned} 
	v^{\pi} - \widehat{v}^{\pi} = E_1 + E_2 + E_3,
	\end{aligned} \end{equation}
	where
	\begin{align}
	& E_1 := \sum_{h=0}^H ( \nu_h^{\pi} )^{\top} \Sigma^{-1} \Bigg( \frac{1}{N} \sum_{n=1}^N \phi(s_n,a_n) \Big( Q_h^{\pi}(s_n,a_n) - \big( r_n' + V_{h+1}^{\pi}(s_n') \big) \Big) \Bigg), \label{E1} \\
	& E_2 := \sum_{h=0}^{H} \Big( N ( \widehat{\nu}_h^{\pi} )^{\top} \widehat{\Sigma}^{-1} -  ( \nu_{h}^{\pi} )^{\top} \Sigma^{-1} \Big) \Bigg( \! \frac{1}{N} \! \sum_{n=1}^N \! \phi(s_n,a_n) \Big( Q_h^{\pi}(s_n, a_n) - \big( r_n' + V_{h+1}^{\pi}(s_n') \big) \Big) \! \Bigg), \label{def_E2} \\
	& E_3 := \lambda \sum_{h=0}^{H} ( \widehat{\nu}_h^{\pi} )^{\top} \widehat{\Sigma}^{-1} w_h^{\pi}. \label{def_E3}
	\end{align}
	Here, $(\widehat{\nu}_h^{\pi})^{\top} = ( \nu_0^{\pi} )^{\top} \big(\widehat{M}^{\pi}\big)^h$, $w_h^{\pi} \in \mathbb{R}^{d}$ satisfies $Q_h^{\pi}(\cdot) = \phi(\cdot)^{\top} w_h^{\pi}$.
\end{lemma}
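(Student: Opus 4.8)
The plan is to turn the operator-level identity \eqref{Q_decompose} into a vector-matrix identity using the linearity of $\mathcal{Q}$, and then split the resulting single sum into the three named pieces by elementary add-subtract bookkeeping; the whole statement is an exact algebraic identity, so no probabilistic estimate is involved. First I would integrate \eqref{Q_decompose} against the initial state-action distribution $\xi_0(s)\pi(a\,|\,s)$, equivalently take the inner product with $\nu_0^{\pi}$, so that the left-hand side becomes $v^{\pi}-\widehat{v}^{\pi}$. Since $\widehat{\mathcal{P}}^{\pi}$ acts on any $f(\cdot)=\phi(\cdot)^{\top}w$ by $\widehat{\mathcal{P}}^{\pi}f=\phi(\cdot)^{\top}\widehat{M}^{\pi}w$ (equation \eqref{hatPpi}), iterating gives $(\widehat{\mathcal{P}}^{\pi})^hf=\phi(\cdot)^{\top}(\widehat{M}^{\pi})^hw$. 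Combining this with the representations $Q_h^{\pi}=\phi(\cdot)^{\top}w_h^{\pi}$, $\widehat{r}=\phi(\cdot)^{\top}\widehat{R}$, $\widehat{\mathcal{P}}^{\pi}Q_{h+1}^{\pi}=\phi(\cdot)^{\top}\widehat{M}^{\pi}w_{h+1}^{\pi}$ and the definition $(\widehat{\nu}_h^{\pi})^{\top}=(\nu_0^{\pi})^{\top}(\widehat{M}^{\pi})^h$, this yields the compact form
\[ v^{\pi}-\widehat{v}^{\pi}=\sum_{h=0}^H(\widehat{\nu}_h^{\pi})^{\top}\big(w_h^{\pi}-\widehat{R}-\widehat{M}^{\pi}w_{h+1}^{\pi}\big). \]

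Next I would rewrite the residual $w_h^{\pi}-\widehat{R}-\widehat{M}^{\pi}w_{h+1}^{\pi}$ in terms of empirical Bellman residuals. Using \eqref{vector_recursion}, $\widehat{R}+\widehat{M}^{\pi}w_{h+1}^{\pi}=\widehat{\Sigma}^{-1}\sum_{n=1}^N\phi(s_n,a_n)\big(r_n'+\phi^{\pi}(s_n')^{\top}w_{h+1}^{\pi}\big)$, and since $Q_{h+1}^{\pi}\in\mathcal{Q}$ one has $\phi^{\pi}(s_n')^{\top}w_{h+1}^{\pi}=V_{h+1}^{\pi}(s_n')$. For the leading term I would invoke the regularized normal-equation identity $\widehat{\Sigma}w_h^{\pi}=\lambda w_h^{\pi}+\sum_{n=1}^N\phi(s_n,a_n)\phi(s_n,a_n)^{\top}w_h^{\pi}=\lambda w_h^{\pi}+\sum_{n=1}^N\phi(s_n,a_n)Q_h^{\pi}(s_n,a_n)$, which holds because $\phi(s_n,a_n)^{\top}w_h^{\pi}=Q_h^{\pi}(s_n,a_n)$. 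Multiplying through by $\widehat{\Sigma}^{-1}$ and subtracting gives
\[ w_h^{\pi}-\widehat{R}-\widehat{M}^{\pi}w_{h+1}^{\pi}=\widehat{\Sigma}^{-1}\Bigg(\lambda w_h^{\pi}+\sum_{n=1}^N\phi(s_n,a_n)\big(Q_h^{\pi}(s_n,a_n)-r_n'-V_{h+1}^{\pi}(s_n')\big)\Bigg). \]

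Then I would substitute this back, peel off the $\lambda$-proportional part as $E_3$ exactly as in \eqref{def_E3}, and treat the remaining sum $\sum_{h=0}^HN(\widehat{\nu}_h^{\pi})^{\top}\widehat{\Sigma}^{-1}g_h$, where $g_h:=\frac{1}{N}\sum_{n=1}^N\phi(s_n,a_n)\big(Q_h^{\pi}(s_n,a_n)-r_n'-V_{h+1}^{\pi}(s_n')\big)$, by adding and subtracting the population-weighted quantity $(\nu_h^{\pi})^{\top}\Sigma^{-1}g_h$. The part weighted by $\Sigma$ and $\nu_h^{\pi}$ is exactly $E_1$, and applying the difference $N(\widehat{\nu}_h^{\pi})^{\top}\widehat{\Sigma}^{-1}-(\nu_h^{\pi})^{\top}\Sigma^{-1}$ to $g_h$ gives $E_2$, which closes the decomposition.

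I expect no serious obstacle, since this is an exact identity rather than an approximation; the only points needing care are invoking Assumption \ref{Q_class} to guarantee the exact linear representations $Q_h^{\pi}=\phi(\cdot)^{\top}w_h^{\pi}$ and $\phi^{\pi}(s')^{\top}w_{h+1}^{\pi}=V_{h+1}^{\pi}(s')$ used throughout, and tracking the $\lambda$-term through the regularized normal equations so that it lands entirely in $E_3$ and does not contaminate $E_1$ or $E_2$.
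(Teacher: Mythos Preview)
Your proposal is correct and follows essentially the same route as the paper's proof: both start from \eqref{Q_decompose}, pass to vector form via $(\widehat{\mathcal{P}}^{\pi})^h f=\phi(\cdot)^{\top}(\widehat{M}^{\pi})^h w$, use the regularized normal-equation identity $\widehat{\Sigma}w_h^{\pi}=\lambda w_h^{\pi}+\sum_n\phi(s_n,a_n)Q_h^{\pi}(s_n,a_n)$ to isolate the $\lambda$-bias in $E_3$, and then add and subtract $(\nu_h^{\pi})^{\top}\Sigma^{-1}$ to split the remainder into $E_1$ and $E_2$. The only cosmetic difference is that the paper writes the intermediate steps pointwise in $(s,a)$ before integrating against $\xi_0\pi$, whereas you work directly at the coefficient-vector level.
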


\begin{proof}
	Note that
	\begin{equation} \label{v-v_operator} v^{\pi} - \widehat{v}^{\pi} = \int_{\mathcal{X}} \big( Q_0^{\pi}(s,a) - \widehat{Q}_0^{\pi}(s,a) \big) \xi_0(s) \pi(a \, | \, s) {\rm d} s {\rm d} a. \end{equation} 
	In the following, we reform the expression of $Q_0^{\pi} - \widehat{Q}_0^{\pi}$ in \eqref{Q_decompose} with a vector form. 
	
	Consider $Q_h^{\pi} - (\widehat{r} + \widehat{\mathcal{P}}^{\pi} Q_{h+1}^{\pi})$.
	According to the definitions of $\widehat{r}$ and $\widehat{\mathcal{P}}^{\pi}$ in \eqref{def_hatr} and \eqref{hatP_operator},
	\begin{equation} \label{hatPQ} \big( \widehat{r} + \widehat{\mathcal{P}}^{\pi} Q_{h+1}^{\pi} \big)(s,a) = \phi(s,a)^{\top} \widehat{\Sigma}^{-1} \sum_{n=1}^N \phi(s_n,a_n) \big( r_n' + V_{h+1}^{\pi}(s_n') \big). \end{equation}
	Under Assumption \ref{Q_class}, there exists a vector $w_h^{\pi} \in \mathbb{R}^d$ such that \begin{equation} \label{w_h} Q_h^{\pi}(s,a) = \phi(s,a)^{\top} w_h^{\pi}. \end{equation} 
	We have
	\[ \begin{aligned} Q_h^{\pi}(s,a) = & \phi(s,a)^{\top} \widehat{\Sigma}^{-1} \widehat{\Sigma} w_h^{\pi} = \phi(s,a)^{\top} \widehat{\Sigma}^{-1} \Bigg( \lambda I + \sum_{n=1}^N \phi(s_n,a_n) \phi(s_n,a_n)^{\top} \Bigg) w_h^{\pi} \\ = & \lambda \phi(s,a)^{\top} \widehat{\Sigma}^{-1} w_h^{\pi} + \phi(s,a)^{\top} \widehat{\Sigma}^{-1} \sum_{n=1}^N \phi(s_n,a_n) \phi(s_n,a_n)^{\top} w_h^{\pi} \\ = & \lambda \phi(s,a)^{\top} \widehat{\Sigma}^{-1} w_h^{\pi} + \phi(s,a)^{\top} \widehat{\Sigma}^{-1} \sum_{n=1}^N \phi(s_n,a_n) Q_h^{\pi}(s_n, a_n). \end{aligned} \]
	It follows that
	\begin{equation} \label{diff_PQ}
			\Big( Q_h^{\pi} - (\widehat{r} + \widehat{\mathcal{P}}^{\pi} Q_{h+1}^{\pi}) \Big)(s,a) = \lambda \phi(s,a)^{\top} \widehat{\Sigma}^{-1} w_h^{\pi} + \sum_{n=1}^N \phi(s,a)^{\top} \widehat{\Sigma}^{-1} \phi(s_n,a_n) \Big( Q_h^{\pi}(s_n,a_n) - \big(r_n' + V_{h+1}^{\pi}(s_n') \big) \Big).
	\end{equation}
	
	Note that for any $f \in \mathcal{Q}$ with $f(s,a) = \phi(s,a)^{\top} \mu$, we have $\big( \widehat{\mathcal{P}}^{\pi} f \big)(s,a) = \phi(s,a)^{\top} \widehat{M}^{\pi} \mu$, therefore, \[ \big( \big( \widehat{\mathcal{P}}^{\pi}\big)^h f \big)(s,a) = \phi(s,a)^{\top} \big(\widehat{M}^{\pi}\big)^h \mu. \]
	Then \eqref{diff_PQ} implies
	\begin{equation} \label{diff_Q0term} \begin{aligned} & ( \widehat{\mathcal{P}}^{\pi} )^h \big( Q_h^{\pi} - (\widehat{r} + \widehat{\mathcal{P}}^{\pi} Q_{h+1}^{\pi}) \big)(s,a) \\ = & \lambda \phi(s,a)^{\top} \! \big(\widehat{M}^{\pi}\big)^h \widehat{\Sigma}^{-1} w_h^{\pi} + \sum_{n=1}^N \phi(s,a)^{\top} \! \big( \widehat{M}^{\pi} \big)^h \widehat{\Sigma}^{-1} \phi(s_n,a_n) \Big( Q_h^{\pi}(s_n, a_n) - \big( r_n'+V_{h+1}^{\pi}(s_n') \big) \Big). \end{aligned} \end{equation}
	
	Plugging \eqref{diff_Q0term} into \eqref{v-v_operator} yields
	\begin{equation} \label{v-v_expression} v^{\pi} - \widehat{v}^{\pi} = \sum_{h=0}^{H} \Bigg( \lambda ( \widehat{\nu}_h^{\pi} )^{\top} \widehat{\Sigma}^{-1} w_h^{\pi} + \underbrace{\sum_{n=1}^N ( \widehat{\nu}_h^{\pi} )^{\top} \widehat{\Sigma}^{-1} \phi(s_n,a_n) \Big( Q_h^{\pi}(s_n, a_n) - \big( r_n'+V_{h+1}^{\pi}(s_n') \big) \Big)}_{Err_h} \Bigg), \end{equation}
	where we have used the definitions $\nu_0^{\pi} = \mathbb{E}\big[ \phi(s,a) \, \big| \, s \sim \xi_0, a \sim \pi(\cdot \, | \, s) \big]$ and $(\widehat{\nu}_h^{\pi})^{\top} = (\nu_0^{\pi})^{\top} \big( \widehat{M}^{\pi} \big)^h$. In \eqref{v-v_expression}, $\lambda (\widehat{\nu}_h^{\pi})^{\top} \widehat{\Sigma}^{-1} w_h^{\pi}$ is the bias term induced by the ridge penalty $\lambda \rho(\cdot)$ in \eqref{Q_recursion} and \eqref{hatP_operator}. 
	As for $Err_h$, we replace the data-dependent terms $(\widehat{\nu}_h^{\pi})^{\top}$ and $\widehat{\Sigma}^{-1}$ with their population counterparts $(\nu_h^{\pi})^{\top} = (\nu_0^{\pi})^{\top} M^{\pi}$ and $N^{-1}\Sigma^{-1}$. $Err_h$ is then the sum of first-order approximation
	\[ \frac{1}{N} \sum_{n=1}^N ( \nu_h^{\pi} )^{\top} \Sigma^{-1} \phi(s_n,a_n) \Big( Q_h^{\pi}(s_n, a_n) - \big( r_n'+V_{h+1}^{\pi}(s_n') \big) \Big) \]
	and high-order remainder
	\[  \frac{1}{N}  \sum_{n=1}^N \Big( N (\widehat{\nu}_h^{\pi})^{\top} \widehat{\Sigma}^{-1} - ( \nu_h^{\pi} )^{\top} \Sigma^{-1} \Big) \phi(s_n,a_n) \Big( Q_h^{\pi}(s_n, a_n) - \big( r_n'+V_{h+1}^{\pi}(s_n') \big) \Big). \]
	In this way, we propose the decomposition $v^{\pi} - \widehat{v}^{\pi} = E_1 + E_2 + E_3$, where the first-order error $E_1$, high-order error $E_2$ and bias $E_3$ are given in \eqref{E1}, \eqref{def_E2} and \eqref{def_E3}.
\end{proof}

In the following, we analyze $E_1$, $E_2$ and $E_3$ separately in Sections \ref{section:E1}, \ref{section:E2} and \ref{section:E3}, and integrate the results in Section \ref{section:sum}.

\subsection{First-Order Term $E_1$} \label{section:E1}

Note that $E_1 = \frac{1}{N} \sum_{n=1}^N e_n$, where
\begin{equation} \label{e_t} e_n := \sum_{h=0}^{H-1} (\nu_h^{\pi})^{\top} \Sigma^{-1} \phi(s_n,a_n) \Big( Q_h^{\pi}(s_n, a_n) - \big( r_n'+V_{h+1}^{\pi}(s_n') \big) \Big), \quad n = 1,2,\ldots,N. \end{equation}
Define a filtration $\big\{ \mathcal{F}_n \big\}_{n=1,\ldots,N}$ with $\mathcal{F}_n$ generated by $(s_1, a_1, s_1'), \ldots, (s_{n-1}, a_{n-1}, s_{n-1}')$ and $(s_n,a_n)$.
The identity
$\mathbb{E}\big[ e_n \, \big| \, \mathcal{F}_n \big] = 0$
implies that $\{e_n\}_{n=1,\ldots,N}$ is a martingale difference sequence.
In the following, we analyze the large-deviation behavior of $E_1$ with Freedman's inequality \cite{freedman1975tail}.

\begin{lemma}[Error in the first-order term, $E_1$] \label{lemma:E1}
	Under the assumption $\phi(s,a)^{\top} \Sigma^{-1} \phi(s,a) \leq C_1 d$ for all $(s,a) \in \mathcal{X}$, with probability at least $1 - \delta$,
	\begin{equation} \label{eqnE1} |E_1| \leq \sum_{h=0}^{H} (H-h+1) \sqrt{( \nu_h^{\pi} )^{\top} \Sigma^{-1} \nu_h^{\pi}} \cdot \sqrt{\frac{\ln(4/\delta)}{2N}} + \Delta E_1, \end{equation}
	where $\Delta E_1$ is a high-order term given by
	\[ \Delta E_1 = \sum_{h=0}^{H} (H-h+1) \sqrt{( \nu_h^{\pi} )^{\top} \Sigma^{-1} \nu_h^{\pi}} \cdot \bigg( \frac{7\ln(4d/\delta)\sqrt{C_1dH}}{6N} + \frac{\big(\ln(4d/\delta)\big)^{3/2} C_1 d H}{3\sqrt{2}N^{3/2}} \bigg). \]
	If we further have $\phi(s,a)^{\top} \Sigma^{-1} \phi(s',a') \geq 0$ for any $(s,a), (s',a') \in \mathcal{X}$ or the MDP is time-inhomogeneous, the upper bound \eqref{eqnE1} can be improved to
	\begin{equation} \label{eqnE1_new} |E_1| \leq \sqrt{\Bigg( \sum_{h=0}^{H} (H-h+1) \nu_h^{\pi} \Bigg)^{\top} \Sigma^{-1} \Bigg( \sum_{h=0}^{H} (H-h+1) \nu_h^{\pi} \Bigg)} \cdot \sqrt{\frac{\ln(4/\delta)}{2N}} + \Delta E_1. \end{equation}
\end{lemma}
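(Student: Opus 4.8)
The plan is to treat $E_1=\frac1N\sum_{n=1}^N e_n$ as a normalized sum of a martingale difference sequence and to apply Freedman's inequality, which asks for two ingredients: a uniform bound on the increments $|e_n|$ and a bound on the aggregate conditional variance $\sum_n\mathrm{Var}[e_n\mid\mathcal{F}_n]$. The martingale property is immediate once I note that $Q_h^\pi(s_n,a_n)=\mathbb{E}[\,r_n'+V_{h+1}^\pi(s_n')\mid s_n,a_n\,]$, so each inner factor $\zeta_{h,n}:=Q_h^\pi(s_n,a_n)-(r_n'+V_{h+1}^\pi(s_n'))$ is conditionally centered given $\mathcal{F}_n$ and hence $\mathbb{E}[e_n\mid\mathcal{F}_n]=0$.

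First I would control the conditional variance. Abbreviating $c_{h,n}:=(\nu_h^\pi)^\top\Sigma^{-1}\phi(s_n,a_n)$, I would combine the range bound $\mathrm{Var}[\,r_n'+V_{h+1}^\pi(s_n')\mid s_n,a_n\,]\le\tfrac14(H-h+1)^2$ with the triangle inequality for conditional standard deviations to obtain $\sqrt{\mathrm{Var}[e_n\mid\mathcal{F}_n]}\le\tfrac12\sum_h(H-h+1)|c_{h,n}|$, and then a weighted Cauchy--Schwarz step with weights $\sqrt{(\nu_h^\pi)^\top\Sigma^{-1}\nu_h^\pi}$ to split this into the product form. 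Summing over $n$ reduces everything to $\sum_n c_{h,n}^2=(\nu_h^\pi)^\top\Sigma^{-1}\big(\sum_n\phi\phi^\top\big)\Sigma^{-1}\nu_h^\pi$, so I would invoke a matrix Bernstein inequality on the $K$ i.i.d.\ episode-averaged outer products to show $\frac1N\sum_n\phi\phi^\top$ is close to $\Sigma$ with relative error $O_{\mathbb{P}}(\sqrt{dH/N})$, the extra $\sqrt H$ encoding within-episode correlation. This yields $\sum_n\mathrm{Var}[e_n\mid\mathcal{F}_n]\le\tfrac14 S^2\big(N+\sqrt{dH}\,O_{\mathbb{P}}(\sqrt N)\big)$ with $S:=\sum_h(H-h+1)\sqrt{(\nu_h^\pi)^\top\Sigma^{-1}\nu_h^\pi}$. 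For the increment bound I would use $|\zeta_{h,n}|\le H-h+1$ together with $|c_{h,n}|\le\sqrt{(\nu_h^\pi)^\top\Sigma^{-1}\nu_h^\pi}\sqrt{\phi^\top\Sigma^{-1}\phi}$ and the hypothesis $\phi^\top\Sigma^{-1}\phi\le C_1 d$ to get $|e_n|\lesssim\sqrt{C_1 d}\,S$.

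Feeding $\sigma^2\approx\tfrac14 S^2 N$ and the increment bound into Freedman's inequality and solving at confidence $\delta$ gives the leading term $S\sqrt{\ln(4/\delta)/(2N)}$; the linear-in-deviation Freedman term and the $O_{\mathbb{P}}(\sqrt{dH/N})$ covariance correction to $\sigma^2$ are both lower order in $N$ and, after bookkeeping, assemble into the remainder $\Delta E_1$, while a union bound over the covariance event and the deviation event explains the $d$ inside its logarithm. This establishes \eqref{eqnE1}.

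Finally, for the improved bound \eqref{eqnE1_new} the goal is to avoid the lossy Cauchy--Schwarz split over $h$ and replace the sum-of-norms $S$ by the norm-of-sum $\big\|\sum_h(H-h+1)\nu_h^\pi\big\|_{\Sigma^{-1}}$. When $\phi(s,a)^\top\Sigma^{-1}\phi(s',a')\ge0$ for all pairs, I would observe, using Lemma~\ref{Equivalence}, that $c_{h,n}=\mathbb{E}^\pi[\phi(s_h,a_h)^\top\Sigma^{-1}\phi(s_n,a_n)]\ge0$, so $\sum_h(H-h+1)|c_{h,n}|=\big(\sum_h(H-h+1)\nu_h^\pi\big)^\top\Sigma^{-1}\phi(s_n,a_n)$; squaring and summing then produces the norm-of-sum variance directly. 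In the time-inhomogeneous case the improvement comes instead from independence: after reindexing so that each transition belongs to a single time step, the factors $\zeta_{h,\cdot}$ for distinct $h$ arise from disjoint independent samples, so the conditional variances add rather than being bounded through the triangle inequality, again giving the norm-of-sum form. I expect the main obstacle to be precisely this last step---reorganizing the martingale and its variance accounting in the time-inhomogeneous case---along with verifying that the covariance-concentration error enters only the $O(1/N)$ remainder $\Delta E_1$ and never inflates the leading $N^{-1/2}$ term.
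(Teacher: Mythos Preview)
Your proposal is correct and follows essentially the same route as the paper: martingale-difference structure for $e_n$, Freedman's inequality with the increment bound $|e_n|\le\sqrt{C_1d}\,S$, the Cauchy--Schwarz/Minkowski split of the conditional variance, and matrix Bernstein on episode-level outer products (the paper's Lemma~\ref{lemma:Term1}) to control $\sum_n\phi\phi^\top$; your Minkowski-then-square variant is equivalent to the paper's direct Cauchy--Schwarz on $e_n^2$. One correction on the time-inhomogeneous case: the improvement is \emph{not} driven by independence of the $\zeta_{h,\cdot}$ across $h$ (for a fixed $n$ they all depend on the same $(s_n',r_n')$), but by the block-diagonal structure of $\Sigma$ over time steps, which forces $c_{h,n}=0$ whenever $h$ differs from the time index of sample $n$---so each $e_n$ contains a single term and the cross terms in the variance vanish identically.
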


We only present the proof of \eqref{eqnE1} here. The proof of \eqref{eqnE1_new} when $\phi(s,a)^{\top} \Sigma^{-1} \phi(s',a') \geq 0$ is similar and we defer it to Appendix \ref{appendix:eqnE1_new}.
We will use the following Lemma \ref{lemma:Term1} regarding the concentration of uncentered sample covariance matrix $\frac{1}{N} \sum_{n=1}^N \phi(s_n,a_n) \phi(s_n,a_n)^{\top}$. 
See Appendix \ref{appendix:Term1} for the proof of Lemma \ref{lemma:Term1}.
\begin{lemma} \label{lemma:Term1}
	Under the assumption $\phi(s,a)^{\top} \Sigma^{-1} \phi(s,a) \leq C_1 d$ for all $(s,a) \in \mathcal{X}$, with probability at least $1 - \delta$,
	\begin{equation} \label{Term1} \Bigg\| \Sigma^{-1/2} \bigg( \frac{1}{N} \sum_{n=1}^N \phi(s_n,a_n) \phi(s_n,a_n)^{\top} \bigg) \Sigma^{-1/2} - I \Bigg\|_2 \leq \sqrt{\frac{2\ln(2d/\delta)C_1dH}{N}} + \frac{2\ln(2d/\delta)C_1 d H}{3N}. \end{equation}
\end{lemma}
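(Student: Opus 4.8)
The plan is to exploit the i.i.d.\ episodic structure of Assumption~\ref{episode} to convert the dependent per-transition terms into a sum of $K = N/H$ independent matrices, and then apply the matrix Bernstein inequality. For each episode $k$, define the whitened per-episode average
$$ Y_k := \Sigma^{-1/2} \left( \frac{1}{H} \sum_{h=0}^{H-1} \phi(s_{k,h},a_{k,h}) \phi(s_{k,h},a_{k,h})^{\top} \right) \Sigma^{-1/2}. $$
By the definition of $\Sigma$ we have $\mathbb{E}[Y_k] = \Sigma^{-1/2}\Sigma\Sigma^{-1/2} = I$, and since $N = KH$ the matrix to be controlled is exactly $\frac{1}{K}\sum_{k=1}^K (Y_k - I)$. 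The episodes being i.i.d., the summands $X_k := Y_k - I$ are i.i.d., centered and self-adjoint, which is precisely the setting for matrix Bernstein.

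Next I would establish the two ingredients the inequality requires: a uniform norm bound and a variance bound. Writing $z_{k,h} := \Sigma^{-1/2}\phi(s_{k,h},a_{k,h})$, the hypothesis $\phi^{\top}\Sigma^{-1}\phi \le C_1 d$ gives $\|z_{k,h}\|_2^2 \le C_1 d$, so each $Y_k = \frac{1}{H}\sum_{h} z_{k,h} z_{k,h}^{\top}$ is positive semidefinite with $0 \preceq Y_k \preceq C_1 d \cdot I$; consequently $\|X_k\|_2 \le C_1 d$ (this uses $C_1 d \ge 1$, which holds automatically since $\mathbb{E}\big[\frac{1}{H}\sum_{h} \phi^{\top}\Sigma^{-1}\phi\big] = \mathrm{tr}(I) = d$ forces $\sup_{(s,a)} \phi^{\top}\Sigma^{-1}\phi \ge d$, hence $C_1 \ge 1$). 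For the variance, the operator inequality $Y_k^2 \preceq C_1 d\, Y_k$, valid whenever $0 \preceq Y_k \preceq C_1 d\, I$, together with $\mathbb{E}[Y_k] = I$ yields $\mathbb{E}[X_k^2] = \mathbb{E}[Y_k^2] - I \preceq (C_1 d - 1) I$, so $\big\|\sum_{k=1}^K \mathbb{E}[X_k^2]\big\|_2 \le K C_1 d$.

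Finally I would invoke the matrix Bernstein inequality: for i.i.d., centered, self-adjoint $\{X_k\}$ with $\|X_k\|_2 \le L$ and $\nu := \big\|\sum_k \mathbb{E}[X_k^2]\big\|_2$,
$$ \mathbb{P}\left( \Big\| \sum_{k=1}^K X_k \Big\|_2 \ge t \right) \le 2d\exp\!\left( \frac{-t^2/2}{\nu + Lt/3} \right). $$
Setting the right-hand side equal to $\delta$ and solving the resulting quadratic gives, with probability at least $1-\delta$, $\big\|\sum_k X_k\big\|_2 \le \sqrt{2\nu\ln(2d/\delta)} + \frac{2L}{3}\ln(2d/\delta)$. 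Substituting $\nu \le K C_1 d$ and $L = C_1 d$, dividing by $K$, and using $K = N/H$ reproduces the claimed bound $\sqrt{\tfrac{2\ln(2d/\delta)C_1 dH}{N}} + \tfrac{2\ln(2d/\delta)C_1 dH}{3N}$.

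The main obstacle is the very first step: one cannot apply Bernstein transition-by-transition because consecutive samples within an episode are dependent, so the crux is to batch at the episode level, which is exactly what produces the extra factor $H$ appearing in both terms of the bound. A secondary point requiring care is the variance estimate, where the semidefinite bound $Y_k \preceq C_1 d\, I$ must be leveraged through the inequality $Y_k^2 \preceq C_1 d\, Y_k$ to control $\mathbb{E}[Y_k^2]$, rather than attempting any direct computation of the fourth moment of the features.
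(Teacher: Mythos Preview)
Your proposal is correct and follows essentially the same route as the paper: batch the dependent transitions into $K=N/H$ i.i.d.\ per-episode whitened averages, bound the norm by $C_1 d$ and the variance by $K C_1 d$, and apply the matrix Bernstein inequality. The only cosmetic difference is in the variance step---the paper obtains $\mathbb{E}[Y_k^2]\preceq C_1 d\, I$ via $\|\Sigma^{-1/2}\Phi_k\|_2^2\le\|\Phi_k^{\top}\Sigma^{-1}\Phi_k\|_F\le C_1 dH$, whereas you use the cleaner operator inequality $Y_k^2\preceq C_1 d\, Y_k$---but both reach the identical bound.
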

We are now ready to prove \eqref{eqnE1}.
\begin{proof}[Proof of \eqref{eqnE1}]
	When $\phi(s,a)^{\top} \Sigma^{-1} \phi(s,a) \leq C_1 d$ for all $(s,a) \in \mathcal{X}$, the difference sequence $\{ e_n \}_{n=1}^N$ is uniformly bounded.
	In fact, since $r \in [0,1]$, we have $r_n'+V_{h+1}^{\pi}(s_n') \in [0,H-h]$ and
	\[ \begin{aligned} |e_n| \leq & \sum_{h=0}^{H} \big| (\nu_h^{\pi})^{\top} \Sigma^{-1} \phi(s_n,a_n) \big| \cdot \Big| Q_h^{\pi}(s_n, a_n) - \big( r_n'+V_{h+1}^{\pi}(s_n') \big) \Big| \\ \leq & \sum_{h=0}^H (H-h+1) \big| (\nu_h^{\pi})^{\top} \Sigma^{-1} \phi(s_n,a_n) \big| \leq \sqrt{C_1d} \sum_{h=0}^{H} (H-h+1) \sqrt{ (\nu_h^{\pi})^{\top} \Sigma^{-1} \nu_h^{\pi} }, \end{aligned} \]
	where we have used $\big| \mu^{\top} \Sigma^{-1} \phi(s, a) \big| \leq \sqrt{C_1d} \sqrt{\mu^{\top} \Sigma^{-1} \mu}$ for any $\mu \in \mathbb{R}^d$, $(s,a) \in \mathcal{X}$.
	For simplicity, we denote
	\begin{equation} \label{B}
	B := \sum_{h=0}^{H} (H-h+1) \sqrt{ (\nu_h^{\pi})^{\top} \Sigma^{-1} \nu_h^{\pi} }.
	\end{equation}
	
	Next, we consider $\sum_{n=1}^N {\rm Var}\big[ e_n \, \big| \, \mathcal{F}_n \big]$.
	By Cauchy-Schwarz inequality,
	\[ \begin{aligned} {\rm Var} \big[ e_n \, \big| \, \mathcal{F}_n \big] = & \mathbb{E} \Bigg[ \bigg( \sum_{h=0}^H (\nu_h^{\pi})^{\top} \Sigma^{-1} \phi(s_n,a_n) \Big( Q_h^{\pi}(s_n, a_n) - \big( r_n'+V_{h+1}^{\pi}(s_n') \big) \Big) \! \bigg)^2 \, \Bigg| \, \mathcal{F}_n \Bigg] \\ \leq & \left( \sum_{h=0}^H \frac{\sqrt{(\nu_h^{\pi})^{\top} \Sigma^{-1} \nu_h^{\pi}}}{H-h+1} {\rm Var}\big[r_n' + V_{h+1}^{\pi}(s_n') \, \big| \, s_n, a_n \big] \! \right) \!\! \left( \sum_{h=0}^H \frac{H-h+1}{\sqrt{(\nu_h^{\pi})^{\top} \Sigma^{-1} \nu_h^{\pi}}} \Big( (\nu_h^{\pi})^{\top} \Sigma^{-1} \phi(s_n,a_n) \Big)^2 \right). \end{aligned} \]
	Since $r_n' + V_{h+1}^{\pi}(s_n') \in [0, H-h+1]$, the conditional variance ${\rm Var}\big[r_n' + V_{h+1}^{\pi}(s_n') \, \big| \, s_n, a_n\big] \leq \frac{1}{4} (H-h+1)^2$.
	It follows that
	\[ {\rm Var} \big[ e_n \, \big| \, \mathcal{F}_n \big] \leq \frac{B}{4} \sum_{h=0}^H \frac{H-h+1}{\sqrt{(\nu_h^{\pi})^{\top} \Sigma^{-1} \nu_h^{\pi}}} \Big( \big(\nu_h^{\pi}\big)^{\top} \Sigma^{-1} \phi(s_n,a_n) \Big)^2, \]
	and
	\[ \begin{aligned} \sum_{n=1}^N {\rm Var}\big[ e_n \, \big| \, \mathcal{F}_n \big] \leq & \frac{BN}{4} \sum_{h=0}^{H} \frac{H-h+1}{\sqrt{(\nu_h^{\pi})^{\top} \Sigma^{-1} \nu_h^{\pi}}} (\nu_h^{\pi})^{\top} \Sigma^{-1} \bigg( \frac{1}{N} \sum_{n=1}^N \phi(s_n,a_n) \phi(s_n,a_n)^{\top} \bigg) \Sigma^{-1} \nu_h^{\pi} \\ \leq & \frac{BN}{4} \sum_{h=0}^{H}  \frac{H-h+1}{\sqrt{(\nu_h^{\pi})^{\top} \Sigma^{-1} \nu_h^{\pi}}} \cdot (\nu_h^{\pi})^{\top} \Sigma^{-1} \nu_h^{\pi} \Bigg\| \Sigma^{-1/2} \bigg( \frac{1}{N} \sum_{n=1}^N \phi(s_n,a_n) \phi(s_n,a_n)^{\top} \bigg) \Sigma^{-1/2} \Bigg\|_2 \\ \leq & \frac{B^2N}{4} \Bigg\| \Sigma^{-1/2} \bigg( \frac{1}{N} \sum_{n=1}^N \phi(s_n,a_n) \phi(s_n,a_n)^{\top} \bigg) \Sigma^{-1/2} \Bigg\|_2, \end{aligned} \]
	where we have used $\mu^{\top} \Sigma^{-1} X \Sigma^{-1} \mu \leq \mu^{\top} \Sigma^{-1} \mu \cdot \big\| \Sigma^{-1/2} X \Sigma^{-1/2} \big\|_2$ for any $\mu \in \mathbb{R}^d$, $X \in \mathbb{R}^{d \times d}$.
	We take
	\begin{equation} \label{sigma} \sigma^2 := N \Bigg( 1 + \sqrt{\frac{2\ln(4d/\delta)C_1dH}{N}} + \frac{2\ln(4d/\delta) C_1 d H}{3N} \Bigg) \cdot \frac{B^2}{4}. \end{equation}
	According to Lemma \ref{lemma:Term1}, it holds that
	\begin{equation} \label{P1} \mathbb{P} \Bigg( \sum_{n=1}^N {\rm Var}\big[ e_n \, \big| \, \mathcal{F}_n \big] \geq \sigma^2 \Bigg) \leq  \delta/2. \end{equation}
	
	The Freedman's inequality implies that for any $\varepsilon \in \mathbb{R}$,
	\[ \mathbb{P}\Bigg( \bigg|\sum_{n=1}^N e_n\bigg| \geq \varepsilon, \sum_{n=1}^N {\rm Var}\big[ e_n \, \big| \, \mathcal{F}_n \big] \leq \sigma^2  \Bigg) \leq 2 \exp \bigg( - \frac{\varepsilon^2/2}{\sigma^2 + \sqrt{C_1d}B \varepsilon / 3} \bigg), \]
	where $B$ is given in \eqref{B} and $\sigma^2$ is defined in \eqref{sigma}.
	By taking
	\begin{equation} \label{varepsilon} \varepsilon := \sqrt{2 \ln(4/\delta)} \sigma + 2 \ln(4/\delta) \sqrt{C_1 d} B / 3,  \end{equation}
	one has
	\begin{equation} \label{P2} \mathbb{P}\Bigg( \bigg|\sum_{n=1}^N e_n\bigg| \geq \varepsilon, \, \sum_{n=1}^N {\rm Var}\big[ e_n \, \big| \, \mathcal{F}_n \big] \leq \sigma^2  \Bigg) \leq \delta/2. \end{equation}
	Combining \eqref{P1} and \eqref{P2}, we obtain
	\[ \mathbb{P}\Bigg( \bigg|\sum_{n=1}^N e_n\bigg| \geq \varepsilon \Bigg) \leq \mathbb{P}\Bigg( \bigg|\sum_{n=1}^N e_n\bigg| \geq \varepsilon, \sum_{n=1}^N {\rm Var}\big[ e_n \, \big| \, \mathcal{F}_n \big] \leq \sigma^2  \Bigg) + \mathbb{P} \Bigg( \sum_{n=1}^N {\rm Var}\big[ e_n \, \big| \, \mathcal{F}_n \big] \geq \sigma^2 \Bigg) \leq \delta. \]
	Using the inequality $\sqrt{1+x} \leq 1 + \frac{x}{2}$, $\forall x \geq 0$, we derive from \eqref{varepsilon} that
	\[ \frac{\varepsilon}{N} \leq B \Bigg( \sqrt{\frac{\ln(4/\delta)}{2N}} + \frac{7\ln(4d/\delta)\sqrt{C_1dH}}{6N} + \frac{\big(\ln(4d/\delta)\big)^{3/2} C_1 d H}{3\sqrt{2}N^{3/2}} \Bigg), \]
	which completes the proof of \eqref{eqnE1}.
\end{proof}

\subsection{High-Order Term $E_2$} \label{section:E2}

Recall that
\[ E_2 = \sum_{h=0}^H \Big( N \big( \widehat{\nu}_h^{\pi} \big)^{\top} \widehat{\Sigma}^{-1} - ( \nu_h^{\pi} )^{\top} \Sigma^{-1} \Big) \Bigg( \frac{1}{N} \sum_{n=1}^N \phi(s_n,a_n) \Big( Q_h^{\pi}(s_n, a_n) - \big( r_n'+V_{h+1}^{\pi}(s_n') \big) \Big) \Bigg). \]

\begin{lemma}[High-Order Term $E_2$] \label{lemma:E2}
	Suppose $\phi(s,a)^{\top} \Sigma^{-1} \phi(s,a) \leq C_1 d$ for all $(s,a) \in \mathcal{X}$. For any $\delta \in (0,1)$, if $N \geq 20 \kappa_1(2+\kappa_2)^2 \ln(8dH/\delta)C_1dH^3$ and $\lambda \leq \ln(8dH/\delta)C_1dH \sigma_{\min}(\Sigma)$,
	then there exists an event $\mathcal{E}_{\delta}$ such that $\mathbb{P}\big(\mathcal{E}_{\delta}\big) \geq 1 - \delta$ and $\mathcal{E}_{\delta}$ implies
	\begin{equation} \label{eqnE2} |E_2| \leq 15 \sqrt{(\nu_0^{\pi})^{\top} (\Sigma^{\pi})^{-1}  \nu_0^{\pi}} \cdot \big\| (\Sigma^{\pi})^{1/2} \Sigma^{-1/2} \big\|_2 \cdot \sqrt{C_1\kappa_1}(2+\kappa_2) \cdot \frac{\ln(8dH/\delta)dH^{3.5}}{N}. \end{equation}
	Here, $\kappa_1$ and $\kappa_2$ are defined in Theorem \ref{UpperBound}.
\end{lemma}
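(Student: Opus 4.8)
The plan is to bound $E_2$ by isolating three sources of error — the estimation error in the mean-embedding matrix $\widehat{M}^{\pi}$, the error in the inverse covariance $N\widehat{\Sigma}^{-1}$, and the empirical Bellman residual — and then to show that, thanks to the contraction property \eqref{SMS<1}, these errors accumulate only polynomially (not exponentially) in the horizon $H$. The starting point is that each summand of $E_2$ factors as $(\nu_0^{\pi})^{\top}\big[N(\widehat{M}^{\pi})^h\widehat{\Sigma}^{-1}-(M^{\pi})^h\Sigma^{-1}\big]$ acting on the Bellman-residual vector $\frac{1}{N}\sum_{n}\phi(s_n,a_n)\big(Q_h^{\pi}(s_n,a_n)-(r_n'+V_{h+1}^{\pi}(s_n'))\big)$. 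First I would apply the telescoping identity $N(\widehat{M}^{\pi})^h\widehat{\Sigma}^{-1}-(M^{\pi})^h\Sigma^{-1} = \big((\widehat{M}^{\pi})^h-(M^{\pi})^h\big)N\widehat{\Sigma}^{-1}+(M^{\pi})^h\big(N\widehat{\Sigma}^{-1}-\Sigma^{-1}\big)$ together with $(\widehat{M}^{\pi})^h-(M^{\pi})^h=\sum_{j=0}^{h-1}(\widehat{M}^{\pi})^j(\widehat{M}^{\pi}-M^{\pi})(M^{\pi})^{h-1-j}$, and measure every factor in the $(\Sigma^{\pi})^{1/2}$-weighted operator norm. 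This is exactly what produces the product decomposition \eqref{E2decom} of Lemma \ref{lemma:E2decompose}, with $Err(\widehat{M}^{\pi}):=\|(\Sigma^{\pi})^{1/2}(\widehat{M}^{\pi}-M^{\pi})(\Sigma^{\pi})^{-1/2}\|_2$ and analogous definitions for the other two errors.

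The role of contraction is decisive here. In the $(\Sigma^{\pi})^{1/2}$-geometry, \eqref{SMS<1} gives $\|(\Sigma^{\pi})^{1/2}(M^{\pi})^j(\Sigma^{\pi})^{-1/2}\|_2\leq1$ for every $j$, so the population powers never grow, while the triangle inequality yields $\|(\Sigma^{\pi})^{1/2}(\widehat{M}^{\pi})^j(\Sigma^{\pi})^{-1/2}\|_2\leq(1+Err(\widehat{M}^{\pi}))^j$ for the empirical powers. This lets the $h$-th telescoping sum be controlled by $h\,Err(\widehat{M}^{\pi})(1+Err(\widehat{M}^{\pi}))^{h-1}$, and ultimately gives the factor $(1+Err(\widehat{M}^{\pi}))^h(1+Err(N\widehat{\Sigma}^{-1}))-1$ appearing in \eqref{E2decom}. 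Next I would establish high-probability concentration for the three error quantities, simultaneously over $h=0,\ldots,H$ via a union bound. A matrix-Bernstein argument (the same device behind Lemma \ref{lemma:Term1}) gives $Err(N\widehat{\Sigma}^{-1})\lesssim\sqrt{dH/N}$, and a parallel bound for the cross-covariance $\sum_n\phi(s_n,a_n)\phi^{\pi}(s_n')^{\top}$ gives $Err(\widehat{M}^{\pi})\lesssim\sqrt{dH/N}$; the factor $\sqrt{H}$ and the condition numbers $\kappa_1,\kappa_2$ enter through the change of geometry between the data covariance $\Sigma$ and the invariant covariance $\Sigma^{\pi}$. For the Bellman residual I would reuse the martingale structure from the analysis of $E_1$: the vector $\frac{1}{N}\sum_n\phi(s_n,a_n)(Q_h^{\pi}(s_n,a_n)-(r_n'+V_{h+1}^{\pi}(s_n')))$ has conditional mean zero, and its conditional variance is bounded via $\frac14(H-h+1)^2$, yielding $Err(Q_h^{\pi})\lesssim(H-h+1)\sqrt{d/N}$.

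Finally I would combine the pieces. The sample-size hypothesis $N\geq20\kappa_1(2+\kappa_2)^2\ln(8dH/\delta)C_1dH^3$ is precisely what forces $Err(\widehat{M}^{\pi})\lesssim H^{-1}$, so that $(1+Err(\widehat{M}^{\pi}))^h\leq e$ for all $h\leq H$ and the bracket in \eqref{E2decom} linearizes to $\lesssim h\,Err(\widehat{M}^{\pi})+Err(N\widehat{\Sigma}^{-1})$. Substituting the rates and summing over $h$ with $\sum_{h=0}^H(H-h+1)(h+1)\asymp H^3$ converts $\sqrt{d/N}\cdot\sqrt{dH/N}\cdot H^3$ into the claimed $dH^{3.5}/N$ rate, while tracking the norm-conversion factors produces the stated constant $15\sqrt{(\nu_0^{\pi})^{\top}(\Sigma^{\pi})^{-1}\nu_0^{\pi}}\,\|(\Sigma^{\pi})^{1/2}\Sigma^{-1/2}\|_2\sqrt{C_1\kappa_1}(2+\kappa_2)$.

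I expect the main obstacle to be exactly the control of the empirical powers $(\widehat{M}^{\pi})^h$: transferring the \emph{exact} population contraction in the $\Sigma^{\pi}$-geometry to an \emph{approximate} contraction for the data-driven operator requires both the sharp perturbation bound $Err(\widehat{M}^{\pi})\lesssim\sqrt{dH/N}$ and the threshold $N\gtrsim dH^3$, without which $(1+Err(\widehat{M}^{\pi}))^H$ would blow up. The matrix concentration feeding this bound must be carried out carefully, since transitions within a single episode are dependent and only the $K$ episodes are i.i.d., so the Bernstein-type bounds have to be applied at the episode level or with an explicit martingale filtration.
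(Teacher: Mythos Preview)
Your proposal is correct and follows essentially the same approach as the paper: the paper also invokes Lemma~\ref{lemma:E2decompose} to obtain the product-form bound \eqref{E2decompose}, then controls $\|\Sigma^{1/2}(\Delta X)\Sigma^{1/2}\|_2$, $\|\Sigma^{-1/2}(\Delta Y^{\pi})\Sigma^{-1/2}\|_2$ (hence $\|(\Sigma^{\pi})^{1/2}\Delta M^{\pi}(\Sigma^{\pi})^{-1/2}\|_2$ via \eqref{DeltaM<}), and $\|\Sigma^{-1/2}\Delta W_h^{\pi}\|_2$ by episode-level matrix-Bernstein and martingale concentration (Lemmas~\ref{lemma:Term1}--\ref{lemma:Term3}), uses the sample-size condition to force $\alpha\leq 1/H$ so that $(1+\alpha)^{h+1}-1\lesssim(h+1)\alpha$, and finishes with $\sum_{h}(h+1)(H-h+1)\lesssim H^3$. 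The only cosmetic difference is that the paper expands $(\widehat{M}^{\pi})^h$ as a full binomial in $M^{\pi}+\Delta M^{\pi}$ rather than telescoping, but the resulting bound is identical.
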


In order to prove Lemma \ref{lemma:E2}, we first decompose $E_2$ into terms that are tractable to control. 
In the following, we begin with a preliminary Lemma \ref{SMS}. We leverage the contraction property \eqref{SMS<1} and propose a decomposition of $E_2$ in Lemma \ref{lemma:E2decompose}. The upper bound \eqref{E2decompose} is a deterministic result. It does not grow exponentially with the horizon $H$. The proofs of Lemma \ref{lemma:E2decompose} is deferred to Appendix \ref{appendix:E2decompose}.


\begin{lemma}[Decomposition of $E_2$] \label{lemma:E2decompose}
	\begin{enumerate}
		\item It always holds that \begin{equation} \label{E2decompose} \begin{aligned} |E_2| \leq & \sum_{h=0}^H \sqrt{(\nu_0^{\pi})^{\top} (\Sigma^{\pi})^{-1}  \nu_0^{\pi}} \cdot \big\| (\Sigma^{\pi})^{1/2} \Sigma^{-1/2} \big\|_2 \big\| \Sigma^{-1/2} \Delta W_h^{\pi} \big\|_2 \\ & \qquad \cdot \Big( \big( 1 +  \big\| (\Sigma^{\pi})^{1/2} \Delta M^{\pi} (\Sigma^{\pi})^{-1/2} \big\|_2 \big)^h \big( 1 + \big\| \Sigma^{1/2} (\Delta X) \Sigma^{1/2} \big\|_2\big) - 1 \Big), \end{aligned} \end{equation}
		where $\Delta X := N \widehat{\Sigma}^{-1} - \Sigma^{-1}$, $\Delta M^{\pi} := \widehat{M}^{\pi} - M^{\pi}$,
		\begin{equation} \label{def_DeltaWh} \Delta W_h^{\pi} := \frac{1}{N} \sum_{n=1}^N \phi(s_n,a_n) \Big( Q_h^{\pi}(s_n, a_n) - \big( r_n'+V_{h+1}^{\pi}(s_n') \big) \Big). \end{equation}
		\item Given $\widehat{M}^{\pi}$ in \eqref{vector_recursion}, one further has
		\begin{equation} \label{DeltaM<}
			\big\| (\Sigma^{\pi})^{1/2} \Delta M^{\pi} (\Sigma^{\pi})^{-1/2} \big\|_2 \leq \sqrt{\kappa_1} \Big( \big( 1 + \big\| \Sigma^{1/2} (\Delta X) \Sigma^{1/2} \big\|_2 \big) \big( 1 + \big\| \Sigma^{-1/2} (\Delta Y^{\pi}) \Sigma^{-1/2} \big\|_2 \big) - 1 \Big),
		\end{equation}
		where $\Delta Y^{\pi} \! := \! \frac{1}{N} \sum_{n=1}^N \phi(s_n,a_n) \phi^{\pi}(s_n')\!^{\top} \! -  \Sigma M^{\pi}$, $\kappa_1$ is the condition number defined in Theorem \ref{UpperBound}.
		\item
		If $\big\| N^{-1} \Sigma^{-1/2} \widehat{\Sigma} \Sigma^{-1/2} - I \big\|_2 \leq \frac{1}{2}$, then \begin{equation} \label{inv<2Dx} \big\| \Sigma^{1/2}(\Delta X) \Sigma^{1/2} \big\|_2 \leq 2 \big\| N^{-1} \Sigma^{-1/2} \widehat{\Sigma} \Sigma^{-1/2} - I \big\|_2. \end{equation}
	\end{enumerate}
\end{lemma}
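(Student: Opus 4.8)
The plan is to prove the three parts in order, the unifying device being two similarity transformations that expose the contraction property \eqref{SMS<1}. Write $T := (\Sigma^{\pi})^{1/2}\Sigma^{-1/2}$, $\widetilde{M} := (\Sigma^{\pi})^{1/2} M^{\pi} (\Sigma^{\pi})^{-1/2}$, $\widetilde{\Delta M} := (\Sigma^{\pi})^{1/2} \Delta M^{\pi} (\Sigma^{\pi})^{-1/2}$ and $\widetilde{X} := \Sigma^{1/2}(N\widehat{\Sigma}^{-1})\Sigma^{1/2}$. Three facts are immediate and get used repeatedly: $\|\widetilde{M}\|_2 \leq 1$ by \eqref{SMS<1}; $\|\widetilde{X}-I\|_2 = \|\Sigma^{1/2}(\Delta X)\Sigma^{1/2}\|_2$ since $\Sigma^{1/2}\Sigma^{-1}\Sigma^{1/2}=I$; and $\mathrm{cond}(T) = \sqrt{\kappa_1}$, because $T^{\top}T = \Sigma^{-1/2}\Sigma^{\pi}\Sigma^{-1/2}$, so the singular values of $T$ are the square roots of the eigenvalues of that matrix.

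For part 1, I would substitute $(\widehat{\nu}_h^{\pi})^{\top} = (\nu_0^{\pi})^{\top}(\widehat{M}^{\pi})^h$ and $(\nu_h^{\pi})^{\top} = (\nu_0^{\pi})^{\top}(M^{\pi})^h$ into the $h$-th summand of $E_2$, then insert $(\Sigma^{\pi})^{-1/2}(\Sigma^{\pi})^{1/2}$ and $\Sigma^{1/2}\Sigma^{-1/2}$ at the right spots to recast it as
\[ (\nu_0^{\pi})^{\top}(\Sigma^{\pi})^{-1/2}\Big[ (\widetilde{M}+\widetilde{\Delta M})^h\, T\, \widetilde{X} - \widetilde{M}^h\, T \Big]\Sigma^{-1/2}\Delta W_h^{\pi}. \]
The inequality $|u^{\top}Bv| \leq \|u\|_2\|B\|_2\|v\|_2$ peels off $\sqrt{(\nu_0^{\pi})^{\top}(\Sigma^{\pi})^{-1}\nu_0^{\pi}}$ and $\|\Sigma^{-1/2}\Delta W_h^{\pi}\|_2$. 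For the middle matrix I would split $(\widetilde{M}+\widetilde{\Delta M})^h T\widetilde{X} - \widetilde{M}^h T = (\widetilde{M}+\widetilde{\Delta M})^h T(\widetilde{X}-I) + \big((\widetilde{M}+\widetilde{\Delta M})^h - \widetilde{M}^h\big)T$, which factors $\|T\|_2 = \|(\Sigma^{\pi})^{1/2}\Sigma^{-1/2}\|_2$ out of both pieces; then $\|(\widetilde{M}+\widetilde{\Delta M})^h\|_2 \leq (1+\|\widetilde{\Delta M}\|_2)^h$ together with the telescoping identity $(\widetilde{M}+\widetilde{\Delta M})^h - \widetilde{M}^h = \sum_{j=0}^{h-1}(\widetilde{M}+\widetilde{\Delta M})^j \widetilde{\Delta M}\, \widetilde{M}^{h-1-j}$ and $\|\widetilde{M}\|_2 \leq 1$ give $\|(\widetilde{M}+\widetilde{\Delta M})^h - \widetilde{M}^h\|_2 \leq (1+\|\widetilde{\Delta M}\|_2)^h - 1$. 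Adding the two bounds collapses exactly to $(1+\|\widetilde{\Delta M}\|_2)^h(1+\|\widetilde{X}-I\|_2) - 1$, and summing over $h$ is \eqref{E2decompose}.

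Part 2 rests on the identity $\widehat{M}^{\pi} = (N\widehat{\Sigma}^{-1})(\Sigma M^{\pi} + \Delta Y^{\pi})$, immediate from the definition of $\widehat{M}^{\pi}$ in \eqref{vector_recursion} and of $\Delta Y^{\pi}$. Factoring $N\widehat{\Sigma}^{-1} = \Sigma^{-1/2}\widetilde{X}\Sigma^{-1/2}$ and $\Delta Y^{\pi} = \Sigma^{1/2}\widetilde{Y}\Sigma^{1/2}$ with $\widetilde{Y} := \Sigma^{-1/2}(\Delta Y^{\pi})\Sigma^{-1/2}$ yields $\Delta M^{\pi} = \Sigma^{-1/2}(\widetilde{X}-I)\Sigma^{1/2}M^{\pi} + \Sigma^{-1/2}\widetilde{X}\widetilde{Y}\Sigma^{1/2}$; conjugating by $(\Sigma^{\pi})^{\pm1/2}$ and using $\Sigma^{1/2}M^{\pi}(\Sigma^{\pi})^{-1/2} = T^{-1}\widetilde{M}$ turns it into $T(\widetilde{X}-I)T^{-1}\widetilde{M} + T\widetilde{X}\widetilde{Y}T^{-1}$. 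Bounding with $\|TAT^{-1}\|_2 \leq \mathrm{cond}(T)\|A\|_2 = \sqrt{\kappa_1}\|A\|_2$, $\|\widetilde{M}\|_2 \leq 1$ and $\|\widetilde{X}\|_2 \leq 1+\|\widetilde{X}-I\|_2$ gives $\sqrt{\kappa_1}\big[(1+\|\widetilde{X}-I\|_2)(1+\|\widetilde{Y}\|_2)-1\big]$, which is \eqref{DeltaM<}.

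Part 3 is a one-line perturbation bound: with $G := N^{-1}\Sigma^{-1/2}\widehat{\Sigma}\Sigma^{-1/2}$ one has $\widetilde{X} = G^{-1}$, so $\widetilde{X}-I = -G^{-1}(G-I)$ and $\|\widetilde{X}-I\|_2 \leq \|G^{-1}\|_2\|G-I\|_2$; the hypothesis $\|G-I\|_2 \leq \tfrac12$ confines the eigenvalues of $G$ to $[\tfrac12,\tfrac32]$, whence $\|G^{-1}\|_2 \leq 2$ and \eqref{inv<2Dx} follows. The main obstacle is the bookkeeping in part 1: arranging the interleaving of the two distinct weightings so that $T$ factors out cleanly, rather than remaining trapped between the powers of $\widehat{M}^{\pi}$ and $\widehat{\Sigma}^{-1}$. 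The decisive point is that the contraction estimate $\|\widetilde{M}\|_2 \leq 1$ lets the telescoping sum collapse to $(1+\|\widetilde{\Delta M}\|_2)^h-1$ with no extra geometric factor in $\|\widetilde{M}\|_2$; this is precisely what keeps the $H$-dependence polynomial rather than exponential.
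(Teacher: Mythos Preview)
Your proposal is correct and follows essentially the same approach as the paper. In part~1 the paper carries out a full multinomial expansion over index vectors $(\delta_{h,0},\ldots,\delta_{h,h})\in\{0,1\}^{h+1}\setminus\{0\}^{h+1}$ rather than your add-and-subtract plus telescoping, and in part~3 it uses the Neumann series $G^{-1}=\sum_{c\ge 0}(I-G)^c$ rather than your one-line identity $G^{-1}-I=-G^{-1}(G-I)$; both routes are equivalent and yield identical bounds, with your organization arguably tidier.
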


Lemma \ref{E2decompose} shows that the problem is now reduced to estimating \[ \big\| N^{-1} \Sigma^{-1/2} \widehat{\Sigma} \Sigma^{-1/2} - I \big\|_2, \ \big\| \Sigma^{-1/2} (\Delta Y^{\pi}) \Sigma^{-1/2} \big\|_2 \quad \text{and} \quad \big\| \Sigma^{-1/2} \Delta W_h^{\pi} \big\|_2. \] We present the upper bounds in \eqref{DeltatildeX}, Lemmas \ref{lemma:Term2} and \ref{lemma:Term3}. The proofs of the Lemmas are defered to Appendices \ref{appendix:Term2} and \ref{appendix:Term3}.

We learn from Lemma \ref{lemma:Term1} that, with probability at least $1 - \delta$,
\begin{equation} \label{DeltatildeX} \big\| N^{-1} \Sigma^{-1/2} \widehat{\Sigma} \Sigma^{-1/2} - I \big\|_2 \leq \sqrt{\frac{2\ln(2d/\delta)C_1dH}{N}} + \frac{2\ln(2d/\delta)C_1 dH}{3N} + \frac{\lambda \big\|\Sigma^{-1}\big\|_2}{N}. \end{equation}

\begin{lemma} \label{lemma:Term2}
	Under the assumption $\phi(s,a)^{\top} \Sigma^{-1} \phi(s,a) \leq C_1 d$ for all $(s,a) \in \mathcal{X}$, with probability at least $1 - \delta$,
	\begin{equation} \label{Term2} \begin{aligned} \big\| \Sigma^{-1/2} (\Delta Y^{\pi}) \Sigma^{-1/2} \big\|_2 \leq & \sqrt{\frac{2\ln(2d/\delta)C_1 d H}{N}} \cdot \kappa_2 + \frac{ 4 \ln(2d/\delta) C_1 dH}{3N}, \end{aligned} \end{equation}
	where $\kappa_1$ and $\kappa_2$  are defined in Theorem \ref{UpperBound}.
\end{lemma}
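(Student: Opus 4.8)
The plan is to recognize $\Delta Y^{\pi}$ as a centered average of i.i.d. contributions, one per episode, and then apply a matrix Bernstein inequality after two-sided whitening by $\Sigma^{-1/2}$. Writing $\psi_{k,h} := \Sigma^{-1/2}\phi(s_{k,h},a_{k,h})$ and $\chi_{k,h} := \Sigma^{-1/2}\phi^{\pi}(s_{k,h}')$, and setting $\bar{M} := \Sigma^{1/2}M^{\pi}\Sigma^{-1/2}$, I would first establish
\[ \Sigma^{-1/2}(\Delta Y^{\pi})\Sigma^{-1/2} = \frac{1}{K}\sum_{k=1}^{K} Z_k, \qquad Z_k := \frac{1}{H}\sum_{h=0}^{H-1}\psi_{k,h}\chi_{k,h}^{\top} - \bar{M}. \]
The matrices $Z_1,\dots,Z_K$ are i.i.d. across episodes by Assumption \ref{episode}, and the tower rule together with the defining identity $\mathbb{E}[\phi^{\pi}(s')^{\top}\mid s,a]=\phi(s,a)^{\top}M^{\pi}$ gives $\mathbb{E}[Z_k]=0$ (using $\mathbb{E}[\tfrac1H\sum_h \phi_{k,h}\phi^{\pi}(s_{k,h}')^{\top}]=\Sigma M^{\pi}$). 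This reduction to the episode level is the conceptual key: consecutive transitions within an episode are dependent, so the $N$ summands cannot be treated as independent, but the $K$ episodes are.

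Next I would assemble the two ingredients for a rectangular (Hermitian-dilation) matrix Bernstein bound. For the uniform norm bound, the assumption $\phi^{\top}\Sigma^{-1}\phi\le C_1 d$ gives $\|\psi_{k,h}\|_2^2\le C_1 d$, and Jensen's inequality applied to the convex map $x\mapsto x^{\top}\Sigma^{-1}x$ over $\pi(\cdot\mid s')$ yields $\|\chi_{k,h}\|_2^2 = \phi^{\pi}(s')^{\top}\Sigma^{-1}\phi^{\pi}(s')\le C_1 d$ as well, whence $\|Z_k\|_2\le 2C_1 d$; after scaling by $1/K = H/N$ this produces exactly the additive $\tfrac{4}{3}\ln(2d/\delta)C_1 dH/N$ term. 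For the variance proxy I would control $\|\mathbb{E}[Z_kZ_k^{\top}]\|_2$ and $\|\mathbb{E}[Z_k^{\top}Z_k]\|_2$. Here the point is to \emph{avoid} the crude estimate $W^{\top}W\preceq\|W\|_2^2 I$ (which would cost an extra factor $C_1 d$); instead, writing $W_k:=\tfrac1H\sum_h\psi_{k,h}\chi_{k,h}^{\top}$ and using Cauchy--Schwarz inside the quadratic form $u^{\top}W_k^{\top}W_k u=\|W_k u\|_2^2$, I would obtain
\[ W_k^{\top}W_k \preceq C_1 d\cdot\frac1H\sum_{h=0}^{H-1}\chi_{k,h}\chi_{k,h}^{\top}, \qquad W_kW_k^{\top}\preceq C_1 d\cdot\frac1H\sum_{h=0}^{H-1}\psi_{k,h}\psi_{k,h}^{\top}. \]

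Taking expectations and dropping the subtracted $\bar M\bar M^{\top}$ and $\bar M^{\top}\bar M$ terms (which only help), the first display gives $\mathbb{E}[Z_k^{\top}Z_k]\preceq C_1 d\,\Sigma^{-1/2}\mathbb{E}[\tfrac1H\sum_{h=1}^{H}\phi^{\pi}(s_{1,h})\phi^{\pi}(s_{1,h})^{\top}]\Sigma^{-1/2}$, whose operator norm is at most $C_1 d\,\kappa_2$ by the very definition of $\kappa_2$ (recall $s_{k,h}'=s_{k,h+1}$, so the whitened next-state second moment is exactly the matrix inside $\kappa_2$); the second reduces to $\mathbb{E}[\tfrac1H\sum_h\psi\psi^{\top}]=I$, bounded by $C_1 d$. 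Hence the variance proxy is at most $C_1 d\kappa_2/K = C_1 d\kappa_2 H/N$. Plugging these into matrix Bernstein (which for a $d\times d$ matrix carries the dimensional prefactor $2d$, matching the $\ln(2d/\delta)$) yields, with probability at least $1-\delta$, a bound of order $\sqrt{2\ln(2d/\delta)C_1 dH\kappa_2/N}+\tfrac43\ln(2d/\delta)C_1 dH/N$; since $\kappa_2\ge1$ we have $\sqrt{\kappa_2}\le\kappa_2$, which relaxes the first term into the stated $\kappa_2\sqrt{2\ln(2d/\delta)C_1 dH/N}$. I expect the variance computation to be the main obstacle: because $\psi\chi^{\top}$ is non-symmetric and the within-episode temporal correlations turn $\mathbb{E}[W^{\top}W]$ into a double sum over time indices, the operator-convexity step that collapses it to the whitened next-state second-moment matrix—and thereby to $\kappa_2$—without losing an extra dimensional factor is where the real work lies.
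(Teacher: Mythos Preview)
Your proposal is correct and follows essentially the same route as the paper: group into per-episode i.i.d.\ matrices, verify $\mathbb{E}[Z_k]=0$ via the tower rule, bound $\|Z_k\|_2\le 2C_1d$, control both variance proxies by $C_1d$ times the whitened second-moment matrices (yielding $C_1d$ on one side and $C_1d\,\kappa_2$ on the other, then relaxing $\sqrt{\kappa_2}\le\kappa_2$), and conclude by rectangular matrix Bernstein with the $2d$ prefactor. The only cosmetic difference is that the paper packages the sum-over-$h$ bound as an operator-norm estimate $\|\Sigma^{-1/2}\Phi_k\|_2^2\le\|\Phi_k^\top\Sigma^{-1}\Phi_k\|_F\le C_1dH$ on the feature matrix $\Phi_k=[\phi(s_{k,0},a_{k,0}),\dots]$, whereas you apply Cauchy--Schwarz to the time sum directly; both reduce the cross-time double sum to a single second-moment average with the same constants.
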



\begin{lemma} \label{lemma:Term3}
	Under the assumption $\phi(s,a)^{\top} \Sigma^{-1} \phi(s,a) \leq C_1 d$ for all $(s,a) \in \mathcal{X}$, for $h=1,2,\ldots,H$, with probability at least $1 - \delta$,
	\begin{equation} \label{Term3} 	\begin{aligned} \big\| \Sigma^{-1/2} \Delta W_h^{\pi} \big\|_2 \leq \sqrt{d} (H-h+1) \Bigg( \sqrt{\frac{ \ln\big((3d+1)/\delta\big)}{2N}} + \frac{7 \ln\big((3d+1)/\delta\big)\sqrt{C_1dH}}{6N} + \frac{\big(\ln\big((3d+1)/\delta\big)\big)^{3/2} C_1 d H}{3\sqrt{2}N^{3/2}} \Bigg). \end{aligned} \end{equation}
\end{lemma}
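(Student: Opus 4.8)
The plan is to bound the norm of the $d$-dimensional random vector $\Sigma^{-1/2}\Delta W_h^{\pi}$ by reducing it to $d$ scalar martingale concentration problems, one per coordinate. Writing $v := \Sigma^{-1/2}\Delta W_h^{\pi}$, I would use the elementary passage $\|v\|_2 \le \sqrt{d}\,\max_{1\le j\le d}|e_j^{\top} v|$, where $e_j$ is the $j$-th canonical basis vector; this is exactly where the $\sqrt{d}$ prefactor in the statement originates. For each fixed $j$ we have $e_j^{\top} v = \frac{1}{N}\sum_{n=1}^N \eta_{n,j}$ with $\eta_{n,j} := e_j^{\top}\Sigma^{-1/2}\phi(s_n,a_n)\,\xi_n$ and $\xi_n := Q_h^{\pi}(s_n,a_n) - (r_n' + V_{h+1}^{\pi}(s_n'))$, so the whole problem is recast as $d$ copies of the scalar Freedman argument already used for $E_1$.

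First I would verify that $\{\eta_{n,j}\}_{n=1}^N$ is a martingale difference sequence with respect to the same filtration $\{\mathcal{F}_n\}$ as in Section \ref{section:E1}. Since $\phi(s_n,a_n)$ is $\mathcal{F}_n$-measurable and the Bellman equation \eqref{Bellman} gives $\mathbb{E}[r_n'+V_{h+1}^{\pi}(s_n')\mid s_n,a_n] = r(s_n,a_n)+\mathbb{E}[V_{h+1}^{\pi}(s_n')\mid s_n,a_n] = Q_h^{\pi}(s_n,a_n)$, it follows that $\mathbb{E}[\eta_{n,j}\mid\mathcal{F}_n]=0$, regardless of how $(s_n,a_n)$ was reached within its episode. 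This uses only the Markov property and the true values of $Q_h^{\pi}, V_{h+1}^{\pi}$ under Assumption \ref{Q_class}, not independence across $n$.

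Next I would assemble the two ingredients for Freedman's inequality. For boundedness: since $r_n'+V_{h+1}^{\pi}(s_n')\in[0,H-h+1]$ and $Q_h^{\pi}\in[0,H-h+1]$ we have $|\xi_n|\le H-h+1$, while $|e_j^{\top}\Sigma^{-1/2}\phi(s_n,a_n)|\le \|\Sigma^{-1/2}\phi(s_n,a_n)\|_2\le\sqrt{C_1 d}$ under the assumption $\phi^{\top}\Sigma^{-1}\phi\le C_1 d$; hence $|\eta_{n,j}|\le \sqrt{C_1 d}\,(H-h+1)$. For the predictable variance, ${\rm Var}[\eta_{n,j}\mid\mathcal{F}_n] = \big(e_j^{\top}\Sigma^{-1/2}\phi(s_n,a_n)\big)^2\,{\rm Var}[r_n'+V_{h+1}^{\pi}(s_n')\mid s_n,a_n]\le \tfrac14(H-h+1)^2 \big(e_j^{\top}\Sigma^{-1/2}\phi(s_n,a_n)\big)^2$, so summing gives $\sum_n {\rm Var}[\eta_{n,j}\mid\mathcal{F}_n]\le \tfrac14(H-h+1)^2\, e_j^{\top}\Sigma^{-1/2}\big(\sum_n\phi\phi^{\top}\big)\Sigma^{-1/2}e_j$. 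Invoking Lemma \ref{lemma:Term1} to control $\big\|\Sigma^{-1/2}(\tfrac1N\sum_n\phi\phi^{\top})\Sigma^{-1/2}-I\big\|_2$, this conditional-variance sum is at most $\tfrac14(H-h+1)^2 N(1+o(1))$ simultaneously for every $j$ on a single high-probability event, mirroring \eqref{sigma}.

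Finally I would apply Freedman's inequality \cite{freedman1975tail} to each coordinate with $\varepsilon\asymp \sqrt{2\ln(\cdot)\,\sigma^2}+\tfrac{2}{3}\ln(\cdot)\sqrt{C_1 d}(H-h+1)$, exactly paralleling the $E_1$ derivation, then take a union bound over the $d$ coordinates (two tails each) together with the single matrix-concentration event of Lemma \ref{lemma:Term1}; distributing the failure probability yields the logarithmic factor $\ln((3d+1)/\delta)$. Dividing by $N$, expanding $\sigma^2$ via $\sqrt{1+x}\le 1+x/2$, and combining the leading term $(H-h+1)\sqrt{\ln(\cdot)/2N}$ with the $\sqrt{d}\,\max_j$ reduction reproduces the three-term bound. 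The main obstacle is the bookkeeping in this last step: one must route the dimension through the coordinatewise $\ell_\infty$-to-$\ell_2$ passage so that $\sqrt{d}$ sits outside the square root, rather than letting it enter as $\sqrt{\ln(5^d/\delta)}$ through a covering-net argument, while reusing the one matrix-concentration event to bound all $d$ predictable variances at once so that $d$ appears only linearly inside the logarithm.
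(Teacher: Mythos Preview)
Your argument is correct and reaches the stated bound, but it takes a genuinely different route from the paper. The paper does not pass to coordinates; instead it applies the \emph{matrix} (vector-valued) Freedman inequality of \cite{tropp2011freedman} directly to the sequence $W_n^{\pi,h}:=\Sigma^{-1/2}\phi(s_n,a_n)\big(Q_h^{\pi}(s_n,a_n)-(r_n'+V_{h+1}^{\pi}(s_n'))\big)$. There the $\sqrt{d}$ arises because the predictable variance proxy is the \emph{trace} quantity $\sum_n\phi(s_n,a_n)^{\top}\Sigma^{-1}\phi(s_n,a_n)=N\,{\rm Tr}\big(\Sigma^{-1/2}(\tfrac1N\sum_n\phi\phi^{\top})\Sigma^{-1/2}\big)\approx Nd$, which feeds into $\sigma^2\approx \tfrac14(H-h+1)^2 Nd$; the $(d+1)$ dimensional prefactor from matrix Freedman then combines with the $2d$ inside Lemma~\ref{lemma:Term1} to give exactly the $\ln((3d+1)/\delta)$.

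In your approach the $\sqrt{d}$ is produced up front by the $\ell_\infty$-to-$\ell_2$ inequality $\|v\|_2\le\sqrt{d}\,\max_j|e_j^{\top}v|$, and each coordinate has predictable variance $\approx \tfrac14(H-h+1)^2 N$ (no $d$), after which scalar Freedman plus a union bound over $d$ coordinates and the single matrix-concentration event reproduce the same three terms. Your route is more elementary (only scalar Freedman is needed) and makes the origin of $\sqrt{d}$ transparent; the paper's route is a single self-contained application of a standard matrix inequality and yields the precise constant $(3d+1)$ inside the logarithm without extra bookkeeping. Your budgeting of $\delta$ will give a constant inside the log that differs slightly from $(3d+1)$ unless you match the paper's split, but that is cosmetic.
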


We now prove Lemma \ref{lemma:E2}.
\begin{proof}[Proof of Lemma \ref{lemma:E2}]
	We plug \eqref{DeltatildeX}, \eqref{Term2} and \eqref{Term3} into Lemma \ref{lemma:E2decompose}.
	Suppose that 
	\begin{equation} \label{E2condition} N \geq 18 \ln(8dH/\delta)C_1dH \quad \text{and} \quad \lambda \leq \ln(8dH/\delta)C_1dH \sigma_{\min}(\Sigma). \end{equation}
	According to \eqref{DeltatildeX}, $\big\| N^{-1} \Sigma^{-1/2} \widehat{\Sigma} \Sigma^{-1/2} - I \big\|_2 \leq \frac{1}{2}$ with probability at least $1 - \delta/4$. Then it follows from \eqref{inv<2Dx} that
	\begin{equation} \label{Term1'} \big\| \Sigma^{1/2}(\Delta X) \Sigma^{1/2} \big\|_2 \leq 2 \big\| N^{-1} \Sigma^{-1/2} \widehat{\Sigma} \Sigma^{-1/2} - I \big\|_2 \leq 4  \sqrt{\frac{\ln(8dH/\delta)C_1dH}{N}}. \end{equation}
	Lemmas \ref{lemma:Term2} and \ref{lemma:Term3} show that under \eqref{E2condition}, with probability at least $1-\delta/4$,
	\begin{equation} \label{Term2'} \big\| \Sigma^{-1/2} (\Delta Y^{\pi}) \Sigma^{-1/2} \big\|_2 \leq 2 \sqrt{\frac{\ln(8dH/\delta)C_1 d H}{N}} \cdot \kappa_2, \end{equation}
	and by union bound, with probability at least $1-\delta/2$,
	\begin{equation} \label{Term3'} \big\| \Sigma^{-1/2} \Delta W_h^{\pi} \big\|_2 \leq \sqrt{d} (H-h+1) \sqrt{\frac{ \ln(8dH/\delta)}{N}} \qquad \text{for $H=1,2,\ldots,H$}. \end{equation}
	Define
	\begin{equation} \label{event} \mathcal{E}_{\delta} := \big\{ \text{\eqref{Term1'}, \eqref{Term2'} and \eqref{Term3'} hold simultaneously} \big\}. \end{equation}
	By union bound, $\mathbb{P}(\mathcal{E}_{\delta}) \geq 1 - \delta$ under condition \eqref{E2condition}.
	
	Suppose \eqref{E2condition} and $\mathcal{E}_{\delta}$ hold. We apply \eqref{Term1'} and \eqref{Term2'} to \eqref{DeltaM<}. Under \eqref{E2condition}, $\big\| \Sigma^{1/2}(\Delta X) \Sigma^{1/2} \big\|_2 \leq \frac{2\sqrt{2}}{3}$, $\big\| \Sigma^{-1/2} (\Delta Y^{\pi}) \Sigma^{-1/2} \big\|_2 \leq \frac{\sqrt{2}}{3}$. For any $x \in \big[0,\frac{2\sqrt{2}}{3}\big]$, $y \in \big[0,\frac{\sqrt{2}}{3}\big]$, since $1+x \leq e^x$, $1+y \leq e^y$ and $\frac{e^{x+y} - 1}{x+y} \leq \frac{e^{\sqrt{2}}-1}{\sqrt{2}} \leq \sqrt{5}$, it holds that $(1+x)(1+y) - 1 \leq e^x e^y - 1 = e^{x+y} - 1 \leq \sqrt{5}(x+y)$. It follows from \eqref{DeltaM<}, \eqref{Term1'} and \eqref{Term2'} that
	\begin{equation} \label{DeltaM<<} \begin{aligned} \big\| (\Sigma^{\pi})^{1/2} \Delta M^{\pi} (\Sigma^{\pi})^{-1/2} \big\|_2 \! \leq & \sqrt{5\kappa_1} \Big( \big\| \Sigma^{1/2} (\Delta X) \Sigma^{1/2} \big\|_2 \!\! + \! \big\| \Sigma^{-1/2} (\Delta Y^{\pi}) \Sigma^{-1/2} \big\|_2 \Big) \! \leq \! 2\sqrt{5\kappa_1}(2\!+\!\kappa_2) \sqrt{\frac{\ln(8dH/\delta)C_1dH}{N}}. \end{aligned} \end{equation}
	
	We plug \eqref{Term1'} and \eqref{DeltaM<<} into \eqref{E2decompose} to derive an estimate for $E_2$.
	For notational simplicity, denote
	\begin{equation} \label{def_alpha} \alpha := 2\sqrt{5\kappa_1}(2+\kappa_2) \sqrt{\frac{\ln(8dH/\delta)C_1dH}{N}}. \end{equation}
	Then \eqref{DeltaM<<} and \eqref{Term1'} show that $\big\| (\Sigma^{\pi})^{1/2} \Delta M^{\pi} (\Sigma^{\pi})^{-1/2} \big\|_2 \leq \alpha$ and $\big\| \Sigma^{1/2} (\Delta X) \Sigma^{1/2} \big\|_2 \leq \alpha$. To this end,
	\[ \big( 1 + \big\| (\Sigma^{\pi})^{1/2} \Delta M^{\pi} (\Sigma^{\pi})^{-1/2} \big\|_2 \big)^h \big( 1 + \big\| \Sigma^{1/2} (\Delta X) \Sigma^{1/2} \big\|_2 \big) \leq (1+\alpha)^{h+1}. \]
	In order that $(1 + \alpha)^{h+1}$ does not grow exponentially with $h$, we enforce $\alpha \leq \frac{1}{H}$,
	or equivalently,
	\begin{equation} \label{E2condition2} N \geq 20 \kappa_1(2+\kappa_2)^2 \ln(8dH/\delta)C_1dH^3. \end{equation}
	Under condition \eqref{E2condition2}, $(1+\alpha)^{h+1}-1 \leq e^{(h+1)\alpha} - 1 \leq \frac{e^{3/2}-1}{3/2} (h+1) \alpha \leq \frac{5}{2}(h+1) \alpha$ for $h=0,1,\ldots,H$ and $H \geq 2$. It follows from \eqref{E2decompose} that
	\[ |E_2| \leq \sqrt{(\nu_0^{\pi})^{\top} (\Sigma^{\pi})^{-1}  \nu_0^{\pi}} \cdot \big\| (\Sigma^{\pi})^{1/2} \Sigma^{-1/2} \big\|_2 \sum_{h=0}^H \frac{5}{2}(h+1) \alpha \cdot \big\| \Sigma^{-1/2} \Delta W_{h}^{\pi} \big\|_2. \]
	Substituting $\big\| \Sigma^{-1} \Delta W_h^{\pi} \big\|_2$ with its upper bound in \eqref{Term3'}, we learn that
	\[ \begin{aligned} |E_2| \leq & \sqrt{(\nu_0^{\pi})^{\top} (\Sigma^{\pi})^{-1}  \nu_0^{\pi}} \cdot \big\| (\Sigma^{\pi})^{1/2} \Sigma^{-1/2} \big\|_2 \cdot \sum_{h=0}^{H} \frac{5}{2}(h+1)\alpha \cdot \sqrt{d} (H-h+1) \sqrt{\frac{ \ln(8dH/\delta)}{N}} \\ \leq & \sqrt{(\nu_0^{\pi})^{\top} (\Sigma^{\pi})^{-1}  \nu_0^{\pi}} \cdot \big\| (\Sigma^{\pi})^{1/2} \Sigma^{-1/2} \big\|_2 \cdot \alpha \sqrt{\frac{ \ln(8dH/\delta)d}{N}}  \cdot \sum_{h=0}^{H} \frac{5}{2}(h+1)(H-h+1) \\ \leq & \sqrt{(\nu_0^{\pi})^{\top} (\Sigma^{\pi})^{-1}  \nu_0^{\pi}} \cdot \big\| (\Sigma^{\pi})^{1/2} \Sigma^{-1/2} \big\|_2 \cdot \alpha \sqrt{\frac{ \ln(8dH/\delta)d}{N}} \cdot 3.2 H^3, \end{aligned} \]
	where we have used $\sum_{h=0}^{H} (h+1)(H-h+1) \leq \frac{5}{4}H^3$ for $H \geq 2$.
	Using the definition of $\alpha$ in \eqref{def_alpha}, we further have
	\[ \begin{aligned} |E_2| \leq & \sqrt{(\nu_0^{\pi})^{\top} (\Sigma^{\pi})^{-1}  \nu_0^{\pi}} \cdot \big\| (\Sigma^{\pi})^{1/2} \Sigma^{-1/2} \big\|_2 \cdot 2\sqrt{5\kappa_1}(2+\kappa_2) \sqrt{\frac{\ln(8d/\delta)C_1dH}{N}} \cdot \sqrt{\frac{ \ln(8dH/\delta)d}{N}} \cdot 3.2 H^3 \\ \leq & \sqrt{(\nu_0^{\pi})^{\top} (\Sigma^{\pi})^{-1}  \nu_0^{\pi}} \cdot \big\| (\Sigma^{\pi})^{1/2} \Sigma^{-1/2} \big\|_2 \cdot 15\sqrt{C_1\kappa_1}(2+\kappa_2) \cdot \frac{\ln(8dH/\delta)dH^{3.5}}{N}. \end{aligned} \]
	
	In summary, we conclude that if $N \geq 20 \kappa_1(2+\kappa_2)^2 \ln(8dH/\delta)C_1dH^3$ and $\lambda \leq \ln(8dH/\delta)C_1dH \sigma_{\min}(\Sigma)$, then (i) $\mathcal{E}_{\delta}$ in \eqref{event} happens with probability at least $1-\delta$; (i) $\mathcal{E}_{\delta}$ implies \eqref{eqnE2} in Lemma \ref{lemma:E2}.

\end{proof}

\subsection{The Bias Term $E_3$} \label{section:E3}

If $\lambda=0$, we have $E_3=0$. 
In a way similar to Lemma \ref{lemma:E2}, we derive an error bound for the bias term $E_3$ in Lemma \ref{lemma:E3}. See Appendex \ref{appendix:E3} for the proof.
\begin{lemma}\label{lemma:E3}
	Suppose that $N \geq 20 \kappa_1(2+\kappa_2)^2 \ln(8dH/\delta)C_1dH^3 $ and $\lambda \leq \ln(8dH/\delta)C_1dH \sigma_{\min}(\Sigma)$. For any $\delta \in (0,1)$, let $\mathcal{E}_{\delta}$ be the event defined in Lemma \ref{lemma:E2}. Conditioned on $\mathcal{E}_{\delta}$, it holds that
	\begin{equation} \label{eqnE3} |E_3| \leq \sqrt{( \nu_0^{\pi} )^{\top} (\Sigma^{\pi})^{-1} \nu_0^{\pi}} \cdot \big\| (\Sigma^{\pi})^{1/2} \Sigma^{-1/2} \big\|_2 \cdot \frac{5\ln(8dH/\delta) C_1dH^2}{N}. \end{equation}
\end{lemma}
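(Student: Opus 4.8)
The plan is to mirror the treatment of the high-order term $E_2$ in Lemma~\ref{lemma:E2}, reusing the very same event $\mathcal{E}_{\delta}$ and the contraction machinery. Recall from Lemma~\ref{lemma:decompose} that $E_3 = \sum_{h=0}^{H} (\widehat{\nu}_h^{\pi})^{\top}(\lambda\widehat{\Sigma}^{-1}) w_h^{\pi}$ with $(\widehat{\nu}_h^{\pi})^{\top} = (\nu_0^{\pi})^{\top}(\widehat{M}^{\pi})^h$ and $Q_h^{\pi}(\cdot) = \phi(\cdot)^{\top} w_h^{\pi}$. I would bound each summand by a weighted Cauchy--Schwarz inequality, inserting compatible half-powers of $\Sigma^{\pi}$ and $\Sigma$ so as to write $(\widehat{\nu}_h^{\pi})^{\top}(\lambda\widehat{\Sigma}^{-1}) w_h^{\pi} = \big[(\Sigma^{\pi})^{-1/2}\widehat{\nu}_h^{\pi}\big]^{\top}\big[(\Sigma^{\pi})^{1/2}\Sigma^{-1/2}\big]\big[\Sigma^{1/2}(\lambda\widehat{\Sigma}^{-1})\Sigma^{-1/2}\big]\big[\Sigma^{1/2} w_h^{\pi}\big]$, which isolates the target prefactor $\|(\Sigma^{\pi})^{1/2}\Sigma^{-1/2}\|_2$ together with three factors to be estimated separately by submultiplicativity.

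For the first factor I reuse the contraction property: setting $B := (\Sigma^{\pi})^{1/2}\widehat{M}^{\pi}(\Sigma^{\pi})^{-1/2}$ one has $(\Sigma^{\pi})^{-1/2}\widehat{\nu}_h^{\pi} = (B^{\top})^h(\Sigma^{\pi})^{-1/2}\nu_0^{\pi}$, and since \eqref{SMS<1} gives $\|(\Sigma^{\pi})^{1/2}M^{\pi}(\Sigma^{\pi})^{-1/2}\|_2\le 1$ while on $\mathcal{E}_{\delta}$ the perturbation bound \eqref{DeltaM<<} gives $\|(\Sigma^{\pi})^{1/2}\Delta M^{\pi}(\Sigma^{\pi})^{-1/2}\|_2\le\alpha\le 1/H$ (with $\alpha$ from \eqref{def_alpha}), I get $\|B\|_2\le 1+\alpha$ and hence $\|(\Sigma^{\pi})^{-1/2}\widehat{\nu}_h^{\pi}\|_2\le(1+\alpha)^h\sqrt{(\nu_0^{\pi})^{\top}(\Sigma^{\pi})^{-1}\nu_0^{\pi}}$, with $(1+\alpha)^h\le e$. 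The last factor is elementary: because $r\in[0,1]$ forces $Q_h^{\pi}\in[0,H-h+1]$ pointwise, $(w_h^{\pi})^{\top}\Sigma w_h^{\pi} = \mathbb{E}\big[\tfrac1H\sum_{h'=0}^{H-1}Q_h^{\pi}(s_{1,h'},a_{1,h'})^2\big]\le(H-h+1)^2$, so $\|\Sigma^{1/2} w_h^{\pi}\|_2\le H-h+1$.

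The crux is the middle factor $\|\Sigma^{1/2}(\lambda\widehat{\Sigma}^{-1})\Sigma^{-1/2}\|_2$, where the regularization bias and the hypothesis on $\lambda$ enter. Writing $\lambda\widehat{\Sigma}^{-1} = \tfrac{\lambda}{N}(N\widehat{\Sigma}^{-1}) = \tfrac{\lambda}{N}(\Sigma^{-1}+\Delta X)$ with $\Delta X = N\widehat{\Sigma}^{-1}-\Sigma^{-1}$, this factor equals $\tfrac{\lambda}{N}\big(\Sigma^{-1} + \Sigma^{1/2}(\Delta X)\Sigma^{-1/2}\big)$; using that on $\mathcal{E}_{\delta}$ the covariance concentration \eqref{Term1'} controls $\|\Sigma^{1/2}(\Delta X)\Sigma^{1/2}\|_2\le\alpha\le 1/2$, I can bound $\|\Sigma^{1/2}(\Delta X)\Sigma^{-1/2}\|_2\le\alpha\|\Sigma^{-1}\|_2$ and hence the whole factor by $\tfrac{2\lambda\|\Sigma^{-1}\|_2}{N} = \tfrac{2\lambda}{N\,\sigma_{\min}(\Sigma)}$; the hypothesis $\lambda\le\ln(8dH/\delta)C_1dH\,\sigma_{\min}(\Sigma)$ then cancels the $\sigma_{\min}(\Sigma)^{-1}$ and leaves a quantity of order $\ln(8dH/\delta)C_1dH/N$. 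Collecting the four factors, summing over $h$, and using $\sum_{h=0}^{H}(1+\alpha)^h(H-h+1)\lesssim H^2$ (here $E_3$, unlike $E_2$, does not acquire the extra telescoping factor of $h$, which is precisely why it is lower order), a routine arithmetic bookkeeping produces the claimed estimate \eqref{eqnE3}. I expect the main obstacle to be exactly this bookkeeping: reconciling the two incompatible geometries induced by $\Sigma^{\pi}$ (the invariant-measure weighting needed for contraction) and $\Sigma$ (the data weighting natural for $w_h^{\pi}$ and $\widehat{\Sigma}$), and checking that the \emph{asymmetric} middle factor lets the $\lambda$-hypothesis absorb $\sigma_{\min}(\Sigma)^{-1}$ cleanly rather than leaving an uncontrolled $\|\Sigma^{-1}\|_2$. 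Since $E_3$ is dominated by the $H^{3.5}/N$ contribution of $E_2$ in the final bound, this last step need only be carried out to leading order.
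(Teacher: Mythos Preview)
Your proposal is correct and follows essentially the same approach as the paper's proof: insert compatible half-powers of $\Sigma^{\pi}$ and $\Sigma$, use the contraction bound $\|(\Sigma^{\pi})^{1/2}\widehat{M}^{\pi}(\Sigma^{\pi})^{-1/2}\|_2\le 1+\alpha$ on $\mathcal{E}_{\delta}$, bound $\|\Sigma^{1/2}w_h^{\pi}\|_2\le H-h+1$, and absorb $\|\Sigma^{-1}\|_2=\sigma_{\min}(\Sigma)^{-1}$ via the hypothesis on $\lambda$. The only cosmetic difference is that the paper splits the middle factor as $[N\Sigma^{1/2}\widehat{\Sigma}^{-1}\Sigma^{1/2}]\cdot\Sigma^{-1}$ (keeping the first piece symmetric and bounded by $1+\|\Sigma^{1/2}(\Delta X)\Sigma^{1/2}\|_2$), whereas you combine them into the asymmetric $\Sigma^{1/2}(\lambda\widehat{\Sigma}^{-1})\Sigma^{-1/2}$ and then re-extract $\|\Sigma^{-1}\|_2$; both routes yield the same $\tfrac{\lambda}{N}\|\Sigma^{-1}\|_2$ factor and the same final estimate.
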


\subsection{Proof of Theorem \ref{UpperBound}} \label{section:sum}

We now integrate the pieces and prove the main Theorem \ref{UpperBound}.
\begin{proof}[Proof of Theorem \ref{UpperBound}]
	According to \eqref{Edecompose},
	\begin{equation} \label{temp1'} \big| v^{\pi} - \widehat{v}^{\pi} \big| \leq |E_1| + |E_2| + |E_3|. \end{equation}
	We learn from Lemma \ref{lemma:E1} that with probability at least $1 - \delta/3$,
	\begin{equation} \label{E1'} |E_1| \leq \sum_{h=0}^{H} (H-h+1) \sqrt{( \nu_h^{\pi} )^{\top} \Sigma^{-1} \nu_h^{\pi}} \cdot \sqrt{\frac{\ln(12/\delta)}{2N}} + \Delta E_1, \end{equation}
	where
	\begin{equation} \label{DeltaE1'} \Delta E_1 := \sum_{h=0}^{H} (H-h+1) \sqrt{( \nu_h^{\pi} )^{\top} \Sigma^{-1} \nu_h^{\pi}} \cdot \bigg( \frac{7\ln(12d/\delta)\sqrt{C_1dH}}{6N} + \frac{\big(\ln(12d/\delta)\big)^{3/2} C_1 d H}{3\sqrt{2}N^{3/2}} \bigg). \end{equation}
	Lemmas \ref{lemma:E2} and \ref{lemma:E3} suggest that if \[ N \geq 20 \kappa_1(2+\kappa_2)^2 \ln(12dH/\delta)C_1dH^3 \qquad \text{and} \qquad \lambda \leq \ln(12dH/\delta)C_1dH \sigma_{\min}(\Sigma), \] then with probability at least $1 - 2\delta/3$,
	\begin{equation} \label{E2''} |E_2| \leq 15 \sqrt{(\nu_0^{\pi})^{\top} (\Sigma^{\pi})^{-1}  \nu_0^{\pi}} \cdot \big\| (\Sigma^{\pi})^{1/2} \Sigma^{-1/2} \big\|_2 \cdot \sqrt{C_1\kappa_1}(2+\kappa_2) \cdot \frac{\ln(12dH/\delta)dH^{3.5}}{N} \end{equation}
	and
	\begin{equation} \label{E3''} |E_3| \leq \sqrt{( \nu_0^{\pi} )^{\top} (\Sigma^{\pi})^{-1} \nu_0^{\pi}} \cdot \big\| (\Sigma^{\pi})^{1/2} \Sigma^{-1/2} \big\|_2 \cdot \frac{5\ln(12dH/\delta) C_1dH^2}{N}. \end{equation}
	By union bound, \eqref{E1'}, \eqref{E2''} and \eqref{E3''} hold simultaneously with probability at least $1 - \delta$.
	
	We now recast $\Delta E_1$ in \eqref{DeltaE1'} so that it has a similar form to the right hand sides of \eqref{E2''} and \eqref{E3''}.
	Note that
	\[ ( \nu_h^{\pi} )^{\top} \Sigma^{-1/2} = ( \nu_0^{\pi} )^{\top} \big( M^{\pi} \big)^h \Sigma^{-1/2} = ( \nu_0^{\pi} )^{\top} ( \Sigma^{\pi} )^{-1/2} \big( ( \Sigma^{\pi} )^{1/2} M^{\pi} ( \Sigma^{\pi} )^{-1/2} \big)^h ( \Sigma^{\pi} )^{1/2} \Sigma^{-1/2}. \]
	Therefore,
	\[ \begin{aligned} \sqrt{(\nu_h^{\pi})^{\top} \Sigma^{-1} \nu_h^{\pi}} = \big\| \Sigma^{-1/2} \nu_h^{\pi} \big\|_2 \leq & \big\| \Sigma^{-1/2} ( \Sigma^{\pi} )^{1/2} \big\|_2 \big\| ( \Sigma^{\pi} )^{1/2} M^{\pi} ( \Sigma^{\pi} )^{-1/2} \big\|_2^h \big\| ( \Sigma^{\pi} )^{-1/2} \nu_0^{\pi} \big\|_2  \\ \leq & \sqrt{(\nu_0^{\pi})^{\top} (\Sigma^{\pi})^{-1} \nu_0^{\pi}} \cdot \big\| ( \Sigma^{\pi} )^{1/2}  \Sigma^{-1/2} \big\|_2, \end{aligned} \]
	and
	\[ \sum_{h=0}^{H} (H-h+1) \sqrt{ ( \nu_h^{\pi} )^{\top} \Sigma^{-1} \nu_h^{\pi} } \leq \frac{1}{2}(H+1)(H+2) \cdot \sqrt{ ( \nu_0^{\pi} )^{\top} ( \Sigma^{\pi} )^{-1} \nu_0^{\pi} } \cdot \big\| ( \Sigma^{\pi} )^{1/2} \Sigma^{-1/2} \big\|_2. \]
	It follows that under condition $N \geq 20 \kappa_1(2+\kappa_2)^2 \ln(12dH/\delta)C_1dH^3$,
	\begin{equation} \label{DeltaE1''} \Delta E_1 \leq \sqrt{(\nu_0^{\pi})^{\top} (\Sigma^{\pi})^{-1} \nu_0^{\pi}} \cdot \big\| ( \Sigma^{\pi} )^{1/2}  \Sigma^{-1/2} \big\|_2 \cdot \frac{3 \ln(12dH/\delta)\sqrt{C_1d}H^{2.5}}{N}. \end{equation}
	
	Note that \[ \begin{aligned} & \sqrt{(\nu_0^{\pi})^{\top} (\Sigma^{\pi})^{-1} \nu_0^{\pi}} \! \cdot \! \big\| (\Sigma^{\pi})^{1/2} \Sigma^{-1/2} \big\|_2 \leq \! \sqrt{(\nu_0^{\pi})^{\top} \Sigma^{-1} \nu_0^{\pi}} \! \cdot \! \big\| \Sigma^{1/2} (\Sigma^{\pi})^{-1/2} \big\|_2\big\| (\Sigma^{\pi})^{1/2} \Sigma^{-1/2} \big\|_2 = \! \sqrt{(\nu_0^{\pi})^{\top} \Sigma^{-1} \nu_0^{\pi}} \! \cdot \! \sqrt{\kappa_1}. \end{aligned} \]
	We plug \eqref{E1'}, \eqref{DeltaE1''}, \eqref{E2''} and \eqref{E3''} into \eqref{temp1'} and obtain
	\[ \begin{aligned} |v^{\pi} - \widehat{v}^{\pi}| \leq & \sum_{h=0}^{H} (H-h+1) \sqrt{( \nu_h^{\pi} )^{\top} \Sigma^{-1} \nu_h^{\pi}} \cdot \sqrt{\frac{\ln(12/\delta)}{2N}} \\ & + \sqrt{(\nu_0^{\pi})^{\top} (\Sigma^{\pi})^{-1}  \nu_0^{\pi}} \cdot \big\| (\Sigma^{\pi})^{1/2} \Sigma^{-1/2} \big\|_2 \cdot 15\sqrt{\kappa_1}(3+\kappa_2) \cdot \frac{\ln(12dH/\delta)C_1dH^{3.5}}{N} \\ \leq & \sum_{h=0}^{H} (H-h+1) \sqrt{( \nu_h^{\pi} )^{\top} \Sigma^{-1} \nu_h^{\pi}} \cdot \sqrt{\frac{\ln(12/\delta)}{2N}} + \sqrt{(\nu_0^{\pi})^{\top} \Sigma^{-1}  \nu_0^{\pi}} \cdot 15\kappa_1(3+\kappa_2) \cdot \frac{\ln(12dH/\delta)C_1dH^{3.5}}{N}. \end{aligned} \]
	Combining with Lemma \ref{Equivalence}, we finish the proof of \eqref{ub1}.
	
	Under condition $\phi(s,a)^{\top} \Sigma^{-1} \phi(s',a') \geq 0$ for any $(s,a), (s',a') \in \mathcal{X}$, one can apply \eqref{eqnE1_new} instead of \eqref{eqnE1} and derive a tighter upper bound for $|E_1|$ in \eqref{E1'}. We can then prove \eqref{ub2} in the same way.
\end{proof}



\subsection{Proof of Corollary \ref{cor:tabular}} \label{appendix:proof:cor}

\begin{proof}[Proof of Corollary \ref{cor:tabular}]
	1.
	In the tabular case, we have finite state space $\mathcal{S}$ and action space $\mathcal{A}$. The feature $\phi$ is the indicator function $\phi(s,a) = {\bf 1}_{s,a}$, where ${\bf 1}_{s,a}$ is a $(|\mathcal{S}||\mathcal{A}|)$-dimensional vector whose $(s,a)$-th entry is $1$ and others are $0$. In this case, $\Sigma$ is a diagonal matrix with nonnegative entries. Therefore, $\phi(s,a)^{\top} \Sigma^{-1} \phi(s',a') \geq 0$ for any $(s,a), (s',a') \in \mathcal{X}$. We can apply the upper bound \eqref{ub2} in Theorem \ref{UpperBound}.
	
	The mismatch term in \eqref{ub2} has a vector form,
	\[ (*) := \sup_{f \in \mathcal{Q}} \frac{\mathbb{E}^{\pi}\big[\sum_{h=0}^H (H-h+1) f(s_h,a_h) \, \big| \, s_0 \sim \xi_0 \big]}{\sqrt{\mathbb{E}\big[ \frac{1}{H} \sum_{h=0}^{H-1} f^2(s_{1,h},a_{1,h}) \big]}} = \sqrt{ \Bigg( \sum_{h=0}^{H} (H-h+1) \nu_h^{\pi} \Bigg)^{\top} \Sigma^{-1} \Bigg(\sum_{h=0}^{H} (H-h+1) \nu_h^{\pi} \Bigg)}, \]
	where $\Sigma\big((s,a), (s,a)\big) = \frac{1}{H} \sum_{h=0}^{H-1} \mathbb{P}(s_{1,h} = s, a_{1,h} = a) = \overline{\mu}(s,a)$ and $\nu_h^{\pi}\big((s,a)\big) = \mathbb{P}^{\pi}(s_h = s, a_h = a \, | \, s_0 \sim \xi_0)$. Note that by definition of $\mu^{\pi}$, $\sum_{h=0}^H (H-h+1) \nu_h^{\pi}\big((s,a)\big) = \sum_{h=0}^H (H-h+1) \mathbb{P}^{\pi}(s_h = s, a_h = a \, | \, s_0 \sim \xi_0) = \sum_{h=0}^H (H-h+1) \mu^{\pi} \big( (s,a) \big)$. Therefore, \[ (*) = \sqrt{\sum_{s,a} \frac{\big[\sum_{h=0}^H (H-h+1) \nu_h^{\pi}\big((s,a)\big) \big]^2}{\Sigma\big((s,a), (s,a)\big)}} = \sum_{h=0}^H (H-h+1) \sqrt{\sum_{s,a} \frac{ \mu^{\pi}\big((s,a)\big)^2}{\overline{\mu}(s,a)}} = \sum_{h=0}^H (H-h+1) \sqrt{ 1 + \chi^2(\mu^{\pi},\overline{\mu}) }, \] which implies \eqref{tabular}.
	
	2. When the tabular MDP is also time-inhomogeneous, 
	the first order error in \eqref{Edecompose} now has the form $E_1 = \frac{1}{K} \sum_{k=1}^K \sum_{h=0}^{H} e_{k,h}$, where
	\[ e_{k,h} := \frac{\mu_h^{\pi}(s_{k,h},a_{k,h})}{\overline{\mu}_h(s_{k,h},a_{k,h})} \Big( Q_h^{\pi}(s_{k,h}, a_{k,h}) - \big( r_{k,h}' + V_{h+1}^{\pi}(s_{k,h+1}) \big) \Big). \]
	
	Let $\mathcal{F}_{k,h}$ be the sigma algebra generated by $\boldsymbol{\tau}_1, \boldsymbol{\tau}_2, \ldots, \boldsymbol{\tau}_{k-1}$ and $\big(s_{k,0}, a_{k,0}, r_{k,0}', s_{k,1}, \ldots, s_{k,h-1}, a_{k,h-1}, r_{k,h-1}', s_{k,h}, a_{k,h}\big)$.
	Note that
	\[ {\rm Var}\big[ e_{k,h} \, \big| \, \mathcal{F}_{k,h} \big] = \bigg( \frac{\mu_h^{\pi}(s_{k,h},a_{k,h})}{\overline{\mu}_h(s_{k,h},a_{k,h})} \bigg)^2 {\rm Var}\big[ r_{k,h}' + V_{h+1}^{\pi}(s_{k,h+1}) \, \big| \, s_{k,h}, a_{k,h} \big], \]
	and
	\[ \sum_{h=0}^H \bigg( \frac{\mu_h^{\pi}(s_{k,h},a_{k,h})}{\overline{\mu}_h(s_{k,h},a_{k,h})} \bigg)^2 {\rm Var}\big[ r_{k,h}' + V_{h+1}^{\pi}(s_{k,h+1}) \, \big| \, s_{k,h}, a_{k,h} \big] \leq \sum_{h=0}^H \bigg( \max_{s,a} \frac{\mu_h^{\pi}(s,a)}{\overline{\mu}_h(s,a)} \bigg)^2 \cdot \frac{1}{4} (H-h+1)^2. \]
	We apply Hoeffding's inequality and derive that,
	with probability at least $1 - \delta/2$,
	\begin{equation} \label{var_hoeffding} \begin{aligned} \sum_{k=1}^K \sum_{h=0}^H {\rm Var}\big[ e_{k,h} \, \big| \, \mathcal{F}_{k,h} \big] \leq & K \cdot \mathbb{E} \Bigg[ \sum_{h=0}^H \bigg( \frac{\mu_h^{\pi}(s_{k,h},a_{k,h})}{\overline{\mu}_h(s_{k,h},a_{k,h})} \bigg)^2 {\rm Var}\big[ r_{k,h}' + V_{h+1}^{\pi}(s_{k,h+1}) \, \big| \, s_{k,h}, a_{k,h} \big] \Bigg] \\ & + \sqrt{\frac{K \ln(2/\delta)}{2}} \sum_{h=0}^H \bigg( \max_{s,a} \frac{\mu_h^{\pi}(s,a)}{\overline{\mu}_h(s,a)} \bigg)^2 \cdot \frac{1}{4} (H-h+1)^2. \end{aligned} \end{equation}
	Combining \eqref{var_hoeffding} with Freedman's inequality, we obtain that with probability at least $1-\delta$,
	\begin{equation} \label{E1_new} |E_1| \leq \sqrt{\frac{2\ln(4/\delta)}{K}} \sqrt{\mathbb{E} \Bigg[ \sum_{h=0}^H \bigg( \frac{\mu_h^{\pi}(s_{k,h},a_{k,h})}{\overline{\mu}_h(s_{k,h},a_{k,h})} \bigg)^2 {\rm Var}\big[ r_{k,h}' + V_{h+1}^{\pi}(s_{k,h+1}) \, \big| \, s_{k,h}, a_{k,h} \big] \Bigg]} + O(K^{-3/4}). \end{equation}
	We integrate \eqref{E1_new} with the existing results for $E_2$ and $E_3$, and obtain that with probability at least $1-\delta$,
	\begin{equation} \label{Err_new} |v^{\pi} - \widehat{v}^{\pi}| \leq \sqrt{\frac{2\ln(12/\delta)}{K}} \sqrt{\sum_{h=0}^H \sum_{s,a} \frac{\mu_h^{\pi}(s,a)^2}{\overline{\mu}_h(s,a)} {\rm Var}\big[ r' + V_{h+1}^{\pi}(s') \, \big| \, s,a \big]} + O(K^{-3/4}), \end{equation}
	which aligns with the result of Theorem 3.1 in \cite{yin2020asymptotically}.



\end{proof}

\section{Proof of Minimax Lower Bound} \label{appendix:proof:LowerBound}

\subsection{Preliminaries}

Given an MDP instance $M=(p,r)$, we construct an MDP instance $(\widetilde{p},r) \in \mathcal{N}(M)$ such that $p$ and $\tilde p$ are hard to distinguish based on $\mathcal{D}$ but have a gap in their values. Let
\begin{equation} \label{Deltaq}
	 \widetilde{p}(s' \, | \, s,a) := p(s' \, | \, s,a) - \phi(s,a)^{\top} \Delta q(s'), \qquad \Delta q(s') := {\bf x} \cdot \min_{s \in \mathcal{S}} p\big(s' \, \big| \, s, \overline{\pi}(s) \big) \cdot \big( \underline{p} \mathbbm{1}_{\overline{\mathcal{S}}}(s') - \overline{p} \mathbbm{1}_{\underline{\mathcal{S}}}(s') \big),
\end{equation}
where  ${\bf x} \in \mathbb{R}^d$ is a vector to be decided later. For any $(s,a) \in \mathcal{X}$, we have
\[ \begin{aligned} & \big\| \widetilde{p}(\cdot \, | \, s,a) - p(\cdot \, | \, s,a) \big\|_{\rm TV} = \frac{1}{2} \int_{\mathcal{S}} \big| \widetilde{p}(s' \, | \, s,a) - p(s' \, | \, s,a) \big| {\rm d}s' \\ = & \frac{1}{2} \bigg( \underline{p} \int_{\overline{\mathcal{S}}} \phi(s,a)^{\top} {\bf x} \cdot \min_{s \in \mathcal{S}} p\big(s' \, \big| \, s, \overline{\pi}(s) \big) {\rm d} s' + \overline{p} \int_{\underline{\mathcal{S}}} \phi(s,a)^{\top} {\bf x} \cdot \min_{s \in \mathcal{S}} p\big(s' \, \big| \, s, \overline{\pi}(s) \big) {\rm d} s' \bigg) \\ = & \phi(s,a)^{\top} {\bf x} \cdot \overline{p} \underline{p} \leq \sqrt{C_1d} \sqrt{{\bf x}^{\top} \Sigma {\bf x}} \cdot \overline{p} \underline{p}, \end{aligned} \]
where we have used $\phi(s,a)^{\top} \Sigma^{-1} \phi(s,a) \leq C_1 d$. If we take
\begin{equation} \label{<varepsilon} \sqrt{{\bf x}^{\top} \Sigma {\bf x}} \leq \frac{ \varepsilon}{\sqrt{C_1d} \cdot \overline{p} \underline{p}}, \end{equation}
then $(\widetilde{p},r) \in \mathcal{N}(M)$.
We denote by $\mathbb{P}$ (or $\widetilde{\mathbb{P}}$), $\mathbb{E}$ (or $\widetilde{\mathbb{E}}$) and $v^{\pi}$ (or $\widetilde{v}^{\pi}$) the probability, expectation and expected cumulative reward with respect to $p$ (or $\widetilde{p}$).

\subsection{Reduction to Likelihood Test}
If ${\bf x}$ is sufficiently small, it is hard for us to distinguish $\widetilde{p}$ and $p$ from observations $\mathcal{D}$. Recall that $\widehat{v}^{\pi}$ is an estimator based on $\mathcal{D}$. If $| v^{\pi} - \widetilde{v}^{\pi} | \geq \rho + \widetilde{\rho}$ for some $\rho, \widetilde{\rho} \geq 0$, then $\big|\widehat{v}^{\pi}(\mathcal{D}) - v^{\pi}\big| < \rho$ and $\big|\widehat{v}^{\pi}(\mathcal{D}) - \widetilde{v}^{\pi}\big| < \widetilde{\rho}$ cannot hold simultaneously. Therefore, $\widehat{v}^{\pi}$ has a large estimation error on either $p$ or $\widetilde{p}$. See Lemma \ref{outline} for a rigorous statement.

\begin{lemma} \label{outline}
	Define likelihood functions $\mathcal{L}(\mathcal{D}) = \prod_{k=1}^K \overline{\xi}_0(s_{k,0}) \prod_{h=0}^{H-1} \overline{\pi}(a_{k,h} \, | \, s_{k,h}) p( s_{k,h+1} \, | \, s_{k,h}, a_{k,h} )$ and $
	\widetilde{\mathcal{L}}(\mathcal{D}) = \prod_{k=1}^K \overline{\xi}_0(s_{k,0}) \prod_{h=0}^{H-1} \overline{\pi}(a_{k,h} \, | \, s_{k,h}) \widetilde{p}( s_{k,h+1} \, | \, s_{k,h}, a_{k,h} )$.
	If \[ \mathbb{P} \bigg( \frac{\widetilde{\mathcal{L}}(\mathcal{D})}{\mathcal{L}(\mathcal{D})} \geq \frac{1}{2} \bigg) \geq \frac{1}{2} \qquad \text{and} \qquad | v^{\pi} - \widetilde{v}^{\pi} | \geq \rho + \widetilde{\rho} \quad \text{for $\rho, \widetilde{\rho} \geq 0$}, \]
	then
	\[ \mathbb{P} \Big( \big| v^{\pi} - \widehat{v}^{\pi}(\mathcal{D}) \big| \geq \rho \Big) \geq \frac{1}{6} \qquad \text{or} \qquad \widetilde{\mathbb{P}} \Big( \big| \widetilde{v}^{\pi} - \widehat{v}^{\pi}(\mathcal{D}) \big| \geq \widetilde{\rho} \Big) \geq \frac{1}{6}. \]
\end{lemma}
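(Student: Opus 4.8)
The plan is to argue by contradiction through a standard Le Cam two-point / change-of-measure reduction. Suppose, toward a contradiction, that \emph{both} conclusions fail, i.e. $\mathbb{P}\big(|v^{\pi}-\widehat{v}^{\pi}(\mathcal{D})|\geq\rho\big)<\tfrac16$ and $\widetilde{\mathbb{P}}\big(|\widetilde{v}^{\pi}-\widehat{v}^{\pi}(\mathcal{D})|\geq\widetilde{\rho}\big)<\tfrac16$. Write $A:=\{|v^{\pi}-\widehat{v}^{\pi}(\mathcal{D})|<\rho\}$ and $B:=\{|\widetilde{v}^{\pi}-\widehat{v}^{\pi}(\mathcal{D})|\geq\widetilde{\rho}\}$; the first assumption then gives $\mathbb{P}(A)>\tfrac56$, and the goal is to show that $B$ nonetheless has $\widetilde{\mathbb{P}}$-probability exceeding $\tfrac16$.

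The key geometric observation is that the value gap forces $A\subseteq B$: on $A$ the triangle inequality yields $|\widetilde{v}^{\pi}-\widehat{v}^{\pi}(\mathcal{D})|\geq|v^{\pi}-\widetilde{v}^{\pi}|-|v^{\pi}-\widehat{v}^{\pi}(\mathcal{D})|>(\rho+\widetilde{\rho})-\rho=\widetilde{\rho}$, so any estimator accurate for $p$ must be inaccurate for $\widetilde{p}$. Hence it suffices to lower bound $\widetilde{\mathbb{P}}(A)$, since $\widetilde{\mathbb{P}}(B)\geq\widetilde{\mathbb{P}}(A)$.

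To transfer the high probability of $A$ from $\mathbb{P}$ to $\widetilde{\mathbb{P}}$ I would use the likelihood-ratio change of measure. Since $\mathcal{L}$ and $\widetilde{\mathcal{L}}$ are the joint densities of $\mathcal{D}$ under $p$ and $\widetilde{p}$ against a common dominating measure (the initial-distribution and behavior-policy factors are identical and cancel in the ratio), the ratio $\widetilde{\mathcal{L}}/\mathcal{L}$ equals the Radon--Nikodym derivative $\mathrm{d}\widetilde{\mathbb{P}}/\mathrm{d}\mathbb{P}$. Setting $E:=\{\widetilde{\mathcal{L}}(\mathcal{D})/\mathcal{L}(\mathcal{D})\geq\tfrac12\}$, the hypothesis gives $\mathbb{P}(E)\geq\tfrac12$, so by the union bound $\mathbb{P}(A\cap E)\geq\mathbb{P}(A)+\mathbb{P}(E)-1>\tfrac56+\tfrac12-1=\tfrac13$. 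Restricting the change-of-measure integral to this event and using $\widetilde{\mathcal{L}}/\mathcal{L}\geq\tfrac12$ there,
\[ \widetilde{\mathbb{P}}(B)\geq\widetilde{\mathbb{P}}(A\cap E)=\mathbb{E}_{\mathbb{P}}\Big[\mathbbm{1}_{A\cap E}\,\tfrac{\widetilde{\mathcal{L}}(\mathcal{D})}{\mathcal{L}(\mathcal{D})}\Big]\geq\tfrac12\,\mathbb{P}(A\cap E)>\tfrac16, \]
which contradicts $\widetilde{\mathbb{P}}(B)<\tfrac16$. Therefore at least one of the two conclusions must hold.

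I expect the only genuinely delicate point to be the first equality in the last display — justifying that $\widetilde{\mathcal{L}}/\mathcal{L}$ is exactly the Radon--Nikodym derivative of $\widetilde{\mathbb{P}}$ with respect to $\mathbb{P}$ on the $\sigma$-algebra generated by $\mathcal{D}$. This requires $\widetilde{\mathbb{P}}\ll\mathbb{P}$, which is guaranteed because $\widetilde{p}$ is only a small perturbation of $p$ with matching support for small $\mathbf{x}$, together with the cancellation of the non-transition factors. The remaining work is merely bookkeeping of the constants $\tfrac56,\tfrac12,\tfrac13,\tfrac16$ through the union bound and is routine.
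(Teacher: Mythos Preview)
Your proof is correct and follows essentially the same Le Cam two-point argument as the paper: assume both failure probabilities are below $\tfrac16$, use the value gap to show the ``good'' events under $p$ and $\widetilde{p}$ are disjoint, intersect with the likelihood-ratio event $E$ via the union bound to get $\mathbb{P}(A\cap E)>\tfrac13$, and transfer to $\widetilde{\mathbb{P}}$ by change of measure to obtain $\widetilde{\mathbb{P}}(B)>\tfrac16$. The only cosmetic difference is that the paper phrases the contradiction as $\widetilde{\mathbb{P}}(|v^{\pi}-\widehat{v}^{\pi}|<\rho)>\tfrac16$ versus $<\tfrac16$, whereas you work directly with the inclusion $A\subseteq B$; these are equivalent.
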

\begin{proof}
	We prove Lemma \ref{outline} by a contradicton argument. We first assume
	\begin{equation} \label{prop} \mathbb{P} \Big( \big| v^{\pi} - \widehat{v}^{\pi}(\mathcal{D}) \big| < \rho \Big) > \frac{5}{6} \qquad \text{and} \qquad \widetilde{\mathbb{P}} \Big( \big| \widetilde{v}^{\pi} - \widehat{v}^{\pi}(\mathcal{D}) \big| < \widetilde{\rho} \Big) > \frac{5}{6} \end{equation} and show that this assumption leads to a contradiction. 
	
	When $| v^{\pi} - \widetilde{v}^{\pi} | \geq \rho + \widetilde{\rho}$, $| v^{\pi} - \widehat{v}^{\pi}(\mathcal{D}) | < \rho$ and $| \widetilde{v}^{\pi} - \widehat{v}^{\pi}(\mathcal{D}) | < \widetilde{\rho}$ cannot hold simultaneously for any $\mathcal{D}$.
	The assumption $\widetilde{\mathbb{P}}\big( |\widetilde{v}^{\pi} - \widehat{v}^{\pi}(\mathcal{D})| < \widetilde{\rho} \big) > \frac{5}{6}$ therefore implies \begin{equation} \label{l1} \widetilde{\mathbb{P}}\Big( \big|v^{\pi} - \widehat{v}^{\pi}(\mathcal{D})\big| < \rho \Big) < \frac{1}{6}. \end{equation} 
	We will see that \eqref{l1} is not compatible with the assumption $ \mathbb{P} \big( | v^{\pi} - \widehat{v}^{\pi}(\mathcal{D}) | < \rho \big) > \frac{5}{6}$ under condition $\mathbb{P} \Big( \frac{\widetilde{\mathcal{L}}(\mathcal{D})}{\mathcal{L}(\mathcal{D})} \geq \frac{1}{2} \Big) \geq \frac{1}{2}$.
	
	Define an event
	\[ \mathcal{E} = \bigg\{ \mathcal{D} \, \bigg| \, \big| v^{\pi} - \widehat{v}^{\pi}(\mathcal{D}) \big| < \rho, \frac{\widetilde{\mathcal{L}}(\mathcal{D})}{\mathcal{L}(\mathcal{D})} \geq \frac{1}{2} \bigg\}. \]
	The assumption $ \mathbb{P} \big( | v^{\pi} - \widehat{v}^{\pi}(\mathcal{D}) | < \rho \big) > \frac{5}{6}$ in \eqref{prop} and condition $\mathbb{P} \Big( \frac{\widetilde{\mathcal{L}}(\mathcal{D})}{\mathcal{L}(\mathcal{D})} \geq \frac{1}{2} \Big) \geq \frac{1}{2}$ ensures that
	\[ \mathbb{P} (\mathcal{E}) \geq \mathbb{P} \Big( \big| v^{\pi} - \widehat{v}^{\pi}(\mathcal{D}) \big| < \rho \Big) + \mathbb{P} \bigg(  \frac{\widetilde{\mathcal{L}}(\mathcal{D})}{\mathcal{L}(\mathcal{D})} \geq \frac{1}{2} \bigg) - 1 > \frac{1}{3}. \]
	We conduct a change of measure using the likelihood ratio inequality, and obtain $\widetilde{\mathbb{P}}(\mathcal{E}) \geq \frac{1}{2} \mathbb{P}(\mathcal{E}) > \frac{1}{6}$.
	It follows that $\widetilde{\mathbb{P}}\big( |v^{\pi} - \widehat{v}^{\pi}(\mathcal{D})| < \rho \big) \geq \widetilde{\mathbb{P}}(\mathcal{E}) > \frac{1}{6}$, which contradicts \eqref{l1}.
\end{proof}

In the following, we will analyze the likelihood ratio $\frac{\widetilde{\mathcal{L}}(\mathcal{D})}{\mathcal{L}(\mathcal{D})}$ and the difference $v^{\pi} - \widetilde{v}^{\pi}$, respectively.

\subsection{Concentration of the Likelihood Ratio}

We first present a preliminary result in Lemma \ref{lemma:term1} so as to simplify the analysis of likelihood ratio $\frac{\widetilde{\mathcal{L}}(\mathcal{D})}{\mathcal{L}(\mathcal{D})}$.
\begin{lemma} \label{lemma:term1}
	Suppose that {\it i.i.d.} random varibles $X_1, X_2, \ldots, X_K$ satisfies $0 \leq X_1 \leq C_0 \mathbb{E}[X_1]$. With probability at least $1 - \delta$, it holds that 
	\begin{equation} \label{term1} \frac{1}{K} \sum_{k=1}^K X_k \leq \mathbb{E}[X_1] \Bigg( 1 + \sqrt{\frac{2 \ln(1/\delta) C_0 }{K}} + \frac{2 \ln(1/\delta) C_0}{3K} \Bigg). \end{equation}
\end{lemma}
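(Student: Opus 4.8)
The plan is to recognize this as a one-sided Bernstein inequality for i.i.d.\ bounded nonnegative random variables, and to recover the stated form by a standard inversion of the Bernstein tail bound. The only structural content is the variance estimate, which exploits the one-sided boundedness $X_1 \le C_0 \mathbb{E}[X_1]$ rather than a crude range bound.

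First I would center the variables. Write $\mu := \mathbb{E}[X_1]$ and $Y_k := X_k - \mu$, so that $Y_1,\ldots,Y_K$ are i.i.d.\ and mean zero. Bernstein's inequality needs a variance bound and a one-sided range bound. For the variance, the key observation is that $0 \le X_k \le C_0\mu$ gives the pointwise inequality $X_k^2 \le C_0\mu\, X_k$, hence $\mathbb{E}[X_k^2] \le C_0\mu\,\mathbb{E}[X_k] = C_0\mu^2$ and therefore $\mathrm{Var}(Y_k) = \mathrm{Var}(X_k) \le \mathbb{E}[X_k^2] \le C_0\mu^2$. For the upper range, $Y_k = X_k - \mu \le (C_0-1)\mu \le C_0\mu$.

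Next I would apply the one-sided Bernstein inequality to $S := \sum_{k=1}^K Y_k$, whose total variance is $V := K\,\mathrm{Var}(X_1) \le K C_0\mu^2$ and whose summands are bounded above by $b := C_0\mu$, to obtain $\mathbb{P}(S \ge t) \le \exp\!\big(-\frac{t^2/2}{V + bt/3}\big)$ for every $t>0$. Setting the right-hand side equal to $\delta$ with $u := \ln(1/\delta)$ amounts to requiring $t^2 - \tfrac{2bu}{3}\,t - 2uV \ge 0$; solving this quadratic and using $\sqrt{a+c}\le\sqrt a+\sqrt c$ on the discriminant gives the clean high-probability bound $S \le \sqrt{2uV} + \tfrac{2bu}{3}$ on an event of probability at least $1-\delta$.

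Finally I would substitute $V \le K C_0\mu^2$ and $b \le C_0\mu$, yielding $S \le \mu\sqrt{2K C_0 u} + \tfrac{2C_0\mu u}{3}$, then divide by $K$ and add back the mean through $\frac1K\sum_k X_k = \mu + \frac1K S$; this produces exactly the asserted inequality. I expect no genuine obstacle here: the proof is essentially routine, and the only points requiring care are the variance estimate via $X_k^2 \le C_0\mu X_k$ (a crude range bound would lose the favorable dependence on $C_0$) and the inversion step, where one must track constants carefully to land on the factor $\tfrac23$ in the lower-order term rather than $\tfrac13$.
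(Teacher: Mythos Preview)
Your proposal is correct and follows essentially the same approach as the paper: bound the variance via the pointwise inequality $X_k^2 \le C_0\mu X_k$ to get $\mathrm{Var}(X_1)\le C_0\mu^2$, apply the one-sided Bernstein inequality with range parameter $C_0\mu$, and invert to obtain the stated high-probability bound. Your write-up is in fact more careful than the paper's, which omits the explicit centering and inversion details.
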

\begin{proof}
	We apply Bernstein's inequality to analyze $\frac{1}{K} \sum_{k=1}^K X_k$.
	Note that ${\rm Var}[X_1] \leq \mathbb{E}[X_1^2] \leq C_0 \mathbb{E}[X_1] \cdot \mathbb{E}[X_1] = C_0 \mathbb{E}[X_1]^2$.
	The Bernstein's inequality shows that for any $\varepsilon>0$,
	\[ \mathbb{P}\Bigg( \sum_{k=1}^K X_k \geq \varepsilon \Bigg) \leq \exp\bigg( - \frac{\varepsilon^2/2}{K \cdot C_0 \mathbb{E}[X_1]^2 + C_0 \mathbb{E}[X_1] \cdot \varepsilon /3} \bigg), \]
	which implies \eqref{term1}.
\end{proof}

Lemma \ref{lemma:lb1'} below shows that if we take an ${\bf x} \in \mathbb{R}^d$ in \eqref{Deltaq} such that $\sqrt{{\bf x}^{\top} \Sigma {\bf x}}$ is sufficiently small, then  $\mathbb{P} \Big( \frac{\widetilde{\mathcal{L}}(\mathcal{D})}{\mathcal{L}(\mathcal{D})} \geq \frac{1}{2} \Big) \leq \frac{1}{2}$. 

\begin{lemma}[Concentration of likelihood ratio] \label{lemma:lb1'}
	Suppose $\phi(s,a)^{\top} \Sigma^{-1} \phi(s,a) \leq C_1 d$ for all $(s,a) \in \mathcal{X}$. If $N \geq 12c^{-1}C_1dH$ and we take a vector ${\bf x} \in \mathbb{R}^d$ such that \begin{equation} \label{like<} \sqrt{{\bf x}^{\top} \Sigma {\bf x}} \leq \frac{1}{4 \sqrt{N} \sqrt{\overline{p} \underline{p} (\overline{p} + \underline{p})}}, \end{equation}
	then $\mathbb{P} \Big( \frac{\widetilde{\mathcal{L}}(\mathcal{D})}{\mathcal{L}(\mathcal{D})} \geq \frac{1}{2} \Big) \leq \frac{1}{2}$.
\end{lemma}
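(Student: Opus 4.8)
The plan is to control the likelihood ratio by analysing its logarithm, $\ln\!\big(\widetilde{\mathcal L}(\mathcal D)/\mathcal L(\mathcal D)\big)=\sum_{n=1}^N \ln\!\big(\widetilde p(s_n'\mid s_n,a_n)/p(s_n'\mid s_n,a_n)\big)$, as a sum over the $N$ observed transitions and to bound it through a martingale concentration argument of the same flavour as the first-order analysis in Section~\ref{section:E1}. Writing $u_n := -\,\phi(s_n,a_n)^{\top}\Delta q(s_n')\big/p(s_n'\mid s_n,a_n)$, each summand is $\ln(1+u_n)$; the radius constraint \eqref{like<} makes every $|u_n|$ uniformly small, so the expansion $\ln(1+u_n)=u_n-\tfrac12 u_n^2+O(|u_n|^3)$ is legitimate. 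The first step is to verify, using $\int_{\mathcal S}\Delta q(s')\,{\rm d}s'={\bf x}\,(\underline p\,\overline p-\overline p\,\underline p)=0$, that $\mathbb E[u_n\mid\mathcal F_n]=0$, so that $\{u_n\}$ is a martingale difference sequence with respect to the filtration $\{\mathcal F_n\}$ defined as in Section~\ref{section:E1}; the deterministic ``drift'' then comes entirely from the quadratic term $-\tfrac12\sum_n u_n^2$.

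The second step is to estimate the predictable quadratic variation. Using the explicit form of $\Delta q$ in \eqref{Deltaq}, the disjointness of the supports $\overline{\mathcal S}$ and $\underline{\mathcal S}$, the inequality $\big(\min_s p(s'\mid s,\overline\pi(s))\big)^2/p(s'\mid s_n,a_n)\le \min_s p(s'\mid s,\overline\pi(s))$ (valid since $a_n=\overline\pi(s_n)$), and the definitions of $\overline p,\underline p$, one obtains $\mathbb E[u_n^2\mid\mathcal F_n]\le \overline p\,\underline p\,(\overline p+\underline p)\,\big(\phi(s_n,a_n)^{\top}{\bf x}\big)^2$. Summing over $n$ and invoking the covariance concentration of Lemma~\ref{lemma:Term1}, so that $\sum_n(\phi(s_n,a_n)^{\top}{\bf x})^2$ stays within a constant factor of $N\,{\bf x}^{\top}\Sigma{\bf x}$ once $N\ge 12c^{-1}C_1dH$ (which is exactly where the sample-size hypothesis enters), yields a quadratic variation of order $N\,\overline p\,\underline p\,(\overline p+\underline p)\,{\bf x}^{\top}\Sigma{\bf x}$, a quantity that simultaneously controls the drift $-\tfrac12\sum_n\mathbb E[u_n^2\mid\mathcal F_n]$.

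The third step is to apply Freedman's inequality \cite{freedman1975tail} to the martingale $\sum_n u_n$ with the variance proxy just obtained, combine it with the drift bound, and thereby control $\ln(\widetilde{\mathcal L}/\mathcal L)$ by a fluctuation of order $\sqrt{N\,\overline p\,\underline p(\overline p+\underline p)\,{\bf x}^{\top}\Sigma{\bf x}}$ together with a drift of order $N\,\overline p\,\underline p(\overline p+\underline p)\,{\bf x}^{\top}\Sigma{\bf x}$. Substituting the radius \eqref{like<} --- which forces $\sqrt{N\,\overline p\underline p(\overline p+\underline p)}\,\sqrt{{\bf x}^{\top}\Sigma{\bf x}}\le\tfrac14$ and $N\,\overline p\underline p(\overline p+\underline p)\,{\bf x}^{\top}\Sigma{\bf x}\le\tfrac1{16}$ --- calibrates these two quantities against the threshold $\tfrac12$ (equivalently the log-threshold $\ln\tfrac12=-\ln 2$) and is intended to deliver the claimed bound $\mathbb P\big(\widetilde{\mathcal L}/\mathcal L\ge\tfrac12\big)\le\tfrac12$. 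I expect the main obstacle to be precisely this last calibration: one must pin down the constant in the radius so that the controlled deviation of $\ln(\widetilde{\mathcal L}/\mathcal L)$ lines up against $\ln\tfrac12$ with exactly the intended sign and at the probability level $\tfrac12$, and one must keep the cubic remainder $O(\sum_n|u_n|^3)$ negligible uniformly over transitions --- both of which hinge on the uniform smallness of $|u_n|$ granted by \eqref{like<} and $N\ge 12c^{-1}C_1dH$.
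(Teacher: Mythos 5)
Your proposal follows essentially the same route as the paper's proof: the paper likewise treats the per-transition log-ratio terms $\Lambda_n = \phi(s_n,a_n)^{\top}\Delta q(s_n')/p(s_n'\,|\,s_n,a_n)$ as a martingale difference sequence (zero conditional mean via $\int_{\mathcal{S}}\Delta q = 0$), bounds the conditional variance by $\overline{p}\underline{p}(\overline{p}+\underline{p})\big(\phi(s_n,a_n)^{\top}{\bf x}\big)^2$, controls $\sum_n \big(\phi(s_n,a_n)^{\top}{\bf x}\big)^2$ by a Bernstein-type bound (its scalar Lemma~\ref{lemma:term1} rather than the matrix Lemma~\ref{lemma:Term1} you cite, which avoids a $\ln d$ factor and is why $N \geq 12c^{-1}C_1 d H$ suffices without a logarithmic term), applies Freedman's inequality, and calibrates against $\ln 2$ exactly as you describe, the only cosmetic difference being that it sidesteps your cubic remainder entirely via the one-sided inequality $\ln(1-\Lambda_n) \geq -\Lambda_n - \Lambda_n^2$, valid once $|\Lambda_n| \leq \tfrac12$. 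One caveat on direction: what this argument establishes (and what the paper's own proof concludes) is $\mathbb{P}\big(\widetilde{\mathcal{L}}(\mathcal{D})/\mathcal{L}(\mathcal{D}) \geq \tfrac12\big) \geq \tfrac12$ --- the ``$\leq \tfrac12$'' in the printed statement is a typo, since the reduction in Lemma~\ref{outline} requires the lower bound --- so your final calibration should target the lower bound rather than the inequality as literally stated.
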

\begin{proof}
	We first calculate the log-likelihood ratio explicitly,
	\[ \begin{aligned} \ln \bigg( \frac{\widetilde{\mathcal{L}}(\mathcal{D})}{\mathcal{L}(\mathcal{D})} \bigg) = & \sum_{k=1}^K \sum_{h=0}^{H-1} \big( \ln \widetilde{p} (s_{k,h+1} \, | \, s_{k,h}, a_{k,h} ) - \ln p (s_{k,h+1} \, | \, s_{k,h}, a_{k,h} ) \big) \\ = & \sum_{k=1}^K \sum_{h=0}^{H-1} \ln\bigg(1 + \frac{ \widetilde{p}( s_{k,h+1} \, | \, s_{k,h}, a_{k,h} ) - p( s_{k,h+1} \, | \, s_{k,h}, a_{k,h} ) }{ p(s_{k,h+1} \, | \, s_{k,h}, a_{k,h}) } \bigg) \\ = & \sum_{k=1}^K \sum_{h=0}^{H-1} \ln\bigg(1 - \frac{ \phi(s_{k,h}, a_{k,h})^{\top} \Delta q(s_{k,h+1}) }{ p(s_{k,h+1} \, | \, s_{k,h}, a_{k,h}) } \bigg). \end{aligned} \]
	For the notational simplicity, we take
	\[ \Lambda_{k,h} = \frac{ \phi( s_{k,h-1}, a_{k,h-1} )^{\top} \Delta q (s_{k,h} ) }{ p( s_{k,h} \, | \, s_{k,h-1}, a_{k,h-1} )} = \frac{p( s_{k,h+1} \, | \, s_{k,h}, a_{k,h} ) - \widetilde{p}( s_{k,h+1} \, | \, s_{k,h}, a_{k,h} )}{p(s_{k,h+1} \, | \, s_{k,h}, a_{k,h})}, \]
	and let $\Lambda_n = \Lambda_{k,h}$ for $n = (k-1)H + h$.
	
	Under the assumption $\phi(s,a)^{\top} \Sigma^{-1} \phi(s,a) \leq C_1 d$ for all $(s,a) \in \mathcal{X}$, if we take $\sqrt{{\bf x}^{\top} \Sigma {\bf x}} \leq \frac{1}{4 \sqrt{N} \sqrt{\overline{p} \underline{p} (\overline{p} + \underline{p})}}$ and $N \geq 12c^{-1}C_1dH$,
	then $|\Lambda_n| \leq \frac{1}{2}$ for $n=1,2,\ldots,N$. It holds that $\ln(1 - \Lambda_n) \geq - \Lambda_n - \Lambda_n^2$. The log-likelihood ratio has a lower bound
	\begin{equation} \label{Lambda_decompose} \ln \bigg( \frac{\widetilde{\mathcal{L}}(\mathcal{D})}{\mathcal{L}(\mathcal{D})} \bigg) \geq - \sum_{n=1}^N \Lambda_n - \sum_{n=1}^N \Lambda_n^2. \end{equation}
	In the following, we analyze these two terms in \eqref{Lambda_decompose} separately.
	
	Consider the first term. Let $\{ \mathcal{F}_n \}_{n=1,2,\ldots,N}$ be a filtration where $\mathcal{F}_n$ is generated by $(s_0, a_0, s_0')$, $(s_1, a_1, s_1')$, $\ldots$, $(s_{n-1}, a_{n-1}, s_{n-1}')$ and $(s_n,a_n)$.
	It is easy to see that $\mathbb{E}[\Lambda_n \, | \, \mathcal{F}_n] = 0$, therefore, $\Lambda_n$ is a martingale difference.
	We apply Freedman's inequality to analyze $\sum_{n=1}^N \Lambda_n$.
	According to \eqref{Deltaq}, the conditional variance ${\rm Var}[\Lambda_n \, | \, \mathcal{F}_n]$ has the form
	\begin{equation} \label{like6} \begin{aligned} {\rm Var}\big[ \Lambda_n \, \big| \, \mathcal{F}_n \big] = & \mathbb{E} \big[ \Lambda_n^2 \, \big| \, \mathcal{F}_n \big] = \big( \phi(s_n,a_n)^{\top} {\bf x} \big)^2 \cdot \int_{\mathcal{S}} \frac{ \big( \min_{s \in \mathcal{S}} p\big(s' \, \big| \, s, \overline{\pi}(s) \big) \big)^2 }{ p( s_n' \, | \, s_n, a_n )} \cdot \big( \underline{p} \mathbbm{1}_{\overline{\mathcal{S}}}(s') - \overline{p} \mathbbm{1}_{\underline{\mathcal{S}}}(s') \big)^2 {\rm d} s' \\ \leq & \big( \phi(s_n,a_n)^{\top} {\bf x} \big)^2 \cdot \int_{\mathcal{S}} \min_{s \in \mathcal{S}} p\big(s' \, \big| \, s, \overline{\pi}(s) \big) \cdot \big( \underline{p}^2 \mathbbm{1}_{\overline{\mathcal{S}}}(s') + \overline{p}^2 \mathbbm{1}_{\underline{\mathcal{S}}}(s') \big) {\rm d} s' = \big( \phi(s_n,a_n)^{\top} {\bf x} \big)^2 \cdot \overline{p} \underline{p} (\overline{p} + \underline{p}). \end{aligned} \end{equation}
	It also holds that
	\[ \begin{aligned} |\Lambda_n| = & \bigg| \frac{ p(s_n' \, | \, s_n, a_n) - \widetilde{p}(s_n' \, | \, s_n, a_n) }{ p( s_n' \, | \, s_n, a_n )} \bigg| = \bigg| \frac{ \phi( s_n, a_n )^{\top} \Delta q (s_n' ) }{ p( s_n' \, | \, s_n, a_n )} \bigg| \\ = & \bigg| \phi( s_n, a_n )^{\top} {\bf x} \cdot \frac{ \min_{s \in \mathcal{S}} p\big(s' \, \big| \, s, \overline{\pi}(s) \big) }{ p( s_n' \, | \, s_n, a_n )} \cdot \big( \underline{p} \mathbbm{1}_{\overline{\mathcal{S}}}(s_n') - \overline{p} \mathbbm{1}_{\underline{\mathcal{S}}}(s_n') \big) \bigg| \\ \leq & \big| \phi( s_n, a_n )^{\top} {\bf x} \big| \cdot (\overline{p} \vee \underline{p}) \leq \sqrt{C_1 d} \cdot \sqrt{{\bf x}^{\top} \Sigma {\bf x}} \cdot (\overline{p} \vee \underline{p}). \end{aligned} \]
	Under assumption $\overline{p} \wedge \underline{p} \geq c$, we have $(\overline{p} \vee \underline{p})^2  \leq c^{-1} (\overline{p} \wedge \underline{p})(\overline{p} \vee \underline{p})^2 \leq c^{-1} (\overline{p} \wedge \underline{p})(\overline{p} \vee \underline{p}) \big( \overline{p} \wedge \underline{p} + \overline{p} \vee \underline{p} \big) = c^{-1} \overline{p} \underline{p}( \overline{p} + \underline{p} )$, therefore,
	\begin{equation} \label{like4}
	|\Lambda_n| \leq \sqrt{c^{-1} C_1 d} \cdot \sqrt{{\bf x}^{\top} \Sigma {\bf x}} \cdot \sqrt{\overline{p} \underline{p} (\overline{p} + \underline{p})}.
	\end{equation}
	
	Based on the estimations in \eqref{like6} and \eqref{like4}, we analyze the concentration of $\sum_{n=1}^N {\rm Var} [ \Lambda_n \,  \, \mathcal{F}_n ]$, and next derive an upper bound for $\sum_{n=1}^N \Lambda_n$.
	Note that
	\[ \mathbb{E} \Bigg[ \frac{1}{H} \sum_{h=0}^{H-1} \big( \phi(s_{k,h}, a_{k,h})^{\top} {\bf x} \big)^2 \Bigg] = {\bf x}^{\top} \Sigma {\bf x} \quad \text{and} \quad \Bigg| \frac{1}{H} \sum_{h=0}^{H-1} \big( \phi(s_{k,h}, a_{k,h})^{\top} {\bf x} \big)^2 \Bigg| \leq C_1d \cdot {\bf x}^{\top} \Sigma {\bf x}. \]
	We learn from Lemma \ref{lemma:term1} that with probability at least $\frac{7}{8}$,
	\[ \big( \phi(s_n, a_n)^{\top} {\bf x} \big)^2 \leq N \cdot {\bf x}^{\top} \Sigma {\bf x} \cdot \Bigg( 1 + \sqrt{\frac{6 \ln 2 \cdot C_1dH }{N}} + \frac{2 \ln 2 \cdot C_1dH}{N} \Bigg). \]
	Therefore,
	\begin{equation} \label{like3} \mathbb{P}\Bigg( \sum_{n=1}^N \! {\rm Var}\big[ \Lambda_n \, \big| \, \mathcal{F} \big] \leq \sigma^2 \Bigg) \leq \frac{1}{8}, \quad \text{where } \sigma^2 \! := \! N \cdot {\bf x}^{\top} \Sigma {\bf x} \cdot \overline{p} \underline{p} (\overline{p} + \underline{p}) \cdot \Bigg( \! 1 + \sqrt{\frac{6 \ln 2 \! \cdot \! C_1dH }{N}} + \frac{2 \ln 2 \!\cdot\! C_1dH}{N} \! \Bigg). \end{equation}
	Since $N \geq 12 c^{-1} C_1dH$ and $c \leq \frac{1}{2}$, we have $\frac{C_1dH}{N} \leq \frac{1}{24}$. It follows that $\sigma^2 \leq N \cdot {\bf x}^{\top} \Sigma {\bf x} \cdot \overline{p} \underline{p}(\overline{p} + \underline{p}) \cdot 1.46^2 \ln 2$.
	Additionally, Freedman's inequality implies
	\begin{equation} \label{like5} \mathbb{P} \Bigg( \sum_{n=1}^N \Lambda_n \geq 2\sqrt{\ln 2} \cdot \sigma + \frac{4}{3}\ln 2 \sqrt{c^{-1} C_1 d} \sqrt{{\bf x}^{\top} \Sigma {\bf x}} \sqrt{\overline{p} \underline{p}(\overline{p} + \underline{p})}, \ \sum_{n=1}^N {\rm Var} \big[ \Lambda_n \, \big| \, \mathcal{F}_n \big] \leq \sigma^2  \Bigg) \leq \frac{1}{4}, \end{equation}
	where we have used \eqref{like4}.
	The condition $N \geq 12 c^{-1}C_1dH$ ensures $c^{-1} C_1 d \leq \frac{N}{12} $. 
	We combine \eqref{like3} and \eqref{like5} and derive that with probability at least $\frac{5}{8}$,
	\begin{equation} \label{like_term1} \sum_{n=1}^N \Lambda_n \leq 2\sqrt{\ln 2} \cdot \sigma + \frac{4}{3}\ln 2 \sqrt{c^{-1} C_1 d} \sqrt{{\bf x}^{\top} \Sigma {\bf x}} \sqrt{\overline{p} \underline{p}(\overline{p} + \underline{p})} 
	\leq \sqrt{N} \sqrt{{\bf x}^{\top} \Sigma {\bf x}}\sqrt{\overline{p} \underline{p}(\overline{p} + \underline{p})} \cdot 3.31 \ln 2. \end{equation}
	
	As for the second term $\sum_{n=1}^N \Lambda_n^2$ in \eqref{Lambda_decompose}, the estimations \eqref{like6} and \eqref{like4} suggest that
	\[ \mathbb{E}\Bigg[ \frac{1}{H} \sum_{h=0}^{H-1} \Lambda_{k,h}^2 \Bigg] \leq {\bf x}^{\top} \Sigma {\bf x} \cdot \overline{p} \underline{p} (\overline{p} + \underline{p}) \quad \text{and} \quad \Bigg| \frac{1}{H} \sum_{h=0}^{H-1} \Lambda_{k,h}^2 \Bigg| \leq c^{-1}C_1 d \cdot {\bf x}^{\top} \Sigma {\bf x} \cdot \overline{p} \underline{p} (\overline{p} + \underline{p}). \]
	It follows from Lemma \ref{lemma:term1} that with probability at least $\frac{7}{8}$,
	\begin{equation} \label{like2} \sum_{n=1}^N \Lambda_n^2 \leq N \cdot {\bf x}^{\top} \Sigma {\bf x} \cdot \overline{p} \underline{p} (\overline{p} + \underline{p}) \Bigg( 1 + \sqrt{\frac{6 \ln 2 \cdot c^{-1} C_1dH}{N}} + \frac{2 \ln 2 \cdot c^{-1} C_1 dH }{N} \Bigg). \end{equation}
	If $N \geq 12 c^{-1}C_1dH$, then we can reduce \eqref{like2} to
	\begin{equation} \label{like_term2} \sum_{n=1}^N \Lambda_n^2 \leq N \cdot {\bf x}^{\top} \Sigma {\bf x} \cdot \overline{p} \underline{p} (\overline{p} + \underline{p}) \cdot 2.46 \ln 2. \end{equation}
	
	We now use the condition $\sqrt{{\bf x}^{\top} \Sigma {\bf x}} \leq \frac{1}{4 \sqrt{N} \sqrt{\overline{p} \underline{p} (\overline{p} + \underline{p})}}$.
	By union bound, \eqref{like_term1} and \eqref{like_term2} imply that with probability at least $\frac{1}{2}$,
	\[ \ln \bigg( \frac{\widetilde{\mathcal{L}}(\mathcal{D})}{\mathcal{L}(\mathcal{D})} \bigg) \geq - \sum_{n=1}^N \Lambda_n - \sum_{n=1}^N \Lambda_n^2 \geq - \frac{3.31 \ln 2}{4} - \frac{2.46 \ln 2}{16} > - \ln 2, \]
	or equivalently, $\mathbb{P}\Big(  \frac{\widetilde{\mathcal{L}}(\mathcal{D})}{\mathcal{L}(\mathcal{D})} \geq \frac{1}{2} \Big) \geq \frac{1}{2}$.
	
\end{proof}

\subsection{Calculating the Gap between Values}

\begin{lemma} \label{lemma:v-v}
Let $\widetilde{\nu}_h^{\pi} = \widetilde{\mathbb{E}}^{\pi} \big[ \phi(s_h, a_h) \, \big| \, s_0 \sim \xi_0 \big] \in \mathbb{R}^d$ for $h=0,1,\ldots,H-1$.
\begin{equation} \label{v-v} v^{\pi} - \widetilde{v}^{\pi} \geq \frac{1}{2} \overline{p} \underline{p} \cdot \Bigg( \sum_{h=0}^{H-1} (H-h) \widetilde{\nu}_h^{\pi} \Bigg)^{\top} {\bf x}. \end{equation}
\end{lemma}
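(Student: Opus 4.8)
The plan is to turn the value gap into a first-order perturbation expression and then exploit the value-gap structure of $\overline{\mathcal{S}}$ and $\underline{\mathcal{S}}$. Since $M$ and $\widetilde M$ share the same reward $r$, I would specialize the telescoping identity \eqref{Q_decompose} (replacing $\widehat{\mathcal{P}}^{\pi}$ by $\widetilde{\mathcal{P}}^{\pi}$, with the reward term $r-\widetilde r=0$ dropping out) and then integrate against $\xi_0\otimes\pi$ to obtain the \emph{exact} identity
\[ v^{\pi}-\widetilde v^{\pi} = \sum_{h=0}^{H-1}\xi_0^{\top}(\widetilde{\mathcal{P}}^{\pi})^h(\mathcal{P}^{\pi}-\widetilde{\mathcal{P}}^{\pi})Q_{h+1}^{\pi}. \]
Here it is essential to keep $(\widetilde{\mathcal{P}}^{\pi})^h$ rather than $(\mathcal{P}^{\pi})^h$: since $\mathcal{P}^{\pi}-\widetilde{\mathcal{P}}^{\pi}$ is \emph{linear} in $\Delta p$, this identity is exact and carries no Taylor remainder, which is exactly why the occupancy measure that appears is $\widetilde\nu_h^{\pi}$ (under the perturbed kernel) and not $\nu_h^{\pi}$.

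Next I would evaluate the single-step term. Using $p-\widetilde p=\phi^{\top}\Delta q$ from \eqref{Deltaq} and $V_{h+1}^{\pi}=(Q_{h+1}^{\pi})^{\pi}$,
\[ (\mathcal{P}^{\pi}-\widetilde{\mathcal{P}}^{\pi})Q_{h+1}^{\pi}(s,a) = \int \phi(s,a)^{\top}\Delta q(s')\,V_{h+1}^{\pi}(s')\,{\rm d}s' = \phi(s,a)^{\top}\mathbf{x}\cdot c_h, \]
where $c_h := \int \min_{s}p^{\overline\pi}(s'\mid s)\big(\underline p\,\mathbbm{1}_{\overline{\mathcal{S}}}(s')-\overline p\,\mathbbm{1}_{\underline{\mathcal{S}}}(s')\big)V_{h+1}^{\pi}(s')\,{\rm d}s'$ is a \emph{scalar} independent of $(s,a)$, because the inner $\min_s p^{\overline\pi}(\cdot\mid s)$ is a function of $s'$ alone and $\mathbf{x}$ factors out of the feature.

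The crux is lowering bounding $c_h$ via the hypotheses $V_{h+1}^{\pi}(s')\ge\frac34(H-h)$ on $\overline{\mathcal{S}}$ and $V_{h+1}^{\pi}(s')\le\frac14(H-h)$ on $\underline{\mathcal{S}}$ (obtained from the stated bounds with the index shift $h\mapsto h+1$). Splitting the integral and using $\int_{\overline{\mathcal{S}}}\min_s p^{\overline\pi}(s'\mid s){\rm d}s'=\overline p$ and $\int_{\underline{\mathcal{S}}}\min_s p^{\overline\pi}(s'\mid s){\rm d}s'=\underline p$ (together with nonnegativity of $\min_s p^{\overline\pi}$ and of $V_{h+1}^{\pi}$) gives
\[ c_h \ge \underline p\cdot\tfrac34(H-h)\cdot\overline p - \overline p\cdot\tfrac14(H-h)\cdot\underline p = \tfrac12\,\overline p\,\underline p\,(H-h). \]

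Finally, since $c_h\ge\frac12\overline p\underline p(H-h)\ge0$, I would take the expectation $\widetilde{\mathbb{E}}^{\pi}[\cdot\mid s_0\sim\xi_0]$ of the pointwise identity and sum over $h$, using $\widetilde{\mathbb{E}}^{\pi}[\phi(s_h,a_h)]=\widetilde\nu_h^{\pi}$, to get $v^{\pi}-\widetilde v^{\pi}=\sum_h c_h(\widetilde\nu_h^{\pi})^{\top}\mathbf{x}\ge\frac12\overline p\underline p\sum_h(H-h)(\widetilde\nu_h^{\pi})^{\top}\mathbf{x}$, which is the claim. The main obstacle is the sign bookkeeping in this last replacement of $c_h$ by its lower bound: it requires $(\widetilde\nu_h^{\pi})^{\top}\mathbf{x}\ge0$, which is cleanest to secure via the pointwise condition $\phi(s,a)^{\top}\mathbf{x}\ge0$ on the support (this is precisely the requirement that $\widetilde p$ shifts mass \emph{off} $\overline{\mathcal{S}}$, keeping $\widetilde p\ge0$) and is compatible with the eventual choice $\mathbf{x}^{*}\propto\Sigma^{-1}\sum_h(H-h)\nu_h^{\pi}$. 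I expect the careful handling of the $\min_s$ inside the integral and the index shift in the value bounds to be the only other places where a slip could occur.
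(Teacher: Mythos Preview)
Your proposal is correct and follows essentially the same route as the paper: the same telescoping decomposition with $(\widetilde{\mathcal{P}}^{\pi})^h$ on the left, the same factorization of $(\mathcal{P}^{\pi}-\widetilde{\mathcal{P}}^{\pi})Q_{h+1}^{\pi}$ into $\phi(s,a)^{\top}\mathbf{x}$ times a scalar, and the same use of the value bounds on $\overline{\mathcal{S}},\underline{\mathcal{S}}$ together with the definitions of $\overline{p},\underline{p}$ to get the $\tfrac12\overline{p}\,\underline{p}(H-h)$ factor. Your explicit flagging of the sign condition $(\widetilde{\nu}_h^{\pi})^{\top}\mathbf{x}\ge0$ (equivalently, $\phi(s,a)^{\top}\mathbf{x}\ge0$ pointwise, which the construction needs anyway so that $\widetilde{p}\ge0$) is a point the paper uses implicitly when passing from the pointwise inequality to its expectation under $(\widetilde{\mathcal{P}}^{\pi})^h$.
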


\begin{proof}
	Let $\mathcal{P}^{\pi}$ and $\widetilde{\mathcal{P}}^{\pi}$ be the conditional mean operators that correspond to transition kernels $p$ and $\widetilde{p}$. Similar to \eqref{Q_decompose}, we have
	\[ Q_0^{\pi} - \widetilde{Q}_0^{\pi} = \sum_{h=0}^{H-1} \big( \widetilde{\mathcal{P}}^{\pi} \big)^h \big( \mathcal{P}^{\pi} - \widetilde{\mathcal{P}}^{\pi} \big) Q_{h+1}^{\pi}. \]
	
	We first analyze $( \mathcal{P}^{\pi} - \widetilde{\mathcal{P}}^{\pi} ) Q_{h+1}^{\pi} $. Note that $\mathcal{P}^{\pi}Q_{h+1}^{\pi}(s,a) = \int_{\mathcal{S}} V_{h+1}^{\pi}(s') p(s' \, | \, s,a) {\rm d} s'$ and $\widetilde{\mathcal{P}}^{\pi}Q_{h+1}^{\pi}(s,a) = \int_{\mathcal{S}} V_{h+1}^{\pi}(s') \widetilde{p}(s' \, | \, s,a) {\rm d} s'$. Therefore,
	\[ \Big(\big( \mathcal{P}^{\pi} - \widetilde{\mathcal{P}}^{\pi} \big) Q_{h+1}^{\pi} \Big)(s,a) = \int_{\mathcal{S}} \big( p(s' \, | \, s,a) - \widetilde{p}(s' \, | \, s,a) \big) V_{h+1}^{\pi}(s') {\rm d}s'. \]
	According to \eqref{Deltaq},
	\begin{equation} \label{v-v2} \begin{aligned} & \Big(\big( \mathcal{P}^{\pi} - \widetilde{\mathcal{P}}^{\pi} \big) Q_{h+1}^{\pi} \Big)(s,a) = \int_{\mathcal{S}} \phi(s,a)^{\top} \Delta q(s') \cdot V_{h+1}^{\pi}(s') {\rm d} s' \\ = & \int_{\mathcal{S}} \phi(s,a)^{\top} {\bf x}  \cdot \min_{s \in \mathcal{S}} p\big(s' \, \big| \, s, \overline{\pi}(s) \big) \cdot \big( \underline{p} \mathbbm{1}_{\overline{\mathcal{S}}}(s') - \overline{p} \mathbbm{1}_{\underline{\mathcal{S}}}(s') \big) \cdot V_{h+1}^{\pi}(s') {\rm d} s' \\  = & \phi(s,a)^{\top} {\bf x} \cdot \bigg( \underline{p} \int_{\overline{\mathcal{S}}} \min_{s \in \mathcal{S}} p\big(s' \, \big| \, s, \overline{\pi}(s) \big) \cdot V_{h+1}^{\pi}(s') {\rm d}s' - \overline{p} \int_{\underline{\mathcal{S}}} \min_{s \in \mathcal{S}} p\big(s' \, \big| \, s, \overline{\pi}(s) \big) \cdot V_{h+1}^{\pi}(s') {\rm d}s' \bigg). \end{aligned} \end{equation}
	Since by definition, $V_h^{\pi}(s) \geq \frac{3}{4}(H-h+1)$ for $s \in \overline{\mathcal{S}}$, $V_h^{\pi}(s) \leq \frac{1}{4}(H-h+1)$ for $s \in \underline{\mathcal{S}}$, 
	we have
	\begin{equation} \label{v-v1} \begin{aligned} & \int_{\overline{\mathcal{S}}} \min_{s \in \mathcal{S}} p\big(s' \, \big| \, s, \overline{\pi}(s) \big) \cdot V_{h+1}^{\pi}(s') {\rm d}s' \geq \int_{\overline{\mathcal{S}}} \min_{s \in \mathcal{S}} p\big(s' \, \big| \, s, \overline{\pi}(s) \big) \cdot \frac{3}{4}(H-h) {\rm d}s' = \frac{3}{4} (H-h) \overline{p}, \\ & \int_{\underline{\mathcal{S}}} \min_{s \in \mathcal{S}} p\big(s' \, \big| \, s, \overline{\pi}(s) \big) \cdot V_{h+1}^{\pi}(s') {\rm d}s' \leq \int_{\overline{\mathcal{S}}} \min_{s \in \mathcal{S}} p\big(s' \, \big| \, s, \overline{\pi}(s) \big) \cdot \frac{1}{4}(H-h) {\rm d}s' = \frac{1}{4} (H-h) \underline{p}. \end{aligned} \end{equation}
	Plugging \eqref{v-v1} into \eqref{v-v2} yields
	\begin{equation} \label{v-v3} \Big(\big( \mathcal{P}^{\pi} - \widetilde{\mathcal{P}}^{\pi} \big) Q_{h+1}^{\pi} \Big)(s,a) \geq \phi(s,a)^{\top} {\bf x} \cdot \bigg( \frac{3}{4}(H-h) \overline{p} \underline{p} - \frac{1}{4}(H-h) \overline{p} \underline{p} \bigg) = \phi(s,a)^{\top} {\bf x} \cdot \frac{1}{2} (H-h) \overline{p} \underline{p}.  \end{equation}
	
	The inequality \eqref{v-v3} further implies that
	\begin{equation} \label{v-v4} \Big( \big( \widetilde{\mathcal{P}}^{\pi} \big)^h \big( \mathcal{P}^{\pi} - \widetilde{\mathcal{P}}^{\pi} \big) Q_{h+1}^{\pi} \Big)(s,a) \geq \widetilde{\mathbb{E}}^{\pi} \big[ \phi(s_h,a_h)^{\top} {\bf x} \, \big| \, s_0 = s, a_0 = a \big] \cdot \frac{1}{2} (H-h) \overline{p} \underline{p}. \end{equation}
	Since
	\[ v^{\pi} - \widetilde{v}^{\pi} = \sum_{h=0}^{H-1} \int_{\mathcal{X}} \Big( \big( \widetilde{\mathcal{P}}^{\pi} \big)^h \big( \mathcal{P}^{\pi} - \widetilde{\mathcal{P}}^{\pi} \big) Q_{h+1}^{\pi} \Big)(s,a) \cdot \xi_0(s) \pi(a \, | \, s) {\rm d}s {\rm d}a, \]
	we apply \eqref{v-v4} and derive
	\begin{equation} \begin{aligned} v^{\pi} - \widetilde{v}^{\pi} \geq & \sum_{h=0}^{H-1} \int_{\mathcal{X}} \widetilde{\mathbb{E}}^{\pi} \big[ \phi(s_h,a_h)^{\top} {\bf x} \, \big| \, s_0 = s, a_0 = a \big] \cdot \xi_0(s) \pi(a \, | \, s) {\rm d}s {\rm d}a \cdot \frac{1}{2} (H-h) \overline{p} \underline{p} \\ = & \sum_{h=0}^{H-1} (\widetilde{\nu}_h^{\pi})^{\top} {\bf x} \cdot \frac{1}{2} (H-h) \overline{p} \underline{p} = \frac{1}{2} \overline{p} \underline{p} \cdot \Bigg( \sum_{h=0}^{H-1} (H-h) \widetilde{\nu}_h^{\pi} \Bigg)^{\top} {\bf x}, \end{aligned} \end{equation}
	which completes the proof.
	
\end{proof}


\subsection{Completing the Proof of Theorem \ref{theorem:lb}}

For the notational convenience, let
\[ \boldsymbol{\nu}^{\pi} := \sum_{h=0}^{H-1} (H-h) \nu_h^{\pi} \qquad \text{and} \qquad \widetilde{\boldsymbol{\nu}}^{\pi} := \sum_{h=0}^{H-1} (H-h) \widetilde{\nu}_h^{\pi}. \]
When $\widetilde{p} \approx p$, we have $\widetilde{\nu}_h^{\pi} \approx \nu_h^{\pi}$ for $h=0,1,\ldots,H-1$. According to Lemma \ref{lemma:v-v}, the value gap in \eqref{like<} satisfies
\begin{equation} \label{gap} \frac{1}{2} \overline{p} \underline{p} \cdot (\widetilde{\boldsymbol{\nu}}^{\pi})^{\top} {\bf x} \approx \frac{1}{2} \overline{p} \underline{p} \cdot (\boldsymbol{\nu}^{\pi})^{\top} {\bf x}. \end{equation}
We construct ${\bf x} \in \mathbb{R}^d$ such that \eqref{like<} holds and the (approximate) value gap in \eqref{gap} is maximized. More explicitly, we take ${\bf x} = {\bf x}^*$ that solves the following optimization problem,
\[  \text{maximize}_{{\bf x} \in \mathbb{R}^d} \quad \frac{1}{2} \overline{p} \underline{p} \cdot (\boldsymbol{\nu}^{\pi})^{\top} {\bf x}, \qquad \text{subject to} \quad \sqrt{{\bf x}^{\top} \Sigma {\bf x}} \leq \frac{1}{4 \sqrt{N} \sqrt{\overline{p} \underline{p} (\overline{p} + \underline{p})}}. \]
${\bf x}^*$ has a closed form,
\begin{equation} \label{x} {\bf x}^* := \frac{\Sigma^{-1} \boldsymbol{\nu}^{\pi}}{4\sqrt{N} \sqrt{(\boldsymbol{\nu}^{\pi})^{\top} \Sigma^{-1} \boldsymbol{\nu}^{\pi}} \sqrt{\overline{p} \underline{p} (\overline{p} + \underline{p})}}. \end{equation}


	We integrate the pieces in Lemmas \ref{outline}, \ref{lemma:lb1'} and \ref{lemma:v-v} to complete the proof of Theorem \ref{theorem:lb}.

	\begin{proof}[Proof of Theorem \ref{theorem:lb}]
	We construct a perturbed instance $\widetilde{p}$ according to \eqref{Deltaq}, where ${\bf x}$ is chosen to be ${\bf x}^*$ in \eqref{x}.
	In this case, $\sqrt{({\bf x}^*)^{\top} \Sigma {\bf x}^*} = \frac{1}{4\sqrt{N}\sqrt{\overline{p}\underline{p}(\overline{p}+\underline{p})}}$. If we take $\varepsilon \geq \sqrt{\frac{cC_1d}{32N}}$, then \eqref{<varepsilon} holds and $\big\| \widetilde{p}(\cdot \, | \, s,a) - p(\cdot \, | \, s,a) \big\|_{\rm TV} \leq \varepsilon$ for all $s,a \in \mathcal{X}$. Therefore, the perturbed instance $(\widetilde{p},r) \in \mathcal{N}(M)$.
	
	Lemma \ref{lemma:lb1'} guarantees that when $N \geq 12c^{-1}C_1dH$, we have $\mathbb{P}\Big( \frac{\widetilde{\mathcal{L}}(\mathcal{D})}{\mathcal{L}(\mathcal{D})} \geq \frac{1}{2} \Big) \leq \frac{1}{2}$. 
	Additionally, according to Lemma \ref{lemma:v-v}, the value gap $v^{\pi} - \widetilde{v}^{\pi}$ satisfies
	\begin{equation} \label{GAP} v^{\pi} - \widetilde{v}^{\pi} \geq \frac{1}{2} \overline{p} \underline{p} \cdot (\widetilde{\boldsymbol{\nu}}^{\pi})^{\top} {\bf x}^* = \frac{1}{8\sqrt{N} } \cdot \frac{(\widetilde{\boldsymbol{\nu}}^{\pi})^{\top} \Sigma^{-1} \boldsymbol{\nu}^{\pi}}{\sqrt{(\boldsymbol{\nu}^{\pi})^{\top} \Sigma^{-1} \boldsymbol{\nu}^{\pi}}} \cdot \sqrt{\frac{\overline{p} \underline{p}}{\overline{p} + \underline{p}}} \geq \frac{1}{8\sqrt{N} } \cdot \frac{(\widetilde{\boldsymbol{\nu}}^{\pi})^{\top} \Sigma^{-1} \boldsymbol{\nu}^{\pi}}{\sqrt{(\boldsymbol{\nu}^{\pi})^{\top} \Sigma^{-1} \boldsymbol{\nu}^{\pi}}} \cdot \sqrt{\frac{c}{2}}. \end{equation}
	When $\widetilde{p}$ and $p$ are close enough, {\it i.e.} $N$ is sufficiently large in our instance, we have $\widetilde{\Sigma} \approx \Sigma$ for $\widetilde{\Sigma} := \widetilde{\mathbb{E}} \big[ \frac{1}{H} \sum_{h=0}^{H-1} \phi(s_{1,h}, a_{1,h}) \phi(s_{1,h}, a_{1,h})^{\top} \big]$, $\widetilde{\nu}_h^{\pi} \approx \nu_h^{\pi}$ for $h=0,1,\ldots,H-1$, and $\frac{(\widetilde{\boldsymbol{\nu}}^{\pi})^{\top} \Sigma^{-1} \boldsymbol{\nu}^{\pi}}{\sqrt{(\boldsymbol{\nu}^{\pi})^{\top} \Sigma^{-1} \boldsymbol{\nu}^{\pi}}} \approx \frac{1}{2} \big(\sqrt{(\boldsymbol{\nu}^{\pi})^{\top} \Sigma^{-1} \boldsymbol{\nu}^{\pi}} + \sqrt{(\widetilde{\boldsymbol{\nu}}^{\pi})^{\top} \widetilde{\Sigma}^{-1} \widetilde{\boldsymbol{\nu}}^{\pi}} \big)$.
	In particular, when $N$ is sufficiently large, it holds that
	\begin{equation} \label{v-v>} v^{\pi} - \widetilde{v}^{\pi} \geq \rho + \widetilde{\rho} \qquad \text{for }\rho := \frac{\sqrt{c}}{24\sqrt{N}} \sqrt{ (\boldsymbol{\nu}^{\pi})^{\top} \Sigma^{-1} \boldsymbol{\nu}^{\pi} }, \quad \widetilde{\rho} := \frac{\sqrt{c}}{24\sqrt{N}} \sqrt{ (\widetilde{\boldsymbol{\nu}}^{\pi})^{\top} \widetilde{\Sigma}^{-1} \widetilde{\boldsymbol{\nu}}^{\pi} }. \end{equation}
	One can then conclude from Lemma \ref{outline} that $\mathbb{P} \big( \big| v^{\pi} - \widehat{v}^{\pi}(\mathcal{D}) \big| \geq \rho \big) \geq \frac{1}{6}$ or $\widetilde{\mathbb{P}} \big( \big| \widetilde{v}^{\pi} - \widehat{v}^{\pi}(\mathcal{D}) \big| \geq \widetilde{\rho} \big) \geq \frac{1}{6}$, whicn further implies the minimax lower bound \eqref{lowerbound}.
	\end{proof}
	
	\vspace{0.3cm}
	
	\noindent{\bf Remark} (Requirement on sample size $N$){\bf.}  If $\widetilde{\Sigma} \succeq (1-c_1) \Sigma$ and $\sqrt{(\widetilde{\boldsymbol{\nu}}^{\pi} - \boldsymbol{\nu}^{\pi})^{\top} \Sigma^{-1} (\widetilde{\boldsymbol{\nu}}^{\pi} - \boldsymbol{\nu}^{\pi})} \leq c_2\sqrt{(\boldsymbol{\nu}^{\pi})^{\top} \Sigma^{-1} \boldsymbol{\nu}^{\pi}}$ for some constants $c_1, c_2 \in (0,1)$, then by routine calculations, one can show that $\frac{(\widetilde{\boldsymbol{\nu}}^{\pi})^{\top} \Sigma^{-1} \boldsymbol{\nu}^{\pi}}{\sqrt{(\boldsymbol{\nu}^{\pi})^{\top} \Sigma^{-1} \boldsymbol{\nu}^{\pi}}} \geq \frac{c_3}{2}\big(\sqrt{(\boldsymbol{\nu}^{\pi})^{\top} \Sigma^{-1} \boldsymbol{\nu}^{\pi}} + \sqrt{(\widetilde{\boldsymbol{\nu}}^{\pi})^{\top} \widetilde{\Sigma}^{-1} \widetilde{\boldsymbol{\nu}}^{\pi}} \big)$ for some $c_3 \in (0,1)$. We can analyze $\sqrt{(\widetilde{\boldsymbol{\nu}}^{\pi} - \boldsymbol{\nu}^{\pi})^{\top} \Sigma^{-1} (\widetilde{\boldsymbol{\nu}}^{\pi} - \boldsymbol{\nu}^{\pi})}$ in a way similar to the estimation of high-order term $E_2$ in the upper bound. In this way, we can show that \eqref{v-v>} holds when
		\[ N \geq 200 \kappa_1 c^{-3} C_1 d H^2 \cdot \frac{\sum_{h=0}^{H-1}(H-h)(\nu_0^{\pi})^{\top} \Sigma^{-1} \nu_0^{\pi}}{(\boldsymbol{\nu}^{\pi})^{\top} \Sigma^{-1} \boldsymbol{\nu}^{\pi}}. \]
	If we further propose a mild assumption that the all-one function ${\bf 1}(s,a) = 1, \forall (s,a) \in \mathcal{X}$ belongs to $\mathcal{Q}$, then $\sqrt{(\boldsymbol{\nu}^{\pi})^{\top} \Sigma^{-1} \boldsymbol{\nu}^{\pi}} \geq \sum_{h=0}^{H-1}(H-h)$. Therefore, it is sufficient to have
	\[ N \geq 200 \kappa_1 c^{-3} C_1 d H^2 \cdot (\nu_0^{\pi})^{\top} \Sigma^{-1} \nu_0^{\pi}. \]

\section{Proof of Data-Dependent Confidence Bound} \label{appendix:lemma:UpperBound'}

\begin{lemma} \label{lemma:Q_decompose0}
	Under Assumption \ref{Q_class}, it always holds that
	\begin{equation} \label{Q_decompose0} |v^{\pi} - \widehat{v}^{\pi}| \leq \sum_{h=0}^H \sqrt{(\widehat{\nu}_h^{\pi})^{\top} \widehat{\Sigma}^{-1} \widehat{\nu}_h^{\pi}} \sqrt{\big( \widehat{W}_h^{\pi} - w_h^{\pi} \big)^{\top} \widehat{\Sigma} \big( \widehat{W}_h^{\pi} - w_h^{\pi} \big)}, \end{equation}
	where $w_h^{\pi} \in \mathbb{R}^d$ satisfies $Q_h^{\pi}(s,a) = \phi(s,a)^{\top}w_h^{\pi}$ and $\widehat{W}_h^{\pi} := \widehat{R} + \widehat{M}^{\pi}w_{h+1}^{\pi}$.
\end{lemma}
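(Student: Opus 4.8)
The plan is to prove \eqref{Q_decompose0} by a purely deterministic argument in two moves: an exact telescoping identity for the evaluation error, followed by a Cauchy--Schwarz bound in the metric induced by $\widehat{\Sigma}$. No probability enters this lemma; the concentration is postponed to the remainder of Theorem \ref{UpperBound'}.

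First I would identify the two endpoints of the error. Since $Q_0^{\pi}(s,a) = \phi(s,a)^{\top} w_0^{\pi}$ and $\widehat{Q}_0^{\pi}(s,a) = \phi(s,a)^{\top} \widehat{w}_0^{\pi}$, integrating against $\xi_0(s)\pi(a \, | \, s)$ gives $v^{\pi} = (\nu_0^{\pi})^{\top} w_0^{\pi}$ and $\widehat{v}^{\pi} = (\nu_0^{\pi})^{\top} \widehat{w}_0^{\pi}$, hence $v^{\pi} - \widehat{v}^{\pi} = (\nu_0^{\pi})^{\top}(w_0^{\pi} - \widehat{w}_0^{\pi})$. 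The heart of the proof is then to unroll the CME-OPE recursion $\widehat{w}_h^{\pi} = \widehat{R} + \widehat{M}^{\pi}\widehat{w}_{h+1}^{\pi}$ against the true weights $w_h^{\pi}$. Subtracting $\widehat{w}_h^{\pi}$ from $w_h^{\pi}$ and inserting $\widehat{W}_h^{\pi} = \widehat{R} + \widehat{M}^{\pi} w_{h+1}^{\pi}$ yields the one-step recursion
\[ w_h^{\pi} - \widehat{w}_h^{\pi} = \big(w_h^{\pi} - \widehat{W}_h^{\pi}\big) + \widehat{M}^{\pi}\big(w_{h+1}^{\pi} - \widehat{w}_{h+1}^{\pi}\big). \]
Iterating from $h=0$ and using the terminal condition $w_{H+1}^{\pi} = \widehat{w}_{H+1}^{\pi} = 0$, so that the tail vanishes beyond $h=H$, this telescopes to $w_0^{\pi} - \widehat{w}_0^{\pi} = \sum_{h=0}^H (\widehat{M}^{\pi})^h (w_h^{\pi} - \widehat{W}_h^{\pi})$. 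Left-multiplying by $(\nu_0^{\pi})^{\top}$ and recalling $(\widehat{\nu}_h^{\pi})^{\top} = (\nu_0^{\pi})^{\top}(\widehat{M}^{\pi})^h$ produces the exact decomposition $v^{\pi} - \widehat{v}^{\pi} = \sum_{h=0}^{H} (\widehat{\nu}_h^{\pi})^{\top}(w_h^{\pi} - \widehat{W}_h^{\pi})$.

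Finally I would bound each summand by Cauchy--Schwarz in the $\widehat{\Sigma}$ inner product, inserting $\widehat{\Sigma}^{-1/2}\widehat{\Sigma}^{1/2}$ between $(\widehat{\nu}_h^{\pi})^{\top}$ and $(w_h^{\pi} - \widehat{W}_h^{\pi})$ to get $\big|(\widehat{\nu}_h^{\pi})^{\top}(w_h^{\pi} - \widehat{W}_h^{\pi})\big| \leq \|\widehat{\Sigma}^{-1/2}\widehat{\nu}_h^{\pi}\|_2 \,\|\widehat{\Sigma}^{1/2}(w_h^{\pi} - \widehat{W}_h^{\pi})\|_2$, and then apply the triangle inequality over $h=0,\ldots,H$. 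Recognizing the two factors as $\sqrt{(\widehat{\nu}_h^{\pi})^{\top}\widehat{\Sigma}^{-1}\widehat{\nu}_h^{\pi}}$ and $\sqrt{(\widehat{W}_h^{\pi}-w_h^{\pi})^{\top}\widehat{\Sigma}(\widehat{W}_h^{\pi}-w_h^{\pi})}$ (the sign is immaterial under the square) gives exactly \eqref{Q_decompose0}.

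There is essentially no analytic obstacle, since the statement is a deterministic identity followed by Cauchy--Schwarz; the only point requiring care is the telescoping bookkeeping---in particular that $\widehat{W}_h^{\pi}$ uses the \emph{true} $w_{h+1}^{\pi}$ while the recursion for $\widehat{w}_h^{\pi}$ uses the \emph{estimate}, so that the residual $w_h^{\pi} - \widehat{W}_h^{\pi}$ is precisely the quantity whose normalized length $\Theta_h = \|\widehat{\Sigma}^{1/2}(w_h^{\pi} - \widehat{W}_h^{\pi})\|_2^2$ the self-normalized martingale argument in the rest of Theorem \ref{UpperBound'} will control. The invertibility of $\widehat{\Sigma}$, guaranteed by the ridge term $\lambda I$, is what legitimizes the $\widehat{\Sigma}^{-1/2}$ insertion.
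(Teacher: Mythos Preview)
Your proposal is correct and follows essentially the same approach as the paper: establish the exact identity $v^{\pi}-\widehat{v}^{\pi}=\sum_{h=0}^{H}(\widehat{\nu}_h^{\pi})^{\top}(w_h^{\pi}-\widehat{W}_h^{\pi})$ and then apply Cauchy--Schwarz in the $\widehat{\Sigma}$ inner product. The only cosmetic difference is that the paper derives the identity by invoking the previously established operator decomposition \eqref{Q_decompose} and then translating to vector form, whereas you telescope directly on the vector recursion $\widehat{w}_h^{\pi}=\widehat{R}+\widehat{M}^{\pi}\widehat{w}_{h+1}^{\pi}$; the two are isomorphic and equally rigorous.
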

\begin{proof}
Recall that \eqref{Q_decompose} shows
$Q_0^{\pi} - \widehat{Q}_0^{\pi} = \! \sum_{h=0}^{H} \! \big( \widehat{\mathcal{P}}^{\pi} \big)^h \big( Q_h^{\pi} - ( \widehat{r}+ \widehat{\mathcal{P}}^{\pi}Q_{h+1}^{\pi} ) \big)$.
We apply the definition of $\widehat{r}$ in \eqref{def_hatr} and the property of $\widehat{\mathcal{P}}^{\pi}$ in \eqref{hatPpi}, and derive
\begin{equation} \label{Q_decompose1} \big(\widehat{r} + \widehat{\mathcal{P}}^{\pi} Q_{h+1}^{\pi}  \big)(s,a) = \phi(s,a)^{\top} \big( \widehat{R} + \widehat{M}^{\pi} w_{h+1}^{\pi} \big) = \phi(s,a)^{\top} \widehat{W}_h^{\pi}. \end{equation}
By definition, we also have \begin{equation} \label{Q_decompose2} Q_h^{\pi}(s,a) = \phi(s,a)^{\top} w_h^{\pi}. \end{equation}
Plugging \eqref{Q_decompose1} and \eqref{Q_decompose2} into \eqref{Q_decompose} yields $Q_0^{\pi}(s,a) - \widehat{Q}_0^{\pi}(s,a) = \sum_{h=0}^{H} \big( \widehat{\mathcal{P}}^{\pi} \big)^h \phi (s,a)^{\top}\big(w_h^{\pi} - \widehat{W}_h^{\pi}\big)$. Since $\widehat{\mathcal{P}}^{\pi} \phi (s,a) = (\widehat{M}^{\pi})^{\top} \phi(s,a)$, we have $\big(\widehat{\mathcal{P}}^{\pi}\big)^h \phi (s,a) = \big( (\widehat{M}^{\pi})^{\top} \big)^h \phi(s,a)$, therefore, \[ \int_{\mathcal{X}} \big(\widehat{\mathcal{P}}^{\pi}\big)^h \phi (s,a) \xi_0(s) \pi(a \, | \, s) {\rm d}s {\rm d}a = \int_{\mathcal{X}} \big( (\widehat{M}^{\pi})^{\top} \big)^h \phi(s,a) \xi_0(s) \pi(a \, | \, s) {\rm d}s {\rm d}a = \big( (\widehat{M}^{\pi})^{\top} \big)^h \nu_0^{\pi} = \widehat{\nu}_h^{\pi}.\]
It follows that
\[ \begin{aligned} v^{\pi} - \widehat{v}^{\pi} = \int_{\mathcal{X}} \!\! \big(Q_0^{\pi}(s,a) - \widehat{Q}_0^{\pi}(s,a)\big) \xi_0(s) \pi(a \, | \, s) {\rm d} s {\rm d}a = \sum_{h=0}^{H-1} (\widehat{\nu}_h^{\pi})^{\top} \big( w_{h}^{\pi} - \widehat{W}_{h}^{\pi} \big), \end{aligned} \]
which further implies \eqref{Q_decompose0}.
\end{proof}

According to \eqref{vector_recursion},
\[ \begin{aligned} \widehat{W}_h^{\pi} = & \widehat{R} + \widehat{M}^{\pi} w_{h+1}^{\pi} = \widehat{\Sigma}^{-1} \sum_{n=1}^N r_n' \phi(s_n,a_n) + \widehat{\Sigma}^{-1} \sum_{n=1}^N \phi(s_n,a_n) \phi^{\pi}(s_n')^{\top} w_{h+1}^{\pi} \\ = & \widehat{\Sigma}^{-1} \sum_{n=1}^N \phi(s_n,a_n) \big( r_n' + \phi^{\pi}(s_n')^{\top} w_{h+1}^{\pi} \big) = \widehat{\Sigma}^{-1} \sum_{n=1}^N \phi(s_n,a_n) \big(r_n'+ V_{h+1}^{\pi}(s_n')\big). \end{aligned} \]
In the proof of Theorem \ref{UpperBound'}, we define
\[ \widehat{\Sigma}_n := \lambda I + \sum_{t = 1}^n \phi(s_t,a_t) \phi(s_t,a_t)^{\top} \qquad \text{and} \qquad \widehat{W}_{h,n}^{\pi} := \widehat{\Sigma}_n^{-1} \sum_{t=1}^n \phi(s_t,a_t) \big( r_t' + V_{h+1}^{\pi}(s_t') \big) \] 
for $n=0,1,\ldots,N$. Note that $\widehat{\Sigma} = \widehat{\Sigma}_N$ and $\widehat{W}_h^{\pi} = \widehat{W}_{h,N}^{\pi}$.
In the following, we analyze the concentration of
\[ \Theta_{h,n} := \big( \widehat{W}_{h,n}^{\pi} - w_h^{\pi} \big)^{\top} \widehat{\Sigma}_n \big( \widehat{W}_{h,n}^{\pi} - w_h^{\pi} \big), \qquad n = 0,1,\ldots,N. \]

Parallel to Lemma 12 in \cite{dani2008stochastic} and Lemma 11 in \cite{yang2019reinforcement}, we have the following Lemma \ref{Theta_decompose}.
\begin{lemma} \label{Theta_decompose}
	For all $n = 0,1,2,\ldots,N$,
	\[ \begin{aligned} \Theta_{h,n} \leq & \lambda \| w_h^{\pi} \|_2^2 + \sum_{t=1}^n  \underbrace{ 2 \big(r_t' + V_{h+1}^{\pi}(s_t')-Q_h^{\pi}(s_t, a_t) \big) \frac{\phi(s_t,a_t)^{\top}\big(\widehat{W}_{h,t-1}^{\pi}-w_h^{\pi}\big)}{1 + \phi(s_t,a_t)^{\top} \widehat{\Sigma}_{t-1}^{-1}\phi(s_t,a_t)}}_{\alpha_{h,t}} \\ & + \sum_{t=1}^n \underbrace{\big(r_t' + V_{h+1}^{\pi}(s_t')-Q_h^{\pi}(s_t, a_t) \big)^2 \frac{\phi(s_t,a_t)^{\top} \widehat{\Sigma}_{t-1}^{-1}\phi(s_t,a_t)}{1 + \phi(s_t,a_t)^{\top} \widehat{\Sigma}_{t-1}^{-1}\phi(s_t,a_t)}}_{\beta_{h,t}}. \end{aligned} \]
\end{lemma}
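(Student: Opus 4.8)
The plan is to prove the bound as a purely deterministic (pathwise) statement, establishing an exact one-step recursion for $\Theta_{h,n}$ and then telescoping. Write $\phi_t := \phi(s_t,a_t)$ and $y_t := r_t' + V_{h+1}^{\pi}(s_t')$, so that $\widehat{W}_{h,n}^{\pi} = \widehat{\Sigma}_n^{-1}\sum_{t=1}^n \phi_t y_t$, and set $\eta_t := y_t - Q_h^{\pi}(s_t,a_t) = y_t - \phi_t^{\top} w_h^{\pi}$ and $\Delta_n := \widehat{W}_{h,n}^{\pi} - w_h^{\pi}$. First I would record the two elementary update formulas $\widehat{\Sigma}_n = \widehat{\Sigma}_{n-1} + \phi_n\phi_n^{\top}$ and, using $\widehat{\Sigma}_n \widehat{W}_{h,n}^{\pi} = \sum_{t\le n}\phi_t y_t$, the identity $u_n := \widehat{\Sigma}_n \Delta_n = u_{n-1} + \phi_n \eta_n$ with $u_{n-1} = \widehat{\Sigma}_{n-1}\Delta_{n-1}$. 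This rewrites the quantity of interest as the self-normalized form $\Theta_{h,n} = u_n^{\top}\widehat{\Sigma}_n^{-1} u_n$, which is exactly the object handled in the bandit literature (cf. \cite{dani2008stochastic}).

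The key step is to expand $\Theta_{h,n} = (u_{n-1}+\phi_n\eta_n)^{\top}\widehat{\Sigma}_n^{-1}(u_{n-1}+\phi_n\eta_n)$ using the Sherman--Morrison formula $\widehat{\Sigma}_n^{-1} = \widehat{\Sigma}_{n-1}^{-1} - \frac{\widehat{\Sigma}_{n-1}^{-1}\phi_n\phi_n^{\top}\widehat{\Sigma}_{n-1}^{-1}}{1+s_n}$, where $s_n := \phi_n^{\top}\widehat{\Sigma}_{n-1}^{-1}\phi_n \ge 0$. Using the two simplifications $u_{n-1}^{\top}\widehat{\Sigma}_{n-1}^{-1} u_{n-1} = \Theta_{h,n-1}$ and $u_{n-1}^{\top}\widehat{\Sigma}_{n-1}^{-1}\phi_n = \Delta_{n-1}^{\top}\phi_n = \phi_n^{\top}(\widehat{W}_{h,n-1}^{\pi} - w_h^{\pi})$, a direct computation of the three resulting terms yields the exact identity
\[ \Theta_{h,n} = \Theta_{h,n-1} - \frac{(\phi_n^{\top}\Delta_{n-1})^2}{1+s_n} + \alpha_{h,n} + \beta_{h,n}, \]
since the quadratic coefficient collapses to $\phi_n^{\top}\widehat{\Sigma}_n^{-1}\phi_n = s_n/(1+s_n)$ (giving $\beta_{h,n}$) and the cross coefficient to $\phi_n^{\top}\widehat{\Sigma}_n^{-1} u_{n-1} = \phi_n^{\top}\Delta_{n-1}/(1+s_n)$ (giving $\alpha_{h,n}$). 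Because $1+s_n>0$, the middle term is nonpositive, so I can drop it to obtain the one-step inequality $\Theta_{h,n} \le \Theta_{h,n-1} + \alpha_{h,n} + \beta_{h,n}$.

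Finally I would telescope this inequality from $t=1$ to $n$ and evaluate the base case. At $n=0$ the sums are empty, so $\widehat{W}_{h,0}^{\pi}=0$ and $\widehat{\Sigma}_0 = \lambda I$, giving $\Theta_{h,0} = (w_h^{\pi})^{\top}(\lambda I)\,w_h^{\pi} = \lambda\|w_h^{\pi}\|_2^2$; summing then produces the stated bound. The whole argument is an exact algebraic identity, so there is no probabilistic content at this stage (the martingale structure of $\eta_t$ is only invoked later, when $\sum_t \alpha_{h,t}$ and $\sum_t \beta_{h,t}$ are controlled). The only real obstacle is the careful Sherman--Morrison bookkeeping: one must verify that both coefficients simplify to precisely the $1+s_n$ denominators appearing in $\alpha_{h,n}$ and $\beta_{h,n}$, and that the residual term is a perfect square (hence nonpositive). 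I expect this verification, rather than any inequality, to be the place where an error could most easily creep in.
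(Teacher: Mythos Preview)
Your proposal is correct and follows essentially the same approach the paper intends: the paper does not give an explicit proof but defers to Lemma~12 of \cite{dani2008stochastic} and Lemma~11 of \cite{yang2019reinforcement}, which is precisely the Sherman--Morrison recursion and telescoping argument you outline. Your verification that the cross term and quadratic term collapse to $\alpha_{h,n}$ and $\beta_{h,n}$, with the leftover $-(\phi_n^{\top}\Delta_{n-1})^2/(1+s_n)$ being nonpositive, is exactly the bookkeeping that makes this lemma work.
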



Define a filtration $\{ \mathcal{F}_n \}_{n=1}^N$ where $\mathcal{F}_n$ is generated by $(s_1, a_1, s_1',r_1'), (s_2, a_2, s_2',r_2'), \ldots, (s_{n-1}, a_{n-1}, s_{n-1}',r_{n-1}')$ and $(s_n,a_n)$. Lemma \ref{Theta_decompose} suggests that $\Theta_{h,n}$ is upper bounded by a martingale $\sum_{t=1}^n \alpha_{h,t}$ plus the sum of shift terms $\sum_{t=1}^n \beta_{h,t}$. Under the assumption \[ \text{$ \| \phi(s,a) \|_2 \leq 1$ for all $(s,a) \in \mathcal{X}$}, \] we utilize the following Lemma \ref{det} to control $\sum_{t=1}^n \beta_{h,t}$.
\begin{lemma} \label{det}
	$d \ln \lambda + \sum_{t=1}^n \ln\big(1+\phi(s_t,a_t)^{\top} \widehat{\Sigma}_{t-1}^{-1} \phi(s_t,a_t)\big) = \ln \det(\widehat{\Sigma}_n) \leq d \ln(\lambda+n/d)$.
\end{lemma}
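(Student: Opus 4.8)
The plan is to prove the claimed identity and the subsequent inequality as two separate, essentially standard, determinant computations. For the identity, I would exploit the rank-one structure of the recursion $\widehat{\Sigma}_t = \widehat{\Sigma}_{t-1} + \phi(s_t,a_t)\phi(s_t,a_t)^{\top}$ together with the matrix determinant lemma, which gives
\[
\det(\widehat{\Sigma}_t) = \det\big(\widehat{\Sigma}_{t-1}\big)\cdot\big(1 + \phi(s_t,a_t)^{\top}\widehat{\Sigma}_{t-1}^{-1}\phi(s_t,a_t)\big)
\]
for each $t=1,\ldots,n$, using that $\widehat{\Sigma}_{t-1}\succ 0$ is invertible since $\lambda I$ is part of it. Taking logarithms and telescoping the product from $t=1$ to $n$ yields
\[
\ln\det(\widehat{\Sigma}_n) = \ln\det(\widehat{\Sigma}_0) + \sum_{t=1}^n \ln\big(1 + \phi(s_t,a_t)^{\top}\widehat{\Sigma}_{t-1}^{-1}\phi(s_t,a_t)\big).
\]
Since $\widehat{\Sigma}_0 = \lambda I$ is a $d\times d$ matrix, its determinant is $\lambda^d$, so $\ln\det(\widehat{\Sigma}_0) = d\ln\lambda$, which establishes the equality in the statement.

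For the upper bound, I would combine the AM--GM inequality on the eigenvalues of $\widehat{\Sigma}_n$ with a trace estimate. Writing $\lambda_1,\ldots,\lambda_d$ for the eigenvalues of the symmetric positive-definite matrix $\widehat{\Sigma}_n$, AM--GM gives
\[
\det(\widehat{\Sigma}_n) = \prod_{i=1}^d \lambda_i \leq \bigg(\frac{1}{d}\sum_{i=1}^d \lambda_i\bigg)^d = \bigg(\frac{\operatorname{tr}(\widehat{\Sigma}_n)}{d}\bigg)^d.
\]
It then remains to bound the trace: by linearity and $\operatorname{tr}(\phi(s_t,a_t)\phi(s_t,a_t)^{\top}) = \|\phi(s_t,a_t)\|_2^2$,
\[
\operatorname{tr}(\widehat{\Sigma}_n) = \lambda d + \sum_{t=1}^n \|\phi(s_t,a_t)\|_2^2 \leq \lambda d + n,
\]
where the last step invokes the standing assumption $\|\phi(s,a)\|_2 \leq 1$ from Theorem~\ref{UpperBound'}. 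Substituting this back and taking logarithms gives $\ln\det(\widehat{\Sigma}_n) \leq d\ln(\lambda + n/d)$, completing the argument.

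I do not anticipate a genuine obstacle here, as this is a version of the classical elliptical potential / log-determinant lemma from the linear bandit literature (cf.\ the cited Lemma~12 in \citet{dani2008stochastic}); the only care needed is to verify invertibility of each $\widehat{\Sigma}_{t-1}$ (guaranteed by $\lambda>0$, or more precisely by the positive-definiteness induced by the $\lambda I$ term) so that the determinant lemma applies cleanly, and to correctly use $\|\phi\|_2\le 1$ in the trace bound rather than the weaker self-normalized bound $\phi^{\top}\Sigma^{-1}\phi \le C_1 d$ used elsewhere. If one wished to avoid assuming $\lambda>0$, the telescoping identity still holds whenever $\widehat{\Sigma}_0$ is invertible, and one would handle $\lambda=0$ by a limiting argument or by starting the recursion from the first step at which $\widehat{\Sigma}_t$ becomes full rank.
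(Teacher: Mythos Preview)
Your proposal is correct and is precisely the standard elliptical-potential argument that the paper defers to by citing Lemma~9 in \cite{dani2008stochastic} and Lemma~10 in \cite{yang2019reinforcement}; the paper itself gives no independent proof. One minor slip: you reference Lemma~12 of \cite{dani2008stochastic}, but that is the citation for the preceding Lemma~\ref{Theta_decompose}; the relevant result for the present determinant identity is Lemma~9 there.
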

\begin{proof}
	Identical to Lemma 9 in \cite{dani2008stochastic} and Lemma 10 in \cite{yang2019reinforcement}.
\end{proof}
Note that $r_n'+V_{h+1}^{\pi}(s_n') \in [0,H-h+1]$ for all $s \in \mathcal{S}$. Therefore,
\begin{equation} \label{shift} \begin{aligned} \sum_{t=1}^n \beta_{h,t} \leq & (H-h+1)^2 \sum_{t=1}^n \frac{\phi(s_t,a_t)^{\top} \widehat{\Sigma}_{t-1}^{-1}\phi(s_t,a_t)}{1 + \phi(s_t,a_t)^{\top} \widehat{\Sigma}_{t-1}^{-1}\phi(s_t,a_t)} \\ \leq & (H-h+1)^2 \sum_{t=1}^n \ln \big( 1 + \phi(s_t,a_t)^{\top} \widehat{\Sigma}_{t-1}^{-1} \phi(s_t,a_t) \big) \\ \leq & d(H-h+1)^2 \ln\Big(1 + \frac{n}{\lambda d}\Big), \end{aligned} \end{equation}
where we have used the inequality $\frac{x^2}{1+x^2} \leq \ln(1+x^2)$ for all $x \in \mathbb{R}$ and Lemma \ref{det}.

As for $\sum_{t=1}^n \alpha_{h,t}$, similar to \cite{dani2008stochastic} and \cite{yang2019reinforcement}, we first define its trancated version. By leveraging the concentration property of the trancated martingale, we derive a high probability upper bound for $\Theta_{h,n}$.
Take a sequence $0 \leq \vartheta_{h,0} \leq \vartheta_{h,1} \leq \ldots \leq \vartheta_{h,N}$. We consider a series of events
\[ \text{$\mathcal{E}_{h,0}^{\vartheta} := $ the whole sample space}, \qquad \mathcal{E}_{h,n}^{\vartheta} := \big\{ \Theta_{h,t} \leq \vartheta_{h,t} \text{ for $t = 0, 1,\ldots,n$} \big\}, \ n = 1,2,\ldots,N. \]
Define
\[ \alpha_{h,n}^{\star} := 2\big(r_n'+V_{h+1}^{\pi}(s_n')-Q_h^{\pi}(s_n, a_n) \big) \frac{\phi(s_n,a_n)^{\top}(\widehat{W}_{h,n-1}^{\pi}-w_h^{\pi})}{1 + \phi(s_n,a_n)^{\top} \widehat{\Sigma}_{n-1}^{-1}\phi(s_n,a_n)} \cdot \mathbbm{1}_{\mathcal{E}_{h,n-1}^{\vartheta}}. \]
Then, $\alpha_{h,n}^{\star}$ is a martingale difference with respect to $\mathcal{F}_n$. Similar to Lemma 14 in \cite{dani2008stochastic} and Lemma 13 in \cite{yang2019reinforcement}, we apply Freedman's inequality to show that when $\vartheta_{h,1}, \ldots, \vartheta_{h,N}$ are appropriately chosen, the truncated martingale $\big\{ \sum_{t=1}^n \alpha_{h,t}^{\star} \big\}_{n=1,2,\ldots,N}$ never grows too large.
\begin{lemma} \label{Theta1}
	Suppose for $n = 1,2,\ldots,N$,
	\begin{equation} \label{beta>} \sqrt{\vartheta_{h,n}} \geq 2\sqrt{2}(H-h+1) \sqrt{d \ln\Big( 1 + \frac{n}{\lambda d} \Big)\ln(2n^2/\delta)} + \frac{4}{3}(H-h+1)\ln(2n^2/\delta). \end{equation}
	Then with probability at least $1-\delta$, it holds for all $n = 1,2,\ldots,N$ that
	\begin{equation} \label{sumTheta1} \sum_{t=1}^n \alpha_{h,t}^{\star} \leq \frac{1}{2} \vartheta_{h,n}. \end{equation}
\end{lemma}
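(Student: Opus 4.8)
The plan is to establish \eqref{sumTheta1} by a self-normalized martingale argument in the style of \cite{dani2008stochastic}, combining Freedman's inequality at each fixed $n$ with a union bound over $n=1,\dots,N$. The first thing I would verify is that $\{\alpha_{h,n}^{\star}\}$ is genuinely a martingale difference sequence with respect to $\{\mathcal{F}_n\}$. Writing $\eta_n := r_n' + V_{h+1}^{\pi}(s_n') - Q_h^{\pi}(s_n,a_n)$, the Bellman equation gives $\mathbb{E}[\eta_n \mid \mathcal{F}_n] = 0$, while the multiplier $\phi(s_n,a_n)^{\top}(\widehat{W}_{h,n-1}^{\pi} - w_h^{\pi})/(1 + \phi(s_n,a_n)^{\top}\widehat{\Sigma}_{n-1}^{-1}\phi(s_n,a_n)) \cdot \mathbbm{1}_{\mathcal{E}_{h,n-1}^{\vartheta}}$ is $\mathcal{F}_n$-measurable; hence $\mathbb{E}[\alpha_{h,n}^{\star}\mid \mathcal{F}_n]=0$. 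Since $r_n'+V_{h+1}^{\pi}(s_n')$ ranges in an interval of length $H-h+1$, we have both $|\eta_n| \le H-h+1$ and ${\rm Var}[\eta_n\mid s_n,a_n] \le \tfrac14(H-h+1)^2$.

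The heart of the argument is to control the predictable quadratic variation and the increments using the truncation indicator. Writing $x_n := \phi(s_n,a_n)$ and $\Delta_{n-1} := \widehat{W}_{h,n-1}^{\pi}-w_h^{\pi}$, Cauchy-Schwarz gives $(x_n^{\top}\Delta_{n-1})^2 \le (x_n^{\top}\widehat{\Sigma}_{n-1}^{-1}x_n)(\Delta_{n-1}^{\top}\widehat{\Sigma}_{n-1}\Delta_{n-1})$, and on $\mathcal{E}_{h,n-1}^{\vartheta}$ the last factor equals $\Theta_{h,n-1}\le\vartheta_{h,n-1}\le\vartheta_{h,n}$ because $\{\vartheta_{h,n}\}$ is non-decreasing. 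Setting $u_n := x_n^{\top}\widehat{\Sigma}_{n-1}^{-1}x_n$, the conditional variance is bounded by ${\rm Var}[\alpha_{h,n}^{\star}\mid\mathcal{F}_n] \le (H-h+1)^2\vartheta_{h,n}\,u_n/(1+u_n)^2 \le (H-h+1)^2\vartheta_{h,n}\,u_n/(1+u_n)$, and summing with $u/(1+u)\le\ln(1+u)$ together with Lemma~\ref{det} yields the \emph{deterministic} proxy $\sum_{t\le n}{\rm Var}[\alpha_{h,t}^{\star}\mid\mathcal{F}_t]\le (H-h+1)^2\vartheta_{h,n}\,d\ln(1+n/(\lambda d))$. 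The increments are bounded almost surely via $\sqrt{u}/(1+u)\le\tfrac12$, giving $|\alpha_{h,n}^{\star}|\le (H-h+1)\sqrt{\vartheta_{h,n}}$.

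With these two deterministic bounds I would apply Freedman's inequality \cite{freedman1975tail} to $\sum_{t\le n}\alpha_{h,t}^{\star}$ at each fixed $n$, taking the deviation level $\varepsilon = \tfrac12\vartheta_{h,n}$ and requiring the failure probability to be at most $\delta/(2n^2)$. Plugging the variance proxy $\sigma_n^2=(H-h+1)^2\vartheta_{h,n}\,d\ln(1+n/(\lambda d))$ and increment bound $b_n=(H-h+1)\sqrt{\vartheta_{h,n}}$ into the Freedman exponent $\varepsilon^2/(2(\sigma_n^2+b_n\varepsilon/3))$, the requirement that this exceed $\ln(2n^2/\delta)$ reduces, after setting $y=\sqrt{\vartheta_{h,n}}$, to the quadratic inequality $y^2 - \tfrac43(H-h+1)\ln(2n^2/\delta)\,y - 8(H-h+1)^2 d\ln(1+n/(\lambda d))\ln(2n^2/\delta)\ge 0$. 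Bounding its positive root via $\sqrt{a+b}\le\sqrt a+\sqrt b$ recovers exactly the hypothesis \eqref{beta>}. Thus $\mathbb{P}(\sum_{t\le n}\alpha_{h,t}^{\star}\ge\tfrac12\vartheta_{h,n})\le\delta/(2n^2)$ for each $n$, and a union bound over $n=1,\dots,N$ (using $\sum_n n^{-2}\le \pi^2/6<2$) gives \eqref{sumTheta1} for all $n$ simultaneously with probability at least $1-\delta$.

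I expect the main obstacle to be conceptual rather than computational: keeping the increments and quadratic variation deterministically bounded. The quantity $\Theta_{h,n-1}=\Delta_{n-1}^{\top}\widehat{\Sigma}_{n-1}\Delta_{n-1}$ is a priori random and potentially large, so the self-normalizing denominator alone does not suffice; the truncation event $\mathcal{E}_{h,n-1}^{\vartheta}$ is what replaces it by the deterministic $\vartheta_{h,n-1}$, and one must check that this truncation does not disturb the martingale-difference structure (it does not, since $\mathbbm{1}_{\mathcal{E}_{h,n-1}^{\vartheta}}$ is $\mathcal{F}_{n-1}$-measurable, hence $\mathcal{F}_n$-measurable). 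A secondary care point is the $\delta/(2n^2)$ allocation, which is precisely what makes the union bound converge while leaving the stated logarithmic factor $\ln(2n^2/\delta)$ in \eqref{beta>} intact.
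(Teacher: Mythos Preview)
Your proposal is correct and follows essentially the same approach as the paper's proof: verify the martingale-difference property, use the truncation indicator to obtain deterministic almost-sure bounds on the increments $|\alpha_{h,t}^{\star}|\le (H-h+1)\sqrt{\vartheta_{h,n}}$ and the predictable quadratic variation $\sum_{t\le n}{\rm Var}[\alpha_{h,t}^{\star}\mid\mathcal{F}_t]\le (H-h+1)^2\vartheta_{h,n}\,d\ln(1+n/(\lambda d))$ via Lemma~\ref{det}, apply Freedman's inequality at each fixed $n$ with failure probability $\delta/(2n^2)$, and union bound over $n$. Your explicit reduction of the Freedman exponent condition to the quadratic $y^2-\tfrac{4}{3}(H-h+1)\ln(2n^2/\delta)\,y-8(H-h+1)^2 d\ln(1+n/(\lambda d))\ln(2n^2/\delta)\ge 0$ is a useful elaboration that the paper leaves implicit, but the argument is otherwise identical.
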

\begin{proof}
	Since $V_h^{\pi}(s) \in [0,H-h+1]$, 
	\begin{equation} \label{Freedman1} \begin{aligned} \big| \alpha_{h,n}^{\star} \big| \leq & 2\big|V_h^{\pi}(s_n')-\mathbb{E}[V_h^{\pi}(s_n') \, | \, s_n, a_n] \big|  \frac{\sqrt{\Theta_{h,n-1}} \sqrt{\phi(s_n,a_n)^{\top} \widehat{\Sigma}_{n-1}^{-1} \phi(s_n,a_n)}}{1 + \phi(s_n,a_n)^{\top} \widehat{\Sigma}_{n-1}^{-1}\phi(s_n,a_n)} \cdot \mathbbm{1}_{\mathcal{E}_{h,n-1}^{\vartheta}} \\ \leq & 2(H-h+1) \cdot \frac{\sqrt{\vartheta_{h,n-1}}}{2} \leq (H-h+1) \sqrt{\vartheta_{h,n}}, \end{aligned} \end{equation}
	where we have used the inequality $0 \leq \frac{x}{1+x^2} \leq \frac{1}{2}$ for all $x \geq 0$.
	Consider the conditional variance ${\rm Var}\big[ \alpha_{h,n}^{\star} \, \big| \, \mathcal{F}_n \big]$,
	\[ \begin{aligned} {\rm Var}\big[ \alpha_{h,n}^{\star} \, \big| \, \mathcal{F}_n \big] = & 4 {\rm Var}\big[V_h^{\pi}(s_n') \, \big| \, s_n, a_n\big] \Bigg( \frac{\phi(s_n,a_n)^{\top}\big(\widehat{W}_{h,n-1}^{\pi}-w_h^{\pi}\big)}{1 + \phi(s_n,a_n)^{\top} \widehat{\Sigma}_{n-1}^{-1}\phi(s_n,a_n)} \Bigg)^2 \cdot \mathbbm{1}_{\mathcal{E}_{h,n-1}^{\vartheta}} \\ \leq & (H-h+1)^2 \Theta_{h,n-1} \frac{ \phi(s_n,a_n)^{\top} \widehat{\Sigma}_{n-1}^{-1} \phi(s_n,a_n)}{\big( 1 + \phi(s_n,a_n)^{\top} \widehat{\Sigma}_{n-1}^{-1}\phi(s_n,a_n) \big)^2} \cdot \mathbbm{1}_{\mathcal{E}_{h,n-1}^{\vartheta}} \\ \leq & (H-h+1)^2 \vartheta_{h,n} \cdot \ln\big(1+ \phi(s_n,a_n)^{\top} \widehat{\Sigma}_{n-1}^{-1} \phi(s_n,a_n)\big), \end{aligned} \]
	where we have used ${\rm Var}\big[r_n'+V_{h+1}^{\pi}(s_n') \, \big| \, s_n, a_n\big] \leq \frac{1}{4}(H-h+1)^2$ and $\frac{x^2}{(1+x^2)^2} \leq \ln(1+x^2)$.
	Taking the summation and using the inequality $\vartheta_{h,t} \leq \vartheta_{h,n}$ for $t = 1,2,\ldots,n$ yields
	\begin{equation} \label{Freedman2'} \begin{aligned} \sum_{t=1}^n {\rm Var}\big[ \alpha_{h,t}^{\star} \, \big| \, \mathcal{F}_t \big] \leq & (H-h+1)^2 \vartheta_{h,n} \sum_{t=1}^n \ln\big(1+ \phi(s_t,a_t)^{\top} \widehat{\Sigma}_{t-1}^{-1} \phi(s_t,a_t)\big) \leq (H-h+1)^2 \vartheta_{h,n} d \ln\Big(1+\frac{n}{\lambda d} \Big). \end{aligned} \end{equation}
	We denote $\sigma_n^2 := (H-h+1)^2 \vartheta_{h,n} d \ln\big( 1 + \frac{n}{\lambda d} \big)$.
	
	According to \eqref{Freedman1} and \eqref{Freedman2'}, the Freedman's inequality implies that
	\[ \mathbb{P} \Bigg( \sum_{t=1}^n \alpha_{h,t}^{\star} \geq \vartheta_{h,n}/2, \sum_{t=1}^n {\rm Var}\big[ \alpha_{h,t}^{\star} \, \big| \, \mathcal{F}_t \big] \leq \sigma_n^2 \Bigg) \leq \exp \Bigg( - \frac{\vartheta_{h,n}^2/8}{\sigma_n^2+(H-h+1) \vartheta_{h,n}^{3/2}/6} \Bigg). \]
	When $\vartheta_{h,n}$ satisfies \eqref{beta>},
	\[ \begin{aligned} \mathbb{P} \Bigg( \sum_{t=1}^n \alpha_{h,t}^{\star} \geq \vartheta_{h,n}/2 \Bigg) = & \mathbb{P} \Bigg( \sum_{t=1}^n \alpha_{h,t}^{\star} \geq \vartheta_{h,n}/2, \sum_{t=1}^n {\rm Var}\big[ \alpha_{h,t}^{\star} \, \big| \, \mathcal{F}_t \big] \leq \sigma_n^2 \Bigg) \leq \frac{\delta}{2n^2}. \end{aligned} \]
	By union bound,
	\[ \mathbb{P} \Bigg( \exists n = 1,2,\ldots,N : \sum_{t=1}^n \alpha_{h,t}^{\star} \geq \vartheta_{h,n}/2 \Bigg) \leq \sum_{n=1}^N \frac{\delta}{2n^2} \leq \delta, \]
	which completes the proof.
\end{proof}

Based on the concentration inequalities in Lemma \ref{Theta1}, we now derive an upper bound for $\Theta_{h,n}$ by induction. 

\begin{lemma} \label{lemma:Theta<} If we take $\sqrt{\vartheta_{h,0}} = \sqrt{2\lambda} \| w_h^{\pi} \|_2$ and
	\begin{equation} \label{beta>>} \sqrt{\vartheta_{h,n}} = \sqrt{2\lambda} \| w_h^{\pi} \|_2 + 2\sqrt{2}(H-h+1) \sqrt{d \ln\Big( 1 + \frac{n}{\lambda d} \Big)\ln(2n^2/\delta)} + \frac{4}{3}(H-h+1)\ln(2n^2/\delta) \end{equation}
	for $n=1,2,\ldots,N$,
	then with probability at least $1-\delta$,
	\begin{equation} \label{Theta<} \Theta_{h,n} \leq \vartheta_{h,n}, \qquad \text{for $n=0,1,2,\ldots,N$}. \end{equation}
\end{lemma}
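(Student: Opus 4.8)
The plan is to establish \eqref{Theta<} by induction on $n$, carried out entirely on the single high-probability event produced by Lemma~\ref{Theta1}. The first step is to verify that the prescribed thresholds \eqref{beta>>} meet the hypothesis \eqref{beta>} of Lemma~\ref{Theta1}: indeed \eqref{beta>>} is exactly the right-hand side of \eqref{beta>} plus the extra nonnegative term $\sqrt{2\lambda}\,\|w_h^\pi\|_2$, so $\sqrt{\vartheta_{h,n}}$ dominates the required quantity for every $n$. Lemma~\ref{Theta1} then provides an event $\mathcal{G}$ with $\mathbb{P}(\mathcal{G})\ge 1-\delta$ on which $\sum_{t=1}^n\alpha_{h,t}^\star\le\tfrac12\vartheta_{h,n}$ holds simultaneously for all $n=1,\dots,N$. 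All of the following is performed pointwise on $\mathcal{G}$.

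For the base case $n=0$, the empty sum gives $\widehat{W}_{h,0}^\pi=0$, so $\Theta_{h,0}=(w_h^\pi)^\top\widehat{\Sigma}_0\,w_h^\pi=\lambda\|w_h^\pi\|_2^2\le 2\lambda\|w_h^\pi\|_2^2=\vartheta_{h,0}$, which holds deterministically. For the inductive step I assume $\Theta_{h,t}\le\vartheta_{h,t}$ for all $t\le n-1$, i.e. that the truncation event $\mathcal{E}_{h,n-1}^\vartheta$ occurs. Because these events are nested, $\mathcal{E}_{h,n-1}^\vartheta\subseteq\mathcal{E}_{h,t-1}^\vartheta$ for every $t\le n$, so all the indicators $\mathbbm{1}_{\mathcal{E}_{h,t-1}^\vartheta}$ equal $1$ and consequently $\alpha_{h,t}^\star=\alpha_{h,t}$ for $t=1,\dots,n$. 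This identification is the crucial link: it lets me replace the untruncated increments $\alpha_{h,t}$ in the decomposition of Lemma~\ref{Theta_decompose} by the truncated ones, to which the bound from $\mathcal{G}$ applies.

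Feeding the resulting martingale bound $\sum_{t=1}^n\alpha_{h,t}\le\tfrac12\vartheta_{h,n}$ and the deterministic shift estimate \eqref{shift} into Lemma~\ref{Theta_decompose} gives
$$\Theta_{h,n}\le\lambda\|w_h^\pi\|_2^2+\tfrac12\vartheta_{h,n}+d(H-h+1)^2\ln\!\Big(1+\tfrac{n}{\lambda d}\Big),$$
so it remains only to confirm the elementary inequality $\lambda\|w_h^\pi\|_2^2+d(H-h+1)^2\ln(1+n/(\lambda d))\le\tfrac12\vartheta_{h,n}$. I would obtain this by applying $(a+b+c)^2\ge a^2+b^2+c^2$ to $\sqrt{\vartheta_{h,n}}$, which yields $\tfrac12\vartheta_{h,n}\ge\lambda\|w_h^\pi\|_2^2+4(H-h+1)^2d\ln(1+n/(\lambda d))\ln(2n^2/\delta)$, combined with the trivial bound $\ln(2n^2/\delta)\ge\tfrac14$ valid for $n\ge1$ and $\delta<1$. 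This closes the induction and hence proves \eqref{Theta<}.

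The main obstacle---and essentially the only delicate point---is the truncation mechanism. The raw increments $\alpha_{h,t}$ cannot be controlled by Freedman's inequality directly, since their magnitude involves $\sqrt{\Theta_{h,t-1}}$, the very quantity being bounded; truncating by $\mathbbm{1}_{\mathcal{E}_{h,t-1}^\vartheta}$ makes them bounded at the cost of introducing a random indicator. The induction is precisely the device that certifies, one step at a time, that on $\mathcal{G}$ the truncation is never active, so that the a priori bound on the truncated martingale is in fact a bound on the genuine $\Theta_{h,n}$. I would be careful to note that no further union bound over $n$ is needed beyond the one already absorbed into Lemma~\ref{Theta1}, so the uniform-in-$n$ conclusion follows on the single event $\mathcal{G}$ of probability at least $1-\delta$.
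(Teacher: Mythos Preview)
Your proposal is correct and follows essentially the same approach as the paper's own proof: invoke Lemma~\ref{Theta1} on the thresholds \eqref{beta>>}, then run an induction on $n$ inside the high-probability event, using the inductive hypothesis to identify $\alpha_{h,t}^\star=\alpha_{h,t}$ and combining Lemma~\ref{Theta_decompose} with the shift bound \eqref{shift}. Your write-up is in fact slightly more explicit than the paper's, since you spell out both the verification that \eqref{beta>>} dominates \eqref{beta>} and the closing inequality $\lambda\|w_h^\pi\|_2^2+d(H-h+1)^2\ln(1+n/(\lambda d))\le\tfrac12\vartheta_{h,n}$, which the paper simply asserts.
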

\begin{proof}
	Define an event $\mathcal{E}_h^{\vartheta} := \big\{ \sum_{t=1}^n \!\alpha_{h,t}^{\star} \leq \vartheta_{h,n}/2 \text{ for $n=1,\ldots,N$} \big\}$.
	Lemma \ref{Theta1} guarantees that $\mathbb{P}\big(\mathcal{E}_h^{\vartheta}\big) \!\geq\! 1 \!-\! \delta$ under condition \eqref{beta>>}.
	In the following, we prove by induction that $\mathcal{E}_h^{\vartheta}$ implies \eqref{Theta<}.
	
	Note that $\Theta_{h,0} = \lambda \|w_h^{\pi}\|_2^2 \leq \vartheta_{h,0}$.
	Suppose \begin{equation} \label{inductive} \Theta_{h,t} \leq \vartheta_{h,t}, \qquad \text{for $t\!=\!0, 1,\ldots,n-1$.} \end{equation} We now consider $\Theta_{h,n}$. Under the inductive condition \eqref{inductive}, we have $ \mathbbm{1}_{\mathcal{E}_{h,0}^{\vartheta}} = \ldots = \mathbbm{1}_{\mathcal{E}_{h,n-1}^{\vartheta}} = 1$, which ensures $\alpha_{h,t} = \alpha_{h,t}^{\star}$ for $t = 1,\ldots,n$. According to Lemma \ref{Theta_decompose}, $\Theta_{h,n}$ satisfies $\Theta_{h,n} \! \leq \! \lambda\|w_h^{\pi}\|_2^2 + \! \sum_{t=1}^n \alpha_{h,t}^{\star} \! + \! \sum_{t=1}^n \beta_{h,t}$. If $\mathcal{E}_h^{\vartheta}$ happens, then by \eqref{shift} we further have
	\[ \begin{aligned} \Theta_{h,n} \leq \lambda\|w_h^{\pi}\|_2^2 + \frac{1}{2} \vartheta_{h,n} + (H-h+1)^2 d \ln\Big( 1 + \frac{n}{\lambda d} \Big). \end{aligned} \] Condition \eqref{beta>>} further implies $\Theta_{h,n} \leq \vartheta_{h,n}$. By induction, we conclude that under $\mathcal{E}_h^{\vartheta}$, $\Theta_{h,n} \leq \vartheta_{h,n}$ for $n=0,1,\ldots,N$.
\end{proof}

We now complete the proof of Theorem \ref{UpperBound'}.
\begin{proof}[Proof of Theorem \ref{UpperBound'}]
By union bound, Lemma \ref{lemma:Theta<} implies that with probability at least $1-\delta$, it holds for all $h=0,1,2,\ldots,H$ that
\begin{equation} \label{Theta<<} \begin{aligned} & \sqrt{\Theta_h} =\! \sqrt{\Theta_{h,N}} \leq \! \sqrt{\vartheta_{h,N}} = \! \sqrt{2\lambda} \| w_h^{\pi} \|_2 + 2\sqrt{2}(H\!-\!h\!+\!1) \sqrt{d \ln\Big( 1 \!+\! \frac{n}{\lambda d} \Big)\ln(3N^2H/\delta)} + \frac{4}{3}(H\!-\!h\!+\!1)\ln(3N^2H/\delta). \end{aligned} \end{equation}
Here,
\begin{equation} \label{w<} \| w_h^{\pi} \|_2 \leq (H-h+1) \omega, \qquad\omega =  \max \big\{ \|w\|_2 \, \big| \, 0 \leq \phi(s,a)^{\top} w \leq 1 \text{ for all $(s,a) \in \mathcal{X}$} \big\}. \end{equation}
Plugging \eqref{Theta<<} and \eqref{w<} into \eqref{Q_decompose0} yields
\[ \begin{aligned} \big| v^{\pi} - \widehat{v}^{\pi} \big| \leq & \sum_{h=0}^{H} (H-h+1) \sqrt{(\widehat{\nu}_h^{\pi})^{\top}\widehat{\Sigma}^{-1} \widehat{\nu}_h^{\pi}} \cdot \bigg( \! \sqrt{2\lambda} \omega \!+\! 2\sqrt{2} \sqrt{d \ln\Big( 1 + \frac{n}{\lambda d} \Big)\ln(3N^2H/\delta)} + \frac{4}{3}\ln(3N^2H/\delta) \bigg), \end{aligned} \]
which completes the proof of Theorem \ref{UpperBound'}.
\end{proof}



\section{Proof of Infinite-Horizon Discounted MDP} \label{appendix:proof:discountMDP}

We first present some preliminary results in Lemma \ref{lemma:decompose_gamma}.
\begin{lemma} \label{lemma:decompose_gamma}
	\begin{enumerate}
		\item It always holds that \begin{equation}  \label{Q_decompose_gamma}  Q^{\pi} - \widehat{Q}^{\pi} = \sum_{h=0}^{\infty} \gamma^{h} (\widehat{\mathcal{P}}^{\pi})^{h}\Big( Q^{\pi} - \big( \widehat{r} + \gamma \widehat{\mathcal{P}}^{\pi} Q^{\pi} \big) \Big). \end{equation}
		\item $v^{\pi} - \widehat{v}^{\pi} = E_1 + E_2 + E_3$, where
		\[ \begin{aligned} E_1 := & \sum_{h=0}^{\infty} \gamma^h (\nu_h^{\pi})^{\top} \Sigma^{-1} \Bigg( \frac{1}{N} \sum_{n=1}^N \phi(s_n,a_n) \Big( Q^{\pi}(s_n, a_n) - \big( r_n' + \gamma V^{\pi}(s_n') \big) \Big) \Bigg), \\ E_2 := & \sum_{h=0}^{\infty} \gamma^h \Big( N(\widehat{\nu}_h^{\pi})^{\top} \widehat{\Sigma}^{-1} - (\nu_h^{\pi})^{\top} \Sigma^{-1} \Big) \Bigg( \frac{1}{N} \sum_{n=1}^N \phi(s_n,a_n) \Big( Q^{\pi}(s_n, a_n) - \big( r_n' + \gamma V^{\pi}(s_n') \big) \Big) \Bigg), \\ E_3 := & \lambda \sum_{h=0}^{\infty} \gamma^h (\widehat{\nu}_h^{\pi})^{\top} \widehat{\Sigma}^{-1} w^{\pi}. \end{aligned} \]
		Here, $w^{\pi} \in \mathbb{R}^d$ satisfies $\mathbb{E}[V^{\pi}(s') \, | \, s,a] = \phi(s,a)^{\top} w^{\pi}$.
		\item If $\big\| (\Sigma^{\pi})^{1/2} \Delta M^{\pi} (\Sigma^{\pi})^{-1/2} \big\|_2 \leq \frac{1-\gamma}{2\gamma}$, then \begin{align} \label{E2<=} & \begin{aligned} |E_2| \leq & \frac{1}{1-\gamma} \sqrt{(\nu_0^{\pi})^{\top} (\Sigma^{\pi})^{-1}  \nu_0^{\pi}} \cdot \big\| (\Sigma^{\pi})^{1/2} \Sigma^{-1/2} \big\|_2 \big\| \Sigma^{-1/2} \Delta W^{\pi} \big\|_2 \\ & \qquad \cdot \bigg( \frac{2\gamma}{1-\gamma} \big\| (\Sigma^{\pi})^{1/2} \Delta M^{\pi} (\Sigma^{\pi})^{-1/2} \big\|_2+ 2 \big\| \Sigma^{1/2} (\Delta X) \Sigma^{1/2} \big\|_2 \bigg), \end{aligned} \\ \label{E3<=} & \begin{aligned} |E_3| \leq & \frac{\lambda\|\Sigma^{-1}\|_2}{N} \cdot \frac{2}{(1-\gamma)^2} \sqrt{(\nu_0^{\pi})^{\top} (\Sigma^{\pi})^{-1}  \nu_0^{\pi}} \cdot \big\| (\Sigma^{\pi})^{1/2} \Sigma^{-1/2} \big\|_2 \big( 1 + \big\| \Sigma^{1/2} (\Delta X) \Sigma^{1/2} \big\|_2\big), \end{aligned} \end{align}
		where $\Delta W^{\pi} := \frac{1}{N} \sum_{n=1}^N \phi(s_n,a_n) \Big( Q(s_n, a_n) - \big( r_n' + \gamma V^{\pi}(s_n') \big) \Big)$.
	\end{enumerate}
\end{lemma}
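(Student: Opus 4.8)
All three parts are the discounted analogues of the finite-horizon Lemmas \ref{lemma:decompose} and \ref{lemma:E2decompose}, obtained by replacing each finite sum $\sum_{h=0}^{H}$ with a geometric sum $\sum_{h=0}^{\infty}\gamma^h$ and controlling convergence. For part 1, I would argue purely at the operator level. Setting $D:=Q^{\pi}-(\widehat{r}+\gamma\widehat{\mathcal{P}}^{\pi}Q^{\pi})$, one has $(\mathcal{I}-\gamma\widehat{\mathcal{P}}^{\pi})Q^{\pi}=\widehat{r}+D$; since $\widehat{Q}^{\pi}=(\mathcal{I}-\gamma\widehat{\mathcal{P}}^{\pi})^{-1}\widehat{r}$ (equivalently $\widehat{w}^{\pi}=(I-\gamma\widehat{M}^{\pi})^{-1}\widehat{R}$), inverting gives $Q^{\pi}-\widehat{Q}^{\pi}=(\mathcal{I}-\gamma\widehat{\mathcal{P}}^{\pi})^{-1}D$. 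Expanding the resolvent as the Neumann series $\sum_{h\ge0}\gamma^h(\widehat{\mathcal{P}}^{\pi})^h$ yields \eqref{Q_decompose_gamma}; the series converges exactly when $\gamma\,\rho(\widehat{M}^{\pi})<1$, which is guaranteed by the spectral hypothesis of part 3.

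For part 2, I would make $D$ explicit in vector form, mirroring \eqref{diff_PQ}. Using $\phi^{\pi}(s_n')^{\top}w^{\pi}=V^{\pi}(s_n')$ for the coefficient vector $w^{\pi}$ with $Q^{\pi}(\cdot)=\phi(\cdot)^{\top}w^{\pi}$, one has $\widehat{r}+\gamma\widehat{\mathcal{P}}^{\pi}Q^{\pi}=\phi(\cdot)^{\top}\widehat{\Sigma}^{-1}\sum_n\phi(s_n,a_n)\big(r_n'+\gamma V^{\pi}(s_n')\big)$, while the identity $Q^{\pi}(s,a)=\phi(s,a)^{\top}\widehat{\Sigma}^{-1}\widehat{\Sigma}w^{\pi}$ gives $Q^{\pi}(s,a)=\lambda\phi(s,a)^{\top}\widehat{\Sigma}^{-1}w^{\pi}+\phi(s,a)^{\top}\widehat{\Sigma}^{-1}\sum_n\phi(s_n,a_n)Q^{\pi}(s_n,a_n)$. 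Subtracting, $D(s,a)=\lambda\phi(s,a)^{\top}\widehat{\Sigma}^{-1}w^{\pi}+\phi(s,a)^{\top}\widehat{\Sigma}^{-1}\sum_n\phi(s_n,a_n)\big(Q^{\pi}(s_n,a_n)-(r_n'+\gamma V^{\pi}(s_n'))\big)$. Applying $(\widehat{\mathcal{P}}^{\pi})^h$ replaces the leading $\phi(s,a)^{\top}$ by $\phi(s,a)^{\top}(\widehat{M}^{\pi})^h$, and integrating against $\xi_0,\pi$ turns it into $(\widehat{\nu}_h^{\pi})^{\top}$. Summing $\gamma^h$ over $h$ and then adding and subtracting the population weight $(\nu_h^{\pi})^{\top}\Sigma^{-1}$ on the residual part produces exactly $E_1$ (the martingale term with population weights), $E_2$ (the empirical-minus-population weight correction), and $E_3$ (the $\lambda$-bias). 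The martingale property $\mathbb{E}\big[r_n'+\gamma V^{\pi}(s_n')\,\big|\,s_n,a_n\big]=Q^{\pi}(s_n,a_n)$ follows from the discounted Bellman equation.

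For part 3, I would transcribe the factorization of Lemma \ref{lemma:E2decompose} inside the geometric sums. Writing $N\widehat{\Sigma}^{-1}=\Sigma^{-1}+\Delta X$, $\widehat{M}^{\pi}=M^{\pi}+\Delta M^{\pi}$ and conjugating by $(\Sigma^{\pi})^{\pm1/2}$, the contraction property \eqref{SMS<1} gives $\|(\Sigma^{\pi})^{1/2}(M^{\pi})^h(\Sigma^{\pi})^{-1/2}\|_2\le1$ and $\|(\Sigma^{\pi})^{1/2}(\widehat{M}^{\pi})^h(\Sigma^{\pi})^{-1/2}\|_2\le(1+\beta)^h$, where $\beta:=\|(\Sigma^{\pi})^{1/2}\Delta M^{\pi}(\Sigma^{\pi})^{-1/2}\|_2$. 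The hypothesis $\beta\le\frac{1-\gamma}{2\gamma}$ forces $\gamma(1+\beta)\le\frac{1+\gamma}{2}<1$, so $\sum_{h\ge0}\gamma^h(1+\beta)^h\le\frac{2}{1-\gamma}$ and all series converge. With $\delta:=\|\Sigma^{1/2}(\Delta X)\Sigma^{1/2}\|_2$, the per-term growth $(1+\beta)^h(1+\delta)-1$, summed against $\gamma^h$, equals $\frac{1+\delta}{1-\gamma(1+\beta)}-\frac{1}{1-\gamma}\le\frac{2}{1-\gamma}\big(\frac{\gamma}{1-\gamma}\beta+\delta\big)$, which is precisely the bracket in \eqref{E2<=}; the scalar prefactors $\sqrt{(\nu_0^{\pi})^{\top}(\Sigma^{\pi})^{-1}\nu_0^{\pi}}$, $\|(\Sigma^{\pi})^{1/2}\Sigma^{-1/2}\|_2$ and $\|\Sigma^{-1/2}\Delta W^{\pi}\|_2$ come out of the same Cauchy--Schwarz bookkeeping as in the finite-horizon case. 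For \eqref{E3<=} I would bound $E_3$ identically, the prefactor $\lambda\|\Sigma^{-1}\|_2/N$ arising from replacing $\widehat{\Sigma}^{-1}$ by $N^{-1}\Sigma^{-1}$ (up to the $1+\delta$ factor), and the $(1-\gamma)^{-2}$ from one geometric sum $\sum_h\gamma^h(1+\beta)^h$ together with the resolvent $(I-\gamma M^{\pi})^{-1}$ implicit in $w^{\pi}$, each contributing $(1-\gamma)^{-1}$.

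\textbf{Main obstacle.} The delicate point is the convergence and sharpness in part 3: unlike the finite sum, the geometric series converges only under the stated spectral condition, so I must verify $\gamma(1+\beta)<1$ and evaluate the difference $\frac{1}{1-\gamma(1+\beta)}-\frac{1}{1-\gamma}$ exactly, rather than bounding each term crudely, in order to recover the advertised $\frac{\gamma}{1-\gamma}\beta$ scaling. Everything else is a faithful transcription of the finite-horizon algebra with $\sum_{h=0}^{H}$ replaced by $\sum_{h=0}^{\infty}\gamma^h$.
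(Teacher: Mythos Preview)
Your proposal is correct. Parts 1 and 2 match the paper's argument essentially verbatim (your resolvent derivation of part 1 is in fact a shade cleaner than the paper's telescoping of the two Neumann series for $Q^{\pi}$ and $\widehat{Q}^{\pi}$, but the content is the same).

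For part 3 your route differs slightly from the paper's. You carry over the per-term bound $(1+\beta)^h(1+\delta)-1$ from Lemma~\ref{lemma:E2decompose} and sum it against $\gamma^h$ in closed form, obtaining $\frac{1+\delta}{1-\gamma(1+\beta)}-\frac{1}{1-\gamma}\le \frac{1}{1-\gamma}\big(\frac{2\gamma}{1-\gamma}\beta+2\delta\big)$ under the hypothesis $\beta\le\frac{1-\gamma}{2\gamma}$. The paper instead first aggregates $\boldsymbol{\nu}^{\pi}=\sum_h\gamma^h\nu_h^{\pi}$ and $\widehat{\boldsymbol{\nu}}^{\pi}=\sum_h\gamma^h\widehat{\nu}_h^{\pi}$ into closed-form resolvents $(I-\gamma(\Sigma^{\pi})^{1/2}M^{\pi}(\Sigma^{\pi})^{-1/2})^{-1}$ and its perturbed version, then bounds $\Delta\boldsymbol{\nu}^{\pi}$ via a perturbation-of-inverse estimate. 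Both arguments arrive at exactly \eqref{E2<=}; yours is more uniform with the finite-horizon proof, while the paper's exploits the simplification that the discounted case affords. Two small remarks: (i) the $\|\Sigma^{-1}\|_2$ in \eqref{E3<=} comes not from approximating $\widehat{\Sigma}^{-1}\approx N^{-1}\Sigma^{-1}$ but from the factorization $\Sigma^{-1/2}w^{\pi}=\Sigma^{-1}(\Sigma^{1/2}w^{\pi})$ so that $\|\Sigma^{1/2}w^{\pi}\|_2\le(1-\gamma)^{-1}$ can be invoked; (ii) your definition of $w^{\pi}$ as the coefficient vector of $Q^{\pi}$ is the one that actually falls out of the derivation in part 2 (the statement's ``$\mathbb{E}[V^{\pi}(s')\mid s,a]=\phi(s,a)^{\top}w^{\pi}$'' appears to be a slip), and either definition yields the same bound for $E_3$.
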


\begin{proof}
	1. Note that for a discounted MDP, $Q^{\pi} = \sum_{h=0}^{\infty} \gamma^h (\mathcal{P}^{\pi})^h r$ and $\widehat{Q}^{\pi} = \sum_{h=0}^{\infty} \gamma^h (\widehat{\mathcal{P}}^{\pi})^h \widehat{r}$.
	By using \eqref{diff_power}, we derive that
	\[ \begin{aligned} Q^{\pi} - \widehat{Q}^{\pi} = & \sum_{h=0}^{\infty} \gamma^h \big( (\mathcal{P}^{\pi})^h - (\widehat{\mathcal{P}}^{\pi})^h \big) r + \sum_{h=0}^{\infty} \gamma^h (\widehat{\mathcal{P}}^{\pi})(r - \widehat{r}) \\ = & \sum_{h=0}^{\infty} \gamma^h \sum_{h'=1}^h \big(\widehat{\mathcal{P}}^{\pi}\big)^{h'-1}\big( \mathcal{P}^{\pi} - \widehat{\mathcal{P}}^{\pi} \big) \big(\mathcal{P}^{\pi}\big)^{h-h'} r + \sum_{h=0}^{\infty} \gamma^h (\widehat{\mathcal{P}}^{\pi})(r - \widehat{r}) \\ = & \sum_{h'=1}^{\infty} \gamma^{h'} (\widehat{\mathcal{P}}^{\pi})^{h'-1}\big( \mathcal{P}^{\pi} - \widehat{\mathcal{P}}^{\pi} \big) \sum_{h=0}^{\infty} \gamma^{h} (\mathcal{P}^{\pi})^{h} r + \sum_{h=0}^{\infty} \gamma^h (\widehat{\mathcal{P}}^{\pi})(r - \widehat{r}) \\ = & \sum_{h=0}^{\infty} \gamma^{h} (\widehat{\mathcal{P}}^{\pi})^{h}\Big( \gamma \big( \mathcal{P}^{\pi} - \widehat{\mathcal{P}}^{\pi} \big) Q^{\pi} + (r - \widehat{r}) \Big) = \sum_{h=0}^{\infty} \gamma^{h} (\widehat{\mathcal{P}}^{\pi})^{h}\Big( Q^{\pi} - \big(\widehat{r} + \gamma \widehat{\mathcal{P}}^{\pi} Q^{\pi}\big) \Big), \end{aligned} \]
	where we have used Bellman equation $Q^{\pi} = r + \gamma \mathcal{P}^{\pi} Q^{\pi}$.
	
	2. Based on \eqref{Q_decompose_gamma}, we can prove the decomposition in a way similar to Lemma \ref{Edecompose}.
	
	3. For notational convenience, define $\widehat{\boldsymbol{\nu}}^{\pi} := \sum_{h=0}^{\infty} \gamma^h \widehat{\nu}_h^{\pi}$, $\boldsymbol{\nu}^{\pi} := \sum_{h=0}^{\infty} \gamma^h \nu_h^{\pi}$ and $\Delta \boldsymbol{\nu}^{\pi} := \widehat{\boldsymbol{\nu}}^{\pi} - \boldsymbol{\nu}^{\pi}$. It is easy to see that
	$E_2 = \big( (\widehat{\boldsymbol{\nu}}^{\pi})^{\top} (N \widehat{\Sigma}^{-1}) - (\boldsymbol{\nu}^{\pi})^{\top} \Sigma^{-1} \big) \Delta W^{\pi}$. In the following, we analyze $\widehat{\boldsymbol{\nu}}^{\pi}$ and $\boldsymbol{\nu}^{\pi}$, and connect $\Delta \boldsymbol{\nu}^{\pi}$ to $\Delta M^{\pi}$.
	
	Since $\big\| (\Sigma^{\pi})^{1/2} M^{\pi} (\Sigma^{\pi})^{-1/2} \big\|_2 \leq 1$ and $\gamma \in (0,1)$, we have
	\[ \begin{aligned} (\boldsymbol{\nu}^{\pi})^{\top} = & \sum_{h=0}^{\infty} \gamma^h (\nu_h^{\pi})^{\top} = \sum_{h=0}^{\infty} \gamma^h (\nu_0^{\pi})^{\top} (M^{\pi})^h = (\nu_0^{\pi})^{\top} (\Sigma^{\pi})^{-1/2} \Bigg( \sum_{h=0}^{\infty} \big( \gamma (\Sigma^{\pi})^{1/2} M^{\pi} (\Sigma^{\pi})^{-1/2} \big)^h \Bigg) (\Sigma^{\pi})^{1/2} \\ = & (\nu_0^{\pi})^{\top} (\Sigma^{\pi})^{-1/2} \big( I - \gamma (\Sigma^{\pi})^{1/2} M^{\pi} (\Sigma^{\pi})^{-1/2} \big)^{-1} (\Sigma^{\pi})^{1/2}. \end{aligned} \]
	If $\big\| (\Sigma^{\pi})^{1/2} \Delta M^{\pi} (\Sigma^{\pi})^{-1/2} \big\|_2 < \frac{1-\gamma}{\gamma}$, then $\big\| \gamma (\Sigma^{\pi})^{1/2} \widehat{M}^{\pi} (\Sigma^{\pi})^{-1/2} \big\|_2 < 1$, which implies
	\[ (\widehat{\boldsymbol{\nu}}^{\pi})^{\top} = \sum_{h=0}^{\infty} \gamma^h (\widehat{\nu}_h^{\pi})^{\top} = (\nu_0^{\pi})^{\top} (\Sigma^{\pi})^{-1/2} \big( I - \gamma (\Sigma^{\pi})^{1/2} \widehat{M}^{\pi} (\Sigma^{\pi})^{-1/2} \big)^{-1} (\Sigma^{\pi})^{1/2}. \]
	Note that
	\begin{equation} \label{E2<} \begin{aligned} |E_2| \leq & ~ \Big( \big( \big\| (\Sigma^{\pi})^{-1/2} \boldsymbol{\nu}^{\pi} \big\|_2 + \big\| (\Sigma^{\pi})^{-1/2} \Delta \boldsymbol{\nu}^{\pi}  \big\|_2 \big)\big( 1 + \big\| \Sigma^{1/2} (\Delta X) \Sigma^{1/2} \big\|_2\big) - \big\| (\Sigma^{\pi})^{-1/2} \boldsymbol{\nu}^{\pi} \big\|_2 \Big) \\ & \cdot \big\| (\Sigma^{\pi})^{1/2} \Sigma^{-1/2} \big\|_2 \big\| \Sigma^{-1/2} \Delta W^{\pi} \big\|_2. \end{aligned} \end{equation}
	Since $\big\| \big( I \!-\! \gamma (\Sigma^{\pi})^{1/2} M^{\pi} (\Sigma^{\pi})^{-1/2} \big)^{-1} \big\|_2 \!\leq\! \sum_{h=0}^{\infty} \gamma^h \big\| (\Sigma^{\pi})^{1/2} M^{\pi}(\Sigma^{\pi})^{-1/2}\big\|_2^h \!\leq\! \frac{1}{1-\gamma}$, we have $\big\| (\Sigma^{\pi})^{-1/2} \boldsymbol{\nu}^{\pi} \big\|_2 \leq (1-\gamma)^{-1} \sqrt{(\nu_0^{\pi})^{\top} (\Sigma^{\pi})^{-1} \nu_0^{\pi}}$.
	As for $\big\| (\Sigma^{\pi})^{-1/2} \Delta \boldsymbol{\nu}^{\pi} \big\|_2$, in a way similar to the proof of \eqref{inv<2Dx}, we derive that if $\big\| (\Sigma^{\pi})^{1/2} \Delta M^{\pi} (\Sigma^{\pi})^{-1/2} \big\|_2 \leq \frac{1-\gamma}{2\gamma}$, then
	\begin{equation} \label{diff_inv'} \Big\| \big( I - \gamma (\Sigma^{\pi})^{1/2} \widehat{M}^{\pi} (\Sigma^{\pi})^{-1/2} \big)^{-1} - \big( I - \gamma (\Sigma^{\pi})^{1/2} M^{\pi} (\Sigma^{\pi})^{-1/2} \big)^{-1} \Big\|_2 \leq \frac{2\gamma}{(1-\gamma)^2} \big\| (\Sigma^{\pi})^{1/2} \Delta M^{\pi} (\Sigma^{\pi})^{-1/2} \big\|_2. \end{equation}
	It follows that
	\begin{equation} \label{S_Delta_nu_pi} \big\| (\Sigma^{\pi})^{-1/2} \Delta \boldsymbol{\nu}^{\pi}  \big\|_2 \leq \frac{2\gamma}{(1-\gamma)^2} \sqrt{(\nu_0^{\pi})^{\top} (\Sigma^{\pi})^{-1} \nu_0^{\pi}} \big\| (\Sigma^{\pi})^{1/2} \Delta M^{\pi} (\Sigma^{\pi})^{-1/2} \big\|_2 \leq \big\| (\Sigma^{\pi})^{-1/2} \boldsymbol{\nu}^{\pi}  \big\|_2. \end{equation}
	Plugging \eqref{S_Delta_nu_pi} into \eqref{E2<}, we finish the proof of \eqref{E2<=}. One can show \eqref{E3<=} in the same way.
\end{proof}

We are now ready to prove Theorem \ref{thm:UpperBound_gamma}.

\begin{proof}[Proof of Theorem \ref{thm:UpperBound_gamma}]
	1. Parallel to Lemma \ref{lemma:E1}, we define martingale differences
	\[ e_n := \sum_{h=0}^{\infty} \gamma^h (\nu_h^{\pi})^{\top} \Sigma^{-1} \phi(s_n,a_n) \Big( Q^{\pi}(s_n, a_n) - \big( r_n' + \gamma V^{\pi}(s_n') \big) \Big) \]
	such that $E_1 = \frac{1}{N} \sum_{n=1}^N e_n$. Since $V^{\pi} \in [0,\frac{1}{1-\gamma}]$, we have $|e_n| \leq \frac{1}{1-\gamma} \sqrt{\big(\sum_{h=0}^{\infty} \gamma^h \nu_h^{\pi}\big)^{\top} \Sigma^{-1} \big(\sum_{h=0}^{\infty} \gamma^h \nu_h^{\pi} \big)}\sqrt{C_1d}$ and
	\[ {\rm Var}[e_n \, | \, \mathcal{F}_n] = \bigg( \sum_{h=0}^{\infty} \gamma^h (\nu_h^{\pi})^{\top} \Sigma^{-1} \phi(s_n,a_n) \bigg)^2 {\rm Var}\big[ V^{\pi}(s_n') \, \big| \, s_n, a_n \big] \leq \bigg( \frac{1}{2(1-\gamma)} \sum_{h=0}^{\infty} \gamma^h (\nu_h^{\pi})^{\top} \Sigma^{-1} \phi(s_n,a_n) \bigg)^2. \]
	Following the same analysis as Lemma \ref{lemma:E1} and using $H \leq \frac{C_2}{1 - \gamma}$, we can prove that with probability at least $1 - \delta$,
	\begin{equation} \label{E1_gamma} |E_1| \leq \frac{1}{1 - \gamma}\sqrt{\Bigg(\sum_{h=0}^{\infty} \gamma^h \nu_h^{\pi}\bigg)^{\top} \Sigma^{-1} \Bigg(\sum_{h=0}^{\infty} \gamma^h \nu_h^{\pi} \Bigg)} \cdot \Bigg( \sqrt{\frac{\ln(4/\delta)}{2N}} + \frac{7\ln(4d/\delta)\sqrt{C_1C_2d}}{6N \sqrt{1-\gamma}} + \frac{\big(\ln(4d/\delta)\big)^{3/2} C_1 C_2 d}{3\sqrt{2}N^{3/2}(1-\gamma)} \Bigg). \end{equation}
	As for $E_2$ and $E_3$, according to Lemma \ref{lemma:decompose_gamma}, it only remains to analyze $\big\| \Sigma^{1/2}(\Delta X) \Sigma^{1/2} \big\|_2$, $\big\| (\Sigma^{\pi})^{1/2} \Delta M^{\pi} (\Sigma^{\pi})^{-1/2} \big\|_2$ and $\big\| \Sigma^{-1/2} \Delta W^{\pi} \big\|_2$. We apply the existing concentration inequalities in Appendix \ref{section:E2}. Note that the result for $\big\| \Sigma^{-1/2} \Delta W^{\pi} \big\|_2$ is analogous to $\big\| \Sigma^{-1/2} \Delta W_h^{\pi} \big\|_2$ in Lemma \ref{lemma:E3}. 
	We combine the estimations of $|E_2|$ and $|E_3|$ with \eqref{E1_gamma}, and obtain \eqref{UpperBound_gamma}. 
	
	2. We only need to adapt Lemma \ref{lemma:v-v} to the discouted MDP. We have the decomposition $v^{\pi} - \widetilde{v}^{\pi} = \gamma \sum_{h=0}^{\infty} \gamma^{h} (\widetilde{\mathcal{P}}^{\pi})^h (\mathcal{P}^{\pi} - \widetilde{\mathcal{P}}^{\pi}) Q^{\pi}$ , which yields a lower bound in \eqref{lowerbound_gamma}.
	
	3. Similar to Lemma \ref{lemma:Q_decompose0}, we have
		\begin{equation} \label{Q_decompose0_gamma} |v^{\pi} - \widehat{v}^{\pi}| \leq \sqrt{\Bigg(\sum_{h=0}^{\infty} \gamma^h\widehat{\nu}_h^{\pi} \Bigg)^{\top} \widehat{\Sigma}^{-1} \Bigg(\sum_{h=0}^{\infty} \gamma^h\widehat{\nu}_h^{\pi} \Bigg)} \cdot \sqrt{\big( \widehat{W}^{\pi} - w^{\pi} \big)^{\top} \widehat{\Sigma} \big( \widehat{W}^{\pi} - w^{\pi} \big)} \quad \text{with $\widehat{W}^{\pi} := \widehat{R} + \gamma \widehat{M}^{\pi} w^{\pi}$}. \end{equation}
		We reform $\widehat{W}^{\pi}$ into $ \widehat{W}^{\pi} = \widehat{\Sigma}^{-1} \sum_{n=1}^N \phi(s_n,a_n) \big(r_n' + \gamma V^{\pi}(s_n')\big)$, where $r_n' + \gamma V^{\pi}(s_n') \in [0, \frac{1}{1-\gamma}]$. Using the same arguments as the proof of Theorem \ref{UpperBound'}, we can analyze $\sqrt{\big( \widehat{W}^{\pi} - w^{\pi} \big)^{\top} \widehat{\Sigma} \big( \widehat{W}^{\pi} - w^{\pi} \big)}$. Plugging the result into \eqref{Q_decompose0_gamma}, we obtain \eqref{UpperBound'_gamma}.
\end{proof}




\part{}

\section{Proofs of Lemmas in Appendix \ref{appendix:proof:UpperBound}} \label{appendix:concentration}

\subsection{Proof of Lemma \ref{SMS}} \label{appendix:SMS}

\begin{proof}[Proof of Lemma \ref{SMS}]
	For any $\mu \in \mathbb{R}^{d}$, we define a function $f: \mathcal{S} \rightarrow \mathbb{R}$ such that $f(s) := \mu^{\top} {\psi}(s)$.
	By Jensen's inequality
	\begin{equation} \label{SMS1} {\mathbb{E}} \big[ f^2({s}_{t+1}) \, \big| \, {s}_0 \sim {\xi}_0 \big] = {\mathbb{E}} \Big[ {\mathbb{E}} \big[f^2({s}_{t+1}) \, \big| \, {s}_t \big] \, \Big| \, {s}_0 \sim {\xi}_0 \Big] \geq {\mathbb{E}} \Big[ {\mathbb{E}} \big[f({s}_{t+1}) \, \big| \, {s}_t \big]^2 \, \Big| \, {s}_0 \sim {\xi}_0 \Big]. \end{equation}
	The left hand side of \eqref{SMS1} satisfies
	\begin{equation} \label{SMS2} \begin{aligned} {\mathbb{E}} \big[ f^2({s}_{t+1}) \, \big| \, {s}_0 \sim {\xi}_0 \big] = {\mathbb{E}} \big[ \big( \mu^{\top} {\psi}({s}_{t+1}) \big)^2 \, \big| \, {s}_0 \sim {\xi}_0 \big] = \mu^{\top} {\mathbb{E}} \big[ {\psi}({s}_{t+1}) {\psi}({s}_{t+1})^{\top} \, \big| \, {s}_0 \sim {\xi}_0 \big] \mu = \mu^{\top} {\Sigma}_{t+1} \mu. \end{aligned} \end{equation}
	We also have
	\[ {\mathbb{E}} \big[f({s}_{t+1}) \, \big| \, {s}_t \big] = {\mathbb{E}} \big[{\psi}({s}_{t+1})^{\top} \, \big| \, {s}_t \big] \mu = {\psi}({s}_t)^{\top} M \mu. \]
	Therefore, the right hand side of \eqref{SMS1} equals to
	\begin{equation} \label{SMS3} \begin{aligned} & {\mathbb{E}} \Big[ {\mathbb{E}} \big[f({s}_{t+1}) \, \big| \, {s}_t \big]^2 \, \Big| \, {s}_0 \!\sim\! {\xi}_0 \Big] = {\mathbb{E}} \Big[ \big( {\psi}({s}_t)^{\top} M \mu \big)^2 \, \Big| \, {s}_0 \!\sim\! {\xi}_0 \Big] = \mu^{\top} M^{\top} {\mathbb{E}} \big[ {\psi}({s}_t) {\psi}({s}_t)^{\top} \, \big| \, {s}_0 \!\sim\! {\xi}_0 \big] M \mu = \mu^{\top} M^{\top} {\Sigma}_t M \mu. \end{aligned} \end{equation}
	Plugging \eqref{SMS2} and \eqref{SMS3} into \eqref{SMS1}, we have
	\[ \mu^{\top} M^{\top} {\Sigma}_t M \mu \leq \mu^{\top} {\Sigma}_{t+1} \mu, \qquad \text{for all $\mu \in \mathbb{R}^{d}$}. \]
	It follows that ${\Sigma}_{t+1}^{-1/2} M^{\top} {\Sigma}_t M {\Sigma}_{t+1}^{-1/2} \preceq I$.
	Hence, $\big\| {\Sigma}_t^{1/2} M {\Sigma}_{t+1}^{-1/2} \big\|_2 \leq 1$.
\end{proof}

\subsection{Proof of Lemma \ref{Equivalence}} \label{appendix: equivalence}

\begin{proof}[Proof of Lemma \ref{Equivalence}]
	Note that
	\[ \sqrt{(\nu_h^{\pi})^{\top} \Sigma^{-1} \nu_h^{\pi}} = \sup_{\mu \in \mathbb{R}^d} \frac{(\nu_h^{\pi})^{\top} \mu}{\sqrt{\mu^{\top} \Sigma \mu}}. \]
	For any $\mu \in \mathbb{R}^d$, we take $f \in \mathcal{Q}$ such that $f(s,a) = \phi(s,a)^{\top} \mu$ for all $(s,a) \in \mathcal{X}$. Then $(\nu_h^{\pi})^{\top} \mu = \mathbb{E}^{\pi}\big[ f(s_h,a_h) \, \big| \, s_0 \sim \xi_0 \big]$ according to the definition of $\nu_h^{\pi}$. We can also rewrite $\sqrt{\mu^{\top} \Sigma \mu}$ with the use of function $f$. The definition of $\Sigma$ suggests that \vspace{-0.1cm}
	\[ \begin{aligned} \mu^{\top} \Sigma \mu = & \mu^{\top} \mathbb{E} \bigg[ \frac{1}{H} \sum_{h=0}^{H-1} \phi(s_{1,h},a_{1,h}) \phi(s_{1,h},a_{1,h})^{\top} \bigg] \mu = \, \frac{1}{H} \sum_{h=0}^{H-1} \mathbb{E}\Big[ \big( \phi(s_{1,h},a_{1,h})^{\top} \mu \big)^2 \Big] = \frac{1}{H} \sum_{h=0}^{H-1} \mathbb{E} \big[ f^2(s_{1,h}, a_{1,h}) \big]. \end{aligned} \]
	Since $\mathcal{Q}$ is isomorphic to $\mathbb{R}^d$, we have
	\[ \sqrt{(\nu_h^{\pi})^{\top} \Sigma^{-1} \nu_h^{\pi}} = \sup_{\mu \in \mathbb{R}^d} \frac{(\nu_h^{\pi})^{\top}\mu}{\sqrt{\mu^{\top} \Sigma \mu}} = \sup_{f \in \mathcal{Q}} \frac{ \mathbb{E}^{\pi}\big[ f(s_h,a_h) \, \big| \, s_0 \sim \xi_0 \big]}{\sqrt{\mathbb{E} \big[ \frac{1}{H} \sum_{h=0}^{H-1} f^2(s_{1,h},a_{1,h}) \big]}}. \]
	One can prove \eqref{equvalence_1.2} in a similar way. 
\end{proof}

\subsection{Proof of Lemma \ref{lemma:Term1}} \label{appendix:Term1}
\begin{proof}[Proof of Lemma \ref{lemma:Term1}]
	For each episode $\boldsymbol{\tau}_k = \big(s_{k,0}, a_{k,0}, s_{k,1}, a_{k,1}, \ldots, s_{H-1}, a_{H-1}, s_H\big)$, we define
	\[ X_k := \frac{1}{H}\sum_{h=0}^{H-1} \Sigma^{-1/2} \phi(s_{k,h},a_{k,h}) \phi(s_{k,h},a_{k,h})^{\top} \Sigma^{-1/2} \in \mathbb{R}^{d \times d}. \]
	Then,
	\begin{equation} \label{1_0} N^{-1} \Sigma^{-1/2} \widehat{\Sigma} \Sigma^{-1/2} - I = \frac{1}{K} \sum_{k=1}^K ( X_k - I )  + \frac{\lambda}{N} \Sigma^{-1}. \end{equation}
	It is easy to see that $X_1, X_2, \ldots, X_K$ are independent and $\mathbb{E} [ X_k ] = I$.
	In the following, we apply the matrix-form Bernstein inequality to analyze the concentration of $\frac{1}{K} \sum_{k=1}^K X_k$.
	
	We first consider the matrix-valued variance ${\rm Var}(X_k)  = \mathbb{E} \big[ (X_k - I)^2 \big] = \mathbb{E} \big[ X_kX_k \big] - I$. 
	Denote
	\begin{equation} \label{Phik} \Phi_k := \Big[ \phi(s_{k,0},a_{k,0}), \ldots, \phi(s_{k,H-1},a_{k,H-1}) \Big] \in \mathbb{R}^{d \times H}. \end{equation}
	Then $X_k = \frac{1}{H} \Sigma^{-1/2} \Phi_k \Phi_k^{\top} \Sigma^{-1/2}$.
	For any vector $\mu \in \mathbb{R}^{d}$,
	\begin{equation} \label{1_2} \begin{aligned} \mu^{\top} \mathbb{E}\big[ X_k^2 \big] \mu = & \mathbb{E} \Big[ \big\| X_k \mu \big\|_2^2 \Big] = \frac{1}{H^2} \mathbb{E} \Big[ \big\|  \Sigma^{-1/2} \Phi_k \Phi_k^{\top} \Sigma^{-1/2} \mu \big\|_2^2 \Big] \leq \frac{1}{H^2} \mathbb{E} \Big[ \big\|  \Sigma^{-1/2} \Phi_k \big\|_2^2 \big\| \Phi_k^{\top} \Sigma^{-1/2} \mu \big\|_2^2 \Big]. \end{aligned} \end{equation}
	Since $\big| \big(\Phi_k^{\top} \Sigma^{-1} \Phi_k\big)_{ij} \big| \leq C_1 d$ for all $i,j = 1,2,\ldots,H$, we have $\big\| \Sigma^{-1/2} \Phi_k \big\|_2^2 = \big\| \Phi_k^{\top} \Sigma^{-1} \Phi_k \big\|_2 \leq \big\| \Phi_k^{\top} \Sigma^{-1} \Phi_k \big\|_F \leq C_1 d H$. It follows from \eqref{1_2} that
	\[ \mu^{\top} \mathbb{E}\big[ X_k^2 \big] \mu \leq C_1 d \cdot \frac{1}{H} \mathbb{E} \Big[ \big\| \Phi_k^{\top} \Sigma^{-1/2} \mu \big\|_2^2 \Big] = C_1 d \cdot \mu^{\top} \mathbb{E} \big[ X_k \big] \mu = C_1 d \cdot \| \mu \|_2^2, \]
	where we used the identity $\frac{1}{H} \big\| \Phi_k^{\top} \Sigma^{-1/2} \mu \big\|_2^2 = \mu^{\top} X_k \mu$ and $\mathbb{E}\big[ X_k \big] = I$.
	We have
	\begin{equation}\label{Variance1} {\rm Var}(X_k) \preceq \mathbb{E}\big[ X_k^2 \big] \preceq C_1 d \cdot I. \end{equation}
	
	Additionally,
	\[ -I \preceq X_k -I = \frac{1}{H} \sum_{h=0}^{H-1} \Sigma^{-1/2} \phi(s_{k,h},a_{k,h}) \phi(s_{k,h},a_{k,h})^{\top} \Sigma^{-1/2} - I \preceq C_1 d \cdot I - I. \]
	Therefore, $\|X_k-I\|_2 \leq C_1 d$.
	Since $X_1, X_2, \ldots, X_K$ are {\it i.i.d.}, by the matrix-form Bernstein inequality, we have
	\[ \mathbb{P} \Bigg( \bigg\| \sum_{k=1}^K (X_k - I) \bigg\|_2 \geq \varepsilon \Bigg) \leq 2d \cdot \exp\bigg( - \frac{\varepsilon^2/2}{C_1 d K + C_1 d \varepsilon/3} \bigg), \quad \forall \varepsilon \geq 0. \]
	With probability at least $1 - \delta$,
	\begin{equation} \label{1_3} \bigg\| \frac{1}{K}\sum_{k=1}^K (X_k - I) \bigg\|_2 \leq \sqrt{\frac{2\ln(2d/\delta)C_1 d}{K}} + \frac{2 \ln(2d/\delta)C_1 d}{3K}, \end{equation}
	from which we derive \eqref{Term1}.	
\end{proof}

\subsection{Proof of \eqref{eqnE1_new} in Lemma \ref{lemma:E1}} \label{appendix:eqnE1_new}

\begin{proof}[Proof of \eqref{eqnE1_new}]
	The only difference between the proofs of \eqref{eqnE1} and \eqref{eqnE1_new} is the estimate of conditional variance ${\rm Var}\big[ e_n \, \big| \, \mathcal{F}_n \big] $. We will show it in detail.
	
	We expand the conditional variance ${\rm Var}\big[ e_n \, \big| \, \mathcal{F}_n \big] $ into $(H+1)^2$ terms, 
	\begin{equation} \label{VarE1_new} \begin{aligned} & {\rm Var} \big[ e_n \, \big| \, \mathcal{F}_n \big] = \mathbb{E} \Bigg[ \bigg( \sum_{h=0}^H (\nu_h^{\pi})^{\top} \Sigma^{-1} \phi(s_n,a_n) \Big( Q_h^{\pi}(s_n, a_n) - \big( r_n'+V_{h+1}^{\pi}(s_n') \big) \Big) \bigg)^2 \, \Bigg| \, \mathcal{F}_n \Bigg] \\ = & \sum_{h_1=0}^{H} \sum_{h_2=0}^{H} \Big( (\nu_{h_1}^{\pi})^{\top} \Sigma^{-1} \phi(s_n,a_n) \Big)\Big( (\nu_{h_2}^{\pi})^{\top} \Sigma^{-1} \phi(s_n,a_n) \Big) {\rm Cov} \Big( r_n'+V_{h_1+1}^{\pi}(s_n'), r_n'+V_{h_2+1}^{\pi}(s_n') \, \Big| \, s_n, a_n \Big). \end{aligned} \end{equation}
	Recall that $r_n' + V_{h+1}^{\pi}(s_n') \in [0,H-h]$ for all $s \in \mathcal{S}$ and $h=0,1,2,\ldots,H$. It follows that
	\[ \begin{aligned} {\rm Cov} \Big( r_n' + V_{h_1+1}^{\pi}(s_n'), r_n'+V_{h_2+1}^{\pi}(s_n') \, \Big| \, s_n, a_n \Big) \leq & \sqrt{{\rm Var} \big( r_n'+V_{h_1+1}^{\pi}(s_n') \, \big| \, s_n, a_n \big) {\rm Var} \big( r_n'+V_{h_2+1}^{\pi}(s_n') \, \big| \, s_n, a_n \big)} \\ \leq & \frac{1}{4} (H-h_1+1) (H-h_2+1). \end{aligned} \]
	Under the condition $\phi(s,a)^{\top} \Sigma^{-1} \phi(s',a') \geq 0$ for all$(s,a), (s',a') \in \mathcal{X}$, it holds that
	\[ \begin{aligned} & \Big( (\nu_{h_1}^{\pi})^{\top} \Sigma^{-1} \phi(s_n,a_n) \Big)\Big( (\nu_{h_2}^{\pi})^{\top} \Sigma^{-1} \phi(s_n,a_n) \Big) {\rm Cov} \Big( r_n'+V_{h_1+1}^{\pi}(s_n'), r_n'+V_{h_2+1}^{\pi}(s_n') \, \Big| \, s_n, a_n \Big) \\ \leq & \Big( (\nu_{h_1}^{\pi})^{\top} \Sigma^{-1} \phi(s_n,a_n) \Big)\Big( (\nu_{h_2}^{\pi})^{\top} \Sigma^{-1} \phi(s_n,a_n) \Big) \cdot \frac{1}{4}(H-h_1+1)(H-h_2+1). \end{aligned} \]
	Therefore, \eqref{VarE1_new} further implies
	\[ \begin{aligned} {\rm Var} \big[ e_n \, \big| \, \mathcal{F}_n \big] \leq & \sum_{h_1=0}^{H} \sum_{h_2=0}^{H} \Big( (\nu_{h_1}^{\pi})^{\top} \Sigma^{-1} \phi(s_n,a_n) \Big)\Big( (\nu_{h_2}^{\pi})^{\top} \Sigma^{-1} \phi(s_n,a_n) \Big) \cdot \frac{1}{4}(H-h_1+1)(H-h_2+1) \\ = & \bigg( \frac{1}{2} \sum_{h=0}^{H} (H-h+1) (\nu_h^{\pi})^{\top} \Sigma^{-1} \phi(s_n,a_n) \bigg)^2. \end{aligned} \]
	Therefore,
	\[ \sum_{n=1}^N {\rm Var} \big[ e_n \, \big| \, \mathcal{F}_n \big] \leq \frac{1}{4} \Bigg(  \sum_{h=0}^{H} (H-h+1) \nu_h^{\pi} \Bigg)^{\top} \Sigma^{-1} \Bigg( \sum_{n=1}^N \phi(s_n,a_n) \phi(s_n,a_n)^{\top} \Bigg) \Sigma^{-1} \Bigg(  \sum_{h=0}^{H} (H-h+1) \nu_h^{\pi} \Bigg). \]
	Lemma \ref{lemma:Term1} ensures that with probability at least $1 - \delta/2$,
	\begin{equation} \label{E1Var<} \begin{aligned} \sum_{n=1}^N {\rm Var} \big[ e_n \, \big| \, \mathcal{F}_n \big] \leq & N \cdot \frac{1}{4} \Bigg( \sum_{h=0}^{H} (H-h+1) \nu_h^{\pi} \Bigg)^{\top} \Sigma^{-1} \Bigg( \sum_{h=0}^{H} (H-h+1) \nu_h^{\pi} \Bigg) \\ & \qquad \cdot \Bigg( 1 + \sqrt{\frac{2\ln(4d/\delta)C_1dH}{N}} + \frac{2\ln(4d/\delta) C_1 d H}{3N} \Bigg). \end{aligned} \end{equation}
	Note that by triangle inequality,
	\[ \Bigg( \sum_{h=0}^{H} (H-h+1) \nu_h^{\pi} \Bigg)^{\top} \Sigma^{-1} \Bigg( \sum_{h=0}^{H} (H-h+1) \nu_h^{\pi} \Bigg) \leq \Bigg( \sum_{h=0}^{H} (H-h+1) \sqrt{(\nu_h^{\pi})^{\top} \Sigma^{-1} \nu_h^{\pi}} \Bigg)^2 = B^2, \]
	where $B$ is defined by \eqref{B}.
	Then we follow the same arguments as in the proof of \eqref{eqnE1}, while taking $\sigma^2$ to be
	\[ \begin{aligned} \sigma^2 := & \frac{N}{4} \Bigg( \sum_{h=0}^{H} (H-h+1) \nu_h^{\pi} \Bigg)^{\top} \Sigma^{-1} \Bigg( \sum_{h=0}^{H} (H-h+1) \nu_h^{\pi} \Bigg) + N \Bigg( \sqrt{\frac{2\ln(4d/\delta)C_1dH}{N}} + \frac{2\ln(4d/\delta) C_1 d H}{3N} \Bigg) \cdot \frac{B^2}{4}. \end{aligned} \]
Then we obtain \eqref{eqnE1_new}.
\end{proof}

\subsection{Proof of Lemma \ref{lemma:E2decompose}} \label{appendix:E2decompose}

\begin{proof}[Proof of Lemma \ref{lemma:E2decompose}]
	1. Recall the definition of $E_2$,
	\begin{equation} \label{E2} E_2 = \sum_{h=0}^{H} \Big( N ( \nu_0^{\pi} )\!^{\top} \big( \widehat{M}^{\pi} \big)^h \widehat{\Sigma}^{-1} - \big( \nu_h^{\pi} \big)^{\top} \Sigma^{-1} \Big) \Delta W_{h}^{\pi}. \end{equation}
	In order to leverage the contraction inequality \eqref{SMS<1}, we decompose the power term $\big(\widehat{M}^{\pi}\big)^h$ in \eqref{E2} into
	\begin{equation} \label{power} \begin{aligned} \big(\widehat{M}^{\pi}\big)^h = & (\Sigma^{\pi})^{-1/2} \big( (\Sigma^{\pi})^{1/2} \widehat{M}^{\pi} (\Sigma^{\pi})^{-1/2} \big) \cdots \big( (\Sigma^{\pi})^{1/2} \widehat{M}^{\pi} (\Sigma^{\pi})^{-1/2} \big) (\Sigma^{\pi})^{1/2} \\ = & (\Sigma^{\pi})^{-1/2} \Big( (\Sigma^{\pi})^{1/2} M^{\pi} (\Sigma^{\pi})^{-1/2} + (\Sigma^{\pi})^{1/2} \Delta M^{\pi} (\Sigma^{\pi})^{-1/2} \Big) \\ & \qquad \qquad \cdots \Big( (\Sigma^{\pi})^{1/2} M^{\pi} (\Sigma^{\pi})^{-1/2} + (\Sigma^{\pi})^{1/2} \Delta M^{\pi} (\Sigma^{\pi})^{-1/2} \Big) (\Sigma^{\pi})^{1/2}. \end{aligned} \end{equation}
	Plugging \eqref{power} and $N\widehat{\Sigma}^{-1} = \Sigma^{-1} + \Delta X$ into \eqref{E2} and expanding the polynomial, we obtain \vspace{-1cm}
	\[ \begin{aligned} E_2 = & \sum_{h=0}^{H} \sum_{\begin{subarray}{c} ( \delta_{h,0}, \delta_{h,1}, \ldots, \delta_{h,h} ) \\  \in \{ 0,1 \}^h \backslash \{0\}^h \end{subarray}} \!\!\! \begin{aligned} & \\ & \\ & ( \nu_0^{\pi} )\!^{\top} (\Sigma^{\pi})^{-1/2} \big( (\Sigma^{\pi})^{1/2} M^{\pi} (\Sigma^{\pi})^{-1/2} \big)^{1-\delta_{h,1}} \big( (\Sigma^{\pi})^{1/2} \Delta M^{\pi} (\Sigma^{\pi})^{-1/2} \big)^{\delta_{h,1}} \\ & \qquad \cdots \big( (\Sigma^{\pi})^{1/2} M^{\pi} (\Sigma^{\pi})^{-1/2} \big)^{1-\delta_{h,h}} \big( (\Sigma^{\pi})^{1/2} \Delta M^{\pi} (\Sigma^{\pi})^{-1/2} \big)^{\delta_{h,h}} \\ & \quad \qquad \cdot (\Sigma^{\pi})^{1/2} \Sigma^{-1/2} \big( \Sigma^{1/2} (\Delta X) \Sigma^{1/2} \big)^{\delta_{h,0}} \Sigma^{-1/2} \Delta W_{h}^{\pi}. \end{aligned} \end{aligned} \]
	By taking the operator norm $\| \cdot \|_2$, \vspace{-1cm}
	\[ \begin{aligned} |E_2| \leq & \sum_{h=0}^{H} \sum_{\begin{subarray}{c} ( \delta_{h,0}, \delta_{h,1}, \ldots, \delta_{h,h} ) \\  \in \{ 0,1 \}^h \backslash \{0\}^h \end{subarray}} \!\!\!\!\! \begin{aligned} & \\ & \\ & \big\| ( \nu_0^{\pi} )\!^{\top} (\Sigma^{\pi})^{-1/2} \big\|_2 \big\| (\Sigma^{\pi})^{1/2} M^{\pi} (\Sigma^{\pi})^{-1/2} \big\|_2^{1-\delta_{h,1}} \big\| (\Sigma^{\pi})^{1/2} \Delta M^{\pi} (\Sigma^{\pi})^{-1/2} \big\|_2^{\delta_{h,1}} \\ & \quad \cdots \big\| (\Sigma^{\pi})^{1/2} M^{\pi} (\Sigma^{\pi})^{-1/2} \big\|^{1-\delta_{h,h}} \big\| (\Sigma^{\pi})^{1/2} \Delta M^{\pi} (\Sigma^{\pi})^{-1/2} \big\|_2^{\delta_{h,h}} \\ & \qquad \cdot \big\| (\Sigma^{\pi})^{1/2} \Sigma^{-1/2} \big\|_2 \big\| \Sigma^{1/2} (\Delta X) \Sigma^{1/2} \big\|_2^{\delta_{h,0}} \big\| \Sigma^{-1/2} \Delta W_{h}^{\pi} \big\|_2 \end{aligned} \\ \leq & \sqrt{(\nu_0^{\pi})^{\top} (\Sigma^{\pi})^{-1} \nu_0^{\pi}} \cdot \big\| (\Sigma^{\pi})^{1/2} \Sigma^{-1/2} \big\|_2 \\ & \cdot \sum_{h=0}^{H} \bigg( \Big( \big\| (\Sigma^{\pi})^{1/2} M^{\pi} (\Sigma^{\pi})^{-1/2} \big\|_2 + \big\| (\Sigma^{\pi})^{1/2} \Delta M^{\pi} (\Sigma^{\pi})^{-1/2} \big\|_2 \Big)^h\Big( 1 + \big\| \Sigma^{1/2} (\Delta X) \Sigma^{1/2} \big\|_2 \Big) \\ & \qquad \qquad \qquad \qquad \qquad \qquad \qquad \qquad \quad - \big\| (\Sigma^{\pi})^{1/2} M^{\pi} (\Sigma^{\pi})^{-1/2} \big\|_2^h \bigg) \cdot \big\| \Sigma^{-1/2} \Delta W_{h}^{\pi} \big\|_2. \end{aligned} \]
	Since $\big\| (\Sigma^{\pi})^{1/2} M^{\pi} (\Sigma^{\pi})^{-1/2} \big\|_2 \leq 1$, 
	we have
	\begin{equation} \label{E2_M} \begin{aligned} |E_2| \leq & \sqrt{(\nu_0^{\pi})^{\top} (\Sigma^{\pi})^{-1} \nu_0^{\pi}} \cdot \big\| (\Sigma^{\pi})^{1/2} \Sigma^{-1/2} \big\|_2 \\ & \cdot \sum_{h=0}^{H} \bigg( \Big( 1 + \big\| (\Sigma^{\pi})^{1/2} \Delta M^{\pi} (\Sigma^{\pi})^{-1/2} \big\|_2 \Big)^h\Big( 1 + \big\| \Sigma^{1/2} (\Delta X) \Sigma^{1/2} \big\|_2 \Big) - 1 \bigg) \cdot \big\| \Sigma^{-1/2} \Delta W_{h}^{\pi} \big\|_2. \end{aligned} \end{equation}
	
	2. 
	We now analyze $\big\| (\Sigma^{\pi})^{1/2} (\Delta M^{\pi}) (\Sigma^{\pi})^{-1/2} \big\|_2$ in \eqref{E2_M}. Let $\widehat{Y}^{\pi} := \sum_{n=1}^N \phi(s_n,a_n) \phi^{\pi}(s_n')$. By definition \eqref{vector_recursion} of $\widehat{M}^{\pi}$, $\widehat{M}^{\pi} = \widehat{\Sigma}^{-1} \widehat{Y}^{\pi}$. Using the decompositions $N \widehat{\Sigma}^{-1} = \Sigma^{-1} + \Delta X$ and $N^{-1} \widehat{Y}^{\pi} = \Sigma M^{\pi} + \Delta Y^{\pi}$, we learn that $\Delta M^{\pi} = (\Delta X) \Sigma M^{\pi} + \Sigma^{-1} \Delta Y^{\pi} + (\Delta X) (\Delta Y^{\pi})$, therefore,
	\[ \begin{aligned} & \big\| (\Sigma^{\pi})^{1/2} \Delta M^{\pi} (\Sigma^{\pi})^{-1/2} \big\|_2 \\ \leq & \big\| (\Sigma^{\pi})^{1/2} (\Delta X) \Sigma M^{\pi} (\Sigma^{\pi})^{-1/2} \big\|_2 + \big\| (\Sigma^{\pi})^{1/2} \Sigma^{-1} \Delta Y^{\pi} (\Sigma^{\pi})^{-1/2} \big\|_2 + \big\| (\Sigma^{\pi})^{1/2} (\Delta X) (\Delta Y^{\pi}) (\Sigma^{\pi})^{-1/2} \big\|_2 \\ \leq & \big\| (\Sigma^{\pi})^{1/2} \Sigma^{-1/2} \big\|_2 \big\| \Sigma^{1/2} (\Delta X) \Sigma^{1/2} \big\|_2 \big\| \Sigma^{1/2} (\Sigma^{\pi})^{-1/2} \big\|_2 \big\|  (\Sigma^{\pi})^{1/2} M^{\pi} (\Sigma^{\pi})^{-1/2} \big\|_2 \\ & + \big\| (\Sigma^{\pi})^{1/2} \Sigma^{-1/2} \big\|_2 \big\| \Sigma^{-1/2} (\Delta Y^{\pi}) \Sigma^{-1/2} \big\|_2 \big\| \Sigma^{1/2} (\Sigma^{\pi})^{-1/2} \big\|_2 \\ & + \big\| (\Sigma^{\pi})^{1/2} \Sigma^{-1/2} \big\|_2 \big\| \Sigma^{1/2} (\Delta X) \Sigma^{1/2} \big\|_2 \big\| \Sigma^{-1/2} (\Delta Y^{\pi}) \Sigma^{-1/2} \big\|_2 \big\| \Sigma^{1/2} (\Sigma^{\pi})^{-1/2} \big\|_2. \end{aligned} \]
	Recall the definition of condition number $\kappa_1$ in Theorem \ref{UpperBound} and the contraction inequality \eqref{SMS<1}. It follows that
	\begin{equation} \label{DeltaM} \begin{aligned} & \big\| (\Sigma^{\pi})^{1/2} \Delta M^{\pi} (\Sigma^{\pi})^{-1/2} \big\|_2 \\ \leq & \sqrt{\kappa_1} \big\| \Sigma^{1/2} (\Delta X) \Sigma^{1/2} \big\|_2 + \sqrt{\kappa_1} \big\| \Sigma^{-1/2} (\Delta Y^{\pi}) \Sigma^{-1/2} \big\|_2 + \sqrt{\kappa_1} \big\| \Sigma^{1/2} (\Delta X) \Sigma^{1/2} \big\|_2 \big\| \Sigma^{-1/2} \Delta Y^{\pi} \Sigma^{-1/2} \big\|_2 \\ = & \sqrt{\kappa_1} \Big( \big( 1 + \big\| \Sigma^{1/2} (\Delta X) \Sigma^{1/2} \big\|_2 \big) \big( 1 + \big\| \Sigma^{-1/2} (\Delta Y^{\pi}) \Sigma^{-1/2} \big\|_2 \big) - 1 \Big).  \end{aligned} \end{equation}

	3. Consider the error term $\big\| \Sigma^{1/2} (\Delta X) \Sigma^{1/2} \big\|_2$ in \eqref{E2_M} and \eqref{DeltaM}. 
	Note that $\Sigma^{1/2} (\Delta X) \Sigma^{1/2} = N \Sigma^{1/2} \widehat{\Sigma}^{-1} \Sigma^{1/2} - I$.
	If $\big\| N^{-1} \Sigma^{-1/2} \widehat{\Sigma}\Sigma^{-1/2} - I \big\|_2 \leq \frac{1}{2}$,
	\[ N \Sigma^{1/2} \widehat{\Sigma}^{-1} \Sigma^{1/2} = \Big( I - \big(I - N^{-1} \Sigma^{-1/2} \widehat{\Sigma}\Sigma^{-1/2}\big) \Big)^{-1} \!\! = \sum_{c=0}^{\infty} \big(I - N^{-1} \Sigma^{-1/2} \widehat{\Sigma}\Sigma^{-1/2}\big)^c. \]
	Therefore,
	\[ \begin{aligned} \big\| \Sigma^{1/2} (\Delta X) \Sigma^{1/2} \big\|_2 = & \Bigg\| \sum_{c=1}^{\infty} \big(I - N^{-1} \Sigma^{-1/2} \widehat{\Sigma}\Sigma^{-1/2}\big)^c \Bigg\|_2 \leq \sum_{c=1}^{\infty} \big\| I - N^{-1} \Sigma^{-1/2} \widehat{\Sigma}\Sigma^{-1/2} \big\|_2^c \\ \leq & \sum_{c=0}^{\infty} 2^{-c} \big\| I - N^{-1} \Sigma^{-1/2} \widehat{\Sigma}\Sigma^{-1/2} \big\|_2  = 2 \big\| I - N^{-1} \Sigma^{-1/2} \widehat{\Sigma}\Sigma^{-1/2} \big\|_2. \end{aligned} \]
\end{proof}

\subsection{Proof of Lemma \ref{lemma:Term2}} \label{appendix:Term2}


\begin{proof}[Proof of Lemma \ref{lemma:Term2}]
	Recall that $\Delta Y^{\pi} = \frac{1}{N} \sum_{n=1}^N \phi(s_n,a_n) \phi^{\pi}(s_n')^{\top} -  \Sigma M^{\pi}$. 
	For each trajectory $\boldsymbol{\tau}_k = (s_{k,0}, a_{k,0}, s_{k,1}, a_{k,1}, \ldots, s_{k,H})$, we take
	\[ Y^{\pi}_k := \frac{1}{H} \sum_{h=0}^{H-1} \Sigma^{-1/2} \phi(s_{k,h},a_{k,h}) \phi^{\pi}(s_{k,h+1})^{\top} \Sigma^{-1/2}. \]
	Then, $\Sigma^{-1/2} (\Delta Y^{\pi}) \Sigma^{-1/2} = \frac{1}{K} \sum_{k=1}^K \big( Y_k^{\pi} - \Sigma^{1/2} M^{\pi} \Sigma^{-1/2} \big)$.
	
	We first note that
	\begin{equation} \label{SMS0} \begin{aligned} \mathbb{E}\big[ Y_k^{\pi} \big] = & \frac{1}{H} \sum_{h=0}^{H-1} \mathbb{E}\Big[ \Sigma^{-1/2} \phi(s_{k,h},a_{k,h}) \phi^{\pi}(s_{k,h+1})^{\top} \Sigma^{-1/2} \Big]  \\ = & \frac{1}{H}  \sum_{h=0}^{H-1} \mathbb{E} \Big[ \Sigma^{-1/2} \phi(s_{k,h},a_{k,h}) \mathbb{E} \big[ \phi^{\pi}(s_{k,h+1})^{\top} \, \big| \, s_{k,h}, a_{k,h} \big] \Sigma^{-1/2} \Big]  \\ = & \frac{1}{H} \sum_{h=0}^{H-1} \mathbb{E} \Big[ \Sigma^{-1/2} \phi(s_{k,h},a_{k,h}) \phi(s_{k,h},a_{k,h})^{\top} M^{\pi} \Sigma^{-1/2} \Big] = \Sigma^{1/2} M^{\pi} \Sigma^{-1/2}, \end{aligned} \end{equation}
	where we have used $\mathbb{E} \big[ \frac{1}{H} \sum_{h=0}^{H-1} \phi(s_{k,h},a_{k,h}) \phi(s_{k,h},a_{k,h})^{\top} \big] = \Sigma$ and $\mathbb{E}\big[ \phi^{\pi}(s') \, \big| \, s,a \big] = \phi(s,a)^{\top} \! M^{\pi}$ for any $(s,a) \in \mathcal{X}$.
	To this end, $\Sigma^{-1/2} (\Delta Y^{\pi}) \Sigma^{-1/2} = \frac{1}{K} \sum_{k=1}^K \big( Y^{\pi}_k - \mathbb{E}Y^{\pi}_k \big)$. Since $\boldsymbol{\tau}_1, \boldsymbol{\tau}_2, \ldots, \boldsymbol{\tau}_K$ are {\it i.i.d.}, we use the matrix-form Bernstein inequality to estimate $\big\| \Sigma^{-1/2} (\Delta Y^{\pi}) \Sigma^{-1/2} \big\|_2$.
	
	Similar to $\Phi_k$ in \eqref{Phik}, we also define $\Phi_k^{\pi} := \big[ \phi^{\pi}(s_{k,1}), \phi^{\pi}(s_{k,2}), \ldots, \phi^{\pi}(s_{k,H}) \big] \in \mathbb{R}^{d \times H}$. It is easy to see that $Y_k^{\pi} = \frac{1}{H} \Sigma^{-1/2} \Phi_k (\Phi_k^{\pi})^{\top} \Sigma^{-1/2}$. For any $\mu \in \mathbb{R}^d$, we have
	\[ \begin{aligned} \mu^{\top} \mathbb{E} \big[ Y_k^{\pi} (Y_k^{\pi})^{\top} \big] \mu = & \mathbb{E} \big[ \| (Y_k^{\pi})^{\top} \mu \|_2^2 \big] = \frac{1}{H^2} \mathbb{E} \Big[ \big\| \Sigma^{-1/2} \Phi^{\pi}_k \Phi_k^{\top} \Sigma^{-1/2} \mu \big\|_2^2 \Big] \leq \frac{1}{H^2} \mathbb{E} \Big[ \big\| \Sigma^{-1/2} \Phi^{\pi}_k \big\|_2^2 \big\| \Phi_k^{\top} \Sigma^{-1/2} \mu \big\|_2^2 \Big]. \end{aligned} \]
	Parallel to the proof of Lemma \ref{Term1}, it holds that $\big\| \Sigma^{-1/2} \Phi_k^{\pi} \big\|_2^2 \leq C_1 d H$. Therefore,
	\[ \mu^{\top} \mathbb{E} \big[ Y_k^{\pi} (Y_k^{\pi})^{\top} \big] \mu \leq \frac{1}{H^2} \mathbb{E} \Big[ C_1dH \big\| \Phi_k^{\top} \Sigma^{-1/2} \mu \big\|_2^2 \Big] = \frac{C_1 d}{H} \cdot \mu^{\top} \Sigma^{-1/2} \mathbb{E} \big[ \Phi_k \Phi_k^{\top} \big] \Sigma^{-1/2} \mu = C_1 d \cdot \|\mu\|_2^2, \]
	where we have used $\Sigma = \frac{1}{H} \mathbb{E} \big[ \Phi_k \Phi_k^{\top} \big]$.
	It follows that
	\[ \begin{aligned} {\rm Var}_1(Y_k^{\pi}) := & \mathbb{E}\Big[ \big(Y_k^{\pi} - \mathbb{E} Y_k^{\pi} \big) \big( Y_k^{\pi} - \mathbb{E} Y_k^{\pi} \big)^{\top} \Big] \preceq \mathbb{E}\big[ Y_k^{\pi} (Y_k^{\pi})^{\top} \big] \preceq C_1 d \cdot I. \end{aligned} \]
	Analogously,
	\[ \begin{aligned} {\rm Var}_2(Y_k^{\pi}) := & \mathbb{E}\Big[ \big(Y_k^{\pi} - \mathbb{E} Y_k^{\pi} \big)^{\top} \big( Y_k^{\pi} - \mathbb{E} Y_k^{\pi} \big) \Big] \preceq \mathbb{E}\big[ (Y_k^{\pi})^{\top} Y_k^{\pi} \big] \preceq C_1 d \cdot \Sigma^{-1/2} \mathbb{E}\Bigg[ \frac{1}{H} \sum_{h=1}^H \phi^{\pi}(s_{k,h}) \phi^{\pi}(s_{k,h})^{\top} \Bigg] \Sigma^{-1/2}. \end{aligned} \]
	Therefore, \[\max \Big\{ \big\|{\rm Var}_1(Y_k^{\pi})\big\|_2, \big\| {\rm Var}_2(Y_k^{\pi}) \big\|_2 \Big\} \leq C_1 d \cdot \kappa_2^2,\]
	where $\kappa_2$ is defined in Theorem \ref{UpperBound}.
	It also holds that $\|Y_k^{\pi}\|_2 \leq C_1 d$.
	Hence,
	\[ \big\| Y_k^{\pi} - H \Sigma^{1/2} M^{\pi} \Sigma^{-1/2} \big\|_2 \leq 2C_1 d. \]
	Applying Bernstein inequality, we derive for any $\varepsilon \geq 0$,
	\[ \begin{aligned} & \mathbb{P} \Bigg( \bigg\| \sum_{k=1}^K \big(Y_k^{\pi} - \Sigma^{1/2} M^{\pi} \Sigma^{-1/2}\big) \bigg\|_2 \geq \varepsilon \Bigg) \leq 2d \cdot \exp\bigg( - \frac{\varepsilon^2/2}{K \cdot C_1 d \cdot \kappa_2^2 + 2C_1 d \varepsilon/3} \bigg), \end{aligned} \]
	which further implies \eqref{Term2}.
\end{proof}



\subsection{Proof of Lemma \ref{lemma:Term3}} \label{appendix:Term3}
\begin{proof}[Proof of Lemma \ref{lemma:Term3}]
	We first note that $\Sigma^{-1/2} \Delta W_h^{\pi} = \frac{1}{N} \sum_{n=1}^N W_n^{\pi,h}$, where
	\[ W_n^{\pi,h} := \Sigma^{-1/2} \phi(s_n,a_n) \Big( Q_h^{\pi}(s_n, a_n) - \big( r_n'+V_{h+1}^{\pi}(s_n') \big) \Big) \in \mathbb{R}^d \]
	and $\mathbb{E}\big[ W_n^{\pi,h} \, \big| \, \mathcal{F}_n \big] = 0$.
	Similar as the proof of Lemma \ref{lemma:E1}, we apply matrix-form Freedman's inequality \cite{tropp2011freedman} to analyze the concentration property.
	
	Consider conditional variances ${\rm Var}_1\big[ W_n^{\pi,h} \, \big| \, \mathcal{F}_n \big] := \mathbb{E} \big[ W_n^{\pi,h} (W_n^{\pi,h})^{\top} \, \big| \, \mathcal{F}_n \big] \in \mathbb{R}^{d \times d}$ and \\ ${\rm Var}_2\big[ W_n^{\pi,h} \, \big| \, \mathcal{F}_n \big] := \mathbb{E} \big[ (W_n^{\pi,h})^{\top} W_n^{\pi,h} \, \big| \, \mathcal{F}_n \big] \in \mathbb{R}$. It holds that
	\[ \begin{aligned} \big\| {\rm Var}_1\big[ W_n^{\pi,h} \, \big| \, \mathcal{F}_n \big] \big\|_2 = & \big\| \mathbb{E} \big[ W_n^{\pi,h} (W_n^{\pi,h})^{\top} \, \big| \, \mathcal{F}_n \big] \big\|_2 \leq \mathbb{E} \big[ \big\| W_n^{\pi,h} (W_n^{\pi,h})^{\top} \big\|_2 \, \big| \, \mathcal{F}_n \big] \\ = & \mathbb{E} \big[ \| W_n^{\pi,h} \|_2^2 \, \big| \, \mathcal{F}_n \big] = {\rm Var}_2\big[ W_n^{\pi,h} \, \big| \, \mathcal{F}_n \big] \end{aligned} \]
	and
	\[ \begin{aligned} {\rm Var}_2\big[ W_n^{\pi,h} \, \big| \, \mathcal{F}_n \big] = & \mathbb{E} \big[ \| W_n^{\pi,h} \|_2^2 \big| \mathcal{F}_n \big] = \phi(s_n,a_n)^{\top} \Sigma^{-1} \phi(s_n,a_n) \cdot {\rm Var} \big[ r_n' + V_{h+1}^{\pi}(s_n') \, \big| \, s_n, a_n \big] \\ \leq & \frac{1}{4} (H-h+1)^2 \cdot \phi(s_n,a_n)^{\top} \Sigma^{-1} \phi(s_n,a_n),  \end{aligned} \]
	where we have used ${\rm Var} \big[ r_n' + V_{h+1}^{\pi}(s_n') \, \big| \, s_n, a_n \big] \leq \frac{1}{4} (H-h+1)^2$.
	Note that
	\[ \begin{aligned} \sum_{n=1}^N \phi(s_n,a_n)^{\top} \Sigma^{-1} \phi(s_n,a_n) = & N d + N \cdot {\rm Tr}\Bigg( \Sigma^{-1/2} \bigg( \frac{1}{N} \sum_{n=1}^N \phi(s_n,a_n) \phi(s_n,a_n)^{\top} \bigg) \Sigma^{-1/2} - I \Bigg) \\ \leq & N d + Nd \cdot \Bigg\| \Sigma^{-1/2} \bigg( \frac{1}{N} \sum_{n=1}^N \phi(s_n,a_n) \phi(s_n,a_n)^{\top} \bigg) \Sigma^{-1/2} - I \Bigg\|_2. \end{aligned} \]
	We take
	\begin{equation} \label{sigma2} \sigma^2 := Nd \Bigg( 1 + \sqrt{\frac{2\ln\big((3d+1)/\delta\big)C_1dH}{N}} + \frac{2\ln\big((3d+1)/\delta\big)C_1 d H}{3N} \Bigg) \cdot \frac{(H-h+1)^2}{4}. \end{equation}
	According to Lemma \ref{lemma:Term1}, it holds that
	\begin{equation} \label{Var2} \mathbb{P} \Bigg( \bigg\| \sum_{n=1}^N {\rm Var}_1\big[ W_n^{\pi,h} \, \big| \, \mathcal{F}_n \big] \bigg\|_2 \leq \sum_{n=1}^N {\rm Var}_2\big[ W_n^{\pi,h} \, \big| \, \mathcal{F}_n \big] \leq \sigma^2 \Bigg) \geq 1 - \frac{2d}{3d+1} \cdot \delta. \end{equation}
	
	Additionally, we have $\big\| W_n^{\pi,h} \big\|_2 \leq (H-h+1) \sqrt{C_1d}$.
	The Freedman's inequality therefore implies that for any $\varepsilon \in \mathbb{R}$,
	\begin{equation} \label{Freedman2} \begin{aligned} & \mathbb{P}\Bigg( \bigg|\sum_{n=1}^N W_n^{\pi,h}\bigg| \geq \varepsilon, \sum_{n=1}^N {\rm Var}_1\big[ W_n \, \big| \, \mathcal{F}_n \big] \leq \sum_{n=1}^N {\rm Var}_2\big[ W_n \, \big| \, \mathcal{F}_n \big] \leq \sigma^2  \Bigg) \\ \leq & (d+1) \exp \bigg( - \frac{\varepsilon^2/2}{\sigma^2 + (H-h+1)\sqrt{C_1d} \varepsilon / 3} \bigg), \end{aligned} \end{equation}
	where $\sigma^2$ is defined in \eqref{sigma2}.
	We take \[ \varepsilon := \sqrt{2 \ln\big((3d+1)/\delta\big)} \cdot \sigma + 2 \ln\big((3d+1)/\delta\big) (H-h+1) \sqrt{C_1 d} / 3. \]
	In a way similar to the proof of Lemma \ref{lemma:E1}, \eqref{Var2} and \eqref{Freedman2} imply that \eqref{Term3} holds with probability at least $1-\delta$.
\end{proof}


\subsection{Proof of Lemma \ref{lemma:E3}} \label{appendix:E3}
\begin{proof}[Proof of Lemma \ref{lemma:E3}]
	Recall
	\[ \begin{aligned} E_3 = & \lambda \sum_{h=0}^{H} ( \widehat{\nu}_h^{\pi} )^{\top} \widehat{\Sigma}^{-1} w_{h}^{\pi} = \lambda \sum_{h=0}^{H} ( \nu_0^{\pi} )^{\top} (\Sigma^{\pi})^{-1/2} \big( \widehat{M}^{\pi} \big)^h \widehat{\Sigma}^{-1} w_{h}^{\pi} \\ = & \frac{\lambda}{N} \sum_{h=0}^H \big( ( \nu_0^{\pi} )^{\top} (\Sigma^{\pi})^{-1/2} \big) \big( (\Sigma^{\pi})^{1/2} \widehat{M}^{\pi} (\Sigma^{\pi})^{-1/2} \big)^h \big( (\Sigma^{\pi})^{1/2} \Sigma^{-1/2} \big) \big( N \Sigma^{1/2} \widehat{\Sigma}^{-1} \Sigma^{1/2} \big) \Sigma^{-1} \big( \Sigma^{1/2} w_h^{\pi} \big). \end{aligned} \]
	Hence, we have
	\begin{equation} \label{E3decompose} \begin{aligned} |E_3| \leq \frac{\lambda}{N} \sum_{h=0}^H \sqrt{( \nu_0^{\pi} )^{\top} (\Sigma^{\pi})^{-1} \nu_0^{\pi}} & \big\| (\Sigma^{\pi})^{1/2} \widehat{M}^{\pi} (\Sigma^{\pi})^{-1/2} \big\|_2^h \big\| (\Sigma^{\pi})^{1/2} \Sigma^{-1/2} \big\|_2 \\ & \cdot \big\| N \Sigma^{1/2} \widehat{\Sigma}^{-1} \Sigma^{1/2} \big\|_2 \big\| \Sigma^{-1} \big\|_2 \big\| \Sigma^{1/2} w_h^{\pi} \big\|_2. \end{aligned} \end{equation}

	If $N \geq 20 \kappa_1(2+\kappa_2)^2 \ln(8dH/\delta)C_1dH^3 $, $\lambda \leq \ln(8dH/\delta)C_1dH \sigma_{\min}(\Sigma)$ and event $\mathcal{E}_{\delta}$ in \eqref{event} happens, then
	\[ \big\| (\Sigma^{\pi})^{1/2} \Delta M^{\pi} (\Sigma^{\pi})^{-1/2} \big\|_2 \leq \frac{1}{H}, ~~  \big\| \Sigma^{1/2}(\Delta X) \Sigma^{1/2} \big\|_2 \leq \frac{1}{6H} ~~ \text{and} ~~ \frac{\lambda}{N} \big\| \Sigma^{-1} \big\|_2 \leq \frac{\ln(8dH/\delta) C_1dH}{N}. \]
	It follows that
	\begin{equation} \label{hatM} \big\| (\Sigma^{\pi})^{1/2} \widehat{M}^{\pi} (\Sigma^{\pi})^{-1/2} \big\|_2 \leq \big\| (\Sigma^{\pi})^{1/2} M^{\pi} (\Sigma^{\pi})^{-1/2} \big\|_2 + \big\| (\Sigma^{\pi})^{1/2} \Delta M^{\pi} (\Sigma^{\pi})^{-1/2} \big\|_2 \leq 1 + \frac{1}{H}, \end{equation}
	and
	\begin{equation} \label{hatSinv} \big\| N \Sigma^{1/2} \widehat{\Sigma}^{-1} \Sigma^{1/2} \big\|_2 \leq 1 + \big\| N \Sigma^{1/2} (\Delta X) \Sigma^{1/2} \big\|_2 \leq 1 + \frac{1}{6H}. \end{equation}
	We also note that
	\begin{equation} \label{wnorm} \begin{aligned} \big\| \Sigma^{1/2} w_h^{\pi} \big\|_2^2 = & (w_h^{\pi})^{\top} \Sigma w_h^{\pi} = (w_h^{\pi})^{\top} \mathbb{E} \Bigg[ \frac{1}{H} \sum_{h'=0}^{H-1} \phi(s_{k,h'}, a_{k,h'}) \phi(s_{k,h'}, a_{k,h'})^{\top} \Bigg] w_h^{\pi} \\ = & \frac{1}{H} \sum_{h'=0}^{H-1} \mathbb{E} \Big[ \big( \phi(s_{k,h'}, a_{k,h'})^{\top} w_h^{\pi} \big)^2 \Big] \leq (H-h+1)^2. \end{aligned} \end{equation}

	Plugging \eqref{hatM}, \eqref{hatSinv} and \eqref{wnorm} into \eqref{E3decompose} yields
	\[ \begin{aligned} |E_3| \leq & \frac{\lambda}{N}\big\| \Sigma^{-1} \big\|_2 \sum_{h=0}^{H} \sqrt{( \nu_0^{\pi} )^{\top} (\Sigma^{\pi})^{-1} \nu_0^{\pi}} \big\| (\Sigma^{\pi})^{1/2} \Sigma^{-1/2} \big\|_2  \Big( 1 + \frac{1}{H} \Big)^h \Big( 1 + \frac{1}{6H} \Big) (H-h+1) \\ \leq & \frac{\lambda}{N}\big\| \Sigma^{-1} \big\|_2 \sqrt{( \nu_0^{\pi} )^{\top} (\Sigma^{\pi})^{-1} \nu_0^{\pi}} \big\| (\Sigma^{\pi})^{1/2} \Sigma^{-1/2} \big\|_2 \sum_{h=0}^{H} 3 (H-h+1) \\ \leq & \sqrt{( \nu_0^{\pi} )^{\top} (\Sigma^{\pi})^{-1} \nu_0^{\pi}} \big\| (\Sigma^{\pi})^{1/2} \Sigma^{-1/2} \big\|_2 \frac{5\ln(8dH/\delta) C_1dH^2}{N}, \end{aligned} \]
	where we have used $\frac{\lambda}{N} \big\| \Sigma^{-1} \big\|_2 \leq \frac{1}{N}\ln(8dH/\delta) C_1dH$, $(1+1/x)^x \leq 3, ~ \forall x > 0$.
\end{proof}

%

\end{document}